\documentclass[10pt]{article}
\usepackage{semiautomata_curriculum}
\usepackage{physics}
\allowdisplaybreaks
\usepackage{algorithm}
\usepackage[noend]{algpseudocode}
\usepackage{float}

\usepackage{etoc}

\newcommand{\dkl}{D_{\mathrm{KL}}}
\newcommand{\dtv}{\mathrm{TV}}
\newcommand{\TV}[2]{\mathrm{TV}\left(#1,#2\right)}
\newcommand{\KL}[2]{\dkl \big(#1\,\big\|\,#2\big)}

\renewcommand{\Pr}{\mathrm{Pr}}
\usepackage{subcaption}

\newcommand{\Cmax}{C_{\texttt{cov}}}

\newcommand{\piNFA}{\pi^{\texttt{NFA}}}
\newcommand{\piDFA}{\pi^{\texttt{DFA}}}

\newcommand{\abort}{\texttt{ABORT}}
\renewcommand{\root}{\texttt{root}}
\newcommand{\nburnin}{n_{\texttt{burn-in}}}
\newcommand{\Dnd}{D_{\texttt{node}}}
\newcommand{\comp}{\texttt{comp}}

\newcommand{\err}{\texttt{err}}

\renewcommand{\rank}{\mathrm{rank}}

\newcommand{\ch}{\texttt{Ch}}
\newcommand{\pt}{\texttt{Par}}
\newcommand{\sib}{\texttt{Sib}}
\newcommand{\wrho}{\widehat{\rho}}

\usepackage{cases}

\newcommand{\nuhat}{\widehat{\nu}}

\newcommand{\thetahat}{\widehat{\theta}}
\newcommand{\Vlearn}{\texttt{RLFineTune}}

\newcommand{\fhat}{\widehat{\pi}}
\newcommand{\fstar}{\pi^\star}

\newcommand{\ftilde}{\widetilde{\pi}}

\newcommand{\Xbar}{X}
\newcommand{\rhobar}{\overline{\rho}}
\newcommand{\mubar}{\overline{\mu}}
\newcommand{\sigmabar}{\overline{\sigma}}

\newcommand{\nquery}{n_\texttt{query}}

\newcommand{\ncomp}{n_\texttt{comp}}
\newcommand{\nsample}{n_\texttt{sample}}

\newcommand{\nweak}{n_\texttt{weak}}

\newcommand{\lambdahat}{\widehat{\lambda}}

\newcommand{\nd}{\texttt{node}}

\newcommand{\etoe}{\texttt{iCoT}}

\newcommand{\VCdim}{\mathrm{VC}}
\newcommand{\Ndim}{\mathrm{Ndim}}
\newcommand{\Ldim}{\mathrm{Ldim}}

\renewcommand{\calF}{\Pi}

\newcommand{\Alg}{\texttt{Alg}}
\newcommand{\AlgOL}{\texttt{Alg}_{\mathrm{on}}}
\newcommand{\Base}{\texttt{Base}}
\newcommand{\warmup}{\texttt{warmup}}
\newcommand{\TranscriptNTP}{\texttt{NTP}_\texttt{Full-CoT}}    % full-transcript NTP learner; renamed from \TranscriptERM
\newcommand{\Dstep}{D_{\mathrm{step}}}
\newcommand{\calOtilde}{\widetilde{\mathcal{O}}}

\newcommand{\rhoB}{\rho_B}

\newcommand{\bbN}{\mathbb{N}}
\newcommand{\bbF}{\mathbb{F}}

\newcommand{\piref}{\pi^{\mathrm{ref}}}
\newcommand{\pihat}{\widehat{\pi}}

\newcommand{\Cseq}{C_{\texttt{seq}}}
\newcommand{\Cblock}{C_{\texttt{seq}}^{(B)}}
\newcommand{\Cout}{C_{\texttt{ans}}^{(B)}}

\newcommand{\NTP}{\texttt{NTP}}

\newcommand{\Unif}{\operatorname{Unif}}

\newcommand{\Dprompt}[1]{D_{\mathrm{input}}^{#1}}

\newcommand{\Dout}[1]{D_{\mathrm{out}}^{#1}}

\newcommand{\maj}{\texttt{Plu}}

\newcommand{\mutilde}{\widetilde{\mu}}
\newcommand{\nutilde}{\widetilde{\nu}}
\newcommand{\subsample}
{\texttt{InvSampling}}

\renewcommand{\split}{\texttt{Split}}
\newcommand{\unif}{\operatorname{Unif}}

\newcommand{\rjmax}{\mathfrak{r}_j}

\newcommand{\ptilde}{\widetilde{p}}
\newcommand{\construct}{\texttt{ConstructEvent}}

\usetikzlibrary{arrows.meta,positioning,calc}
\definecolor{denseblue}{RGB}{55,126,184}
\definecolor{spikeorange}{RGB}{230,159,0}
\definecolor{lightblue}{RGB}{232,241,250}
\definecolor{lightorange}{RGB}{252,239,214}
\definecolor{softgray}{RGB}{95,95,95}

\newcommand{\Tmin}{T_{\mathrm{min}}}
\newcommand{\Tmax}{T_{\mathrm{max}}}

\newcommand{\WLdepth}[1]{\texttt{AutoLearn}_{#1}}
\newcommand{\WLdepthOL}{\texttt{AutoLearn}_{\mathrm{on}}}
\newcommand{\RL}{\texttt{RL}}

\usepackage{xcolor}

\newcommand{\commentsymbol}{$\blacktriangleright$}
\makeatletter
\newcommand{\LineComment}[1]{\hfill\commentsymbol{} #1}
\makeatother
\makeatletter
\newcommand{\NoComment}[1]{\hfill #1}
\makeatother

\newcommand{\RE}{\texttt{RE}}
\usepackage[most]{tcolorbox}
\tcbuselibrary{skins,breakable}

\Crefname{observation}{Observation}{Observations}
\newtcolorbox[
  auto counter,
  number within=section,
  number freestyle={\noexpand\thesection.\noexpand\arabic{\tcbcounter}~\noexpand\mytitle},
  crefname={Meta-algorithm}{Meta-algorithms}
]{metaalgorithm}[2][]{
  breakable,
  enhanced,
  colback=white,
  colframe=black,
  coltitle=black,
  fonttitle=\bfseries,
  code={\def\mytitle{#2}},
  title=Meta-algorithm \thetcbcounter,
  boxrule=0.8pt,
  arc=2pt,
  left=8pt,right=8pt,
  top=6pt,bottom=6pt,
  attach boxed title to top center={yshift=-2mm},
  boxed title style={
    colback=white,
    colframe=black,
    boxrule=0.8pt,
    sharp corners
  },
  #1
}

\newtcolorbox[
  auto counter,
  number within=section,
  number freestyle={\noexpand\thesection.\noexpand\arabic{\tcbcounter}~\noexpand\mytitle},
  crefname={Procedure}{Procedures}
]{procedure}[2][]{
  breakable,
  enhanced,
  colback=white,
  colframe=black,
  coltitle=black,
  fonttitle=\bfseries,
  code={\def\mytitle{#2}},
  title=Procedure \thetcbcounter,
  boxrule=0.8pt,
  arc=2pt,
  left=8pt,right=8pt,
  top=6pt,bottom=6pt,
  attach boxed title to top left={yshift=-3mm},
  boxed title style={
    colback=white,
    colframe=black,
    boxrule=0.8pt,
    sharp corners
  },
  #1
}

\crefname{claim}{claim}{Claims}
\newcommand{\IndHyp}{\mathrm{Hypothesis}}

\newtcolorbox[
  auto counter,
  number within=section,
  number freestyle={\noexpand\thesection.\noexpand\arabic{\tcbcounter}},
  crefname={IndHyp}{IndHyps}
]{induction}[2][]{
  breakable,
  enhanced,
  colback=white,
  colframe=black,
  coltitle=black,
  fonttitle=\bfseries,
  code={\def\mytitle{#2}},
  title=$\textbf{Hypothesis}_{\bm{#2}} (\delta)$,
  boxrule=0.8pt,
  arc=2pt,
  left=8pt,right=8pt,
  top=6pt,bottom=6pt,
  attach boxed title to top left={yshift=-2mm},
  boxed title style={
    colback=white,
    colframe=black,
    boxrule=0.8pt,
    sharp corners
  },
  #1
}

\makeatletter
\crefname{ALG@line}{Line}{Lines}
\Crefname{ALG@line}{Line}{Lines}
\newcommand{\LineLabel}[1]{%
  \begingroup
    \edef\@currentcounter{ALG@line}%
    \protected@edef\@currentlabel{\theALG@line}%
    \cref@constructprefix{ALG@line}{\cref@result}%
    \protected@edef\cref@currentlabel{[ALG@line][\theALG@line][]\theALG@line}%
    \label{#1}%
  \endgroup
}
\makeatother

\newcommand{\loose}{\looseness=-1}

\renewenvironment{proof}[1][Proof]
  {\par\noindent{\bfseries\upshape {#1.}\ }}
  {\qed\newline}

\newcommand{\mainalg}{$\WLdepth{}$\xspace}
\newcommand{\mainalgRL}{$\WLdepth{}.\RL$\xspace}
\newcommand{\rlfinetune}{$\Vlearn$\xspace}
\newcommand{\CseqT}{\smash{C_{\texttt{seq}}^{(T)}}}
\newcommand{\veps}{\varepsilon}
\newcommand{\guessandcheck}{$\texttt{GuessAndCheck} (\, \cdot \,\|\, T,B, \delta')$\xspace}

\makeatletter
\@ifundefined{abs}{}{}
\@ifundefined{brk}{}{}
\@ifundefined{crl}{\DeclarePairedDelimiter{\crl}{\{}{\}}}{}
\@ifundefined{prn}{\DeclarePairedDelimiter{\prn}{(}{)}}{}
\@ifundefined{nrm}{\DeclarePairedDelimiter{\nrm}{\|}{\|}}{}
\@ifundefined{tri}{}{}
\@ifundefined{ceil}{}{}
\@ifundefined{floor}{}{}
\makeatother

\usepackage{color-edits}
\addauthor{df}{ForestGreen}
\addauthor{ak}{BurntOrange}
\addauthor{claude}{Violet}

\usepackage{inconsolata}
\author{
Nived Rajaraman\thanks{Microsoft Research. \texttt{nrajaraman@microsoft.com}} \quad
Audrey Huang\thanks{University of Illinois Urbana-Champaign. \texttt{audreyh5@illinois.edu}} \quad
Miroslav Dud\'{i}k\thanks{Microsoft Research. \texttt{mdudik@microsoft.com}} \\[4pt]
Robert Schapire\thanks{Microsoft Research, New York, NY. \texttt{schapire@microsoft.com}} \quad
Dylan Foster\thanks{Microsoft Research. \texttt{dylanfoster@microsoft.com}} \quad
Akshay Krishnamurthy\thanks{Microsoft Research. \texttt{akshaykr@microsoft.com}}
}

\begin{document}

\title{Learning to Reason with Curriculum II:\\[2pt] Compositional Generalization}

\date{\vspace{1em}\today}
\maketitle

\begin{abstract}
\noindent Compositional generalization---the ability to solve complex problems by combining solutions to simpler sub-problems---is a fundamental capability of both natural and artificial intelligence, and a key mechanism underlying chain-of-thought reasoning. However, the theoretical underpinnings of compositional generalization remain poorly understood: when and why does decomposing a problem into parts yield more efficient learning than solving it directly? We study this question through the canonical problem of learning to simulate semiautomata (predicting the outcome of $T$ steps of sequential computation), a model that captures state tracking, regular language recognition, and modular arithmetic. We show that an \emph{autocurriculum-based} approach building on Part I of this series---recursively decomposing longer sequences into shorter sub-problems, learning to solve them, and composing the solutions---achieves dramatically better statistical complexity than direct methods. (i) For a setting inspired by supervised fine-tuning (SFT) where the learner receives interactive feedback on intermediate states of the computation, curriculum facilitates learning from only \smash{$2^{\calOtilde(\sqrt{\log T})}$} tokens of supervision; i.e., \emph{subpolynomial} in the sequence length $T$, overcoming the $\Omega(T)$ token barrier required by direct simulation. (ii)~For a setting inspired by reinforcement learning with verifiable rewards (RLVR), where the learner improves a pre-trained reference model using an outcome verifier, we show that curriculum reduces the requirement on the reference model from coverage at the full sequence length $T$ to coverage at a shorter block length $B \ll T$, an exponentially weaker condition.\loose

\end{abstract}

\begingroup
\setcounter{tocdepth}{2}
\tableofcontents
\endgroup

\section{Introduction} \label{sec:intro}

\noindent Chain-of-thought reasoning, where models expend additional computation by producing intermediate reasoning tokens prior to a final answer, has driven significant advances in the capabilities of language models. A key mechanism underlying these advances is \emph{compositional generalization}---the ability to combine solutions to simpler sub-problems in order to solve harder ones---with a growing body of evidence that it is central to how reasoning models scale to difficult tasks~\citep{lake2018generalization,saxton2019analysing,press2023measuring,dziri2023faith,zhao2024can}.

\medskip
\noindent Many works emphasize the importance of \emph{curriculum-based training}~\citep{bengio2009curriculum} for enabling compositional generalization in reasoning models~\citep{zeng2025rlve,motwani2025h1,team2025kimi}. Here, problems of increasing difficulty teach the learner increasingly complex compositions of low-level skills.
Recent work has further demonstrated that curricula can be induced by the learner itself, avoiding the need for hand-engineered notions of problem difficulty~\citep{chen2025self,shi2025efficient,yu2025dapo,zhao2025absolute}.

\medskip
\noindent While these \emph{autocurriculum} approaches are intuitively appealing, it remains poorly understood as to \emph{when}, \emph{why}, and \emph{to what extent} autocurricula can reduce the cost of learning, and what principles should guide their design. To this end, in Part~I of this series~\citep{rajaraman2026learning}, we showed that a form of autocurriculum inspired by boosting---specifically, using the model's own performance to \emph{select which problems to focus on} from a large corpus---provably improves learning efficiency for supervised fine-tuning and reinforcement learning. In this work, we ask a more ambitious question:\loose
\begin{center}
How should a learner \emph{design its own problems} to facilitate solving tasks far beyond its base capabilities, and how can compositional structure guide this process?
\end{center}

\begin{figure}[t]
\centering
\resizebox{\textwidth}{!}{\begin{tikzpicture}[x=1cm,y=1cm,>=stealth,font=\small]

% Panel boundaries (3 columns)
\draw[rounded corners,draw=gray!40] (0,0) rectangle (5.4,4.1);
\draw[rounded corners,draw=gray!40] (5.8,0) rectangle (11.2,4.1);
\draw[rounded corners,draw=gray!40] (11.6,0) rectangle (17.0,4.1);

% Panel titles
\node[font=\bfseries\small] at (2.7,3.7) {Semiautomaton Dynamics};
\node[font=\bfseries\small,align=center] at (8.5,3.56) {Semiautomata capture\\[-1pt] modern architectures};
\node[font=\bfseries\small] at (14.3,3.75) {Self-Generated Curriculum};

% ---------------- Panel A ----------------
\node (A) at (2.7,2.25) {}; % Anchor point for Panel A

\node[rounded corners,fill=green!15,align=center,minimum width=4.15cm,minimum height=0.65cm,font=\footnotesize] at (2.7,3)
{$\fstar : S \times \Sigma \to S$ (unknown)};

\draw[thick,->] ($(A)+(-1.85,-0.4)$) -- ($(A)+(2.2,-0.4)$);
\foreach \dx/\lbl in {-1.8/$s_0$,-0.9/$s_1$,0/$s_2$,0.9/$\cdots$,1.8/$s_T$} {
  \draw[fill=white] ($(A)+(\dx,-0.4)$) circle (0.08);
  \node[below] at ($(A)+(\dx,-0.48)$) {\lbl};
  \if \dx == -0.9
    \node[below,font=\footnotesize] at ($(A)+(\dx,-0.78)$) {$\fstar (s_0,w_1)$};
  \else
  \fi
}
\node[font=\footnotesize] at ($(A)+(-1.35,-0.1)$) {$w_1$};
\node[font=\footnotesize] at ($(A)+(-0.45,-0.1)$) {$w_2$};
\node[font=\footnotesize] at ($(A)+(0.45,-0.1)$) {$\cdots$};
\node[font=\footnotesize] at ($(A)+(1.35,-0.1)$) {$w_T$};

\node[align=center,font=\scriptsize] at (2.7,0.55)
{Goal: predict terminal state $\fstar_T(\bx)$\\ for $\bx=(s_0,w_{1:T}) \sim \rho$.};

% ---------------- Panel B: Connection to Modern Models ----------------
\node (B) at (8.5,1.6) {}; % Anchor point for Panel B

% SSM row
\node[rounded corners,fill=blue!12,minimum width=4.6cm,minimum height=0.5cm,align=center,font=\scriptsize] (ssm) at ($(B)+(0,1.0)$)
{Recurrent models (RNNs, SSMs):\\ $s_t = \phi ( A (w_t) s_{t-1} + B (w_t) w_t )$};

% Sliding window row
\node[rounded corners,fill=blue!12,minimum width=4.6cm,minimum height=0.5cm,align=center,font=\scriptsize] (sw) at ($(B)+(0,0)$)
{Sliding window models:\\ $s_t = (s_{t-1}[2{:}], \phi (s_{t-1}))$};

% Looped row
\node[rounded corners,fill=blue!12,minimum width=4.6cm,minimum height=0.5cm,align=center,font=\scriptsize] (looped) at ($(B)+(0,-1.0)$)
{Looped models:\\ $s_t = \phi_{\text{loop}} (s_{t-1})$};

% ---------------- Panel C: Self-Generated Curriculum ----------------
\node (C) at (14.3,1.6) {}; % Anchor point for Panel C

% Hard problem at top
\node[rounded corners,fill=red!10,minimum width=4.25cm,minimum height=0.6cm,align=center,font=\footnotesize] (hard) at ($(C)+(0,1.45)$)
{Target: length-$T$ problems};

% Arrow down
\draw[->,thick] ($(C)+(0,1.05)$) -- ($(C)+(0,0.77)$);

% Curriculum arrow: easy to hard, left to right
\node[font=\scriptsize\bfseries] at ($(C)+(0,0.58)$) {Curriculum of easier problems};

% Difficulty gradient: growing boxes left to right
\node (X) at ($(C)+(-1.95,-0.1)$) {};
\node[rounded corners,fill=green!20,minimum width=0.75cm,minimum height=0.55cm] (d1) at ($(X)+(0,0)$) {};
\node[rounded corners,fill=green!75!red!20,minimum width=0.75cm,minimum height=0.55cm] (d2) at ($(X)+(1,0)$) {};
\node[rounded corners,fill=green!50!red!20,minimum width=0.75cm,minimum height=0.55cm] (d3) at ($(X)+(2,0)$) {};
\node[rounded corners,fill=green!25!red!20,minimum width=0.75cm,minimum height=0.55cm] (d4) at ($(X)+(3,0)$) {};
\node[rounded corners,fill=red!10,minimum width=0.75cm,minimum height=0.55cm] (d5) at ($(X)+(4,0)$) {};

% Labels
\node[font=\scriptsize,align=center] at ($(X)+(0,0)$) {$t\!=\!1$};
\node[font=\scriptsize,align=center] at ($(X)+(0,-0.5)$) {(easy)};
\node[font=\scriptsize,align=center] at ($(X)+(4,0)$) {$t\!=\!T$};
\node[font=\scriptsize,align=center] at ($(X)+(4,-0.5)$) {(hard)};

% Horizontal arrow underneath
\draw[->,thick,draw=black!40] ($(C)+(-1.3,-0.6)$) -- ($(C)+(1.425,-0.6)$);

% Compose box at bottom
\node[rounded corners,fill=orange!18,minimum width=4.9cm,minimum height=0.42cm,align=center,font=\scriptsize] at ($(C)+(0,-1.13)$)
{Compose skills to solve harder tasks};

\end{tikzpicture}}
\caption{Model overview. \textit{Left:} semiautomata dynamics under an unknown transition function $\fstar$. \textit{Middle:} semiautomata capture several modern sequential reasoning architectures. \textit{Right:} self-generated curriculum: a length-$T$ instance is decomposed into shorter sub-instances of length $\tau \ll T$, short-range models are trained, and composed to solve the original problem.}
\label{fig:model-overview}
\end{figure}

\subsection{Contributions}
    
We study these questions through the task of \emph{semiautomaton simulation}---learning to predict the outcome of $T$ steps of sequential computation---a canonical model that captures state tracking, regular language recognition, and modular arithmetic. A semiautomaton is defined by an unknown transition function $\fstar : S \times \Sigma \to S$ that updates a state via $s_t\gets\fstar(s_{t-1},w_t)$, where $\bw=\bw_{1:T}$ is an input word. The intermediate states $s_{1:T}$ constitute a chain-of-thought, and the goal is to predict the final state $s_T$ after $T$ steps. In the language modeling context, one can think of states as tokens, but they can be more general (say, a block of $k$ tokens, corresponding to a fixed-size attention window). The Markovian structure of semiautomata means that models solving shorter instances can be \emph{composed} to solve longer ones, making this a natural testbed for understanding how curriculum can enable compositional generalization.\footnote{This model can be viewed as a Markovian restriction to the chain-of-thought learning framework of \citet{joshi2025theory,rajaraman2026learning}, which allows for arbitrary dependence on previous states. 
} Our main results---building on Part I \citep{rajaraman2026learning}---show that a self-generated compositional curriculum, where the learner adaptively decomposes long problems into shorter sub-problems, yields superpolynomial reductions in supervision and computational cost relative to non-curriculum baselines, in both supervised fine-tuning (SFT) and reinforcement learning with verifiable rewards (RLVR).\loose

\paragraph{\Cref{sec:semiauto}: Fine-tuning with interactive CoT supervision (iSFT).} 
 
Our first set of results considers a setting in which the learner has interactive access to the semiautomata $\fstar$ of interest and aims to select a set of state transitions on which to fine-tune a model with next-token prediction (concretely, given any word $\bw$ and step $t$, the learner can ask for the state $s_t$, at one unit of cost). The standard approach would be to generate full reasoning traces and train on every intermediate state, but can we do better by deliberately choosing \emph{which} states to supervise? We show that the answer is yes. We give an autocurriculum-based algorithm, $\WLdepth{}$ (\Cref{alg:semiauto_highacc}), which interactively selects which intermediate states to label by constructing a curriculum of shorter sub-problems and composing them. The resulting query complexity---the number of state labels requested---is \vphantom{$t^{2^{2}_2}$} \smash{$2^{\calOtilde(\sqrt{\log T})}$}, which is \emph{subpolynomial} in the sequence length $T$. This overcomes the $\Omega(T)$ barrier inherent to direct simulation, where every intermediate state must be labeled.\loose

\paragraph{\Cref{sec:rl}: Fine-tuning a reference model (RLVR).} Our second set of results considers a setting inspired by reinforcement learning with verifiable rewards (RLVR)~\citep{lambert2024tulu}. Here, the learner has access to a pre-trained \textit{reference model} $\piref$ that can correctly solve short instances of length $B \ll T$ with constant probability, as measured by a coverage coefficient~\citep{zhu2023principled,song2024importance}, along with an \textit{outcome verifier} that checks correctness of guesses for the terminal state. The reference model's coverage at the full sequence length $T$ may be exponentially worse than at length $B$, so directly applying prior RL methods~\citep{rajaraman2026learning} at the length-$T$ scale is prohibitively expensive. We give an algorithm, $\WLdepth{}.\RL$, which uses autocurriculum to compose short-range predictions into full-length solutions, reducing the requirement on $\piref$ from coverage over full sequences of length $T$ to coverage at the shorter block scale $B$---an exponentially weaker condition. 
This is enabled by a \textit{coverage expansion} phenomenon: curriculum enables the reference model to incrementally build coverage over harder problems, starting from the short length-$B$ instances it can already solve.

\subsection{Organization}

\Cref{sec:prelim} formally introduces semiautomata and our interactive learning protocols for fine-tuning and RL, then provides a high-level overview of our main results. \cref{sec:semiauto} and \cref{sec:rl} then give a detailed presentation of our main algorithms and results for iSFT and reinforcement learning, respectively, and \cref{sec:techniques} gives an overview of the key technical challenges and analysis ideas behind the recursion used in both algorithms. We conclude with open questions and future directions in \cref{sec:discussion}. All proofs and supporting results are deferred to the appendix.

\section{Problem Setting and Overview of Results} \label{sec:prelim}

% Addressed: \dfc{merge in signposting paragraph}
% \claudeadd{
    This section formalizes the two learning settings we study---supervised fine-tuning with interactive feedback (iSFT) and reinforcement learning with verifiable rewards (RLVR)---and presents informal statements of our main results. We begin with preliminaries on semiautomata (\cref{sec:semiautomata}), then introduce the iSFT setting and results (\cref{subsec:settings,sec:isft-overview}), and finally introduce the RLVR setting and results (\cref{subsec:rl,sec:rl-overview}).

% In the iSFT setting, we show that curriculum-based composition achieves query complexity scaling \emph{subpolynomially} in $T$, overcoming the $\Omega(T)$ barrier of direct simulation; in the RLVR setting, we show that a weak reference model with coverage only at a short block scale $B \ll T$ can be bootstrapped to solve length-$T$ instances via the same compositional curriculum. Detailed algorithms and comparisons with non-curriculum baselines follow in subsequent sections.}

\subsection{Preliminaries: Semiautomata and Chain-of-Thought Learning}
\label{sec:semiautomata}

% \paragraph{Basic notation.} For nonnegative quantities $a,b$, $a \lesssim b$ (resp.\ $a \gtrsim b$) if there exists a universal constant $C>0$ such that $a \le C b$ (resp.\ $a \ge C b$) and $a \asymp b$ if both $a \lesssim b$ and $a \gtrsim b$ hold. Standard Big-Oh notation is used: $a = O(b)$ means $|a| \le C b$ for a universal constant $C$, and $a=\Omega(b)$ denotes the reverse inequality; we write $a=\Theta(b)$ when both bounds hold. We write $\calOtilde (f(n))$ to hide polylogarithmic factors in $f(n)$.

% Addressed: \dfc{(1) Lead with CoT goal + composition; (2) Compositional structure moved to after definition; (3) Part I footnote added}
We focus on learning to generate high-quality chains of thought, through direct supervision or reinforcement learning with verifier feedback. To understand how curriculum and composition can reduce the cost of learning, we adopt \emph{semiautomata} as a stylized yet rich model of chain-of-thought reasoning.\loose

\begin{definition}[Semiautomaton simulation]
A semiautomaton is defined by a transition function $\pi : S \times \Sigma \to S$ over a state space $S$ and input alphabet $\Sigma$. An \textit{instance} $\bx = (s_0,\bw_{1:t})$ is composed of an initial state $s_0 \in S$ and a word $\bw_{1:t} \in \Sigma^t$ for some $t \in \bbN$, and $t$ is the \textit{length} of the instance\footnote{For a vector $\bw$, $\bw_{1:t}$ indicates the vector obtained by slicing to length $t$, while $w_t$ (non-bold) indicates the $t^{\text{th}}$ coordinate.}. Words correspond to the input strings that we want the model to process, while the initial state $s_0$ represents the starting point of the computation. The overall instance $\bx$ can be interpreted as a \textit{prompt} in the language modeling context. The \textit{semiautomaton simulator} $\pi_t (\bx)$ returns the \textit{terminal state} reached after applying $\pi$ iteratively for $t$ steps on $\bx$.\footnote{We consider the \textit{parameter-sharing regime} where transition functions are time invariant.} Namely $\pi_t (\bx) = s_t$, where $\forall 1 \le t' \le t,\ s_{t'} \gets \pi ( s_{t'-1}, w_{t'} )$. We will use $\pi_{1:t} (\bx)$ to denote the full sequence of states $(s_1,\cdots,s_t) \in S^{t}$ generated by this process, which is referred to as the \textit{chain-of-thought} (CoT).\footnote{The notations $\pi_{t'} (\bx)$ and $\pi_{1:t'} (\bx)$ extend to $t' < t$ by truncating $\bx = (s_0,\bw_{1:t})$ to $\bx' = (s_0,\bw_{1:t'})$ and returning $\pi_{t'} (\bx')$ or $\pi_{1:t'} (\bx')$.}
%  \dfc{what is the bold convention? eg when do we use $\bw_{1:t}$ vs $w_t$ etc?} 
\end{definition}
\noindent
We use the term ``simulation'' to reflect that the objective is to predict the final state (outcome) of the semiautomaton, not specifically to learn the transition function or full CoT (except as a means to predict the outcome). In the language modeling context, the state itself can be a single token, but our setting is more general and allows for more complex state representations (e.g., blocks of tokens).\footnote{Going forward, we will often use the term ``state'' and ``token'' interchangeably when it is clear from context.}
% but it is more general and can also capture, e.g., blocks of tokens.

% We use the term \emph{state} (as opposed to \emph{token}) to refer to the 

% Furthermore, while the state itself can be a single token when connected back to the language modeling setting, it is more general and can also capture, e.g., blocks of tokens.

\loose 

% \dfc{Explain that we use ``simulation'' to reflect that we are learning to predict the outcome, not learn the transition or full CoT.}

% \claudeadd{
\medskip
\noindent The key structural property of semiautomata is that the transition function $\fstar$ is Markovian and stationary. That is, it is applied identically at every step, and the state $s_t$ is a sufficient summary of the computation history up to time $t$. This means that any contiguous chunk of a length-$T$ computation can be interpreted as a self-contained computation of shorter length $t \ll T$, and models trained to solve shorter instances can be \emph{composed} to solve longer ones. This structure makes semiautomata a natural testbed studying how curriculum, composition, and self-generated sub-problems interact. This refines the setup from Part~I~\citep{rajaraman2026learning}, which considered general, potentially non-Markovian transition functions. Note that the Markovian structure is precisely why we allow a state to encode more than a single token: at the granularity of individual tokens, the Markovian assumption would be severe (language models attend to their entire history) whereas letting $s_t$ summarize a block of tokens (such as a fixed attention window) keeps the Markovian condition milder and more realistic.

% \medskip
\subsubsection{Examples of semiautomata}
\noindent Beyond serving as a theoretical testbed, semiautomata capture a broad range of sequential computation tasks, including settings where the state space $S$ and input alphabet $\Sigma$ can be very large, necessitating function approximation. We discuss a few examples below.

\begin{enumerate}
\item \textit{Modular arithmetic and group products.} Let $(G, \cdot)$ be a finite group with $|G| = k$. The corresponding semiautomaton has state space $S = G$, input alphabet $\Sigma = G$, and transition function $\pi(s,w) = s \cdot w$. For an input word $\bw$, simulating the semiautomaton for $T$ steps computes the group product $s_0 \cdot w_1 \cdot w_2 \cdots w_T$. For the case where $G = \bbZ / k \bbZ$ under addition, this reduces to computing modular sums. As a more interesting example, when $G = S_5$ (the symmetric group on $5$ elements), iterated group multiplication can simulate any $\textsf{NC}^1$ circuit by Barrington's theorem~\citep{barrington1986bounded}. The class $\calF$ of all valid group multiplication tables on $[k]$ has Natarajan dimension at most $\Ndim (\calF) \le \calO (k^2 \log k)$. \loose

\item \textit{Linear recurrences over finite fields.} Consider the semiautomaton with state space $S = \bbF_q^d$, input alphabet $\Sigma = \bbF_q^d$, and transition function $\pi(s,w) = As + Bw$ for unknown matrices $A \in \bbF_q^{d \times d}$ and $B \in \bbF_q^{d \times d}$. Simulating for $T$ steps computes the linear recurrence $s_T = A^T s_0 + \sum_{t=1}^T A^{T-t} B w_t$. The class $\calF$ of all such linear transitions is parameterized by the pair $(A,B)$, giving $|\calF| = q^{2d^2}$, and Natarajan dimension $\Ndim (\calF) \le \calO ( d^2 \log q)$ which is exponentially smaller than the number of states, $|S| = q^d$.
\item \textit{Regular languages.} A regular language over an alphabet $\Sigma$ is the set of strings accepted by some Deterministic Finite Automaton (DFA), where input words are arbitrary strings in $\Sigma^*$. These correspond to strings that match a regular expression built from alphabet symbols, concatenation, union ($|$), and Kleene star (${}^*$). The \textit{alphabetic length} $k$---specifically, a regular expression has length $k$ if it is built from $k$ alphabet symbols, counted with multiplicity\footnote{Other related notions of length, such as the ordinary length and the reverse polish length, are upper bounded by a constant factor times the alphabetic length~\citep{lee2004enumerating}.}---is a natural complexity measure for regular languages. In particular, by Glushkov's construction~\citep{glushkov1961abstract} and the powerset constructions, a regular language with alphabetic length $k$ can be decided by a DFA with at most $2^{k+1}$ states. In \Cref{sec:regex-example}, we give a concrete example of a regular expression and describe the induced semiautomaton.
\end{enumerate}

\subsubsection{Modeling and function approximation}

To allow for generalization when the state and input alphabet are large, we assume access to a \emph{model class} $\calF \subseteq \{ S \times \Sigma \to S \}$ of candidate transition functions (equivalently, ``next-state/token predictors'') containing the unknown transition function $\fstar$. This class can represent any parameterized model family (e.g., transformers or state-space models). As a computational primitive, our algorithms make use of \emph{next-state/token prediction} (ERM on the next-state/token prediction loss), but otherwise access the model class only implicitly.\loose

\begin{definition}[Next-token prediction oracle for $\calF$] \label{def:erm}
Given any realizable dataset $D = \{ (s_i,w_i) \mapsto \fstar (s_i,w_i) \}_{i=1}^n$ of input-next-state pairs, a next-state/token prediction oracle for $\calF$, $\NTP (D)$, returns \smash{$\fhat \in \calF$} that minimizes the next-state/token prediction loss: $\frac{1}{n} \sum_{i=1}^n \mathbb{I} ( \fhat(s_i,w_i) \ne \fstar(s_i,w_i) ) = 0$.\footnote{Since semiautomata are deterministic, we work with the $0$-$1$ loss. For general (stochastic) language modeling settings, standard next-token prediction corresponds to ERM under the logarithmic loss.}
\end{definition}

\noindent Many of our results make use of the Natarajan dimension of $\calF$, a complexity measure that characterizes the distribution-free sample complexity of learning $\calF$ from i.i.d. examples (up to logarithmic factors).\loose

\begin{definition}[Natarajan dimension of $\calF$~\citep{natarajan1989learning}] \label{def:Ndim}
The Natarajan dimension of a class of models $\calF$ is the largest $d \in \bbN$ such that there exist instances $\{ (s_i,w_i) \}_{i=1}^d \subseteq S \times \Sigma$ and pairs of labels $\{ y_i,y_i' \}_{i=1}^d \subseteq S \times S$ satisfying $y_i \ne y_i'$ for all $i \in [d]$, with the following property: for every subset $I \subseteq [d]$, there exists a model $\pi_I \in \calF$ such that $\forall i \in I,\ \pi_I (s_i,w_i) = y_i$ and $\forall i \not\in I,\ \pi_I (s_i,w_i) = y_i'$.
\end{definition}

\begin{remark}[Simplifications in the semiautomaton framework] \label{remark:simplifications}
While our results are motivated by language model reasoning, our problem setting makes two fairly strong simplifications: (1) We restrict the learner to Markovian models, which rules out classes like Transformers that can attend to the full history. (2) We consider a deterministic setting, which rules out stochastic models. Regarding the first point, we note that while this may seem like a strong simplification when we interpret states $s_i$ as tokens, our guarantees allow for general, potentially large state spaces, which makes the framework quite rich nonetheless. For example, $s_i$ may represent a block of tokens of length $k$, in which case our model allows for fixed-size attention windows of size $k$. Relaxing both limitations is an important direction for future work, but we believe our framework is still a powerful setting to explore the role of composition.
\end{remark}

\subsection{\mbox{Setting I: Learning from Interactive Chain-of-Thought Supervision (iSFT)}} \label{subsec:settings}

\noindent Recall from \Cref{sec:semiautomata} that the learner's goal is to predict the terminal state $\fstar_T(\bx)$ reached by a ground-truth semiautomaton $\fstar$ on length-$T$ instances $\bx \sim \rho$. Standard supervised approaches fix in advance \emph{what} supervision to collect~\citep{joshi2025theory}: either label the full chain-of-thought $\fstar_{1:T}(\bx)$ on each instance and fit a model to its next-state pairs via next-token prediction, or learn \emph{end-to-end} from the terminal state $\fstar_T(\bx)$ alone. Our first setting considers a more flexible model---\emph{learning from interactive chain-of-thought supervision}, or \emph{iSFT}---in which the learner instead chooses \emph{which} states of the chain-of-thought to supervise, querying the state reached at any step $t$ at one unit of cost. We formalize this interactive access through the oracle below.
% We first define this oracle below and discuss stronger motivations for considering this model in \Cref{subsubsec:motivation-iSFT}.

\begin{definition}[Interactive chain-of-thought oracle] \label{def:outcome-oracle}
Let $\fstar$ denote the ground-truth semiautomaton. An \textit{interactive chain-of-thought oracle}, $\etoe (\cdot)$, is a function which takes as input an instance $\bx = (s_0,\bw_{1:t})$ of \textit{any length $t \in \bbN$} and returns the chain-of-thought state $\fstar_t (\bx)$ at step $t$.
\end{definition}

\noindent Two interpretations of this oracle will be useful. First, when applied to a length-$t$ instance, $\etoe(\bx)$ returns that instance's \emph{terminal state}: the outcome of running $\fstar$ for $t$ steps. Second, when applied to a length-$t$ \emph{prefix} of a longer length-$T$ instance, the same query returns the \emph{intermediate state} $\fstar_t(\bx)$ that the computation passes through at step $t$. Thus, by querying prefixes, the learner can recover any state along the chain-of-thought of a long instance.

\begin{problem}[Learning from Interactive Chain-of-Thought Supervision: iSFT] \label{def:pac-sft}
The learner is given a class $\calF \subseteq \{ S \times \Sigma \to S \}$ of next-state/token predictors containing the true semiautomaton $\fstar$, along with sampling access to a target distribution $\rho \in \Delta_{S \times \Sigma^T}$ of length-$T$ instances. The objective is to learn a model \smash{$\fhat : S \times \Sigma^T \to S$} such that with probability at least $1-\delta$, the \textit{terminal state prediction error} satisfies \smash{$\Pr_{\bx \sim \rho} \big[ \fhat (\bx) \ne \fstar_T (\bx) \big] \le \varepsilon$}. To do so, the learner may query the interactive chain-of-thought oracle $\etoe (\cdot)$ of \Cref{def:outcome-oracle} on instances of any length. The learner is evaluated according to the following desiderata:
\begin{enumerate}
    \item \textit{Query complexity}: The number of calls $\nquery$ made to the interactive chain-of-thought oracle $\etoe (\cdot)$.
    \item \textit{Sample complexity}: The number of instances  $\nsample$ drawn from $\rho$.
\end{enumerate}
\end{problem}

\noindent Note that in the iSFT setting, the learner can recover both standard forms of supervision (since $\etoe(\cdot)$ accepts instances of any length): $T$ prefix queries suffice to reconstruct an instance's full chain-of-thought (for next-token prediction~\citep{foster2024behavior,joshi2025theory}), and a single query suffices to obtain the end-to-end outcome. The limitation of both approaches is that they incur $\Omega(T)$ query complexity \citep{joshi2025theory}. The real leverage of the iSFT framework comes from the Markovian structure of $\fstar$ (\Cref{sec:semiautomata}), which lets the learner \emph{construct its own sub-problems}: splitting a length-$T$ instance at chosen boundaries and labeling the boundary states with $\etoe(\cdot)$ decomposes it into shorter instances, the terminal state of each becoming the start state of the next. In what follows, we will show that this form of self-generated supervision can be used to break the $\Omega(T)$ query complexity barrier faced by the approaches above.

\paragraph{Motivation for the iSFT interaction model.}

We recognize that at first glance, the iSFT setting may seem somewhat contrived or artificial. One might expect that in general, labeling an intermediate state $\fstar_t(\bx)$ should be no easier than labeling the entire intermediate CoT $\fstar_{1:t}(\bx)$, yet our framework counts the former as a single unit of cost and counts the latter as $t$ units of cost. There are two reasons why we believe the model is useful:
\begin{itemize}[leftmargin=1.5em,itemsep=1pt, topsep=3pt, parsep=5pt]
\item First, for many natural semiautomata, labeling a single state \emph{is} cheaper than producing an entire chain-of-thought. We observe that for semiautomata including modular arithmetic, Dyck languages, and linear recurrences (\Cref{sec:related-work}), the state at step $t$ can be computed in parallel in $\calO(\log t)$ time by divide-and-conquer over matrix or group products, whereas simulating $\fstar$ step by step to obtain the full length-$t$ chain-of-thought takes $\Omega(t)$ time. Thus, a single state query is a natural unit of cost here.
\item Second, the iSFT model underpins our reinforcement learning results in the sequel (\Cref{sec:rl}). There, we show that one can use a weak reference model and outcome verifier to ``simulate'' $\etoe{}$, allowing the curriculum and composition machinery we develop here to carry over directly.
\end{itemize}

\subsection{Overview of Results for iSFT Setting}
\label{sec:isft-overview}

\noindent Our main result shows that, by querying the interactive chain-of-thought oracle to assemble a self-generated curriculum of shorter instances, semiautomata can be learned in the iSFT setting with sample and query complexity scaling \emph{subpolynomially} in the sequence length $T$. Our algorithm $\WLdepth{}$ (\Cref{alg:semiauto_highacc}) thereby breaks the $\Omega(T)$ barrier faced by non-curriculum approaches.

\begin{theorem}[Main result for iSFT setting (informal; see \cref{theorem:main-generic})] \label{informaltheorem:main}
$\WLdepth{}$ (\Cref{alg:semiauto_highacc}) learns a model with \smash{$\Pr_{\bx \sim \rho} \big[ \fhat (\bx) \ne \fstar_T (\bx) \big] \le \varepsilon$} with probability at least $1-\delta$ in the iSFT setting, and is computationally efficient in terms of a next-token prediction oracle for $\calF$ (\Cref{def:erm}). The sample and query complexity used by the algorithm are upper bounded by:
\begin{equation*}
    \nsample, \nquery \le 2^{\calOtilde \big( \sqrt{\log (T)} \big)} \cdot \left[ \frac{d \log(|S|) \log^2(\frac{1}{\varepsilon}) \log(\frac{1}{\delta})}{\varepsilon} \right].
\end{equation*}
Here, $d \le \log_2 (|\calF|)$ is the Natarajan dimension of $\calF$ (\Cref{def:Ndim}).
\end{theorem}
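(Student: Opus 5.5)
The plan is to analyze $\WLdepth{}$ as a recursion over a geometric ladder of lengths $1 = T_0 < T_1 < \cdots < T_L = T$ with $T_{\ell+1} = k\,T_\ell$. At every level the learner holds a predictor for length-$T_\ell$ instances, and composing $k$ copies of it gives a predictor for length $T_{\ell+1}$. At level $0$ the problem is ordinary PAC learning of $\fstar$ from next-state pairs. There, the next-token prediction oracle of \Cref{def:erm}, together with the Natarajan-dimension multiclass bound, gives $\calO(d\log|S|\log(1/\delta)/\varepsilon)$ labeled transitions under any distribution over $S\times\Sigma$. All the difficulty is in the inductive step.

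For the inductive step, fix a distribution $\rho_{\ell+1}$ over length-$T_{\ell+1}$ instances. Cutting each instance into $k$ blocks and using the true boundary states (obtained with $k$ calls to $\etoe$ on prefixes) induces a mixture distribution $\rho_\ell$ over length-$T_\ell$ instances. By the Markov property, a composed predictor errs on $\bx$ only if some block predictor errs on its block started from the true state. A union bound then shows that error $\varepsilon'$ under $\rho_\ell$ yields error at most $k\varepsilon'$ under $\rho_{\ell+1}$.

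The naive choice $\varepsilon' = \varepsilon/k$ at every level multiplies cost by $k$ per level, giving $k^L = T$ overall, which is exactly the barrier we want to break. The fix is to borrow the boosting-style autocurriculum of Part~I.
\begin{itemize}
\item First learn level $\ell$ only to a constant accuracy.
\item Then use the oracle to locate the blocks on which the current composed model fails. This costs $k$ boundary queries per instance, and bisection can reduce it to $\calO(\log k)$.
\item Recursively retrain only on the conditional distribution of failing blocks.
\end{itemize}
Conditioning on failures renormalizes the target error, so each recursive call runs at constant relative accuracy instead of $\varepsilon/k$. The $\log^2(1/\varepsilon)$ factor should come from the number of boosting rounds needed to drive error from constant down to $\varepsilon$.

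The cost recursion I would aim to prove has the form $Q_{\ell+1} \le C\cdot Q_\ell + \calO(k)\cdot(\text{samples at level } \ell+1)$, where $C$ is an absolute constant (up to polylog factors) independent of $k$ and $\ell$. The point is that the level-$\ell$ learner is invoked a constant number of times, on constant-accuracy subproblems, under distributions we can sample from via $\rho$ and the oracle. Unrolling the recursion gives $Q_L \lesssim C^{L}\cdot k\cdot d\log|S|\log^2(1/\varepsilon)\log(1/\delta)/\varepsilon$, with $L = \log T/\log k$. Minimizing $L\log C + \log k = L\log C + \log T / L$ yields $L \asymp \sqrt{\log T}$ and hence the claimed $2^{\calOtilde(\sqrt{\log T})}$ factor. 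Sample complexity is bounded the same way, since each sampled instance is reused across its $k$ blocks. Finally, the failure probability $\delta$ is split across the $\calO(C^L)$ recursive calls by a union bound, which costs only a logarithmic factor.

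I expect the main obstacle to be proving that the per-level overhead $C$ really is independent of $k$. Two effects threaten this. The induced block distributions shift as the composed model is corrected. And a level-$\ell$ model trained on failures may break blocks that were previously correct. I would first try a multiplicative-weights or filtering argument in the style of Part~I's boosting analysis: keep a weighted mixture of block models, and show that each round cuts the error mass on $\rho_{\ell+1}$ by a constant factor, with all errors charged against the $\rho_\ell$-error of the subroutine. If that fails, the fallback is to accept $C = \mathrm{poly}\log(k/\varepsilon)$. This still gives $2^{\calOtilde(\sqrt{\log T})}$, because $\log C$ would then be only polylogarithmic and is absorbed by the $\calOtilde$.
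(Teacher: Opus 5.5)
Your outer skeleton matches the paper's: the Markov union bound for composition (\Cref{obs:lg}), boosting at the block scale instead of the naive $\varepsilon/k$, and balancing depth against branching at $\sqrt{\log T}$. (Your $k$ is the paper's decomposition factor $L$, and your depth $L$ is its $H$.) The gap is the step that makes the recursion subpolynomial. You assert a per-level sample overhead that is polylogarithmic in $k$, but you give no mechanism for it.

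To see why this is the crux: for the composed model to have constant error, the block model needs error about $1/k$ under the block mixture $\overline{\rho}$. So each boosting round must train on i.i.d.\ draws from a reweighting $\overline{\rho}\cdot w_j$ whose mass can be as small as $1/k$. Drawing one random block per long instance and rejection sampling then costs up to $\Theta(k)$ fresh long instances per training example. That is a factor $k$ per level, and unrolling gives back exactly the $k^{L}=T$ you wanted to avoid.

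Neither remedy you suggest closes this gap.
\begin{itemize}
\item Reusing all $k$ blocks of an instance makes the data handed to the recursive call non-i.i.d., since the blocks share the word and the boundary states. Then neither the Natarajan/NTP guarantee nor your inductive hypothesis applies. This is why the algorithm passes at most one sub-instance per long instance.
\item Taking a failing block from each long instance that has one is essentially the paper's inverted sampling (\Cref{alg:semi-subsample}). It does not produce the conditional distribution of failing blocks: instances with many failing blocks are underweighted by up to a factor $k$. Bisection, which selects the first failing block, only adds bias.
\end{itemize}
\Cref{lemma:worst-case-sampling} (cf.\ \Cref{ex:samplinghardness}) shows no better sampler can repair this. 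Even when every long instance contains an accepted block, a sampler drawing at most $c'k^{1/3}$ long instances must, in the worst case, have density ratio at least $c'k^{1/3}$ against $\overline{\rho}\cdot w_j$ on half of its mass. So ``conditioning on failures renormalizes the target error'' is exactly the step that fails. Your fallback $C=\mathrm{polylog}(k/\varepsilon)$ therefore has nothing behind it; without a new idea, $C=\Theta(k)$.

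The missing idea is to change the target rather than the sampler.
\begin{itemize}
\item The paper compares the inverted-sampling distribution not to $\overline{\rho}\cdot w_j$ but to the reweighted mixture of block marginals \emph{conditioned on} a high-probability event $\mathcal{K}_j$. This event is built by bucketing the number of accepted blocks $|\mathcal{I}_j|$ dyadically and discarding buckets of small conditional mass. Against this shifted target the density ratio is $O(c^{-1}\log k/\tilde p_j)$, which is $\mathrm{polylog}(k)$ in rounds whose acceptance probability $\tilde p_j$ clears a $1/\mathrm{polylog}(k)$ threshold (\Cref{lemma:apxsample,lemma:density-ratio-1}).
\item The boost-by-majority potential is rerun with $\mathcal{K}_j^c$ charged as a failure region of total mass at most $1/16$.
\item Rounds below the threshold are declared aborted. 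A Markov argument shows their regression $R_j$ is small except on a set that $\mathcal{K}_j$ excludes.
\item The change of measure costs a $\mathrm{polylog}(k)$ factor, so recursive calls at constant accuracy are not enough. The paper inserts a half-level of pure boosting with $\Theta(\log\log T)$ rounds, so that each child reaches error $1/\mathrm{polylog}(T)$ on the distribution it actually sees, and hence $1/4$ on the shifted target (\Cref{lemma:weak-learner-H}).
\end{itemize}
These pieces are what make the per-level sample overhead $\mathrm{polylog}(T)$. They are also what justify the additive query recursion you wrote down.
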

\noindent $\WLdepth{}$ (\Cref{alg:semiauto_highacc}) builds this curriculum over the course of training by exploiting \emph{composition}: since $\fstar$ is Markovian, a model that simulates $\fstar$ accurately for $\tau$ steps (under a suitable mixture over intermediate states) can be composed with itself to simulate $\fstar$ for $T \gg \tau$ steps. The catch is that errors accumulate under composition, so the short-range accuracy must be driven high enough to tolerate them. $\WLdepth{}$ resolves this with a \emph{multiscale} variant of boosting~\citep{schapire2013boosting} that recursively interleaves two operations: $(a)$ boosting, which aggregates several short-range models to drive their error down, and $(b)$ composition, which stitches the boosted models into longer-range ones with degraded guarantees. The central difficulty, treated at length in \cref{sec:semiauto,sec:techniques}, is to turn samples from the target distribution $\rho$ into samples from the distribution over short sub-problems required by boosting, without inflating the sample complexity. The only learning primitive $\WLdepth{}$ invokes is next-token prediction (\Cref{def:erm}) on i.i.d.\ next-state data; in fact, this guarantee is a \emph{reduction}---any weak learner with low next-token-prediction error, not only ERM, can be substituted (\Cref{sec:semiauto}).

\paragraph{Comparison with non-curriculum baselines.} The subpolynomial rate in~\Cref{informaltheorem:main} stands in sharp contrast with guarantees achievable by natural non-curriculum approaches.

\begin{itemize}[leftmargin=1.5em,itemsep=1pt, topsep=3pt, parsep=5pt]
\item\textit{Learning from full chain-of-thought:} A full chain-of-thought learner labels every intermediate state of $\fstar$ on each training instance and fits a model to the resulting next-state pairs by next-token prediction. This is computationally efficient---it reduces to next-token prediction over $\calF$---but labels all $T$ states per instance, so its query complexity grows linearly in $T$. We show this is unavoidable: any such learner needs $\Omega(dT)$ queries to reach constant error, regardless of which instances are queried (\cref{prop:lower-bound-CoT}). 

\item \textit{Learning from end-to-end feedback:} 
An end-to-end learner queries only the terminal state $\fstar_T(\bx)$ on length-$T$ instances. \cite{joshi2025theory} show its statistical complexity scales as $\nsample = \nquery = \Omega(dT)$, as the generalization error can be $\Omega(T)$ times larger than that of the base class $\calF$ (\cref{prop:lower-bound-e2e}). This approach is also computationally intractable under standard cryptographic assumptions~\citep{kearns1994cryptographic}, as semiautomata can encode branching programs \citep{barrington1986bounded}.\loose 
\end{itemize}
\noindent We give the formal lower bounds and a detailed comparison in \Cref{subsec:baselines}.

\subsection{Setting II: Improving a Weak Reference Model from Verifier Feedback (RLVR)} \label{subsec:rl}

Our second setting is inspired by reinforcement learning with verifiable rewards (RLVR), and models the post-training of a pretrained model. The learner is given a \emph{reference model} $\piref$ (a weak generator of candidate solutions) and an \emph{outcome verifier} that checks whether a guessed terminal state is correct, and aims to improve the accuracy of $\piref$ for the task of simulating the final semiautomaton state output $\fstar_T$. When $\piref$ already has nontrivial \emph{coverage} of correct length-$T$ solutions, this is well understood, and standard RLVR fine-tuning succeeds \citep{foster2025good,rajaraman2026learning}. We instead study a harder regime in which $\piref$ is competent only at a much shorter block length $B \ll T$---that is, it solves length-$B$ instances with constant probability, but is exponentially unlikely to solve length-$T$ instances directly---and ask how to \emph{compose} this short-horizon competence into full-length solutions.

\medskip\noindent
We begin by formalizing the verifier and learning problem, then formally state our assumption on $\piref$.\loose

\begin{definition}[Outcome verifier] \label{def:verifier}
An outcome verifier $\calV$ takes in an instance $\bx \in S \times \Sigma^t$ of length $t\leq T$, along with a guess $s\in S$ for the terminal state $\fstar_t(\bx)$, and returns a reward $\bbI \big( s = \fstar_t (\bx) \big)$.
\end{definition}

\noindent The verifier is much weaker than the interactive chain-of-thought oracle of \Cref{def:outcome-oracle}. Instead of returning the correct state outright, it gives only binary feedback on a guess, and is uninformative when the guess is wrong (especially when $|S|$ is large). We will use two equivalent interpretations for the verifier: (1) When applied to a length-$t$ instance and a guess $s$, $\calV(\bx,s)$ tests whether $s$ is that instance's \emph{terminal state}: the outcome of running $\fstar$ for $t$ steps; (2) When applied to a length-$t$ \emph{prefix} of a longer length-$T$ instance (interpreted as a new length-$t$ instance), the same query tests whether $s$ is the \emph{intermediate state} $\fstar_t(\bx)$ that the computation passes through at step $t$. With this in hand, we state the learning problem.

\begin{problem}[RL setting] \label{def:pac-rl}
The learner is given a class $\calF \subseteq \{ S \times \Sigma \to S \}$ of next-state/token predictors containing the true semiautomaton $\fstar$, along with sampling access to a target distribution $\rho$ over length-$T$ instances. The objective is to learn a model \smash{$\fhat : S \times \Sigma^T \to S$} such that with probability at least $1-\delta$, the \textit{terminal state prediction error} satisfies \smash{$\Pr_{\bx \sim \rho} \big[ \fhat (\bx) \ne \fstar_T (\bx) \big] \le \varepsilon$}. To do so, the learner has access to a reference model $\piref : S \times \Sigma \to \Delta_S$ and an outcome verifier $\calV$ (\Cref{def:verifier}). The learner is evaluated according to the following desiderata:
\begin{enumerate}
    \item \textit{Sample complexity}: The number of instances $\nsample$ drawn from $\rho$.
    \item \textit{Query complexity}: The number of calls $\nquery$ made to the verifier $\calV$.
    \item \textit{Computational cost}: The number of state transitions $\ncomp$ generated from $\piref$ or another model in $\calF$ over the course of learning.
\end{enumerate}
\end{problem}

\begin{remark} \label{remark:cost}
The measure of computational cost we consider focuses on generation costs over the course of training, and ignores other sources of cost for the sake of simplicity. This is inspired by RLVR in practice where the dominant cost of training is autoregressive generation; model updates constitute a lower order cost~\citep{hu2024openrlhf}.
\end{remark}

\paragraph{Reference model and coverage.}

We formalize the competence of the reference model $\piref$ through its \emph{coverage}---the probability it assigns to a correct short solution---over blocks of size $B \ll T$.

\begin{definition}[Block-level coverage] \label{def:ref-model}
For a block length $B\in\bbN$ and constants \smash{$\Cblock \ge \Cout \ge 1$}, the reference model $\piref$ satisfies the following guarantees:
\begin{equation} \label{eq:cov}
    \inf_{\bx \in S \times \Sigma^B} \piref_{1:B} ( \by^\star | \bx ) \ge \frac{1}{\Cblock}, \text{ and } \inf_{\bx \in S \times \Sigma^B} \piref_B ( y^\star_B | \bx ) \ge \frac{1}{\Cout}.
\end{equation}
where $\by^\star := \fstar_{1:B} (\bx)$. We refer to $\Cout$ as the \textit{outcome coverage coefficient at length $B$}, and \smash{$\Cblock$} as the \textit{sequence-level coverage coefficient at length $B$}.
\end{definition}

\noindent The two coefficients capture different notions of block-level success: $\Cout$ is the probability that $\piref$ produces the correct terminal state of a length-$B$ instance (potentially through an incorrect CoT), and $\Cblock$ is the probability that it produces the \emph{entire} length-$B$ CoT correctly. We always have $\Cout \leq \Cblock$, but using both coefficients will allow us to state more refined guarantees
    
\medskip
\noindent Coverage assumptions of this kind are standard in the analysis of reinforcement learning and inference-time methods~\citep{xie2022role,song2024importance,huang2025best,huang2025self,foster2025good,chen2025coverage}, formalizing the idea that pretraining endows $\piref$ with a weak but nonzero probability of producing correct reasoning traces. While typically RLVR algorithms scale with coverage $\CseqT$ at the full sequence level $T$, our definition assumes competence at length $B \ll T$ only. Rolling $\piref$ out across all $T/B$ blocks of a length-$T$ instance succeeds with probability exponentially small in $T/B$, so naive RLVR at the target length is hopeless (indeed, extrapolating from short to long reasoning is a notoriously difficult form of out-of-distribution generalization~\citep{anil2022exploring,zhou2024transformers}). This raises the question: can we exploit composition to \emph{expand} $\piref$'s coverage from length $B$ to length $T$, avoiding exponential blowup in cost? \loose

\subsection{Overview of Results for RLVR Setting}
\label{sec:rl-overview}

We show that self-generated curriculum makes \emph{coverage expansion} possible. Our main algorithm, $\WLdepth{}.\RL$ (\Cref{alg:semiauto_RL}), bootstraps $\piref$ from block-level competence to high accuracy at the full length $T$, with all costs governed by $\piref$'s coverage at the short block scale $B$. $\WLdepth{}.\RL$ is a reinforcement learning counterpart to $\WLdepth{}$, following the same composition/boosting template but replacing the chain-of-thought oracle with verifier feedback.

\begin{theorem}[Main result for RLVR setting (informal; see \Cref{theorem:main-RL})] \label{informaltheorem:main-RL-epsilon}
\mainalgRL (\Cref{alg:semiauto_RL}) learns a model with \smash{$\Pr_{\bx \sim \rho} \big[ \fhat (\bx) \ne \fstar_T (\bx) \big] \le \varepsilon$} with probability at least $1-\delta$ in the RLVR setting, and is computationally efficient in terms of a next-token prediction oracle for $\calF$ (\Cref{def:erm}). The sample complexity to achieve this guarantee is bounded by $\nsample \le \calOtilde \left( \frac{d}{\varepsilon} \right)$, and the query complexity (number of calls to the outcome verifier) is bounded by:\loose
\begin{align*}
    \nquery &\le \calOtilde \left( \frac{d}{\varepsilon} \right) + \calOtilde \left( \frac{T}{B} \cdot d \Cout + d \Cblock \right).
\end{align*}
The computational cost (number of individual state generations from $\piref$ or any model in $\calF$ over the course of training) is bounded by:
\begin{align*}
    \ncomp &\le \calOtilde \left( \frac{d T}{\varepsilon} \right) + \calOtilde\left(dT \Cout + d B \Cblock\right).
\end{align*}
Above, $d$ denotes the Natarajan dimension for $\calF$, and $\calOtilde (\cdot)$ hides logarithmic factors in $|S|,\Cblock,\delta^{-1}$, $\veps^{-1}$, and $T/B$.
\end{theorem}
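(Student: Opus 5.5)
The plan is to reduce the RLVR setting to the iSFT machinery behind \Cref{informaltheorem:main} by building a procedure that \emph{simulates} the interactive chain-of-thought oracle $\etoe(\cdot)$ from $\piref$ and the verifier $\calV$. The reduction is inductive over a doubling sequence of scales $B=\tau_0<\tau_1<\dots<\tau_K=T$ with $\tau_{k+1}=2\tau_k$, so $K=\log_2(T/B)$.

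\textbf{Base scale.} At scale $B$, a length-$B$ query $\bx$ is answered by a guess-and-check loop. We draw $\calO(\Cout\log(1/\delta'))$ terminal-state guesses from $\piref_B(\cdot\mid\bx)$ and submit each to $\calV$. By \eqref{eq:cov}, a correct guess appears with probability $1-\delta'$, and the verifier certifies it. This gives an exact $\etoe$ label at cost $\calO(\Cout)$ queries and $\calO(B\Cout)$ generations.

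To obtain an initial model in $\calF$ that is accurate at scale $B$, we sample $\calOtilde(d)$ length-$B$ sub-blocks. For each, we roll out full length-$B$ chains-of-thought from $\piref$ until one passes the check at every prefix; by \eqref{eq:cov} this happens within $\calOtilde(\Cblock)$ tries. This accounts for the $d\Cblock$ query term and the $dB\Cblock$ compute term. We then call $\NTP$ on the resulting next-state pairs. Verifying every prefix would cost $B$ queries per rollout. I would instead verify only the terminal state and argue that for a realizable class the final-outcome check plus $\NTP$ consistency suffices, up to a weak-learning error that boosting later removes.

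\textbf{Induction and composition.} Suppose we hold a model $\fhat^{(k)}$ that simulates $\fstar$ on length-$\tau_k$ sub-problems with error $\le \epsilon_k$ under the relevant mixture of intermediate-state distributions. Composing two copies yields a length-$\tau_{k+1}$ predictor with error at most $2\epsilon_k$. The labels needed for the boosting step at scale $k+1$ are then generated by proposing states through this composed model, plus $\piref$ on the final partial block, and checking them with $\calV$. This replaces $\etoe$ with cost proportional to the inverse success probability, which stays $\calO(1)$ because $\fhat^{(k)}$ is already accurate. The only fresh coverage needed at each scale comes from blocks of length $B$, costing $\Cout$ each. Across the $T/B$ blocks this gives the $\frac{T}{B} d\Cout$ query term and the $dT\Cout$ compute term.

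\textbf{Final stage.} At the top scale we run a boosting/filtering step, as in Part I's outcome-based RL fine-tuning, on $\calOtilde(d/\varepsilon)$ samples from $\rho$. Each sample is rolled out with the current length-$T$ model (cost $T$) and checked once by $\calV$. This gives $\nsample,\nquery=\calOtilde(d/\varepsilon)$ and $\ncomp=\calOtilde(dT/\varepsilon)$. Union bounds with $\delta'=\delta/\mathrm{poly}(T/B,d/\varepsilon)$ produce the logarithmic factors.

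\textbf{Main obstacle.} The hard step is keeping the error from compounding multiplicatively across the $K$ levels without paying the $2^{\sqrt{\log T}}$ factor of the iSFT result. The RL bound is essentially linear in $T/B$, so the plan must exploit the verifier. Because verified labels are \emph{exact}, I would re-boost at each level back to a fixed constant error, which the verifier-filtered data make cheap: a rejected proposal is simply retried, and no wrong label ever enters training. I would then show that the total cost telescopes as a geometric sum dominated by the block-level coverage terms. The subtle part is the distribution shift: the mixture of intermediate states induced by $\fhat^{(k)}$ must match the one on which accuracy was certified. I would handle this with the same intermediate-state mixture argument used for $\WLdepth{}$, invoked as a black box.
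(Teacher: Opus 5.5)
Your outer layer (boosting on $\calOtilde(d/\varepsilon)$ samples) and your block-level guess-and-check with $\piref$ and $\calV$ match the paper. The doubling scheme in between does not deliver the claimed bounds.

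\textbf{The failing step: re-boosting at each of the $K=\log_2(T/B)$ scales.} Boosting at scale $\tau_{k+1}$ needs weak learners trained on distributions reweighted toward instances where the current ensemble errs.
\begin{itemize}
\item Your label source there is proposals from the composed model $\fhat^{(k)}$, checked by $\calV$. That model is deterministic, so on exactly those hard instances its success probability is zero and retrying does nothing.
\item Terminal-state labels at length $\tau_{k+1}$ alone would only support end-to-end learning, which \Cref{prop:lower-bound-e2e} rules out.
\end{itemize}
So each weak learner at scale $\tau_{k+1}$ must be obtained by decomposing again and retraining scale-$\tau_k$ models on the reweighted mixture of marginals. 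That is the $\WLdepth{}$ recursion with $L_h=2$, and its costs multiply rather than telescope. With $k_h\ge 2$ weak learners per level, there are at least $2^{K}=T/B$ leaf calls to the length-$B$ RL learner. Each costs $\calOtilde(d\Cblock)$ verifier queries, so your accounting gives $\frac{T}{B}\cdot\calOtilde(d\Cblock)$ queries. The constant per-level factors in \eqref{eq:nsamplestar} also make the sample complexity polynomial in $T/B$. The theorem's burn-in rules this out, since it keeps the $T/B$ factor away from $\Cblock$.

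\textbf{The missing idea: RLVR needs only one composition step.} The paper goes from $B$ to $T$ directly, with $L=T/B$ and $k=\Theta(\log(T/B))$.
\begin{itemize}
\item Guess-and-check labels all $T/B$ block boundaries of the $\calOtilde(d)$ instances seen at the half level. This is the source of the $\frac{T}{B}d\Cout$ and $dT\Cout$ terms.
\item $\subsample$ turns these into block-level datasets whose law $\lambdahat_{\nd_j}$ covers the conditioned boosting target $\lambda_{\nd_j}$ up to a factor $\calO(k^{7/2}\log(T/B))$, on the high-probability events $\calK_{\nd_j}$.
\item $\Vlearn$ is run to error $\varepsilon_0\asymp 1/(k^{7/2}\log(T/B))$, so a change of measure gives error $1/4$ under $\lambda_{\nd_j}$.
\item The plurality vote of these models, composed $T/B$-fold, has error $1/4$ on length-$T$ instances.
\item A final boosting layer with $L_1=1$ reaches $\varepsilon$. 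It uses exact rejection sampling, with ranks computed by $\calV$.
\end{itemize}
Only $\calO(\log(1/\varepsilon)\log(T/B))$ calls to $\Vlearn$ occur, which is why $d\Cblock$ carries only polylogarithmic overhead. The $2^{\sqrt{\log T}}$ in iSFT came from balancing $L$ split queries per instance against a $(\log L)^{\calO(H)}$ sample blow-up. Here the $T/B$ split cost is paid only on burn-in instances and already appears in the target. There is nothing to balance, so deeper recursion only hurts.

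\textbf{A separate problem at the base level.} A CoT whose terminal state passes $\calV$ can still have wrong intermediate states. So $\NTP$ on its next-state pairs is not realizable, and boosting cannot repair a weak learner trained on corrupted labels. This is why $\Vlearn$ keeps only candidates with $\piref_{1:B}(\by\mid\bx)\ge 1/\Cblock$ and does ERM over whether $\pi_{1:B}(\bx)$ lies in that filtered list. You should invoke \Cref{prop:Vlearn-B} at accuracy $\varepsilon_0$, not at a constant.
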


\noindent We interpret \Cref{informaltheorem:main-RL-epsilon} as realizing a coverage expansion phenomenon concretely. Each bound splits into: \loose
\begin{enumerate}
    \item A leading term, $\calOtilde(d/\varepsilon)$ for the sample and query complexity, and $\calOtilde(dT/\varepsilon)$ for the computational cost, capturing the irreducible cost of learning to accuracy $1-\varepsilon$.
    \item \emph{Burn-in} terms that carry all dependence on the coverage for $\piref$, yet are nearly-independent of $\varepsilon$. Crucially, the burn-in depends only on coverage at the \emph{block} scale $B$: over training, $\piref$'s effective coverage at length $T$ expands from exponentially small to a constant, after which improvement proceeds as if $\piref$ had had constant length-$T$ coverage all along. 
\end{enumerate} 
In contrast, standard RLVR fine-tuning (GRPO-style methods) applied directly at length $T$ must pay for $\piref$'s coverage at that length, $C_{\texttt{seq}}^{(T)}$.

\medskip
\noindent $\WLdepth{}.\RL$ closely follows the $\WLdepth{}$ template, decomposing each length-$T$ instance into a curriculum of length-$B$ sub-problems, which can then be solved through off-the-shelf RLVR fine-tuning; as in our iSFT results, this leverages the fact that blocks of long instances can be interpreted as short instances in the semiautomaton setting. The same obstacles arise as in iSFT: short-range models must be accurate enough to survive composition, and the right distribution over sub-problems must be sampled.

\section{iSFT: Learning from Interactive Chain-of-Thought Supervision} \label{sec:semiauto}

This section presents our algorithms and main guarantees for learning semiautomata in the iSFT setting (\Cref{def:pac-sft}). First, in \cref{subsec:sft-main-composition}, we motivate our main algorithm, $\WLdepth{}$, by highlighting the compositional structure of our semiautomaton learning setup. We then formally describe $\WLdepth{}$ and present our main guarantee in \Cref{subsec:sft-main}. Finally, in \Cref{subsec:baselines} we compare with guarantees achieved by natural non-curriculum baselines.

\subsection{Algorithm Template: Composition and Boosting} 
\label{subsec:sft-main-composition}

\noindent Our main algorithm, $\WLdepth{}$ (\cref{alg:semiauto_highacc}), exploits the compositional structure of semiautomata to achieve sublinear-in-$T$ query complexity for learning models with high accuracy on length-$T$ instances. The core idea behind composition is that a model that predicts the state at time $\tau$ with high accuracy for $\tau = \frac{T}{L}$ can be composed $L$ times to obtain a model for the terminal state on length-$T$ instances with constant accuracy. We formalize this idea through the notion of a \emph{composition operator}.

\begin{definition}[Composition operator] \label{def:composition}
Fix some $t \in \bbN$, and consider any model $\fhat : S \times \Sigma^t \to S$ for the terminal state on instances of length $t$. Then for some $L \in \bbN$, the \textit{composed model} $\fhat^{\circ L} : S \times \Sigma^{tL} \to S$ operates on instances of length $tL$. $\fhat^{\circ L}$ chunks the instance into $L$ segments of length $t$, applying $\fhat$ to predict the terminal state of each chunk, using this as the start state for the next chunk, and repeating this process until termination. Formally, for a length-$tL$ instance $\bx = (s_0, \bw_{1:tL})$, $\fhat^{\circ L}$ outputs the state $s_L$, defined recursively by the process $s_\ell \gets \fhat ( \bx^\ell )$ for $\bx^\ell = (s_{\ell-1}, \bw_{t(\ell-1)+1:t\ell})$ for $1 \le \ell \le L$.
\end{definition}

\noindent Composition allows us to extend models solving shorter instances to longer ones, and is equivalent to wrapping the short-length model within a recurrent loop. While this suggests a natural algorithm design approach (learn a model for short instances, then compose it), one expects that generically, inaccuracies in the learned model $\fhat$ will compound the more times we compose, degrading performance. This relationship between composition and accuracy is captured in the following observation.

\begin{observation}[Compositional generalization] \label{obs:lg}
Recall that $\rho$ denotes the distribution over length-$T$ instances given by nature. Let $\rhobar$ denote the uniform mixture distribution over instances of length $\tau = \frac{T}{L}$ obtained by chunking longer length-$T$ instances (defined formally in \cref{eq:rhobar}) and let $\varepsilon_0 = \frac{1}{4L}$. Let \smash{$\fhat$} be any model satisfying a \textit{high-accuracy} correctness guarantee on length-$\tau$ test instances drawn from $\rhobar$; namely, \smash{$\Pr_{\bx \sim \rhobar} \big( \fhat (\bx) \ne \fstar_\tau (\bx) \big) \le \varepsilon_0$}. Then, the $L$-fold composition model $\fhat^{\circ L}$ (defined formally in \Cref{def:composition}) satisfies $\Pr_{\bx \sim \rho} ( \fhat^{\circ L} (\bx) \ne \fstar_T (\bx) ) \le \varepsilon_0 L = 1/4$.
\end{observation}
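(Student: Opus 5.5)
The argument is a union bound over the $L$ chunks, with the composed model's error reduced to per-chunk errors made on the \emph{true} intermediate states. First fix the interpretation of $\rhobar$, which is defined formally later in \cref{eq:rhobar}. Given $\bx = (s_0,\bw_{1:T}) \sim \rho$ and $\ell \in [L]$, let $s^\star_{(\ell-1)\tau} = \fstar_{(\ell-1)\tau}(\bx)$ be the true intermediate state. Define the $\ell$-th chunk $\bx^{\ell,\star} = (s^\star_{(\ell-1)\tau}, \bw_{(\ell-1)\tau+1:\ell\tau})$. Then $\rhobar$ is the law of $\bx^{\ell,\star}$ when $\bx\sim\rho$ and $\ell \sim \Unif([L])$ independently. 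In other words, the chunks start from the ground-truth boundary states, not the learned ones.

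The key step is the following claim: if $\fhat(\bx^{\ell,\star}) = \fstar_\tau(\bx^{\ell,\star})$ for every $\ell\in[L]$, then $\fhat^{\circ L}(\bx) = \fstar_T(\bx)$. I would prove this by induction on $\ell$, showing that the composed model's state $s_\ell$ from \Cref{def:composition} equals $s^\star_{\ell\tau}$.
\begin{itemize}
\item The base case is $s_0 = s^\star_0$.
\item For the inductive step, suppose $s_{\ell-1} = s^\star_{(\ell-1)\tau}$. Then the chunk actually fed to $\fhat$, namely $\bx^\ell$, coincides with $\bx^{\ell,\star}$. So $s_\ell = \fhat(\bx^{\ell,\star}) = \fstar_\tau(\bx^{\ell,\star})$.
\item By the Markovian, time-invariant structure of $\fstar$, running $\fstar$ for $\tau$ steps from $s^\star_{(\ell-1)\tau}$ on the next $\tau$ letters gives $s^\star_{\ell\tau}$. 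This is where stationarity of the semiautomaton is used.
\end{itemize}
Setting $\ell = L$ gives $s_L = s^\star_{L\tau} = \fstar_T(\bx)$.

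The claim gives the event inclusion $\{\fhat^{\circ L}(\bx)\ne\fstar_T(\bx)\} \subseteq \bigcup_{\ell=1}^L \{\fhat(\bx^{\ell,\star}) \ne \fstar_\tau(\bx^{\ell,\star})\}$. A union bound then yields
\[
\Pr_{\bx\sim\rho}\big(\fhat^{\circ L}(\bx)\ne\fstar_T(\bx)\big) \le \sum_{\ell=1}^L \Pr_{\bx\sim\rho}\big(\fhat(\bx^{\ell,\star})\ne\fstar_\tau(\bx^{\ell,\star})\big) = L\cdot \Pr_{\bx\sim\rhobar}\big(\fhat(\bx)\ne\fstar_\tau(\bx)\big).
\]
The final equality holds because $\rhobar$ is the uniform mixture over $\ell$. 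The right-hand side is at most $L\varepsilon_0 = 1/4$.

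The only point needing care is the induction step. The composed model is evaluated on chunks whose start states are its own predictions, whereas the accuracy guarantee is stated under $\rhobar$, which uses true start states. The induction handles this by showing the two coincide as long as no earlier chunk has erred. The rest is bookkeeping, and the union bound already absorbs any dependence between the chunk-error events.
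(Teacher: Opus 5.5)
Your proposal is correct and matches the paper's argument: the paper says the observation follows from a union bound, and its formal use (the proof of \Cref{lemma:loss-decomp-Phi}) bounds the composed model's error by $\Pr(\exists i : \fhat(X_i) \ne \fstar_\tau(X_i)) \le \sum_i \Pr(\fhat(X_i) \ne \fstar_\tau(X_i))$ over chunks starting at the true boundary states, exactly as you do. Your induction only spells out the event inclusion that the paper leaves implicit.
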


\medskip
\noindent \Cref{obs:lg} follows by a simple union bound, and provides a mechanism by which length (i.e., hardness) can be traded off for accuracy. However, this length-versus-accuracy tradeoff is not sufficient on its own to achieve sublinear-in-$T$ query complexity, as the increased accuracy requirement (i.e., $\varepsilon_0$ in \Cref{obs:lg}) wipes out potential gains from solving instances of much shorter length. For instance, if we naively train a model $\fhat$ using next-token prediction to very high accuracy on $1$-step instances, the target error on $1$-step problems would need to be $\varepsilon_0 = \frac{1}{4T}$, which requires sample complexity scaling as $\Theta \big( d / \varepsilon_0 \big) = \Theta (dT)$.

\begin{figure}[tp]
\centering

% =================================================================
% Part (a): Model-flow view
% =================================================================
\begin{subfigure}[t]{0.48\textwidth}
\centering
\begin{tikzpicture}[
  >=stealth,
  node distance=1.6cm and 2.0cm,
  box/.style={draw, rounded corners, minimum width=12mm, minimum height=6mm},
  every node/.style={font=\small}
]
\def\dx{1.5}
\def\dy{1.9}

% --- Level H: root (target model at length T) ---
\node[box,draw=white,fill=green!20] (rH) at (0,0)
  {$\substack{\big( T, \frac{1}{4}\big) \\ \fhat^{\circ L_H}}$};

% --- Level H-1/2: aggregation (plurality vote) at length T/L_H ---
\node[box,draw=white,fill=blue!20] (aH) at (0,-\dy)
  {$\substack{\big( \frac{T}{L_H}, \frac{1}{4 L_H}\big)\\ \fhat_H \gets \maj \big( \fhat^0, \cdots, \fhat^{k_H-1} \big)}$};
\draw[->] (rH) -- (aH);

% --- Level H-1: k_H weak models at length T/L_H ---
\node[box,draw=white,fill=green!20] (cHL) at (-\dx,-2*\dy)
  {$\substack{\big( \frac{T}{L_H}, \frac{1}{4}\big)\\ \fhat^{k_H-1}}$};
\node[box,draw=white,fill=green!20] (cHR) at ( \dx,-2*\dy)
  {$\substack{\big( \frac{T}{L_H}, \frac{1}{4}\big)\\ \fhat^{0}}$};
\node at (0,-2*\dy) {$\cdots$};
\draw[->] (aH) -- (cHL);
\draw[->] (aH) -- (cHR);

\node[] at (0,-2.75*\dy) {$\vdots$};

% --- Right-side level labels ---
\def\bx{1.5}
\node[left=4.75cm of rH, xshift=1.3cm, anchor=west, font=\scriptsize, align=left]
  {$\substack{\textbf{Target:}\\\text{Learner}\\\text{at length-}T}$};
\node[left=4.75cm of aH, xshift=2cm, anchor=west, font=\scriptsize, align=left]
  {$\substack{\textbf{Composition:}\\\text{trading-off length}\\\text{for target error}}$};
\draw[decorate, color=gray, decoration={brace, raise=2pt, amplitude=4pt}]
  ($(cHL)+(-0.65,-0.6)$) -- ($(cHL)+(-0.65,0.6)$);
\node[left=3.15cm of cHL, xshift=0.9cm, anchor=west, font=\scriptsize, align=left]
  {$\substack{\textbf{Boosting:}\\\text{combine weak}\\\text{learners to achieve}\\\text{low error}}$};

\end{tikzpicture}
\caption{Tree structure of models. $(t, \varepsilon)$ indicates the length of instances and the target error for the model at that node. The $k_H$ weak length-$T/L_H$ models $\fhat_H^0, \ldots, \fhat_H^{k_H-1}$ are combined by plurality vote into $\fhat_H$ and composed $L_H$-fold. The recursion continues for $H$ levels; models are trained at the bottom-most level to solve $1$-step instances, say via next-token prediction.}
\label{fig:parallel-reduction-tree:predictors}
\end{subfigure}
\hfill
% =================================================================
% Part (b): Data-flow view
% =================================================================
\begin{subfigure}[t]{0.48\textwidth}
\centering
\begin{tikzpicture}[
  >=stealth,
  node distance=1.6cm and 2.0cm,
  box/.style={draw, rounded corners, minimum width=12mm, minimum height=8mm},
  every node/.style={font=\small}
]
\def\dx{1.7}
\def\dy{1.9}

% --- Level H: source data ---
\node[box,draw=white,fill=green!20] (drH) at (0,0)
  {$\substack{\text{Length-}T \\ \text{instance}}$};

% --- Level H-1/2: post-split data ---
\node[box,draw=white,fill=blue!20] (daH) at (0,-\dy)
  {$\substack{L_H\text{ sub-instances} \\ \text{of length-}T/L_H}$};
\draw[->] (drH) -- node[right, font=\scriptsize] {} (daH);

% --- Level H-1: post-subsample datasets, one per boosting learner ---
\node[box,draw=white,fill=green!20] (dcHL) at (0.5,-2*\dy)
  {$\substack{\text{one sub-instance}\\\text{of length-}T/L_H}$};
% \node[box,draw=white,fill=green!20] (dcHR) at ( \dx,-2*\dy)
%   {$\substack{\text{Reweighted}\\\text{length-}T/L_H \\ \text{distribution} \\\text{over instances}}$};
\node at (-1,-2*\dy) {$\cdots$};
\node at (2,-2*\dy) {$\cdots$};
\draw[->] (daH) -- node[right=2pt, pos=0.65, font=\scriptsize] {} (dcHL);
% \draw[->] (daH) -- node[right=2pt, pos=0.65, font=\scriptsize] {$\subsample$} (dcHR);

\node[] at (0,-2.75*\dy) {$\vdots$};

% --- Right-side level labels ---
\node[left=4.75cm of drH, xshift=1.3cm, anchor=west, font=\scriptsize, align=left]
  {$\substack{\textbf{Source:} \\[1.5pt] \text{instances}\\[-1pt]\text{drawn from } \rho}$};
\node[left=5.75cm of daH, xshift=2.4cm, anchor=west, font=\scriptsize, align=left]
  {$\substack{\textbf{Split:}\\\text{Decompose length-}T\\\text{instances into length-}\\T/L_H\text{ sub-instances}}$};
\draw[decorate, color=gray, decoration={brace, raise=2pt, amplitude=4pt}]
  ($(dcHL)+(-2.25,-0.4)$) -- ($(dcHL)+(-2.25,0.4)$);
\node[left=4.67cm of dcHL, xshift=0.9cm, anchor=west, font=\scriptsize, align=left]
  {$\substack{\textbf{Inverted sampling:}\\\text{Adaptively select}\\\text{``hard'' length-}T/L_H\\\text{sub-instances}}$};

\end{tikzpicture}
\caption{Data flow on the same tree. Length-$T$ instances $\bx \sim \rho$ are decomposed by $\split$ into $L_H$ contiguous length-$T/L_H$ sub-instances. $\subsample$  selects at most one among these to pass on a given one of the $k_H$ boosting learners in (a). For the $j^{\text{th}}$ boosting learner, $\subsample$ selects sub-instances based on the previous $j-1$ models' performance, to adaptively focus on ``hard'' sub-instances.}
\label{fig:parallel-reduction-tree:data}
\end{subfigure}

\caption{Two views of the recursion within $\WLdepth{}$ (\Cref{alg:semiauto_main}): the \textit{(a)}~tree-like structure of induced models and \textit{(b)}~flow of data on the same tree.}
\label{fig:parallel-reduction-tree}
\end{figure}

\medskip
\noindent Building on Part I~\citep{rajaraman2026learning}, we improve this length-versus-accuracy tradeoff by combining composition with \emph{boosting}~\citep{freund1995boosting}. Boosting lifts weak (i.e., constant-accuracy) models into high-accuracy models by training them on reweighted versions of the target distribution over instances and aggregating them. When combined with composition, the algorithm takes the following form: \begin{enumerate}
    \item Train models solving short length-$\tau$ instances to high accuracy via boosting.
    \item Compose the resulting models to solve length-$T$ instances.
    \end{enumerate}
    With some care, this approach can be shown to achieve query complexity $\calOtilde(d\tau \log (1/\varepsilon_0) + d / \varepsilon_0)$ where $\varepsilon_0 = \tau/T$ when the target error under $\rho$ is a constant. Choosing $\tau$ optimally gives us a guarantee scaling as $\calOtilde(d\sqrt{T})$, which is sublinear in $T$, and improves over standard baselines. 
    
    \paragraph{From sublinear to subpolynomial query complexity.}
    To achieve the stronger \emph{subpolynomial} $2^{\calOtilde(\sqrt{\log T})}$ in our main theorem (\cref{theorem:main-generic}), our main approach, $\WLdepth{}$ (\Cref{alg:semiauto_highacc}) applies the boosting + composition template above recursively. To train models solving length-$\tau$ instances, we decompose instances into even shorter length-$\tau' \ll \tau$ ones, decompose these instances into even shorter length-$\tau''\ll \tau'$ instances, and so on, until all target instances have length-$1$. At this ``bottom level'' of the recursion, all training examples consist of next-state transitions of the form $(s,w) \mapsto \fstar (s,w)$, and weak learners can be trained via next-token prediction. Bottom-level models trained this way within $\WLdepth{}$ are then recursively \textit{aggregated} through boosting and \textit{composed} to solve longer length instances. The final length-$T$ model trained through this process recursively aggregates models across multiple scales; we refer to this overarching approach as \textit{multiscale boosting}.

\medskip
\noindent Multiscale boosting realizes a \emph{length-based curriculum}: as the algorithm proceeds to deeper levels, longer (i.e., harder) instances are progressively decomposed into shorter ones which are easier to train to solve. By composition, models which solve shorter instances are combined to realize ones which can solve harder problems. Moreover, the curriculum itself is \emph{self-generated}: at each scale, $\WLdepth{}$ adaptively selects which sub-instances to train on based on the failures of models trained at finer scales (via $\subsample$, \Cref{alg:semi-subsample}), rather than following a fixed schedule.

\subsection{Overview of $\WLdepth{}$} \label{subsec:wldepth-overview}

\noindent We now formally introduce $\WLdepth{}$ (\Cref{alg:semiauto_highacc}), which instantiates the multiscale boosting template above with $2H+1$ levels where $L$ is a branching factor and $H = \log_L(T)$. The algorithm is visualized in \cref{fig:parallel-reduction-tree:predictors}. We present the recursion in terms of half-integer levels $h = H+\frac{1}{2},H, H-\frac{1}{2}, \ldots, 0$, where the base level $h=0$ corresponds to length-$1$ instances, and larger $h$ corresponds to longer instances. $\WLdepth{}$ is parameterized by two schedules:\loose
\begin{itemize}[leftmargin=1.5em,itemsep=1pt, topsep=3pt, parsep=5pt]
    \item \textit{Decomposition schedule} $(L_h)$: determines how many sub-instances to break instances (of a certain length) into at each level of recursion; the corresponding model trained in each level is composed $L_h$-fold to induce a model solving longer instances.
    \item \textit{Branching schedule} $(k_h)$: determines how many weak models to aggregate at each level of recursion.
\end{itemize}
The half-step levels delineate two types of recursion steps. At integer levels $h$, $\WLdepth{h}$ implements both composition and boosting, setting \smash{$L_h = 2^{\Theta( \sqrt{\log(T)} )}$} and $k_h = \calOtilde (1)$. At half-integer depths $h$, $\WLdepth{h}$ implements only boosting, not composition, setting $L_h = 1$ and \smash{$k_h = \calOtilde(1)$}.

\paragraph{Splitting and inverted sampling.}
At integer depths $h \in \bbN$ where composition occurs, length-$\tau_h$ instances are broken down into length-\smash{$\tau_{h-1}$} sub-instances via subroutines $\split$ and $\subsample$ (\Cref{alg:split,alg:semi-subsample}). A key structural property is that each length-$\tau_h$ instance contributes at most one length-\smash{$\tau_{h-1}$} sub-instance, preserving independence; this is visualized in \cref{fig:parallel-reduction-tree:data}, and ensures that length-$1$ instances (the base case in the recursion), can be solved using off-the-shelf methods for classification/next-token prediction. The subroutine $\subsample$ addresses a central technical issue we have not yet discussed---\emph{distribution shift}---which complicates the process of generating examples from the appropriate distribution for boosting; we defer a dedicated discussion of the motivation and design of this component to \cref{sec:techniques}.

\paragraph{Base learner and next-token prediction.}

Crucially, all of the model training in $\WLdepth{}$ takes place at the bottom-level of the recursion, where $h=0$. By this level, instances are reduced to a collection of next-state examples $(s,w) \mapsto \fstar(s,w)$. To fit a model $\fhat$ in $\calF$, $\WLdepth{}$ makes use of a generic ``base learner'' $\Base$, representing next-token prediction or another off-the-shelf algorithm. We can interpret $\WLdepth{h}$ for $h \ge \frac{1}{2}$ as adaptively constructing a curriculum of shorter instances on which to fit models using $\Base$, concentrating on instances where the current model fails and decomposing them into sub-instances where necessary.\loose

\medskip
\noindent Our main guarantee for $\WLdepth{}$ is stated in terms of the sample complexity required to fit a next-token predictor model to constant accuracy using $\Base$, under an arbitrary but fixed distribution over instances.

\begin{definition}[Weak learning $\calF$ from i.i.d.\! next-state examples] \label{def:iid-learning-F}
Let $D = \{ (s_i,w_i) \mapsto \fstar(s_i,w_i) \}_{i=1}^n$ where \smash{$(s_i,w_i) \overset{\text{i.i.d.}}{\sim} \sigma \in \Delta_{S \times \Sigma}$} denotes a dataset of \textit{i.i.d.\! next-state examples}. A weak learner for $\calF$ is any algorithm $\Base (D \,\|\, \delta')$ parameterized by a failure probability $\delta' \in (0,1)$, which learns a model for $\fstar$ to constant accuracy from $D$ with \emph{sample complexity}  $\nweak (\delta')$. Namely, for any $\sigma$ and $\fstar\in\calF$, as long as $n \ge \nweak (\delta')$ the model $\fhat : S \times \Sigma \to S$ returned by $\Base$ satisfies w.p. at least $1-\delta'$,\loose
\begin{equation*}
    \Pr_{(s,w) \sim \sigma} \big[ \fhat(s,w) \ne \fstar(s,w) \big] \le \frac{1}{4}.
\end{equation*}
Furthermore, we say that $\Base$ admits a \textit{generic weak-learning guarantee} if $\nweak (\delta') \le \comp (\calF) \log(1/\delta')$, where $\comp (\calF)$ is some notion of complexity of the class $\calF$.
\end{definition}

\noindent \Cref{def:iid-learning-F} posits that the class $\calF$ is learnable from \textit{i.i.d.}\! next-state examples to constant accuracy. In the definition above, we demarcate the ``generic'' weak-learning guarantee to capture typical scaling behavior of the sample complexity with respect to the failure probability $\delta'$. For instance, when $\Base$ is instantiated as empirical risk minimization (ERM) under the $0$-$1$ loss, we obtain a generic weak-learning guarantee with $\comp (\calF) = d \log (|S|)$ where $d$ is the Natarajan dimension of $\calF$ (\Cref{def:Ndim})~\citep{natarajan1989learning}.

\subsection{\mbox{Main Result: Compositional Curriculum Achieves Sublinear Query Complexity}} \label{subsec:sft-main}

\noindent Our main guarantee for $\WLdepth{}$ shows that it achieves subpolynomial-in-$T$ sample and query complexity when invoked with off-the-shelf base learners such as next-token prediction.

\begin{algorithm}[t]
\caption{$\WLdepth{h} \big( \Dprompt{} \ \| \ \tau_h ,\delta \big)$}\label{alg:semiauto_main}
\begin{algorithmic}[1]
\Statex {\color{gray} \# Learning semiautomata via multiscale boosting}
\State \textbf{Input:} Class of semiautomaton transitions $\calF$ over state space $S$ and alphabet $\Sigma$,
\Statex \hspace{3em} Current depth $h \in \{ 0,\frac{1}{2},1,\frac{3}{2},\cdots\}$,
\Statex \hspace{3em} Current length of instances $\tau_h \in \bbN$,
\Statex \hspace{3em} Target failure probability $\delta$,
\Statex \hspace{3em} Dataset of instances \smash{$\Dprompt{} = \{ \bx^i = (s^i_0,\bw^i) \}_{i=1}^{n}$}.
\Statex \hspace{3em} Learner alg. $\Base(\, \cdot \,\|\, \delta')$ for $\calF$ from i.i.d. next-state data (\Cref{def:iid-learning-F}).
\Statex \LineComment{{\color{blue} $\Base(\, \cdot \,\|\, \delta')$ has sample complexity $\nweak(\delta')$ parameterized by failure probability.}}
\Statex \hspace{3em} Base failure probability $\delta_0$
\smallskip
\State \textbf{Hyperparameters:} Decomposition schedule, $( L_h : h \in \big\{ \frac{1}{2},1,\frac{3}{2},\cdots \big\} )$ where $\tau_h \pmod{L_h} \equiv 0$,
\Statex \hspace{8.33em} Branching schedule, $( k_h : h \in \big\{ \frac{1}{2},1,\frac{3}{2},\cdots \big\} )$.
\smallskip
\State \textbf{Instantiate:} $\tau_{h-\frac{1}{2}} = \tau_h/L_h$ \textbf{if} $h \ge \frac{1}{2}$.
\State \textbf{if $h=0$, Return:} $\fhat \gets \Base(\Dprompt{} \, \|\, \delta_0)$. \State Truncate \smash{$\Dprompt{}$} to the first $\nsample^\star (h,\delta)$ instances. \LineLabel{alg:semiauto_main:dbound} \LineComment{{\color{blue} $\nsample^\star$ is defined in \cref{eq:nsamplestar}.}}
\State Split $\Dprompt{}$ into $k_h$ equal parts, $\big\{ \Dprompt{j} : j \in \{ 0,\cdots,k_h-1 \} \big\}$.
\For{$j=0,1,\cdots,k_h-1$}
\State Let $\calF_j \gets \big\{ \fhat^0,\cdots, \fhat^{j-1} \big\}$ and \smash{$\Dout{j} \gets \emptyset$}. \LineComment{{\color{blue} By convention, $\calF_0 = \emptyset$}.}
\For{instances $\bx = (s_0,\bw) \in \Dprompt{j}$}
\State $D_\bx \gets \split (\bx \| L_h)$. \LineComment{{\color{blue} $\bx$ is split into $L_h$ shorter instances which}}
\State $O \gets \subsample \big(D_\bx \| \calF_{j}, k_h \big)$. \NoComment{{\color{blue} are labeled by their terminal states.}}
\State \textbf{if} $O \ne \perp$, \textbf{then} update the dataset \smash{$\Dout{j} \gets \Dout{j} \cup \{ O \}$}.
\EndFor
\State Train model $\fhat^{j} \gets \WLdepth{h-\frac{1}{2}} \big( \Dout{j} \ \| \ \tau_{h-\frac12}, \delta/2k_h \big)$.
\EndFor
\State \textbf{Return:} $\fhat^{\circ L_h}$, where $\fhat = \maj \big( \big\{ \fhat^j : j \in \{ 0,\cdots,k_h-1\} \big\} \big)$.
\end{algorithmic}
\end{algorithm}

\begin{theorem}[Main guarantee for $\WLdepth{}$] \label{theorem:main-generic}
For any $\delta' \in (0,1)$, let $\Base(\cdot \,\|\, \delta')$ be any weak learner for $\calF$ from i.i.d. next-state data, as defined in \Cref{def:iid-learning-F}. Suppose $\Base$ admits a generic weak-learning guarantee satisfying $\nweak (\delta') \le \comp (\calF) \log(1/\delta')$.

\medskip
\noindent Let $\varepsilon,\delta \in (0,1)$ and suppose $H = \sqrt{\log_2(T)} \in \bbN$. Suppose $\WLdepth{} (\cdot \,\|\, T,\varepsilon,\delta)$ (\Cref{alg:semiauto_highacc}) is invoked with a base learner $\Base(\cdot \,\|\, \delta')$ with weak-learning sample complexity $\nweak (\delta') \le \comp (\calF) \log(1/\delta')$. Then $\WLdepth{}$ learns a model $\fhat : S \times \Sigma^T \to S$ with $\Pr_{\bx \sim \rho} \big[ \fhat (\bx) \ne \fstar_T (\bx) \big] \le \varepsilon$ with probability at least $1-\delta$, using sample and query complexity upper bounded by:
\begin{equation*}
    \nsample,\nquery \le 2^{\calOtilde \big( \sqrt{\log (T)} \big)} \cdot \left[ \frac{\comp (\calF) \log^2 (1/\varepsilon) \log(1/\delta)}{\varepsilon} \right].
\end{equation*}
\end{theorem}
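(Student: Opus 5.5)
We will prove \Cref{theorem:main-generic} by induction over the half-integer levels of the recursion, from $h=0$ upward. The inductive statement, $\IndHyp_h(\delta)$, is as follows. Suppose $\WLdepth{h}$ is run on $n \ge \nsample^\star(h,\delta)$ i.i.d.\ length-$\tau_h$ instances from some distribution $\mu$. Then, with probability at least $1-\delta$, it returns a model whose terminal-state error under $\mu$ is at most $\frac14$. Here $\mu$ is arbitrary, not only $\rho$ or $\rhobar$.

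The base case $h=0$ is exactly the weak-learning guarantee of \Cref{def:iid-learning-F}, with $\nsample^\star(0,\delta)=\nweak(\delta)$. For the final $\varepsilon$-accurate output of \Cref{alg:semiauto_highacc}, we treat the top level as one extra boosting stage with $k=\calO(\log(1/\varepsilon))$ rounds. This stage drives the error under $\rho$ from $\frac14$ down to $\varepsilon$ and is the source of the $\log^2(1/\varepsilon)/\varepsilon$ factor. In this stage, inverted sampling only finds a ``hard'' instance with probability about $\varepsilon$, so each round needs about $1/\varepsilon$ draws.

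For the inductive step at a level $h$, we combine boosting with composition.

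\textbf{Boosting.} We first view $\subsample$ (\Cref{alg:semi-subsample}) as a rejection-sampling device. Given the split $D_\bx=\split(\bx\|L_h)$ and the previous models $\calF_j$, it outputs at most one sub-instance. Conditioned on not outputting $\perp$, that sub-instance is distributed according to a boosting reweighting (filtering-style, as in Part~I) of the chunk mixture $\rhobar_\mu$ induced by $\mu$. We intend to reuse the boosting-by-filtering analysis of Part~I as a black box.
\begin{itemize}
\item If every $\fhat^j$ has error at most $\frac14$ on its reweighted distribution, then after $k_h=\calO(\log L_h)$ rounds the plurality vote $\maj$ has error at most $\frac{1}{4L_h}$ on $\rhobar_\mu$.
\item For half-integer levels ($L_h=1$), a constant $k_h$ suffices to go from $\frac14$ to a smaller constant. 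This is what gives slack for the union bound at the next level.
\end{itemize}

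\textbf{Composition.} Next we apply \Cref{obs:lg}. It turns error $\frac{1}{4L_h}$ on $\rhobar_\mu$ into error at most $\frac14$ for $\fhat^{\circ L_h}$ under $\mu$. Because each parent instance contributes at most one sub-instance, the datasets $\Dout{j}$ are i.i.d.\ conditioned on $\calF_j$. The inductive hypothesis at level $h-\frac12$ therefore applies to them with failure probability $\delta/2k_h$, and a union bound over $j$ closes the step.

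\textbf{Sample-count recursion.} The main obstacle is controlling how many parent instances are needed so that each $\Dout{j}$ has at least $\nsample^\star(h-\frac12,\delta/2k_h)$ elements. Once the current ensemble is accurate, hard chunks are rare, and $\subsample$ accepts with probability only about $\eta_j \gtrsim 1/(k_h L_h)$. I would first try to prove, as in Part~I's filtering lemma, that if the acceptance probability drops below this threshold, then the current vote is already accurate enough and boosting can stop early. This lets us set
\begin{equation*}
\nsample^\star(h,\delta)\lesssim k_h\cdot k_hL_h\cdot \nsample^\star\!\big(h-\tfrac12,\tfrac{\delta}{2k_h}\big),
\end{equation*}
with a Chernoff bound guaranteeing that enough samples are accepted. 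At half-integer levels the factor is only $\calO(1)$.

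\textbf{Unrolling.} We choose $H=\sqrt{\log_2 T}$ and $L_h=2^{H}$, so that $\prod_h L_h=T$. Unrolling the recursion over the $2H$ levels gives
\begin{equation*}
\prod_h \calO(k_h^2 L_h)\le T\cdot(\calO(\log L))^{2H}.
\end{equation*}
This is still linear in $T$. So the naive acceptance bound $1/L_h$ must be improved: the real saving has to come from charging the $L_h$ factor only once per level and using the fact that only one sub-instance is queried per parent.

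Concretely, $\subsample$ makes only $\calO(k_h)$ oracle calls per parent instance, which are the queries used to label the split boundaries and check the previous models. The rejection rate is then governed by the \emph{constant} error of the level-$(h-\frac12)$ models, not by $\frac{1}{L_h}$. I would verify this by tracking the acceptance probability of $\subsample$ as the ratio between weighted and unweighted chunk masses, which stays $\Omega(1/k_h)$ in the filtering analysis as long as the weighted error is still above target. This yields a per-level factor of $\mathrm{poly}(k_h)=\mathrm{polylog}(T)$ times at most $2^{\calO(\sqrt{\log T})}$ overall.

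\textbf{Query count.} Queries are bounded by summing over the recursion tree: each sample at each level incurs $\calO(k_h)$ queries. This gives the same $2^{\calOtilde(\sqrt{\log T})}$ factor, multiplied by $\comp(\calF)\log(1/\delta)$ (with the $\log(2k_h)$ accumulations absorbed), and by the $\log^2(1/\varepsilon)/\varepsilon$ factor from the top boosting stage.
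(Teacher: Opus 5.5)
Your proposal has a genuine gap at exactly the step that makes the bound subpolynomial. You treat $\subsample$ as a rejection sampler whose accepted output follows the filtering reweighting $\propto \rhobar_\mu(\cdot)\,w_j(\cdot)$, and then reuse Part~I's boosting-by-filtering analysis as a black box. That premise is false. $\subsample$ accepts whenever \emph{at least one} of the $L_h$ chunks passes its filter, and then returns a uniformly random passing chunk. Its output law $\lambdahat_j$ is therefore in general far from $\rhobar_\mu w_j$. Exact sampling would require additionally accepting with probability $|\calI|/L_h$, which brings back the $1/L_h$ acceptance rate you are trying to avoid. The paper shows (\Cref{ex:samplinghardness}, \Cref{lemma:worst-case-sampling}) that no procedure using $o(\mathrm{poly}(L))$ draws can sample that reweighted mixture even approximately, whether in TV or in the coverage sense.

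Your proposed fix, that the acceptance rate stays $\Omega(1/k_h)$ and is ``governed by the constant error of the level-$(h-\frac12)$ models,'' is asserted rather than derived, and it does not address this distribution shift. The query accounting is also off. $\split$ labels all $L_h$ boundary states of every parent instance, and the ranks in $\subsample$ need those labels, so each parent costs $L_h$ oracle calls, not $\calO(k_h)$. The final query bound is still $2^{\calOtilde(\sqrt{\log T})}$, but only because $\nquery \approx L H\,\nsample$ with $L = 2^{\sqrt{\log_2 T}}$ itself subpolynomial.

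The missing idea is the paper's change of target distribution.
\begin{enumerate}
\item It conditions on a high-probability event $\calK_j$. This event is built by bucketing $|\calI_j|$ dyadically, keeping only buckets of non-negligible conditional mass, and intersecting with $\calU_j\cap\calK_{j-1}$.
\item It compares $\lambdahat_j$ not to the filtering distribution but to $\lambda_j \propto \overline{\sigma_{\calK_j}}\,w_j$, and proves $\|\lambda_j/\lambdahat_j\|_\infty \lesssim c^{-1}\log L/\Pr(\calU_j\cap\calK_{j-1}\mid |\calI_j|\ge 1,\calH_{j-1})$.
\item The potential-function error decomposition is redone with an additive $\Pr(\calK_{k-1}^c)$ term.
\item Iterations where $\ptilde_j = \Pr(|\calI_j|\ge 1\mid \calK_{j-1},\calH_{j-1})$ falls below $p_j^\star = 1/(1024k^{5/2})$ are not handled by early stopping. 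Instead, $1-\prod_i\bigl(1-w_j(X_i)/\alpha^{j,k}_{\max}\bigr)\ge 1-e^{-R_j/\alpha^{j,k}_{\max}}$ together with Markov shows that the regression $R_j=\sum_i w_j(X_i)$ exceeds $\rjmax$ with probability at most $1/(32k^2)$. The event $\calK_j$ then truncates the tail of $R_j$.
\end{enumerate}
This yields a per-level sample multiplier of $\calO(k^{7/2}\log L)$ instead of $L$.

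It also shows your inductive hypothesis is too weak. Because of the $k^{7/2}\log L$ change of measure, the level-$(h+\frac12)$ models must reach error $\lesssim 1/(k_{h+1}^{7/2}\log L)$ on their own training distribution. That is why the half-integer levels use $k_{h+\frac12}=\Theta(\log\log T)$ rounds, rather than a constant number chosen only for union-bound slack.
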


\noindent The main proof is provided in \Cref{sec:mainproof}. Relevant lemmas are sketched in \Cref{sec:mainproofsketch}. \Cref{informaltheorem:main} follows from this result by instantiating $\Base$ as empirical risk minimization (ERM) on the next-token prediction objective (\Cref{def:erm}), and noting that $\comp (\calF)$ can be chosen as $d \log (|S|)$, where $d$ is the Natarajan dimension. With more sophisticated choices for $\Base$, such as the recently proposed algorithm of \cite{pabbaraju2026optimal}, we can achieve improved sample complexity guarantees such as $\comp (\calF) = d_{\texttt{DS}}$, where $d_{\texttt{DS}}$ is the DS dimension~\citep{daniely2015multiclass}. Our most general guarantees for $\WLdepth{}$ (given in \Cref{theorem:main}) are directly stated in terms of $\nweak (\cdot)$ (and do not assume a generic weak learning guarantee for $\Base$).

\medskip
\noindent In \Cref{sec:semiauto-depth1-proofs} we analyze a simpler variant of $\WLdepth{}$ which considers a single round of boosting + aggregation to understand the core mechanism.

\begin{remark}[On the technical condition on length $T$]
\Cref{alg:semiauto_main} requires the condition $\sqrt{\log_2(T)} \in \bbN$. The guarantees can be extended to any value of $T \in \bbN$ by replacing $T$ by $T'$, where $T'$ is the smallest value larger than $T$ such that there exists $H \in \bbN$ satisfying $T' = (L')^H$, where \smash{$L' = 2^{\sqrt{\log_2(T')}}$},\footnote{It suffices to choose $T' = 2^{m^2}$ for some $m \in \bbN$. Such a $T'$ always exists in $[T,2T^2]$.} and changing the class $\calF$ to $\calF'$ as follows: the state space $S$ is changed to $S' = S \times [T]$ enabling the ``timestep'' to be tracked, and, each $\pi \in \calF$ is changed to $\pi' \in \calF'$ as,
\begin{equation*}
    \text{For } s' = (s,t) \in S \times [T],\quad \pi'(s',w) = \begin{cases}
    (\pi(s,w),t+1) &\text{if } t < T, \\
    (s, t) &\text{if } t = T, \\
    \end{cases}
\end{equation*}
Changing $\calF$ to $\calF'$ results in the state space growing by a factor of $T$ in size. However, this transformation itself does not affect the learnability of $\calF$, since an algorithm for learning an unknown $\fstar \in \calF$ from i.i.d. next-state data also induces a model for learning the corresponding $(\pi')^\star \in \calF'$ from i.i.d. next-state data achieving the same error guarantee, and vice versa.
\end{remark}

\begin{algorithm}[t]
\caption{$\WLdepth{} \big( \Dprompt{} \ \| \ T,\varepsilon,\delta \big)$}
\label{alg:semiauto_highacc}
\begin{algorithmic}[1]
\Statex {\color{gray} \# High-accuracy extension of \Cref{alg:semiauto_main}}
\State \textbf{Input:} Class of semiautomaton transitions $\calF$ over state space $S$ and alphabet $\Sigma$,
\Statex \hspace{3em} Target length of instances $T$,
\Statex \hspace{3em} Target error $\varepsilon$ and failure probability $\delta$.
\Statex \hspace{3em} Dataset of instances \smash{$\Dprompt{} = \{ \bx^i = (s^i_0,\bw^i) \}_{i=1}^{n} \overset{\text{i.i.d.}}{\sim} \rho$}.
\Statex \hspace{3em} Learner alg. $\Base(\, \cdot \,\|\, \delta')$ for $\calF$ from i.i.d. next-state data (\Cref{def:iid-learning-F}).
\smallskip
\State \textbf{Instantiate:} $H \gets \sqrt{\log_2(T)}$
\Statex \hspace{5.25em} $\delta_0 \gets \delta/h_\varepsilon(T)$ where $h_\varepsilon(T) = 2^{C\sqrt{\log(T)}\log\log(T)} \cdot \log^2(1/\varepsilon)$ \LineComment{{\color{blue} $C>0$ is a large constant}}
\State \textbf{Return:} $\fhat \gets \WLdepth{H+\frac{1}{2}} \big( \Dprompt{} \ \| \ T, \delta \big)$, \LineComment{{\color{blue} $\WLdepth{h}$ defined in \Cref{alg:semiauto_main}.}}
\State \hspace{3.5em} invoked with $\Base (\, \cdot \,\|\, \delta')$ with base failure probability $\delta_0$,
\State \hspace{3.5em} \textit{decomposition schedule:} \vphantom{$2^{2^2_1}$}\smash{$L_h = L \triangleq 2^{\sqrt{\log_2(T)}}$} for $h \in \bbZ$, else $L_h = 1$, and, \LineLabel{alg:semiauto_main:decomposition_schedule}
\State \hspace{3.5em} \textit{branching schedule:} \smash{$k_h = C \log (L)$ for $h \in \bbZ$, $k_{h+\frac{1}{2}} = C\log\log(T)$ for $h \in \{ 0,1,\cdots,H-1\}$}
\State \hspace{11.8em} and \smash{$k_{H + \frac{1}{2}} = C \log(1/\varepsilon) \log\log(1/\varepsilon)$} for large $C>0$. \LineLabel{alg:semiauto_main:branch_schedule}
\end{algorithmic}
\end{algorithm}

\begin{algorithm}[t]
\caption{$\subsample (D_\bx \| \calF_j, k)$}\label{alg:semi-subsample}
\begin{algorithmic}[1]
\Statex {\color{gray} \# Subsample an instance from $D_\bx$ based on how many models in $\calF_j$ mislabel the instance.}
\State \textbf{Input:} Dataset $D_\bx = \{ (\bx^i,s^i) \text{ for } \bx^i = (s^i_0,\bw^i) \}_{i=0}^{L'-1}$ where $L' \in \{ 1, L \}$,\\
\hspace{3em} Set of $j$ models $\calF_j$.
\State \textbf{Instantiate:} $\err_\star \gets \frac{1}{4}$, $\calI \gets \emptyset$.
\For{$i = 0,1,\cdots,L'-1$}
\State Draw $\eta_i \sim \unif ([0,1])$.
\If{$\eta_i \le w_j (\bx^i) / \| w_j \|_\infty$} \LineComment{{\color{blue} Defined below across \cref{eq:alpha,eq:rank}. Computing}}
\State $\calI \gets \calI \cup \{ \bx^i \}$ \NoComment{{\color{blue} $w_j (\bx^i)$ uses knowledge of $s^i = \fstar_\tau (\bx^i)$ where $\tau = |\bx^i|$.}}
\EndIf
\EndFor
\State \textbf{Return:} $\bx'$ for $\bx' \sim \unif (\calI)$ if $\calI \ne \emptyset$. Else return $\perp$.

\vspace{-0.25em}
\Statex\hrulefill
\vspace{0.25em}

\Statex For $0 \le r \le j < k$, the weight $w_j$ is defined as $w_j (\bx) = \alpha^{j,k}_{\rank_j (\bx)}$ and $\| w_j \|_\infty = \max\limits_{0 \le r \le j} \alpha^{j,k}_r$, where,
\vspace{-0.75em}
\begin{align}
    \alpha^{j,k}_r &= \beta_r^{j+1,k} - \beta_{r+1}^{j+1,k}, \text{ where, } \beta^{j,k}_r = \begin{cases}
    \bbI (r \le k/2) &\text{if } j = k \\
    \err_\star \cdot \beta_r^{j+1,k} + \left( 1 - \err_\star \right) \cdot \beta_{r+1}^{j+1,k} \quad &\text{if } j < k
\end{cases}
\label{eq:alpha}
\end{align}

\Statex $\rank_j (\bx)$ counts the number of models in $\calF_j$ which guess the terminal-state label on $\bx$ correctly: for $j \ge 0$,
\begin{equation} \label{eq:rank}
    \rank_j (\bx) \gets \big| \big\{ \pi (\bx) = \fstar_\tau (\bx) : \pi \in \calF_j \big\} \big| \in [0,j]
\end{equation}
where $\tau = |\bx|$.
\end{algorithmic}
\end{algorithm}

\begin{algorithm}[th]
\caption{$\split (\bx \| L')$ where $\bx = (s_0,\bw_{1:t})$}\label{alg:split}
\begin{algorithmic}[1]
\Statex {\color{gray} \# Split a length-$t$ instance into $L'$ instances of length $t/L'$ by querying $\etoe(\cdot)$ at boundary states.}
\State \textbf{Input:} Instance $\bx = (s_0,\bw_{1:t})$ with $t \pmod{L'} \equiv 0$, supervision oracle $\etoe(\cdot)$.
\State Let $\tau \gets t / L'$.
\For{$i = 0, \cdots, L'-1$}
\State $s_{\tau (i+1)} \gets \etoe \big( (s_{\tau i}, \bw_{\tau i+1:\tau (i+1)}) \big)$. \LineComment{{\color{blue} $L'$ calls to the $\etoe (\cdot)$ oracle in total}}
\EndFor
\State \textbf{Return:} $D_\bx = \big\{ (\bx^i,s_{\tau (i+1)}) \text{ where } \bx^i = (s_{\tau i},\, \bw_{\tau i+1:\tau (i+1)}) : i = 0,\cdots,L'-1 \big\}$
\end{algorithmic}
\end{algorithm}

\subsection{Comparison with Non-Curriculum Baselines} \label{subsec:baselines}

To interpret \cref{theorem:main-generic}, we contrast the dependence on the sequence length-$T$ achieved by $\WLdepth{}$ with that of two natural non-curriculum baselines: SFT on full sequences and end-to-end learning from final-state supervision \citep{joshi2025theory}.\footnote{We remark that these baselines are more general than $\WLdepth{}$, in the sense that they can be applied to general autoregressive models, not just semiautomata.}

\subsubsection{Vanilla SFT: Learning from Full Chains-of-Thought} \label{subsubsec:CoT}

Arguably the most natural approach to learning semiautomata is to use supervised fine-tuning (SFT) on full chains-of-thought. Concretely, in our framework, this corresponds to querying the $\etoe(\cdot)$ oracle $T$ times on each instance in the training dataset to label the full state sequence $\fstar_{1:T} (\bx^i)$. Such a labeled dataset is composed of many next-state examples of the form $(s,w) \mapsto \fstar (s,w)$, and a model $\pihat$ can be fit to this dataset via empirical risk minimization on the next-token prediction objective:
\begin{align*}
    \pihat \in \argmin_{\pi \in \calF} \sum_{i = 1}^{n} \sum_{t=1}^T \bbI \big( \pi (s^i_{t-1},w^i_t) \ne \fstar (s^i_{t-1},w^i_t) \big).
\end{align*}
where $\bx^i=(s_0^i,\bw^i_{1:T})$ and $s^i_t = \fstar_t ( \bx^i )$ for all $i \in [n]$ and $t \in [T]$, given an input dataset of $n$ instances $\{ \bx^i \}_{i=1}^n$. This is an instance of a more general class of \emph{full-CoT} learners we define as follows.

\begin{definition}[Full CoT learner] \label{def:fullCoT}
Given a dataset of instances \smash{$D = \{ \bx^i \}_{i=1}^n \overset{\text{i.i.d.}}{\sim} \rho$}, a full CoT learning algorithm $\Alg(D)$ chooses a subset of instances indexed $S \subseteq [n]$ and queries $\etoe(\cdot)$ $T$ times on each instance $\bx^i$ for $i \in S$ to observe the full state sequence $\fstar_{1:T} (\bx^i)$. The final model returned is trained on the CoT labeled dataset---for example, by using next-token prediction.
\end{definition}

\noindent As full CoT learners commit to querying full state sequences on every instance (even when potentially unnecessary), we show that any learner in this class requires linear-in-$T$ query complexity; the proof of this result is given in \Cref{proof:lower-bound-CoT}.

\begin{proposition} \label{prop:lower-bound-CoT}
Consider any full CoT learner $\Alg$ (cf. \Cref{def:fullCoT}). For any $\varepsilon,\delta \in (0,\tfrac{1}{2})$, there exists a class of next-state/token predictors $\calF$ with Natarajan dimension $d$ and a target distribution $\rho$ such that for $\Alg$ to return a model $\fhat$ such that $\Pr_{\bx \sim \rho} \big[ \fhat (\bx) \ne \fstar_T (\bx) \big] \le \frac{1}{4}$ with probability at least $\frac{1}{2}$, its sample complexity must satisfy $\nsample \ge c d$ for a universal constant $c > 0$. Furthermore, this implies that for any such approach, the query complexity must satisfy $\nquery \ge c dT$ to achieve the same guarantee.
\end{proposition}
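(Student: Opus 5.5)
Since every sampled instance a full CoT learner labels costs exactly $T$ queries, the query bound follows from the sample bound: $\nquery = T\cdot|S| \ge T\cdot(\text{number of instances actually used})$. The substance is therefore to show that $\Omega(d)$ fully labeled length-$T$ instances are necessary. We will reduce to the standard multiclass PAC lower bound, which says that learning a class of Natarajan dimension $d$ to error $\le 1/4$ with probability $\ge 1/2$ needs $\Omega(d)$ i.i.d. examples. The reduction has two parts: build a hard class $\calF$, and make each length-$T$ instance reveal the label at essentially only one shattered point.

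\textbf{Construction.} Let the shattered points be $d$ inputs $(s_\star, a_1),\dots,(s_\star,a_d)$, sharing a common start state $s_\star$ and having distinct letters $a_i\in\Sigma$, with label pairs $y_i\neq y_i'$. Make all other transitions identical across $\calF$ and absorbing, with one exception: the states $y_i, y_i'$ each transition, under a padding letter $\#$, back to distinct "memory" states that remain distinguishable. Concretely, after step 1 the state is frozen, so that $\pi(y,\#)=y$ for every label state $y$. The distribution $\rho$ draws $i\sim\Unif([d])$ and outputs $\bx=(s_\star, a_i\#^{T-1})$. Then $\calF=\{\pi_I: I\subseteq[d]\}$, where $\pi_I(s_\star,a_i)=y_i$ if $i\in I$ and $y_i'$ otherwise, and everything else is fixed. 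This class has Natarajan dimension exactly $d$, and $\fstar_T(\bx)=\fstar(s_\star,a_i)$.

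\textbf{Information argument.} The CoT of $\bx$ is $(\fstar(s_\star,a_i), \fstar(s_\star,a_i),\dots)$, so it reveals only the bit $\bbI(i\in I^\star)$. Since the oracle only returns states on queried prefixes of sampled instances, a full CoT learner with $n$ instances therefore sees at most $n$ i.i.d. labeled examples of the underlying multiclass problem on the uniform distribution over $[d]$. Any predictor $\fhat$ of terminal states induces a classifier of $i$, and its terminal-state error under $\rho$ equals that classifier's error. Take $I^\star$ uniformly random. On each index $i$ not among the observed indices, the posterior of the bit is uniform, so the learner errs with probability $1/2$. If $n\le d/4$, at least $3d/4$ indices are unseen, and the expected error is at least $3/8$. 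We then convert this to the high-probability statement: since the error is bounded, a Markov/averaging argument over $I^\star$ and the sample shows that $\Pr[\text{err}>1/4]>1/2$ for some fixed $I^\star$, which contradicts the requirement. This yields $\nsample\ge n\ge cd$, and hence $\nquery\ge cdT$.

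\textbf{Main obstacle.} The delicate step is the constant bookkeeping in the last conversion: expected error $3/8$ does not immediately give probability $>1/2$ of error exceeding $1/4$. I would handle this either by making the unseen fraction larger (take $n\le d/16$) and applying a Paley–Zygmund/Hoeffding argument over the independent bits of the unseen indices, or by simply citing the standard minimax multiclass lower bound with appropriately chosen constants. A secondary point is handling learners that use fewer than all $n$ samples (a subset $S\subseteq[n]$). This is harmless, since the argument bounds $|S|$, and $\nsample\ge|S|$.
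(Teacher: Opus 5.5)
Your proposal is correct and has the same structure as the paper's proof. The paper also plants a Natarajan-shattered set in the first transition: it varies the start state under the fixed word $(-1,+1,\dots,+1)$ with the identity on $+1$, whereas you vary the first letter and freeze with $\#$. It then notes that each full CoT reveals exactly one label of that multiclass problem, and cites the standard multiclass PAC lower bound rather than reproving it.

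If you reprove it yourself, use Hoeffding or Chebyshev on the unseen bits, conditionally on the learner's output; this works once $d$ exceeds an absolute constant. Paley--Zygmund cannot certify $\Pr[\mathrm{err}>1/4]>1/2$ here, since it caps out below $1/2$.

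Bounding $|S|$ directly, as you do, is slightly more careful than the paper's closing step $\nquery \ge T\cdot\nsample$, which tacitly assumes every sampled instance is labeled.
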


\noindent 
In Part I \citep{rajaraman2026learning}, we showed that for the goal of achieving high accuracy ($1-\varepsilon$), it is possible to improve over vanilla SFT by adaptively choosing which instances to gather full CoTs for. This leads to query complexity $\calOtilde (d T / \varepsilon)$ to $\calOtilde \big( d / \varepsilon + dT \big)$, pushing the dependency on $T$ into a burn-in term. Ultimately though, each CoT requires making $\Omega(T)$ queries to the $\etoe(\cdot)$ oracle, so the query complexity is still linear in $T$.\loose

\subsubsection{End-to-End Feedback} \label{subsubsec:e2e}

Since the final objective in our setting is to accurately predict the final state $\fstar_T(\bx^i)$, another natural learning approach is to ignore the intermediate CoT altogether, and label each instance in the training dataset only with the final state $\fstar_T(\bx^i)$. One can then train a model which directly classifies the final state based on the instance. We formally define this class of \emph{end-to-end} learners as follows.
\begin{definition}[End-to-end learner] \label{def:e2e}
Given a dataset of instances \smash{$D = \{ \bx^i \}_{i=1}^n \overset{\text{i.i.d.}}{\sim} \rho$}, an end-to-end learning algorithm $\Alg(D)$ queries $\etoe(\cdot)$ once per instance to label the terminal state $\fstar_T (\bx^i)$. A model is then trained on this dataset. For example, by finding a model \smash{$\fhat$} consistent with all terminal labels---that is, \smash{$\fhat_T (\bx^i) = \fstar_T(\bx^i)$} for all $i \in [n]$---and predicting the terminal state on new instances using $\fhat_T$. This example corresponds to learning a model by empirical risk minimizer over the induced ``end-to-end'' class $\calF_T$ defined as $\calF_T = \{ \pi_T : \pi \in \calF \}$ \citep{joshi2025theory}.
\end{definition}

\noindent While this approach seems potentially appealing since it requires only one $\etoe(\cdot)$ query per instance, the absence of intermediate CoT grounding makes learning from such feedback more challenging statistically: \citet{joshi2025theory} show that any end-to-end learner requires a number of examples linear in $T$.
\begin{proposition}[Corollary of Theorem E.1 of \cite{joshi2025theory}] \label{prop:lower-bound-e2e}
Consider any end-to-end learning algorithm $\Alg$ (cf. \Cref{def:e2e}). For any $\varepsilon,\delta \in (0,1/2)$, there exists a class of next-token predictors $\calF$ with Natarajan dimension $d$, and a target distribution $\rho$ over instances, such that for $\Alg$ to return a model $\fhat$ such that $\Pr_{\bx \sim \rho} \big[ \fhat (\bx) \ne \fstar_T (\bx) \big] \le \varepsilon$ with probability at least $1-\delta$, $\Alg$ must incur sample complexity $\nsample \ge \Omega \big( \big( dT +\log(1/\delta) \big) \cdot \varepsilon^{-1} \big)$.
\end{proposition}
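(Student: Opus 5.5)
This statement is a corollary of Theorem E.1 of \citet{joshi2025theory}, so the plan is to reduce to that result rather than build a new hard instance. That theorem gives an end-to-end lower bound: for every $T$ and $d$ there is a class $\calF$ of next-token predictors with Natarajan (or VC) dimension $\calO(d)$, and a distribution over length-$T$ prompts, such that any learner seeing only i.i.d.\ pairs $(\bx^i,\fstar_T(\bx^i))$ needs $\Omega((dT+\log(1/\delta))/\varepsilon)$ samples to reach error $\varepsilon$ with probability $1-\delta$.

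\textbf{Step 1: translate their construction into a semiautomaton.} I would check that their hard class can be written in our Markovian form $\pi: S\times\Sigma\to S$. If their predictors depend on the full history, I would enlarge the state. One option is to store a position counter in the state, as in the remark on extending $T$, via $S'=S\times[T]$. The other is to store whatever finite summary their construction needs. Either way the class becomes time-invariant and Markovian, and the Natarajan dimension does not increase.

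A concrete fallback construction, which I believe mirrors theirs, goes as follows:
\begin{itemize}
\item The state holds a counter $t$ and a single "flag" bit.
\item At step $t$, the transition reads $w_t$ and may flip the flag according to $\pi(t,w_t)$.
\item The induced end-to-end class $\calF_T$ then behaves like parities or ORs of $T$ evaluations of a dimension-$d$ base class.
\item Such a class has VC dimension $\Omega(dT)$ on a suitable set of prompts.
\end{itemize}

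\textbf{Step 2: observe what an end-to-end learner sees.} Under \Cref{def:e2e}, a learner queries $\etoe(\cdot)$ exactly once per instance, on the full length-$T$ instance. Its information is therefore exactly the labeled sample $\{(\bx^i,\fstar_T(\bx^i))\}$, which is the protocol of \citet{joshi2025theory}. The final model $\fhat$ may be arbitrary, so their lower bound, which holds for improper learners, transfers directly. The sample and query complexity bounds coincide because there is one query per sample.

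\textbf{Step 3: recover the $\varepsilon$ and $\delta$ dependence.} This follows from the standard PAC lower bound argument (Ehrenfeucht et al.) applied to $\calF_T$, using its shattering dimension $\Omega(dT)$:
\begin{itemize}
\item Put mass $8\varepsilon$ spread across the shattered points, and place the remaining mass on a single point.
\item The $\log(1/\delta)/\varepsilon$ term comes from needing to observe at least one point in the low-mass region.
\end{itemize}

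\textbf{Main obstacle.} The main difficulty is Step 1, specifically exhibiting $\Omega(dT)$ shattering of $\calF_T$ while keeping $\Ndim(\calF)\le d$. I would first try a product construction: the $T$ positions index disjoint copies of a $d$-shattered base set, made accessible through the counter. Each of the $dT$ prompts then activates exactly one (position, point) pair, and all other steps use a neutral letter that leaves the flag unchanged.
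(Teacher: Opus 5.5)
Your outer structure matches the paper, which simply cites Theorem E.1 of \citet{joshi2025theory}. As you say, an end-to-end learner only sees i.i.d.\ pairs $(\bx,\fstar_T(\bx))$, and the $(\varepsilon,\delta)$ dependence follows from the Ehrenfeucht et al.\ argument once the end-to-end class shatters $\Omega(dT)$ prompts. The gap is in Step~1, which you rightly call the crux: neither concrete construction you propose there can work. With a deterministic counter and one flag bit, on $n$ prompts the only pairs $(s,w)$ that \emph{any} $\pi\in\calF$ can visit are the at most $2nT$ pairs $((b,t-1),w^i_t)$. This set does not depend on $\pi$, and the end-to-end labels are a function of $\pi$ restricted to it. Each such pair has only two possible outputs, so the restriction has VC dimension at most $\Ndim(\calF)\le d$. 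There are therefore at most $(2enT/d)^d$ end-to-end labelings, and the end-to-end class has dimension $\calO(d\log T)$, not $\Omega(dT)$. The same count shows that parities or ORs of $T$ evaluations of a single dimension-$d$ function have dimension $\calO(d\log T)$. Your product construction avoids this bound only by violating the constraint. If each of the $dT$ prompts has its label fixed by the flip at one pair $((b_0,t),x_j)$, then shattering the prompts means $\calF$ itself shatters those $dT$ pairs. Hence $\Ndim(\calF)\ge dT$, and the bound collapses to $\Omega(\Ndim(\calF)/\varepsilon)$ with no extra factor of $T$.

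The missing idea is \emph{adaptivity}: the factor $T$ can only come from trajectories whose visited states depend on $\pi$ and branch, so that exponentially many pairs $(s,w)$ are reachable across the class. Here is one construction:
take states $(j,i,v)$ with $j\in[d]$, $i\in[T]$, and $v$ a node of a depth-$T$ binary tree. On a search letter, let $\pi_\theta$ move to the child $vb$ with $b=\bbI(\theta_j\ge m(v))$, where the midpoints $m(v)$ of the dyadic intervals indexed by $v$ are all distinct. On a final readout letter, map $(j,i,v)$ to the bit $v_i$ by a transition common to all $\theta$. Thresholds have VC dimension one, so $\Ndim(\calF)=d$. Yet the prompts $((j,i,\emptyset),\mathrm{search}^T\mathrm{out})$ realize all $2^{dT}$ labelings, because the leaf reached is the binary expansion of $\theta_j$. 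The class-independent readout also fixes a second problem in your Step~1. If you Markovize a history-dependent generator by putting the context (or anything that determines it) into the state, then the terminal state returned by $\etoe(\cdot)$ reveals the entire chain of thought. The end-to-end learner is then effectively a full-CoT learner, and the lower bound no longer transfers. A position counter handles explicit time dependence but not history dependence, so the state must be a lossy summary, with the answer extracted only at the end.
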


\noindent Ignoring statistical considerations, for simple models such as iterated linear classifiers, learning from end-to-end feedback is computationally hard under standard complexity theoretic assumptions, even when next-token prediction is computationally efficient \citep[Theorem 4.4]{joshi2025theory}.

\subsection{Applications of \Cref{theorem:main-generic}}
\label{sec:applications}

In this section we instantiate \Cref{theorem:main-generic} for the specific classes of semiautomata previously discussed in \Cref{sec:prelim}: Regular languages and linear recurrences over finite fields.

\paragraph{Regular languages.} As discussed earlier, the class of regular languages induced by regular expressions with alphabetic length at most $k$ induces a natural class of semiautomata. Specifically, the induced semiautomaton is always initialized at the state $s_0=0$, and input words are strings in $\Sigma^T$. The terminal state of the semiautomaton can be used to decide whether the input word is a member of the regular language (i.e., whether it is generated by the corresponding regular expression).

\medskip
\noindent In more detail, let $\RE_k (\Sigma)$ denote the set of regular expressions over alphabet $\Sigma$ with alphabetic length at most $k$. The number of syntactically distinct languages realized by $\RE_k (\Sigma)$ is at most $|\Sigma|^{\Theta(k)}$~\citep[Table 8]{lee2004enumerating}. The DFA for each regular language can be obtained by determinizing an NFA on $k+1$ states (Glushkov's construction) via the powerset construction.  Thus, $\RE_k (\Sigma)$ corresponds to a class of semiautomata $\calF_{\RE_k}$ on $|S| = 2^{\Theta(k)}$ states, and the size of the class itself is $|\calF_{\RE_k}| = |\Sigma|^{\Theta(k)}$. See \Cref{sec:regex-example} for a detailed example. \loose

\medskip
\noindent In this setting, the $\etoe (\cdot)$ oracle identifies the set of states which are reachable at time $t$. By divide-and-conquer, it can be implemented in parallel time $\poly(k, \log(T))$ (and with $T \cdot \poly(k)$ work). Furthermore, when \Cref{theorem:main-generic} is specialized to this setting, the following guarantees on sample complexity and query complexity are established for $\WLdepth{}$ (\Cref{alg:semiauto_highacc}),
\begin{align*}
    \nsample, \nquery \le 2^{\calOtilde(\sqrt{\log(T)})} \cdot \frac{k \log (|\Sigma|) \log^2 (1/\varepsilon) \log(1/\delta)}{\varepsilon},
\end{align*}
where the target failure probability is $\delta$, error rate is $\varepsilon$. Notably, this guarantee only scales linearly in the bound on the alphabetic length, $k$, even though the underlying (semi-)automata are on $2^{\Theta(k)}$ states.

\paragraph{Linear recurrences over finite fields.} Linear recurrences over finite fields correspond to semiautomata with state space $S = \bbF_q^d$, input alphabet $\Sigma = \bbF_q^d$, and transition function $\pi(s,w) = As + Bw$ for unknown matrices $A \in \bbF_q^{d \times d}$ and $B \in \bbF_q^{d \times d}$. Simulating for $T$ steps computes the linear recurrence $s_T = A^T s_0 + \sum_{t=1}^T A^{T-t} B w_t$. The class $\calF$ of all such linear transitions is parameterized by the pair $(A,B)$, giving $|\calF| = q^{2d^2}$, and Natarajan dimension $\Ndim (\calF) \le \calO ( d^2 \log q)$. Specializing \Cref{theorem:main-generic} to this setting results in the following sample complexity and query complexity upper bounds for $\WLdepth{}$:
\begin{align*}
    \nsample, \nquery \le 2^{\calOtilde(\sqrt{\log(T)})} \cdot \frac{d^2 \log (q) \log^2 (1/\varepsilon) \log(1/\delta)}{\varepsilon}.
\end{align*}
Here, the target failure probability is $\delta$, error rate is $\varepsilon$. This guarantee only scales polynomially in the dimension $d$, even though the underlying semiautomaton has $|S| = q^d$ states. In this setting, the $\etoe(\cdot)$ oracle again has a natural interpretation as before, computing the state after $t$ steps of recursion, which by divide-and-conquer can be computed in parallel in time $\poly(d,\log(T))$ with $T \cdot \poly(d)$ work.

\section{RLVR: Fine-Tuning a Weak Reference Model} \label{sec:rl}

This section adapts the compositional curriculum from \Cref{sec:semiauto} to the RLVR setting, where the learner aims to improve a weak reference model $\piref$ using an outcome verifier (\Cref{def:verifier}). We show that the compositional curriculum reduces the coverage requirement on $\piref$ from the full sequence length $T$ to the shorter block scale $B$, an exponentially weaker condition. We begin by motivating the need for a compositional curriculum through a discussion of naive RLVR approaches (\cref{subsec:Vlearn}), then describe the algorithm $\WLdepth{}.\RL$ (\Cref{sec:overview-rl}), and finally present the main guarantee (\Cref{subsec:main-rl}).

\subsection{Motivation: Naive RLVR Methods and the Need for Coverage} \label{subsec:Vlearn}
\noindent Given a pre-trained model $\piref$ and an outcome verifier $\calV$, the basic principle underlying standard RLVR fine-tuning methods (e.g., GRPO~\citep{shao2024deepseekmath}) is to draw rollouts from $\piref$, filter by outcome correctness, and train on the survivors. \rlfinetune (\Cref{alg:Vlearn}; also used in Part I \citep{rajaraman2026learning}) provides a simple, theoretically tractable instantiation of this principle. Given a distribution over problem instances, it repeatedly generates candidate CoTs from $\piref$, keeps those whose terminal state is verified as correct by $\calV$, and trains a model on the filtered dataset. Under a sequence-level coverage assumption \cref{eq:cov}, each rollout produces a fully correct CoT with probability at least $(\CseqT)^{-1}$, and altogether roughly
\begin{equation}
    \calOtilde \left( \frac{d\CseqT}{\varepsilon} \right)
\end{equation}
prompts, verifier queries, and training time suffice to find an $\veps$-optimal model. The obvious limitation of this approach is that it requires the reference model to have nontrivial coverage over instances of length $T$. If the model $\piref$  only satisfies coverage at length $B \ll T$ (i.e., it is not strong enough to solve length-$T$ instances directly), one can at best hope for a bound of the form $\CseqT \leq (\Cblock)^{T/B}$, which scales exponentially in the number of blocks.\footnote{For unstructured problems, this is the best possible bound for any algorithm, not just \rlfinetune; see \citet{foster2025good}.} 

\begin{algorithm}[t]
    \caption{$\Vlearn \big( \Dprompt{} \ \| \ B,\varepsilon, \delta \big)$} \label{alg:Vlearn}
    \begin{algorithmic}[1]
    \Statex {\color{gray} \# Sharpening a reference model $\piref$ satisfying sequence-level coverage using verifier feedback.}
    \State \textbf{Input:} Class of semiautomaton transitions $\calF$ over state space $S$ and alphabet $\Sigma$,
    \Statex \hspace{3em} Number of steps of semiautomaton simulation $B$,
    \Statex \hspace{3em} \smash{Reference model $\piref$ satisfying \cref{eq:cov} with parameters $(B,\Cblock,\Cout)$},
    \Statex \hspace{3em} Outcome verifier $\calV$,
    \Statex \hspace{3em} Target error rate $\varepsilon$ and failure probability $\delta$,
    \Statex \hspace{3em} Dataset of instances $\Dprompt{} = \{ \bx^i = (s^i_0,\bw^i) \}_{i=1}^{n}$.
    \State \textbf{Instantiate:} $m \gets \Cseq^{(B)} \log (4n \Cseq^{(B)} / \delta)$.
    \For{$\bx^i \in \Dprompt{}$}
        \State Draw CoTs, $\by^{i,j} \sim \piref_{1:B} (\, \cdot\, |\, \bx^i)$ for $j = 1, \cdots, m$.
        \State Let $D_i$ denote the set of deduplicated CoTs among $\{ \by^{i,j} \}_{j=1}^m$.
        \State Discard low-probability CoTs which lead to incorrect answers:
        \begin{equation*}
            \widetilde{D}_i \gets \big\{ \by \in D_i : \calV ( \bx^i, y_B) = 1 \text{ and } \piref_{1:B} (\, \by \,|\, \bx^i) \ge \big( \Cseq^{(B)} \big)^{-1} \big\}.
        \end{equation*}
    \EndFor
    \State \textbf{Return:} $\pihat \in \underset{\pi \in \calF}{\operatorname{argmin}} \ \calL (\pi ; \Dprompt{})$, where $\calL (\pi ; \Dprompt{}) = \frac{1}{n} \sum_{i=1}^n \mathbb{I} \big( \pi_{1:B} (\bx^i) \not\in \widetilde{D}_i \big)$. \label{alg:Vlearn:7}
    \end{algorithmic}
\end{algorithm}

\subsection{Overview of \mainalgRL}
\label{sec:overview-rl}

To overcome the coverage requirement for naive RLVR, we adopt a compositional curriculum that decomposes length-$T$ problem instances into length-$B$ instances, the scale at which $\piref$ is reliable. Our algorithm, $\WLdepth{}.\RL$ (\Cref{alg:semiauto_RL}), uses $\Vlearn$ as an inner-loop primitive to improve $\piref$ on length-$B$ instances, wrapped in an outer loop that constructs the curriculum using an adapted version of \Cref{alg:semiauto_main}. As in our previous results, we critically use that the semiautomaton is time-homogeneous: the same transition map governs every step, so any length-$B$ block of a length-$T$ instance---beginning at an intermediate state and treating it as a fresh start state---is itself a valid length-$B$ instance. Consequently, the outcome verifier $\calV$ can be applied to check attempts at solving each such block.\loose

\medskip\noindent
In more detail, $\WLdepth{}.\RL$ (\Cref{alg:semiauto_RL}) is an invocation of $\WLdepth{H}$ (\Cref{alg:semiauto_main}), but uses a simplified decomposition and branching schedule with $H=1$ (in other words, the algorithm considers instances of length $T$ at the top level and length $B$ at the bottom level, with no intermediate levels):
\begin{itemize}
\item At the top level, we set $L_1=1$ and $k_1=\Theta(\log(1/\varepsilon))$, which corresponds to boosting from constant accuracy at length-$T$ to accuracy $1-\varepsilon$ at length-$T$.
\item At level $h=\frac{1}{2}$, we set $L_{\frac{1}{2}}=T/B$ and $k_{\frac{1}{2}}=\Theta(\log(T/B))$, which corresponds to aggregating and composing short-range models on length-$B$ blocks with constant accuracy into a model solving length-$T$ instances which also achieves constant accuracy.
\item At the bottom level, we set $L_0=1$ and $k_0=1$, which corresponds to training a length-$B$ model to constant accuracy; for this, we use $\Base\equiv \Vlearn$ using feedback from the outcome verifier $\calV$.
\end{itemize} 
Similar to its counterpart in the iSFT setting, $\WLdepth{}.\RL$ can be viewed as an outer-loop which constructs a curriculum of independent length-$B$ instances, wrapped around an inner loop which improves the reference model $\piref$ on these length-$B$ instances via RL. Our analysis makes use of the following guarantee for $\Vlearn$, which asserts that it can improve $\piref$ to constant accuracy on length-$B$, but any algorithm with a similar guarantee can be used in its place. In this sense, \mainalgRL can be viewed as a reduction to RLVR fine-tuning over length-$B$ instances.
\begin{proposition}[Guarantee for $\Vlearn$; adapted from~\citealp{rajaraman2026learning}] \label{prop:Vlearn-B}
    Fix $\delta \in (0,1)$, and suppose the reference model $\piref$ satisfies sequence-level coverage with parameter $\Cblock$ at length $B$ (\Cref{def:ref-model}). There is an absolute constant $C > 0$ such that when $\Vlearn (\, \cdot \,\|\, B,\varepsilon,\delta)$ is run on a dataset of size
    \begin{equation*}
        \nsample \ge \frac{C \big( d \log \big( B |S| \Cblock \big) \log(1/\varepsilon) + \log(1/\delta) \big)}{\varepsilon}
    \end{equation*}
    drawn from some target distribution $\rhoB$ over length-$B$ instances, the resulting model $\fhat : S \times \Sigma^B \to S$ satisfies \smash{$\Pr_{\bx \sim \rhoB} \big[ \fhat (\bx) \ne \fstar_B (\bx) \big] \le \varepsilon$} with probability at least $1-\delta$. Here $d$ is the Natarajan dimension of $\calF$. Furthermore, the number of queries $\Vlearn$ makes to the outcome verifier $\calV$ is at most, \smash{$\nquery \le \calOtilde \big( \frac{d \Cblock}{\varepsilon} \big)$}, and the computational cost of $\Vlearn$ is at most, \smash{$\ncomp \le \calOtilde \big( \frac{d B \Cblock}{\varepsilon} \big)$}, where \smash{$\calOtilde(\cdot)$} hides logarithmic factors in \smash{$|S|, B, \Cblock$, $\varepsilon$ and $1/\delta$}. Moreover, the verifier is queried only to check predictions for the terminal state of length-$B$ instances. Here the computational cost is the number of states generated from $\piref$ or any other model in $\calF$ over the course of training (cf. \Cref{remark:cost}).
    \end{proposition}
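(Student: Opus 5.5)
The proof reduces \Cref{prop:Vlearn-B} to a realizable ERM argument over a finite, data-dependent target set per prompt. There are three steps: show that every filtered set $\widetilde{D}_i$ contains the true CoT $\by^{\star,i} = \fstar_{1:B}(\bx^i)$; prove uniform convergence for the loss $\calL$; and bound the verifier queries and generation cost.

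\textbf{Step 1: the true CoT survives filtering.} By sequence-level coverage (\Cref{def:ref-model}), $\piref_{1:B}(\by^{\star,i}\mid\bx^i)\ge (\Cblock)^{-1}$. Among $m = \Cblock\log(4n\Cblock/\delta)$ independent rollouts, $\by^{\star,i}$ therefore fails to appear with probability at most $(1-1/\Cblock)^m \le \delta/(4n\Cblock)$. A union bound over the $n$ prompts puts the true CoT in every $D_i$ with probability $1-\delta/4$. The true CoT ends in $\fstar_B(\bx^i)$, so it passes the verifier check, and it meets the probability threshold. Hence $\by^{\star,i}\in\widetilde{D}_i$, so $\fstar$ attains zero empirical loss and the minimizer $\pihat$ on line~\ref{alg:Vlearn:7} also has $\calL(\pihat;\Dprompt{})=0$.

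\textbf{Step 2: generalization (the main obstacle).} Zero empirical loss only says that $\pihat_{1:B}(\bx^i)$ is some verified, high-probability CoT. In particular its terminal state is correct, since every element of $\widetilde{D}_i$ passed the verifier. The difficulty is that the targets $\widetilde{D}_i$ are random and depend on the rollouts. I would handle this by defining a deterministic target set
\[
A(\bx) = \{\by : \piref_{1:B}(\by\mid\bx)\ge (\Cblock)^{-1},\ y_B=\fstar_B(\bx)\},
\]
which satisfies $|A(\bx)|\le\Cblock$ and $\widetilde{D}_i\subseteq A(\bx^i)$. Then $\pihat_{1:B}(\bx^i)\in A(\bx^i)$ for all $i$, and on the event $\pi_{1:B}(\bx)\in A(\bx)$ the terminal prediction is correct.

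It therefore suffices to bound $\Pr_{\bx\sim\rhoB}[\pi_{1:B}(\bx)\notin A(\bx)]$ uniformly over the induced sequence class $\{\pi_{1:B}\}$, using a realizable-case bound with the $0$-$1$ loss $\bbI(\pi_{1:B}(\bx)\notin A(\bx))$. The growth function of the induced class on $n$ prompts is at most that of $\calF$ on $nB$ transition points, which by the Natarajan lemma is at most $(nB|S|^2)^{d}$. The standard double-sample (symmetrization) argument for realizable learning with a fixed, sample-independent loss then gives error $\varepsilon$ once
\[
n\gtrsim \varepsilon^{-1}\big(d\log(B|S|\Cblock)\log(1/\varepsilon)+\log(1/\delta)\big),
\]
where the $\log(1/\varepsilon)$ factor comes from the $\log n$ term. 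This step fails without the coverage threshold in the filter: with it, the target sets are deterministic and of bounded size, which is exactly what makes the fixed-loss argument apply.

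\textbf{Step 3: cost accounting.} Deduplication leaves at most $m$ distinct CoTs per prompt, so there are at most $nm$ verifier calls. By \Cref{def:verifier}, each call checks only the terminal state of a length-$B$ instance. Substituting $n=\calOtilde(d/\varepsilon)$ gives $\nquery\le\calOtilde(d\Cblock/\varepsilon)$. Each rollout generates $B$ states, so $\ncomp\le nmB=\calOtilde(dB\Cblock/\varepsilon)$. Finally, a union bound over the failure events of Steps 1 and 2 gives overall success probability at least $1-\delta$.
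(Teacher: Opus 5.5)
The paper does not prove this proposition in the present text; it imports it from Part~I, so your argument has to stand on its own. Most of it does:
\begin{itemize}
\item Step~1 is correct.
\item The deterministic envelope $A(\bx)\supseteq\widetilde{D}_i$ with $|A(\bx)|\le\Cblock$ is the right device for turning the rollout-dependent targets into the fixed loss $\bbI(\pi_{1:B}(\bx)\notin A(\bx))$.
\item Zero empirical loss of $\pihat$ under this loss, combined with realizable uniform convergence, does give terminal error at most $\varepsilon$.
\item The cost accounting in Step~3 is fine.
\end{itemize}

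The step that fails as written is the counting in Step~2. The points $(s_{t-1},w_t)$ visited by $\pi_{1:B}(\bx)$ are generated by $\pi$ itself. They are therefore not a fixed set of $nB$ transition points, and in general the number of distinct trajectories is not bounded by the growth function of $\calF$ on $nB$ points.

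Here is a counterexample. Take $S$ to be the binary strings of length at most $B$. For $\theta\in\{0,\dots,2^B-1\}$, let $\pi_\theta$ append to $s$ (when $|s|<B$) the bit $\bbI(\theta\ge m(s))$, where $m(s)$ is the midpoint of the dyadic interval indexed by $s$. This class has Natarajan dimension $1$ and realizes at most $B+1$ behaviours on any $B$ fixed points. Yet by binary search it produces $2^B$ distinct trajectories from a single prompt.

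The fix uses the structure you already built. With $y_0=s_0$, $\pi_{1:B}(\bx)\in A(\bx)$ holds iff some $\by\in A(\bx)$ satisfies $\pi(y_{t-1},w_t)=y_t$ for all $t\in[B]$. So the loss depends only on $\pi$ restricted to the at most $B\Cblock$ fixed points $\{(y_{t-1},w_t):\by\in A(\bx),\,t\in[B]\}$. On a double sample of $2n$ prompts, the Natarajan lemma then bounds the number of loss patterns by $(2enB\Cblock|S|^2/d)^d$. This accounts for the $d\log(B|S|\Cblock)$ in the statement, and for the $\Cblock$ that appears in your final bound but not in your count.

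Alternatively, restrict $\pi$ to the fixed set $S\times\{w^i_t\}$ of at most $nB|S|$ points. This gives $d\log(nB|S|)$ up to constants and never uses the probability threshold. So your remark that Step~2 fails without the threshold is too strong when $S$ is finite. The threshold is what makes the $\Cblock$-based count possible, and it becomes essential only when the effective state space is huge (e.g.\ when states are full histories).
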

\noindent 

\begin{algorithm}[t]
    \caption{$\WLdepth{}.\RL \big( \Dprompt{} \ \| \ T, \varepsilon, \delta \big)$} \label{alg:semiauto_RL}
    \begin{algorithmic}[1]
    \Statex {\color{gray} \# Learning semiautomata from verifier feedback using compositional curriculum.}
    \State \textbf{Input:} Class of semiautomaton transitions $\calF$ over state space $S$ and alphabet $\Sigma$,
    \Statex \hspace{3em} Reference model $\piref$ satisfying \cref{eq:cov} with parameters $(B,\Cblock,\Cout)$,
    \Statex \hspace{3em} Outcome verifier $\calV$,
    \Statex \hspace{3em} Number of steps of semiautomaton simulation, $T$, with $T \pmod{B} \equiv 0$,
    \Statex \hspace{3em} Target error rate $\varepsilon$ and failure probability $\delta$,
    \Statex \hspace{3em} Dataset of instances $\Dprompt{} = \{ \bx^i = (s_0^i,\bw^i) \}_{i=1}^{n}$.
    \State \textbf{Return:} $\fhat \gets \WLdepth{1} \big( \Dprompt{} \ \| \ T, \delta \big)$, with parameters:
    \Statex \hspace{5em} Feedback oracle: $\etoe(\cdot)\gets\texttt{GuessAndCheck}(\, \cdot \, \| T, B, \delta_1)$ with $\delta_1 = \delta / \nsample^{\star,\texttt{RL}}$ \Statex \LineComment{{\color{blue} $\texttt{GuessAndCheck}(\cdot)$ defined in \Cref{app:guess-and-check}.}}
    \Statex \NoComment{{\color{blue} $\nsample^{\star,\texttt{RL}}$ (cf. \Cref{theorem:main-RL}) is the sample complexity upper boundof $\WLdepth{}.\RL$.}}
    \Statex \hspace{5em} Weak learner: $\Base (\, \cdot \,\|\, \delta') \gets \Vlearn (\, \cdot\, \|\, B, \varepsilon_0, \delta' )$ where \smash{$\varepsilon_0^{-1} = C k_{1/2}^{7/2} \log\big(L_{\frac{1}{2}} \big)$}
    \Statex \hspace{5em} Base failure probability $\delta_0 = \delta/4k_{\frac12}k_1$,
    \Statex \hspace{5em} \textit{Decomposition schedule:} \smash{$\big( L_{\frac{1}{2}}, L_1 \big) = \big( \frac{T}{B}, 1 \big)$},
    \Statex \hspace{5em} \textit{Branching schedule:} $\big( k_{\frac{1}{2}} , k_1 \big) = \big( C\log(T/B), C \log (1/\varepsilon) \big)$ for a large constant $C>0$
    \Statex \LineComment{{\color{blue} $\Vlearn$ improves $\piref$ on length-$B$ instances using outcome verifier $\calV$.}}
    \end{algorithmic}
    \end{algorithm}

\paragraph{Simulating intermediate state labels via guess-and-check.}
\label{subsec:guess-and-check}
To merge models learned at the length-$B$ scale into a model at the length-$T$ scale, \mainalgRL uses the $\split (\, \cdot \,\|\, L)$ procedure from \Cref{alg:semiauto_main}, which requires intermediate state labels at the length-$B$ scale. Naively such feedback is not available to the learner in the RLVR setting, but the reference model $\piref$ and outcome verifier $\calV$ can be used to simulate this feedback using a guess-and-check approach, which we refer to as \guessandcheck. 

\medskip
\noindent \paragraph{\guessandcheck.} Given a length-$T$ instance $\bx$, a reference model which satisfies a bound on $\Cout$, outcome coverage at the length-$B$ scale, in conjunction with the outcome verifier can be used to split the instance into $T/B$ blocks of length $B$ using a guess-and-check approach. Fix a prompt $\bx$. To label the terminal state of the first block, i.e., $\fstar_B (\bx)$, we generate \smash{$\calOtilde (\Cout)$} rollouts from the reference model. With high probability, at least one rollout has a correct state at $t=B$, and we can identify it with the verifier. We then proceed to the next block, interpreting $\fstar_B(\bx)$ as a start state, and repeating the process to identify $\fstar_{2B}(\bx)$, and so on. We discuss this in more detail in \Cref{app:guess-and-check}.

\medskip
\noindent
For a given prompt, this procedure recovers the terminal states for all blocks using total query complexity $\calOtilde(T/B \cdot \Cout)$. A crucial detail, and the source for most of the technical effort behind the algorithm, is that this process does not identify correct intermediate states within each block, only the final state.

\subsection{Main Result: Coverage Expansion via Compositional Curriculum} \label{subsec:main-rl}

Our main result shows that \mainalgRL achieves high accuracy on length-$T$ instances whenever the reference model $\piref$ can solve length-$B$ instances with constant probability, a form of coverage expansion.

\begin{theorem}[Main guarantee for \mainalgRL] \label{theorem:main-RL}
Fix $\delta \in (0,1)$, block length $B$, and assume that $T\pmod B \equiv 0$ without loss of generality. Suppose the reference model $\piref$ satisfies \cref{eq:cov} with parameters $(B, \Cblock, \Cout)$, and suppose the class $\calF$ has Natarajan dimension $d$ (\Cref{def:Ndim}). Then, $\WLdepth{}.\RL$ (\Cref{alg:semiauto_RL}) returns a model $\fhat$ that satisfies \smash{$\Pr_{\bx \sim \rho} \big[ \fhat(\bx) \ne \fstar_T(\bx) \big] \le \varepsilon$} with probability at least $1-\delta$. Moreover:
\begin{enumerate}
    \item The sample complexity is bounded by, $\nsample \le \nsample^{\star,\texttt{RL}} = \calOtilde \left( \frac{d}{\varepsilon} \right)$.
    \item The total computational cost is bounded by $\ncomp \le \calOtilde \left( \frac{dT}{\varepsilon} \right) + \calOtilde \left( dT \Cout + d B \Cblock \right)$, as measured by the number of states/tokens generated.\footnote{A comment on units: the computational cost $\ncomp$ in \Cref{theorem:main-RL} counts the total number of individual \emph{state generations} produced by $\piref$ or by any model in $\calF$ over the course of training, rather than the number of length-$T$ sequences rolled out; the latter is used as the measure of computational cost in Part I~\citep{rajaraman2026learning}. This fact accounts for the $T$ dependency in the bounds on $\ncomp$: to even predict the terminal state on a single length-$T$ instance requires generating $T$ states.}
    \item The query complexity is upper bounded by $\nquery \le \calOtilde \left( \frac{d}{\varepsilon} \right) + \calOtilde \left( \frac{T}{B} \cdot d \Cout + d \Cblock \right)$.
\end{enumerate}
Above, $\calOtilde(\cdot)$ hides polylogarithmic factors in $T, |S|, \Cblock$ and $ \delta^{-1}$.
\end{theorem}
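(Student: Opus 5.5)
We treat $\WLdepth{}.\RL$ as the two-level instance of \Cref{alg:semiauto_main}, with $\Base = \Vlearn$ and $\etoe$ simulated by $\texttt{GuessAndCheck}$. The proof is organized bottom-up, mirroring the proof of \Cref{theorem:main-generic} specialized to $H=1$.

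\textbf{Step 1: good event for $\texttt{GuessAndCheck}$.} By outcome coverage $\Cout$, $\calOtilde(\Cout)$ rollouts of $\piref$ on a length-$B$ block contain the correct terminal state with probability at least $1-\delta_1$, and the verifier identifies it. A union bound over the $T/B$ blocks of each of the at most $\nsample^{\star,\texttt{RL}}$ instances that are ever split shows that, with probability at least $1-\delta/4$, every call to $\split$ returns exactly the labels the true oracle would. On this event, the run is distributionally identical to $\WLdepth{1}$ with a true $\etoe$. Since each block costs $B$ generations, this step uses $\calOtilde((T/B)\Cout)$ queries and $\calOtilde(T\Cout)$ generations per instance split.

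\textbf{Step 2: the level-$\tfrac12$ guarantee.} We want constant error at length $T$ from weak length-$B$ learners. By \Cref{prop:Vlearn-B}, each $\Vlearn$ call, when fed i.i.d.\ length-$B$ sub-instances from the distribution induced by $\split$ and $\subsample$, reaches error $\varepsilon_0 = 1/(C k_{1/2}^{7/2}\log(T/B))$ on that distribution. Here we invoke the inverted-sampling/boosting lemma underlying \Cref{theorem:main-generic}, stated in \Cref{sec:techniques} and \Cref{sec:mainproofsketch}. For each $j$, $\subsample$ generates i.i.d.\ samples from the boosting reweighting of $\rhobar$, with acceptance probability bounded below. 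After $k_{1/2} = C\log(T/B)$ rounds, the plurality vote $\maj(\fhat^0,\dots,\fhat^{k_{1/2}-1})$ then has error on $\rhobar$ below $\tfrac{1}{4L_{1/2}} = \tfrac{B}{4T}$. \Cref{obs:lg} then gives error at most $1/4$ for the $(T/B)$-fold composition under any distribution over length-$T$ instances.

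The step I expect to be the main obstacle is exactly this one. The single-model error must be driven to order $B/T$ using only $O(\log(T/B))$ weak learners, each with error $\varepsilon_0$ well below $1/4$. This is why $\varepsilon_0$ is polylogarithmically small rather than constant. The boosting analysis with the $\alpha,\beta$ weights must absorb the $1/\|w_j\|_\infty$ acceptance loss, so that $\calOtilde(\text{poly}\log)$ length-$T$ instances suffice per weak learner. I would reuse the corresponding lemma from the iSFT proof verbatim, with $L = T/B$.

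\textbf{Step 3: top level and accounting.} At level $1$, with $L_1 = 1$, we run $k_1 = C\log(1/\varepsilon)$ rounds of boosting over the constant-accuracy length-$T$ learners of Step 2, again via $\subsample$, so that the plurality has error at most $\varepsilon$ on $\rho$. The last learners see inputs reweighted toward hard instances. Because accuracy need only be $\varepsilon$ at the top, the sample complexity is $\calOtilde(d/\varepsilon)$, which is the leading term in $\nsample$.

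For query and compute, we separate two costs. First, instances that are merely evaluated (to compute $\rank_j$) cost one verifier query and $T$ generations each. This gives the $\calOtilde(d/\varepsilon)$ and $\calOtilde(dT/\varepsilon)$ terms. Second, only the $\calOtilde(d)$ accepted instances passed down to level $\tfrac12$ are split. This yields the burn-in terms: $\calOtilde((T/B)\, d\Cout)$ queries and $\calOtilde(dT\Cout)$ generations from $\texttt{GuessAndCheck}$, plus $\calOtilde(d\Cblock)$ queries and $\calOtilde(dB\Cblock)$ generations from the $k_1 k_{1/2}$ calls to $\Vlearn$ at constant $\varepsilon_0$. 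Finally, we union bound the failure probabilities $\delta_0 = \delta/4k_{1/2}k_1$ together with the event of Step 1.
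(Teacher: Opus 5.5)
Your Step~2 relies on a property that $\subsample$ does not have, so the level-$\tfrac12$ argument fails as written. You assert three things: that $\subsample$ returns i.i.d.\ samples from the boosting reweighting $\rhobar_j \propto \rhobar\, w_j$; that its acceptance probability is bounded below; and hence that the plurality vote reaches error $B/(4T)$ on the block mixture, after which \Cref{obs:lg} finishes. The lemma you propose to reuse does not say this.

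Inverted sampling outputs a different distribution, $\lambdahat_{\nd_j}$. In \Cref{ex:samplinghardness} it accepts every draw, yet it returns the symbol carrying half of the target's mass only with probability about $1/L$. More generally, \Cref{lemma:worst-case-sampling} shows that in the worst case no sampler using fewer than order $\sqrt{L}$ draws (here $L = T/B$) gets within TV distance $\tfrac12$ of the tilted mixture, even when acceptance is certain.

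The exact alternative is plain rejection sampling. It must filter down to target error $B/(4T)$ on the block mixture, so it needs $\calOtilde(dT/B)$ draws per weak learner (\Cref{sec:warmup}). That inflates $\nsample$ by a factor of order $T/B$ and breaks item~1. So the chain ``per-block error $B/(4T)$, then a union bound over blocks'' cannot be run at the claimed cost. Also, rounds with tiny acceptance probability genuinely occur and must be handled, not assumed away. Your explanation of why $\varepsilon_0$ is polylogarithmically small is a symptom of the same gap: with exact samples from $\rhobar_j$, weak learners of error $\tfrac14$ would already suffice for boosting.

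The missing idea is the paper's change of target at level $\tfrac12$. The paper never bounds the plurality's per-block error. It bounds the composed model's error directly under the node's input distribution $\lambdahat_{\nd}$, which is a reweighting of $\rho$ chosen by the root, not $\rho$ itself. It does so through the potential decomposition \cref{eq:breakdown-RL}, whose three terms are:
\begin{itemize}
\item $L\beta^{0,k}_0$;
\item $\sum_j (\err_j - \err_\star)\, \mathbb{E}[R_{\nd_j}\mathbb{I}((\underline{X},\bm{\eta}) \in \calK_{\nd_j})]$;
\item $\Pr((\underline{X},\bm{\eta}) \notin \calK_{\nd_{k-1}})$.
\end{itemize}

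The events $\calK_{\nd_j}$ are produced by $\construct$ from a dyadic bucketing of $|\calI|$, intersected with $\calU_{\nd_j}$. They have probability at least $\tfrac{15}{16}$, so their complement is paid once, additively, rather than $L$ times. The error $\err_j$ is measured under $\lambda_{\nd_j}$ (\cref{eq:lambda-RL}), the $w_{\nd_j}$-tilt of the block mixture \emph{conditioned on} $\calK_{\nd_j}$. In non-aborted rounds, inverted sampling covers this distribution with $\|\lambda_{\nd_j}/\lambdahat_{\nd_j}\|_\infty \lesssim k^{7/2}\log L$ (\cref{eq:density-ratio-RL}).

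That density ratio is the actual reason for $\varepsilon_0^{-1} \asymp k_{1/2}^{7/2}\log(T/B)$: $\Vlearn$'s error $\varepsilon_0$ under $\lambdahat_{\nd_j}$ transfers to error $\tfrac14$ under $\lambda_{\nd_j}$. In aborted rounds ($\ptilde_{\nd_j} \le p_j^\star$), Markov's inequality gives $R_{\nd_j} \le \rjmax$ outside a small-probability set. The event $\calU_{\nd_j}$ cuts off that tail, and $\sum_j \rjmax \le \tfrac18$.

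Your Step~1, your top-level analysis, and your cost accounting match the paper. At the top level, $L_1 = 1$ makes $\subsample$ exact rejection sampling, so your boosting reasoning is valid there.
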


\noindent The proof of \cref{theorem:main-RL} is deferred to \Cref{subsec:main-RL-proof}. The main features of the result are that: (1) All costs only scale with the model's coverage at the length-$B$ scale, representing a form of \emph{coverage expansion} from the length-$B$ scale to the length-$T$ scale; and (2) the dependence on the model's coverage---both for statistical and computational costs---is decoupled from the target accuracy $1-\varepsilon$, only appearing in a ``burn-in'' cost. The latter is a consequence of the boosting effect of the self-generated curriculum, and has a similar flavor to our RL fine-tuning results in Part I~\citep{rajaraman2026learning}. In what follows, we interpret the sample complexity, computational cost, and query complexity in more detail.\loose

\paragraph{Sample complexity.} The number of prompts required by $\WLdepth{}.\RL$ scales as $\nsample \le \calOtilde(d/\varepsilon)$. Up to logarithmic factors, this is optimal for PAC learning the class $\calF$ to accuracy $1-\varepsilon$, and is required as soon as \smash{$\Cout > 2$}, even if $T=1$.

\paragraph{Computational cost.} The computational cost of $\WLdepth{}.\RL$---measured in terms of number of generated states---decomposes into two parts: a ``leading order'' term, which is polynomial in $\veps^{-1}$ but nearly independent of the coverage coefficients, and a ``burn-in'' term, which is polynomial in the coverage coefficients, but nearly-independent of $\veps^{-1}$:
\begin{equation} \label{eq:compcost}
    \ncomp \le \calOtilde \underbrace{\left( \frac{dT}{\varepsilon} \right)}_{\text{leading order}}+ \underbrace{\calOtilde \big( dT \Cout + d B \Cblock \big)}_{\text{burn-in cost $\nburnin$}}.
\end{equation}
The leading-order term $\calOtilde(dT/\varepsilon)$ reflects the number of state generations from $\piref$ needed to label the terminal state on each of the $\calOtilde (d/\varepsilon)$ instances in the training dataset; we expect this to be irreducible. The burn-in term depends on the pre-trained model's coverage, but critically, does so only at the length-$B$ scale. The term involving the (smaller) answer-level coverage coefficient $\Cout$ is scaled by $T$, while the term involving the (larger) block-level coverage coefficient $\Cblock$ is scaled only by the block length $B$. In more detail:\loose
\begin{itemize}[leftmargin=1.5em,itemsep=1pt, topsep=3pt, parsep=5pt]
    \item \emph{Labeling cost.}~~The first term, $\calOtilde(dT \Cout)$, can be interpreted as the computational cost required to label a deliberately chosen set of $\calOtilde (d)$ length-$T$ instances $\bx$ with ground-truth outcomes for each length-$B$ block using the guess-and-check strategy described in \Cref{subsec:guess-and-check}. In particular, each $\bx$ is decomposed into $T/B$ segments, and $\piref$ generates $\calOtilde(\Cout)$ rollouts per segment, with overall cost \smash{$\calOtilde ( d \times T/B \times \Cout \times B)$}, where the final factor of $B$ arises because $\piref$ generates $B$ states for each segment rollout.\loose
    \item \emph{Training cost.}~~The second term, $\calOtilde(d B \Cblock)$, can be interpreted as the computational cost of using the base learner $\Vlearn$ to improve $\piref$ to constant accuracy on length-$B$ instances, as in \Cref{prop:Vlearn-B}. This term can therefore be interpreted as the computational cost emerging from \emph{training} within $\WLdepth{}.\RL$, once again focusing on the cost of generation, and not model updates (cf. \Cref{remark:cost}).
\end{itemize}

\paragraph{Query complexity.} The number of queries to the outcome verifier made by $\WLdepth{}.\RL$ has a similar decomposition to the computational cost: A leading-order term scaling as \smash{$\calOtilde(d/\varepsilon)$}, and a burn-in term which inherits all dependence on the coverage coefficients $\Cblock$ and $\Cout$:
\begin{equation*}
    \nquery \le \calOtilde \left( \frac{d}{\varepsilon} \right) + \underbrace{\calOtilde \left( \frac{T}{B} \cdot d \Cout + d \Cblock \right)}_{\text{burn-in cost}},
\end{equation*}
Here: (1) The leading-order term reflects the number of queries needed to verify length-$T$ terminal state correctly on each of the $\calOtilde (d/\varepsilon)$ instances in the training dataset; (2) the burn-in term $\calOtilde\left( \frac{T}{B} \cdot d \Cout \right)$ reflects the number of queries needed to label all of the block-level terminal states for a deliberately chosen subset of $\calOtilde (d)$ instances; (3) the burn-in term $\calOtilde(d \Cblock)$ reflects the number of queries needed for $\Vlearn$ to improve $\piref$ to constant accuracy on length-$B$ instances.

\subsubsection{Comparison to Part I}
In Part I \citep{rajaraman2026learning}, we showed that in the general autoregressive setting, a simpler form of self-generated curriculum with \emph{no length curriculum} (i.e., purely selecting length-$T$ prompts from the training set adaptively)
can be used to improve a reference model $\piref$ to accuracy $1-\varepsilon$ with computational cost $\calOtilde \big( dT/\varepsilon + dT C_{\texttt{seq}}^{(T)} \big)$. Our results above recover this guarantee by setting $B=T$, but---as discussed in the previous section---can be exponentially more efficient if $\piref$ only has constant coverage for blocks of length $B\ll T$.\loose

\section{\mbox{Overview of Analysis Techniques: Recursion and Inverted Sampling}} \label{sec:techniques}

Having presented the high-level principles behind our algorithm \mainalg in \cref{sec:semiauto,sec:rl}, this section gives an in-depth overview of the multiscale boosting technique used within the algorithm. To simplify presentation, we restrict our attention to the iSFT setting (\cref{alg:semiauto_main}), and focus on the regime where the target error $\varepsilon = \frac{1}{4}$ is a constant, as this already captures the main technical challenges, and will use $d = \comp (\calF)$ to refer to the complexity of the model class (in the sense of \cref{eq:nweak-RL}). The section proceeds in three parts.\loose

\begin{itemize}[leftmargin=1.5em,itemsep=2pt,topsep=3pt,parsep=3pt]
    \item \textit{Warm-up via composition and boosting} (\Cref{sec:warmup}). As a warmup, we first show that a single layer of boosting and composition is sufficient to achieve query \emph{and} sample complexity \smash{$\calOtilde(d\sqrt{T})$}. This 
    already improves upon the linear-in-T \smash{$\calOtilde(dT)$} full-CoT baseline in \Cref{subsec:baselines}, but is nonetheless still polynomial in $T$. We then highlight a natural path to improving this guarantee further (\cref{subsec:path})---improving the sample complexity required by the \emph{rejection sampling} step within boosting.
    \item \textit{The reweighted-mixture-of-marginals problem and inverted sampling} (\Cref{sec:sampling}). 
    We isolate an abstract problem, \emph{sampling from a reweighted mixture of marginals}, which captures the key structure of the sampling problem used within our boosting approach (\Cref{sec:sampling}), and give a new algorithm, \emph{inverted sampling}, which solves it significantly more efficiently than naive rejection sampling (\cref{subsec:depth-1}).    

    \item \textit{Putting everything together} (\Cref{subsec:queryinverted,subsec:depth-H}). 
    We show that using inverted sampling within a single layer of recursion brings the sample complexity down to $\calOtilde(d)$ while maintaining query complexity scaling as $\calOtilde (d\sqrt{T})$ (\cref{subsec:queryinverted}), then show that an iterated version of this approach---recursing to depth $H = \calO(\sqrt{\log T})$---yields \smash{$2^{\calOtilde(\sqrt{\log T})} \cdot d$} sample and query complexity (\cref{subsec:depth-H}).
\end{itemize}

\noindent We begin by introducing notation used throughout the section.

\paragraph{Notation.} Recall that $\rho$ denotes the target distribution over length-$T$ instances. For $1 \le t_1 < t_2 \le T$, we let $\rho_{t_1:t_2}$ denote the law of $(s_{t_1-1},\bw_{t_1 : t_2})$, where $(s_0,\bw_{1:T}) = \bx \sim \rho$ and \smash{$s_{t_1-1} = \fstar_{t_1-1} (\bx)$}. $\rho_{t_1,t_2}$ can be viewed as a distribution over instances of length $t_2 - t_1 + 1$. For some $\tau < T$ define the uniform mixture of length-$\tau$ marginals obtained from $\rho$ as follows,
\begin{equation} \label{eq:rhobar}
    \rhobar \ = \ \frac{1}{L} \sum_{i=0}^{L-1} \rho_{\tau i + 1 : \tau (i+1)}.
\end{equation}
This is the distribution over instances by breaking down length-$T$ instances drawn from $\rho$ into segments of length $\tau$ and selecting one uniformly at random (where the start state of the segment is labeled by $\fstar$).

\subsection{A Single Level of Recursion: Achieving $\nsample=\calOtilde(d\sqrt{T})$ and $\nquery=\calOtilde(d\sqrt{T})$} \label{sec:warmup}

We first recap the idea of \emph{composition} in \mainalg. For a model $\fhat : S \times \Sigma^\tau \to S$ on length-$\tau$ instances, we let $\fhat^{\circ L}$ denote its $L$-fold composition (\Cref{def:composition}), which operates on length-$T$ instances by chunking them into $L = T/\tau$ chunks of length $\tau$. The composition framework (\Cref{obs:lg}) asserts that whenever $\fhat$ has error at most $\varepsilon_0 = \frac{1}{4L}$ under the distribution $\rhobar$, the composed model $\fhat^{\circ L}$ has error at most $\frac{1}{4}$ under the original distribution $\rho$. Thus, compositional generalization gives a mechanism by which length can be traded off for accuracy at the length-$\tau$ scale. However, this tradeoff is not sufficient on its own to achieve sublinear-in-$T$ query complexity: a single model $\fhat$ learned to error $\varepsilon_0 = \frac{1}{4L}$ under $\rhobar$ by querying full CoTs on length-$\tau$ instances has query complexity scaling as $\calO \big( d \tau / \varepsilon_0 \big) = \calO(dT)$, matching the full-CoT baseline of \Cref{subsubsec:CoT}. 

\medskip\noindent
Breaking the $\calO(dT)$ barrier requires combining composition with \emph{boosting}, building on Part I~\citep{rajaraman2026learning}. There, in the general autoregressive setting (without composition), we showed that boosting can be used to decide which instances to query \emph{full CoTs} on, adaptively focusing on harder instances as determined by outcome correctness of the current ensemble of models. Given target error $\veps>0$, boosting trains $\calO(\log(1/\varepsilon))$ weak models to constant accuracy on an adaptively chosen sequence of distributions, each of which requires only $\calOtilde(d)$ CoTs. The total query complexity required to train a length-$T$ model in our iSFT setting is $\calOtilde(dT + d/\varepsilon)$, where the former term accounts for the query complexity of training the weak learners (recall that one full CoT requires $T$ iSFT queries), and the latter term accounts for the query complexity required to evaluate the weak learners and update the training distribution. Furthermore, the sample complexity is $\calOtilde(d)$ (more general, $\calOtilde(d/\varepsilon)$ for error $\veps$).

\medskip
\noindent Combining boosting with composition gives the following template:
\begin{enumerate}
\item For a parameter $\tau\ll T$ and $L=T/\tau$, use boosting to train a model $\fhat$ to accuracy $\veps_0 = 1/4L$ on length-$\tau$ sequences drawn from $\rhobar$.
\item Return the $L$-fold composed model $\fhat^{\circ L}$, giving error $\veps=\veps_0L = \frac{1}{4}$ as desired.
\end{enumerate}
Implementing boosting in this fashion requires rejection sampling from reweighted versions of the distribution $\rhobar$---a non-trivial detail, as we will highlight below. After accounting for this, the total query complexity for this idealized template is:
\[
\nquery = \calOtilde\prn*{d\tau + d/\veps_0}
= \calOtilde\prn*{d\tau + dL}.
\]
Recalling that $L=T/\tau$ and choosing $\tau=\sqrt{T}$ to balance the two terms yields $\nquery = \calOtilde(d\sqrt{T})$ query complexity, improving over the $\calOtilde(dT)$ query complexity of the full CoT baseline. Furthermore, the sample complexity is $\nsample=\calOtilde(d/\veps_0)=\calOtilde(dL)=\calOtilde(d\sqrt{T})$.

\subsection{Improving Rejection Sampling: A Path to Sub-Polynomial Query Complexity}
\label{subsec:path}

The full version of \mainalg (\cref{alg:semiauto_main}) uses multiple levels of recursion, solving the length-$\tau$ blocks above by breaking them into smaller blocks, and so on. The main bottleneck in making this recursion fruitful is the \emph{sample complexity} in the single-level algorithm in the prequel, which scales as $\calOtilde(d/\veps_0)=\calOtilde(dL)$. To ensure that errors do not compound as we recurse, it is essential to improve this guarantee so that it scales as $\calOtilde(d)$---i.e., nearly independent of the number of blocks $L$. We will return to the full derivation in \cref{subsec:depth-H}, but for now we set our sights on improving the sample complexity to $\calOtilde(d)$.

\medskip
\noindent Showing that the sample complexity can be improved to be strictly better than $\calO(d/\varepsilon_0)$ is essential for the recursion to be fruitful. With multiple rounds of recursion, instances are decomposed into shorter ones, following the schedule $T \to T/L \to \cdots \to T/L^H$. At each depth, a model with error $\frac{1}{4}$ is desired; this is obtained by taking the $L$-fold composition of a model achieving error $\varepsilon_0 = \frac{1}{4L}$ under the appropriate mixture distribution of instances (akin to $\rhobar$). This model itself is trained by boosting, by training and aggregating an ensemble of models each of which achieves error $\frac{1}{4}$ across appropriately reweighted versions of the mixture distribution, and so on. As the depth increases, the length of instances decreases geometrically and at the deepest level, models are trained to solve instances of length $T/L^H$. As we discuss in the next paragraph, generating samples from the appropriate reweighted mixture distributions at the deepest level via rejection sampling will require drawing $d/\varepsilon_0^H$ instances at the top level (i.e., from $\rho$). Since the length of instances at the bottom level is $\calO ( T / L^H ) = \calO ( T \varepsilon_0^H )$, all in all, this gives query complexity,
\begin{equation*}
    \nquery = \calOtilde ( d T \varepsilon_0^H + d / \varepsilon_0^H )
\end{equation*}
And optimizing over $\varepsilon_0$ and $H$ still results in a guarantee that is no better than $\widetilde{\Theta} ( d \sqrt{T} )$.

\paragraph{Key bottleneck: Rejection sampling.}

To highlight why the sample complexity for the naive approach in the prequel scales as $\nsample=\calOtilde(d/\veps_0)=\calOtilde(dL)$, let us dig into the details of the boosting stage. Formally, boosting proceeds in $k=\calOtilde(\log\veps_0^{-1})=\calOtilde(\log L)$ iterations. At iteration $j$, given weak models $\fhat^0,\ldots,\fhat^{j-1}$, we train a new model $\fhat^j$ on a distribution $\rhobar_j$, which reweights $\rhobar$ to focus on instances where the current ensemble \smash{$\big( \fhat^0,\cdots,\fhat^{j-1} \big)$} errs; $\calOtilde(d)$ samples from this distribution are required to ensure that $\fhat^j$ achieves constant accuracy. After all iterations conclude, we aggregate the models through a majority vote to produce a final length-$\tau$ model
\[\fhat = \maj \big( \{ \fhat^0,\cdots,\fhat^{k-1} \} \big).\] 
As discussed above, this model has error $\veps_0=1/4L$ on $\rhobar$, leading to error $\frac{1}{4}$ for the composed model $\fhat^{\circ L}$. This is depicted pictorially in \Cref{fig:framework}.

\medskip
\noindent 
Training the $j$th model above requires generating $\calOtilde(d)$ inputs from the reweighted distribution $\rhobar_j$, which has the structure $\rhobar_j (\cdot) \propto \rhobar (\cdot) w_j (\cdot)$ for a non-negative weight function $w_j$; informally, $w_j$ downweights regions where the ensemble already has high accuracy and upweights regions with low accuracy (refer to \cref{eq:alpha} for a formal definition of $w_j$). Standard boosting---as in Part I \citep{rajaraman2026learning}---generates samples from this distribution using \emph{rejection sampling} \citep{von1963various,block2023sample}: We repeatedly draw $\bx\sim \rhobar$, then draw $\eta\sim\Unif([0,1])$ and accept if $\eta \leq \frac{w_j(\bx)}{\|w_j\|_\infty}$. Accepted samples are guaranteed to follow $\rhobar_j$, so the only question is how many attempts are required. For this, one of two good situations must occur. First, if 
\begin{equation} \label{eq:acceptance-single}
    \Pr_{\bx \sim \rhobar, \eta \overset{\text{i.i.d.}}{\sim} \Unif([0,1])} \left(\eta \le \frac{w_j(\bx)}{\| w_j \|_\infty} \right) \geq \veps_0
\end{equation}
then only $\calO(\veps_0^{-1})$ attempts are required in expectation, leading to $\calOtilde(d\veps_0^{-1})$ attempts in total to generate the full dataset for $\fhat^j$. On the other hand, if the condition \cref{eq:acceptance-single} fails, one can show that the current ensemble \smash{$\big( \fhat^0,\cdots,\fhat^{j-1} \big)$} must already be good enough ($\veps_0$-accurate under $\rhobar$), and there is no need to continue training. \loose

\medskip\noindent
While the $\calO(\veps_0^{-1})$ sample complexity for rejection sampling that we sketch above is tight for generic sampling problems \citep{block2023sample}, our compositional application has additional structure. In particular, $\rho_j$ has the form
\[
    \rhobar_j (\cdot) \propto \rhobar (\cdot) w_j (\cdot) = \frac{1}{L} \sum_{i=0}^{L-1} \rho_{\tau i + 1 : \tau (i+1)} (\cdot) w_j (\cdot),
\]
which we recall arises from mixing blocks at different positions. Writing the details explicitly, standard rejection sampling with this distribution takes the following form:
\begin{itemize}
\item Generate a length-$T$ instance $\bx\sim\rho$.
\item Sample $i\sim\crl{0,\ldots,L-1}$ uniformly at random.
\item Query the start state $\fstar_{\tau i} (\bx)$ for block $i$ and define the length-$\tau$ instance $\bx^i = (\fstar_{\tau i} (\bx),\bw_{\tau i+1:\tau(i+1)})$, which is distributed according to $\rho_{\tau i + 1 : \tau (i+1)}$. This requires a single query to the $\etoe(\cdot)$ oracle.
\item Draw $\eta\sim\Unif([0,1])$ and accept if $\eta \le w_j(\bx^i)/\nrm{w_j}_{\infty}$.
\end{itemize}
Our first key observation is as follows: Instead of only generating the start state $\bx^i$ for the uniformly sampled block $i$, we can generate the start states $\bx^1, \ldots, \bx^L$ \emph{for all $L$ blocks simultaneously}, at the cost of increasing the number of $\etoe(\cdot)$ queries to $L$. Can we use this extra information to more efficiently generate a sample from the target distribution $\rhobar_j$? 

\subsection{Detour: Rejection Sampling from Reweighted Mixtures of Marginals} \label{sec:sampling}

\begin{figure}
\begin{subfigure}[b]{0.6\textwidth}
\centering
\begin{tikzpicture}[
  >=stealth,
  node distance=1.6cm and 2.0cm,
  box/.style={draw, rounded corners, minimum width=12mm, minimum height=6mm},
  every node/.style={font=\small}
]
% ---------- parameters ----------
\def\k{2}          % number of middle-row boxes (εH)
\def\dx{2.5}       % horizontal spacing of middle row
\def\ymid{-1.8}
\def\ylow{-3.8}
\def\nseg{5}       % number of ε²H segments along the small line
\def\seglen{0.5}   % visual length of each segment
\def\arrowh{0.28}  % height of the small downward arrows above ticks
% --------------------------------

% Keep only the left middle box (F2) — no top box, no other middle boxes
\node[box,draw=white,fill=green!20] (e2) at (-\dx,\ymid) {$\substack{\big( T, \frac{1}{4}\big) \\ \text{Target dist: } \rho\\\fhat^{\circ L}}$};

% Left branch subtree
\node[box,draw=white,fill=blue!20] (eeL) at ($(e2)+(0,\ylow-\ymid,0)$) {$\substack{\big( \tau, \varepsilon_0 \big)\\ \text{Target dist: } \rhobar\\ \fhat \gets \maj ( \{ \fhat^{k-1},\cdots,\fhat^0 \} )}$};
\draw[->] (e2) -- (eeL);
\node (descrip) at ($(eeL)+(2.5,0)$) {$\substack{\Big(\tau = \tfrac{T}{L} ,\ \varepsilon_0 = \tfrac{1}{4L}}\Big)$};

\pgfmathsetmacro{\offset}{0.75*\dx}
\node[box,draw=white,fill=green!20] (eeL2) at ($(eeL)+(-\offset,\ylow-\ymid,0)$) {$\substack{ \big( \frac{T}{L}, \frac{1}{4} \big) \\[1pt] \text{Target dist: } \rhobar_{k-1}\\\fhat^{k-1}}$};
\node[box,draw=white,fill=green!20] (eeR2) at ($(eeL)+(\offset,\ylow-\ymid,0)$) {$\substack{\big( \frac{T}{L}, \frac{1}{4} \big)\\[1pt] \text{Target dist: } \rhobar_0\\\fhat^0}$};
% \node[below=0cm of eeR2, align=center] (text1) {$\substack{\fhat^0}$};
% \node[below=0cm of eeL2, align=center] (text2) {$\substack{}$};
\node[left=0.61cm of eeR2, align=center] (text3) {$\substack{\cdots}$};
\draw[->] (eeL) -- (eeL2);
\draw[->] (eeL) -- (eeR2);

% --- Small horizontal line centered under left bottom box ---
% \coordinate (baseL) at ($(eeL.south)+(5.5,0.25)$);
% \coordinate (lineStart) at ($(baseL)+(-0.5*\nseg*\seglen,0.25)$);
% \coordinate (lineEnd)   at ($(baseL)+( 0.5*\nseg*\seglen,0.25)$);

% \draw (lineStart) -- (lineEnd) node[right=3pt, font=\small] {$T$};

% \node[above right=0cm and 2.2cm of eeL, align=center] (text) {$\Big($ {\scriptsize\parbox{4.3cm}{$\rhobar$: decompose $\bx \sim \rho$ into $L$ shorter instances, sampling one at random}} $\Big)$};

% \foreach \j in {0,...,\nseg} {
%   \coordinate (tj) at ($(lineStart)+(\j*\seglen,0)$);
%   \draw (tj)++(0,-0.06) -- ++(0,0.12);
%   \draw[->, >=stealth, line width=0.3pt] 
%     ($(tj)+(0,\arrowh)$) -- ($(tj)+(0,0.10)$);
% }

% \node[below, font=\tiny] at ($(lineStart)+(0.5*\seglen,0)$) {$\tau$};

% Level labels (no Level 0)
% \node[below=2.75cm of eeL, anchor=south, font=\small] {$\vdots$};

\draw[decorate, color=gray, decoration={brace, mirror, raise=2pt, amplitude=4pt}] ($(eeR2)+(-5.25,0.6)$) -- ($(eeR2)+(-5.25,-0.6)$);
\node[left=4.5cm of eeR2, anchor=east, font=\small] (BOT) {$\substack{\textbf{Boosting:}\\k \,=\, \Theta ( \log (L))\\\text{weak learners}\\[1.5pt]\text{at length-}T/L}$};

\node[above=0.7cm of BOT, anchor=south, font=\small] (MID) {$\substack{\textbf{Composition:}\\\text{ trading-off length}\\\text{for target error}}$};

\node[above=0.85cm of MID, anchor=south, font=\small] (TOP) {$\substack{\textbf{Target:}\\\text{Learner}\\[1.5pt]\text{at length-}T}$};
\end{tikzpicture}
\caption{Multiscale boosting for semiautomata. The tuple $(t, \varepsilon)$ in each box indicates the length of the instances and the target error for the model trained on these instances. $\fhat^0,\cdots,\fhat^{k-1}$, are combined by plurality vote to give $\fhat$, and its $L$-fold composition is a model on length $T$ instances.}
\label{fig:framework}
\end{subfigure}\qquad
\begin{subfigure}[b]{0.3\textwidth}
\centering
\begin{tikzpicture}[
  >=stealth,
  node distance=1.6cm and 2.0cm,
  box/.style={draw, rounded corners, minimum width=12mm, minimum height=6mm},
  every node/.style={font=\small}
]
% ---------- parameters ----------
\def\k{2}          % number of middle-row boxes (εH)
\def\dx{2.5}       % horizontal spacing of middle row
\def\ymid{-1.8}
\def\ylow{-3.8}
\def\nseg{7}       % number of ε²H segments along the small line
\def\seglen{0.75}   % visual length of each segment
\def\arrowh{0.28}  % height of the small downward arrows above ticks
\def\linepad{0.9}  % extra horizontal padding to elongate the timeline
\def\braceyoffset{0.16} % vertical offset of braces below the timeline
\def\gap{0.03} % horizontal gap for braces
\pgfmathtruncatemacro{\penseg}{\nseg-3}
\pgfmathtruncatemacro{\prevseg}{\nseg-2}
\pgfmathtruncatemacro{\lastseg}{\nseg-1}
\pgfmathsetmacro{\halfspan}{0.5*(\nseg-1)*\seglen}
\def\arrowlabelA{\smash{$s_0$}}
\def\arrowlabelB{\smash{$s_{\tau}$}}
\def\arrowlabelC{\smash{$s_{2\tau}$}}
\def\arrowlabelD{\smash{$\dots$}}
\def\arrowlabelE{\smash{$s_{T-\tau}$}}
\def\arrowlabelF{\smash{$s_T$}}
% --------------------------------

% Keep only the left middle box (F2) — no top box, no other middle boxes
% \node[box] (e2) at (-\dx,\ymid) {$\substack{\big( T, \frac{1}{4}\big) \\ \text{Target dist: } \rho}$};

% Left branch subtree
% \node[box] (eeL) at ($(e2)+(0,\ylow-\ymid,0)$) {$\substack{\big( 4 \varepsilon_0 T, \varepsilon_0 \big)\\ \text{Target dist: } \rhobar}$};
% \draw[->] (e2) -- (eeL);

% % \pgfmathsetmacro{\offset}{0.75*\dx}
% % \node[box] (eeL2) at ($(eeL)+(-\offset,\ylow-\ymid,0)$) {$\substack{ \big( 4 \varepsilon_0 T, \frac{1}{4} \big) \\[1pt] \text{Target dist: } \rhobar_{k-1} \\[1pt] \text{(weak learner)}\\[1pt] \text{Block } \# k}$};
% % \node[box] (eeR2) at ($(eeL)+(\offset,\ylow-\ymid,0)$) {$\substack{\big( 4 \varepsilon_0 T, \frac{1}{4} \big)\\[1pt] \text{Target dist: } \rhobar_0\\[1pt] \text{(weak learner)}\\[1pt] \text{Block } \# 1}$};
% % \draw[->] (eeL) -- (eeL2);
% % \draw[->] (eeL) -- (eeR2);

% --- Small horizontal line centered under left bottom box ---
\coordinate (baseL) at ($(0,0)$);
\coordinate (tickStart) at ($(baseL)+(-\halfspan,0.25)$);
\coordinate (tickEnd)   at ($(baseL)+(\halfspan,0.25)$);
\coordinate (lineStart) at ($(tickStart)+(0,0)$);
\coordinate (lineEnd)   at ($(tickEnd)+(0,0)$);

\draw (lineStart) -- (lineEnd) node[right=3pt, font=\small] {};

% \node[below right=0.3cm and -2.35cm of baseL, align=center] (text) {$\Big($ {\scriptsize\parbox{4.3cm}{$\rhobar$: decompose $\bx \sim \rho$ into $L$ shorter instances, sampling one at random}} $\Big)$};

\foreach \j/\label in {0/\arrowlabelA,1/\arrowlabelB,2/\arrowlabelC,\penseg/\arrowlabelD,\prevseg/\arrowlabelE,\lastseg/\arrowlabelF} {
  \coordinate (t\j) at ($(tickStart)+(\j*\seglen,0)$);
  \draw (t\j)++(0,-0.06) -- ++(0,0.12);
  \draw[->, >=stealth, line width=0.3pt] 
    ($(t\j)+(0,0.3)$) -- ($(t\j)+(0,0.10)$);
  \node[above=1pt, font=\scriptsize] (lab\j) at ($(t\j)+(0,0.3)$) {\label};
}

\node[above=2.25cm of baseL, align=center, font=\scriptsize] (etoe2) {For $\bx = (s_0,\bw_{1:T})$,};

\node[above right=1.75cm and -0.75cm of baseL, align=center, font=\scriptsize] (etoe) {($\etoe(\cdot)$ queries)};

\node[fill=white, align=center, inner sep=2pt, font=\small] (midellipsis) at ($(baseL)+(0,0.25)$) {$\ldots$};

\draw[->, line width=0.3pt, color=gray] (etoe.south) to[out=-110, in=90, looseness=0.8] (lab1.north);
\draw[->, line width=0.3pt, color=gray] (etoe.south) to[out=-100, in=90, looseness=0.95] (lab2.north);
\draw[->, line width=0.3pt, color=gray] (etoe.south) to[out=-80, in=90, looseness=0.9] (lab\prevseg.north);
\draw[->, line width=0.3pt, color=gray] (etoe.south) to[out=-70, in=90, looseness=0.75] (lab\lastseg.north);

\draw[decorate, color=gray, decoration={brace, mirror, raise=2pt, amplitude=3pt}] ($(t0)+(\gap,-\braceyoffset)$) -- ($(t1)+(-\gap,-\braceyoffset)$);
\coordinate (brace01mid) at ($($(t0)!0.5!(t1)$)+(0,-0.33)$);
\node[inner sep=0pt] (brace01cusp) at (brace01mid) {};

\draw[decorate, color=gray, decoration={brace, mirror, raise=2pt, amplitude=3pt}] ($(t1)+(\gap,-\braceyoffset)$) -- ($(t2)+(-\gap,-\braceyoffset)$);
\coordinate (brace12mid) at ($($(t1)!0.5!(t2)$)+(0,-0.33)$);
\node[inner sep=0pt] (brace12cusp) at (brace12mid) {};

\draw[decorate, color=gray, decoration={brace, mirror, raise=2pt, amplitude=3pt}] ($(t\penseg)+(\gap,-\braceyoffset)$) -- ($(t\prevseg)+(-\gap,-\braceyoffset)$);
\coordinate (bracePenultMid) at ($($(t\penseg)!0.5!(t\prevseg)$)+(0,-0.33)$);
\node[inner sep=0pt] (bracePenultCusp) at (bracePenultMid) {};

\draw[decorate, color=gray, decoration={brace, mirror, raise=2pt, amplitude=3pt}] ($(t\prevseg)+(\gap,-\braceyoffset)$) -- ($(t\lastseg)+(-\gap,-\braceyoffset)$);
\coordinate (braceLastMid) at ($($(t\prevseg)!0.5!(t\lastseg)$)+(0,-0.33)$);
\node[inner sep=0pt] (braceLastCusp) at (braceLastMid) {};

\node[below=1cm of baseL, align=center, font=\scriptsize] (sampleone) {Each length-$T$ $\bx$ is a collection of $L$\\length-$\tau$ instances $( \bx^0,\cdots,\bx^{L-1} )$,\\where $\bx^i = (s_{\tau i},\bw_{\tau i + 1 : \tau(i+1)})$};

\draw[-, line width=0.3pt, color=gray] (brace01cusp.south) to[out=-100, in=120, looseness=0.5] (sampleone.north);
\draw[-, line width=0.3pt, color=gray] (brace12cusp.south) to[out=-85, in=130, looseness=0.85] (sampleone.north);
\draw[-, line width=0.3pt, color=gray] (bracePenultCusp.south) to[out=-85, in=50, looseness=0.75] (sampleone.north);
\draw[-, line width=0.3pt, color=gray] (braceLastCusp.south) to[out=-80, in=60, looseness=0.5] (sampleone.north);

% \node[above=0cm, font=\tiny] at ($(tickStart)+(0.5*\seglen,0)$) {$4 \varepsilon_0 T$};

\end{tikzpicture}
\vspace{0.25em}
\caption{Each length-$T$ instance $\bx$ can be viewed as a collection of $L$ shorter instances, $(\bx^0,\cdots,\bx^{L-1})$, by labeling every $\tau^{\text{th}}$ state (via querying the $\etoe(\cdot)$ oracle).}
\label{fig:sampling-basic}
\end{subfigure}
\caption{\textit{Left:} Multiscale boosting. \textit{Right:} Decomposing longer instances into shorter ones.}
\label{fig:1}
\end{figure}

To highlight our goal as crisply as possible, we define an abstract setting, \emph{sampling from reweighted mixtures of marginals}, which captures the essential structure of the problem above, but may be of interest in its own right. In this setting, we assume sampling access to a joint distribution $\mu$ over potentially correlated random variables $(\bx^0,\cdots,\bx^{L-1})$, and aim to sample from a reweighted mixture of their marginals according to a weight function $w(\bx)$. In the boosting context, these random variables correspond to the length-$\tau$ segments of a length-$T$ input instance, with $\mu$ playing the role of $\rho$.
\loose

\begin{definition}[Sampling from reweighted mixtures of marginals] \label{def:srmm} Let $\bX$ denote a probability space, and fix $L \in \bbN$. We are given sampling access to a joint distribution $\mu$ over $\bX^L$, and query access to a weight function $w : \bX \to [0,1]$. Define $\nu (\cdot) \propto \mubar(\cdot) w(\cdot) \in \Delta_{\bX}$, where $\mubar (\cdot) = \frac{1}{L} \sum_{i=0}^{L-1} \mu(X_i = \cdot\,)$ is the uniform mixture of marginals of $\mu$. Our objective is to generate samples (approximately) from $\nu$, making as few queries to $w(\cdot)$ and drawing as few joint samples from $\mu$ as possible.
\end{definition}

\noindent For this setting, standard rejection sampling takes the following form:
\begin{enumerate}
    \item[$(a)$] Draw $X = (X_0, \cdots,X_{L-1}) \sim \mu$. Sample an index $I \sim \Unif (\{ 0,\cdots,L-1 \})$ and select $X_I$. Since $I$ was selected uniformly at random, the random variable $X_I$ is sampled according to $\mubar$.
    \item[$(b)$] Use rejection sampling to accept/reject $X_I$. Namely, draw $\eta \sim \Unif([0,1])$ and accept $X_I$ if $\eta \le w (X_I) / \| w \|_\infty$ (where the RHS is an upper bound on the density ratio $\nu/\mubar$). Repeat from $(a)$ until a sample is accepted and declare success if a sample is accepted.
\end{enumerate}
As discussed in the prequel, a standard sufficient condition for this process to accept in $\calOtilde(\veps_0^{-1})$ attempts is that \loose
\begin{equation} \label{eq:accept-uniform}
    \Pr_{X \sim \mu, i\sim\Unif(\crl{0,\ldots,L-1}), \eta_i \sim \Unif([0,1])} \left( \eta_i \le \frac{w(X_i)}{\| w \|_\infty} \right) \geq \veps_0.
\end{equation}
To see why we might hope for better, consider the case where $X_0,\ldots,X_{L-1}$ are independent and identically distributed. If each $X_i$ has acceptance probabilty $\veps_0$ marginally, then \cref{eq:accept-uniform} holds with parameter $\veps_0$, and vanilla rejection sampling will indeed use $\veps_0^{-1}$ draws from $\mu$. However, this is wasteful---we are effectively throwing away $L-1$ perfectly good samples---and we can improve our probability of success by a factor of $L$ by simply testing and accepting/rejecting all $X_0,\ldots,X_{L-1}$.

\medskip\noindent
Motivated by this observation, a more optimistic sampling procedure should aim for the following condition:
\begin{equation} \label{eq:always-accepted}
    p_{\mu,w} = \Pr_{X \sim \mu, \eta_i \overset{\text{i.i.d.}}{\sim} \Unif([0,1])} \left( \exists i \in \{0,\ldots,L-1\} : \eta_i \le \frac{w(X_i)}{\| w \|_\infty} \right) \geq \veps.
\end{equation}
This condition asserts that \emph{counterfactually, at least one of samples $X_i$ would have been accepted had we chosen the index $i$ to begin with}. Note that we have $\veps\approx \veps_0\cdot L$ in the case where $X_i$ are independent, suggesting that the quantity in \cref{eq:always-accepted} captures the easiness of this setting.

\paragraph{A barrier to further improvement: Hardness of sampling.}
Taking advantage of the condition in \cref{eq:always-accepted} is more challenging when $X_0,\ldots,X_{L-1}$ are correlated. For example, the following lemma shows that vanilla rejection sampling incurs $\Omega(L)$ sample complexity even when \cref{eq:always-accepted} holds with $\veps=1$.
\begin{lemma}[Informal; see \Cref{lemma:worst-case-sampling}] \label{lemma:rejection-sampling-informal}
Even when $\veps=1$ in \cref{eq:always-accepted}, the expected number of samples from $\mu$ required for rejection sampling to succeed scales with $L$.
\end{lemma}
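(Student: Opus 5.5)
The plan is to build an explicit hard instance of \Cref{def:srmm} in which every draw $X\sim\mu$ contains an acceptable coordinate, so $p_{\mu,w}=1$ in \cref{eq:always-accepted}, yet a uniformly chosen coordinate is accepted only with probability about $1/L$. Vanilla rejection sampling, steps $(a)$--$(b)$, looks at one uniformly random coordinate per joint draw. It therefore needs a geometric number of attempts with mean about $L$.

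\textbf{Construction.} Take $\bX=\{0,1\}$ and $w(x)=x$, so $\|w\|_\infty=1$, and acceptance of $X_i$ given $\eta_i$ happens exactly when $X_i=1$. Let $\mu$ be the law of the vector $X=e_J$, where $J\sim\Unif(\{0,\ldots,L-1\})$ and $e_J$ is the $J$th standard basis vector. So exactly one coordinate equals $1$ and the rest are $0$; this is the fully correlated case the discussion above warns about. Then:
\begin{itemize}
\item Every draw has an index with $w(X_i)=1$, which is accepted for every value of $\eta_i$. Hence $p_{\mu,w}=1$.
\item The mixture of marginals is $\mubar=(1-\tfrac1L)\delta_0+\tfrac1L\delta_1$, and the target is $\nu=\delta_1$, which is well defined.
\end{itemize}
If desired, one can embed this into the semiautomaton setting: choose a $\rho$ under which exactly one length-$\tau$ block is ``hard'' for the current ensemble. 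The abstract version is enough for the lemma as stated.

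\textbf{Analysis.} In each attempt of vanilla rejection sampling, we draw $X\sim\mu$ and an independent $I\sim\Unif(\{0,\ldots,L-1\})$. The attempt accepts exactly when $X_I=1$, i.e.\ when $I=J$. This has probability $1/L$, independently across attempts. The number of joint samples drawn until the first acceptance is therefore $\mathrm{Geom}(1/L)$, with expectation exactly $L$. For a high-probability statement, the chance that $m$ attempts all fail is $(1-1/L)^m$. This is at least $1/2$ whenever $m\le L/2$ (for $L\ge2$), so $\Omega(L)$ samples are needed even to succeed with constant probability.

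\textbf{Main obstacle.} The computation itself is routine. The delicate part is pinning down the formal statement so that ``rejection sampling'' means the uniform-index scheme $(a)$--$(b)$, and so that the bound depends on $L$ even though \cref{eq:always-accepted} holds with $\veps=1$. For a stronger formal version covering any procedure that inspects a single coordinate per joint sample chosen obliviously of $X$, I would use the same argument: whatever the index distribution, the fresh uniform $J$ matches it with probability $1/L$. That still gives an expected $L$ samples. It also motivates the inverted-sampling procedure, which instead tests all coordinates of each draw.
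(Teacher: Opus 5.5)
Your proof is correct for the statement as literally posed (vanilla, uniform-index rejection sampling). It takes a different route from the paper, which gives no standalone argument and instead defers to \Cref{lemma:worst-case-sampling}.

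In the paper's version, the instance is $\mu_{p,z}$: the all-$z$ vector with probability $p$, and otherwise $(L+1)e_i$ for a uniformly random $i$, with weight $w(x)=\bbI(x\in[L+1])$. The lower bound there holds against \emph{any} sampler. After Rao--Blackwellizing onto the count statistic, the probability that a sampler using $N$ draws assigns to $z$ is a degree-$N$ polynomial in $p$. Meanwhile $\nu(z)=\theta_L(p)=\frac{Lp}{(L-1)p+1}$ has slope of order $L$ near $p=1/L$. Markov brothers' inequality then forces constant TV error unless $N\gtrsim\sqrt{L}$, and a large density ratio unless $N\gtrsim L^{1/3}$.

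Your instance $X=e_J$ is far more elementary, and it gives the sharp value $L$ for the expected number of attempts via a geometric computation. However, it is hard only for schemes that inspect a single, obliviously chosen coordinate. Inverted sampling, or any procedure that scans all $L$ coordinates, returns an exact sample from $\nu=\delta_1$ after one draw.

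So your example isolates the wastefulness that motivates inverted sampling. The paper's construction shows something stronger: the symbol carrying about half of $\nu$'s mass appears in a draw only with probability about $1/L$, so no sampler can efficiently approximate $\nu$ itself. That stronger fact is what forces the paper to change the target to $\nu_{\calK}$.

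Your one-line computation also applies to the paper's instance with $p=1/L$: rejection sampling accepts with probability $p+\frac{1-p}{L}\approx\frac{2}{L}$.
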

\noindent More generally, we show now that \textit{any approach} to generate a sample from a distribution $\nuhat$ such that \smash{$\dtv (\nuhat, \nu) < \frac{1}{2}$} requires drawing at least \smash{$\Omega (\mathrm{poly}(L))$} samples from $\mu$ in the worst case. The full version of this result (\Cref{lemma:worst-case-sampling}) holds even when $w$ and $\mu$ satisfy \cref{eq:always-accepted} with $\varepsilon=1$; we sketch a simplified version below.

\begin{example}[An example showcasing the hardness of sampling from $\nu$]
    \label{ex:samplinghardness}
Consider the setting where $\bX = \{ 0,1,2 \}$, and the weight function $w : \bX \to [0,1]$ is defined as $w(z) = \bbI (z \in \{ 1,2\})$. Now, consider the distribution $\mu \in \Delta_{\bX^L}$ which is uniform on a set of $L+1$ strings, namely, $\{ (1,1,\cdots,1), (2,0,\cdots,0),(0,2,\cdots,0),\cdots,(0,0,\cdots,2) \}$. The distribution $\mu$ with probability $\approx \frac{1}{L}$ generates the all-$1$ sequence, and with remaining probability sets a uniformly random position as $2$ and the remaining positions as $0$. A simple calculation by Bayes rule shows that the distribution $\mubar$ satisfies $\mubar(1) = \mubar(2) \approx \frac{1}{L}$ and $\mubar(0) \approx 1 - \frac{1}{L}$. Consequently, $\nu \propto \mubar (\cdot) w(\cdot)$ is the uniform distribution over $\{ 1,2 \}$. \loose

\medskip\noindent For this example, \cref{eq:always-accepted} holds with $\veps=1$, since every string contains either $1$ or $2$ at some position. However, note the structure of the distribution $\mu$: the symbol $1$ is only ever seen in an instance drawn from $\mu$ with probability $\approx \frac{1}{L}$, even though this symbol is highly represented under $\nu$. Thus, any natural algorithm---including rejection sampling---to sample from $\nu$ (or a nearby distribution with, say, $\TV{ \nuhat }{\nu} \leq \frac{1}{2}$) requires drawing $\Omega(L)$ samples from $\mu$. 
\end{example}

\noindent Thus, in the worst case, sampling (even very approximately) from $\rhobar_j$ cannot be carried out with fewer than \smash{$\mathrm{poly}(L)$} samples from $\rho$. This makes it unlikely to improve the sample and query complexity by developing better methods for sampling from the reweighted mixture of marginals (\Cref{def:srmm}) under standard notions of distribution approximation (say, TV-distance or KL divergence). 

\subsection{Inverted Sampling: More Efficient Sampling from Reweighted Mixtures} 
\label{subsec:depth-1} 

Recall that standard rejection sampling---when applied to the problem of sampling from reweighted mixtures of marginals---samples a shorter instance from each longer instance uniformly at random, and checks whether it passes a filter. The remaining $L-1$ sub-instances contained within are thrown away. This is inherently wasteful since one of the remaining (unselected) instances may have passed the filter, but the index of the uniformly random instance selected by rejection sampling will only ``hit'' this index with small probability $1/L$. A natural algorithm which does not suffer from this issue reverses the order of operands: first accept/reject each of the $L$ shorter instances by applying rejection sampling independently, then select one of the accepted instances uniformly at random. We refer to this approach as \textit{inverted sampling}, implemented as follows: \loose
\begin{enumerate}
    \item[$(a)$] Draw $X = (X_0,\cdots,X_{L-1}) \sim \mu$. Let $\calI$ denote the set of indices,
    \begin{equation*}
        \left\{ i \in \{0,\ldots,L-1\} : \eta_i \le w(X_i) / \| w \|_\infty \right\}
    \end{equation*}
    Here $\eta_i \sim \Unif([0,1])$ independently across $i$. If $\calI = \emptyset$, reject and repeat from $(a)$.
    \item[$(b)$] Let $I$ be an index drawn uniformly at random from $\calI$. Return $X_I$.
\end{enumerate}
The benefit of inverted sampling is that it is guaranteed to accept as long as at least one of the $X_i$'s passes the filter; this is much less pessimistic than rejection sampling, which may fail to select this $X_i$ to filter in the first place. However, this approach is still subject to the strong sampling lower bound discussed in \Cref{ex:samplinghardness}, and so the increased acceptance rate must come at a large cost in the TV distance to $\nu$.\loose

\medskip
\noindent 
In spite of the TV distance lower bound, we show that inverted sampling can be used to sample from a distribution which is close to $\nu$ in the sense of a weaker notion of approximation, which is nonetheless sufficient for our application within \mainalg.

\begin{lemma}[Informal; see \Cref{lemma:apxsample,lemma:apxsample-specialcase}] \label{lemma:apxsample-informal}
Fix a constant $c \in (0,1)$. Let $\nuhat$ denote the distribution over instances accepted by inverted sampling. There exists an event $\calK \subseteq \bX^L \times [0,1]^L$ which is a measurable function of $(X,\bm{\eta}) \sim \mu \times \Unif([0,1])^{\otimes L}$ where $\bm{\eta} = (\eta_i)_{i=0}^{L-1}$, such that $\Pr ( (X,\bm{\eta}) \in \calK) \ge 1-c$, and the following density ratio bound is satisfied,
\begin{equation} \label{eq:apxsample}
    \left\| \frac{\nu_{\calK} (\cdot)}{\nuhat (\cdot)} \right\|_\infty \le \calO( c^{-1}\cdot\log(L))
\end{equation}
Here, $\nu_\calK (\cdot)\propto \frac{1}{L} \sum_{i=0}^{L-1} \Pr (X_i = \cdot \,|\, (X,\bm{\eta}) \in \calK) \, w (\cdot)$ is the reweighted mixture distribution of marginals when the underlying sample and random coins, $(X,\bm{\eta})$, are conditioned on the event $\calK$. Finally, the marginal probability that inverted sampling accepts a sample $X \sim \mu$ is $p_{\mu,w}$, defined in \cref{eq:always-accepted}.
\end{lemma}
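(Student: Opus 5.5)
The plan is to write both $\nuhat$ and $\nu$ as expectations of the same per-index acceptance indicators, and to see that they differ only by a $1/N$ weight, where $N$ is the number of accepted indices. The event $\calK$ will then cut off the realizations on which $N$ is too large.

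\textbf{Step 1: common form.} Write $A_i=\{\eta_i\le w(X_i)/\|w\|_\infty\}$, $N=|\calI|=\sum_i \bbI(A_i)$, and $S_x=\sum_{i}\bbI(A_i, X_i=x)$. Since $\eta_i$ is independent of $X$, we have $\Pr(A_i\mid X)=w(X_i)/\|w\|_\infty$. This gives
\begin{equation*}
\nu(x)=\frac{\mathbb{E}[S_x]}{\mathbb{E}[N]} .
\end{equation*}
Inverted sampling accepts exactly when $N\ge1$, so it accepts with probability $p_{\mu,w}=\Pr(N\ge 1)$; this is the last claim of \Cref{lemma:apxsample-informal}. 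Conditionally on $N\ge1$, the uniform index $I\in\calI$ selects each accepted coordinate with probability $1/N$. Hence
\begin{equation*}
\nuhat(x)=\frac{\mathbb{E}[S_x/N\;\bbI(N\ge 1)]}{p_{\mu,w}} .
\end{equation*}
So $\nuhat$ is $\nu$ with each realization down-weighted by $1/N$. This reweighting is exactly why the TV lower bound of \Cref{ex:samplinghardness} applies, and why one has to condition on an event.

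\textbf{Step 2: the event.} Take $\calK=\{N\le M\}$ for a threshold $M$ to be chosen. For this $\calK$, define $\nu_\calK(x)=\mathbb{E}[S_x\bbI_\calK]/\mathbb{E}[N\bbI_\calK]$. Because $\calK$ depends on the coins $\bm\eta$, this does not literally coincide with $\Pr(X_i=\cdot\mid\calK)\,w(\cdot)$. I would first reconcile the two definitions, either by showing that the formal statement in \Cref{lemma:apxsample} uses the $S_x$-form, or by replacing $N$ with the leave-one-out count $N_{-i}$ so that $\eta_i$ stays independent of the event when index $i$ is considered. Given this, on $\calK\cap\{N\ge 1\}$ we have $1/N\ge 1/M$, so $\nuhat(x)\ge \mathbb{E}[S_x\bbI_\calK]/(M p_{\mu,w})$. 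Therefore
\begin{equation*}
\left\|\frac{\nu_\calK}{\nuhat}\right\|_\infty\le \frac{M\,p_{\mu,w}}{\mathbb{E}[N\bbI(N\le M)]}.
\end{equation*}

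\textbf{Step 3: choosing $M$ (main obstacle).} It remains to find $M$ with $\Pr(N>M)\le c$ and $M\,\Pr(N\ge1)\lesssim c^{-1}\log(L)\cdot\mathbb{E}[N\bbI(N\le M)]$. I would discretize $N\in[1,L]$ into $\lceil\log_2 L\rceil+1$ dyadic bands $[2^m,2^{m+1})$ with masses $\pi_m$. Let $M=2^{m^\star+1}$, where $m^\star$ is the smallest band index whose tail mass beyond $2^{m^\star+1}$ is at most $c$ (or at most $c\,p_{\mu,w}$ if the statement is relative). Then
\begin{equation*}
\mathbb{E}[N\bbI(N\le M)]\ge\sum_{m\le m^\star}2^m\pi_m ,
\end{equation*}
and the required inequality becomes a pigeonhole statement over the $\log L$ bands. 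If the inequality failed at every candidate level, then each band would carry less than a $c/\log L$ fraction of the mass above it. Chaining this geometric decay across the $\log L$ bands should contradict the choice of $m^\star$ via its tail condition. I expect the delicate part to be getting this chaining argument to produce exactly one factor of $\log L$ and one factor of $c^{-1}$. If the plain threshold event does not give that, the fallback is to let $\calK$ be a union of well-chosen bands rather than a prefix $\{N\le M\}$; the ratio bound of Step 2 still applies band by band.
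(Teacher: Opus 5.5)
Steps 1 and 2 are correct. On the definitional point you raise, the paper takes your first option. Its proof of \Cref{lemma:nu_calK} computes $\nu_\calK$ as the output law of \Cref{proc:nu} with acceptance further restricted to $\calK$, which is exactly $\mathbb{E}[S_x\bbI_\calK]/\mathbb{E}[N\bbI_\calK]$.

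\textbf{The gap is in Step 3, where the content of the lemma sits.} Your explicit rule, taking the \emph{smallest} level whose tail is at most $c$, fails. Consider the following instance:
\begin{itemize}
\item $w\in\{0,1\}$, so $N$ is a function of $X$ and both readings of $\nu_\calK$ agree;
\item $L=2^{a+2}$;
\item $\Pr(N=1)=1-c-\epsilon$;
\item $\Pr(N=2^{a})=2\epsilon$, with every accepted coordinate on this realization equal to a value $x^\dagger$ that occurs nowhere else;
\item $\Pr(N=2^{a+2})=c-\epsilon$.
\end{itemize}
Under either reading of ``beyond'', your rule keeps $\{N=2^a\}$ and discards $\{N=2^{a+2}\}$. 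Then $\nu_\calK(x^\dagger)/\nuhat(x^\dagger)=2^{a}/(1-c-\epsilon+2^{a+1}\epsilon)$, which is of order $L$ as $\epsilon\to 0$.

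The reason is that your Step 2 bound $M p_{\mu,w}/\mathbb{E}[N\bbI(N\le M)]$ is governed by the mass of the \emph{top retained band}. The smallest admissible cutoff can land on a band of negligible mass.

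Your contradiction idea can be rescued, but not as you describe it. Failure of the inequality at level $m$ bounds the mass of band $m$ by $O(c\,p_{\mu,w}/\log L)$, i.e.\ relative to $p_{\mu,w}$, not relative to the mass above it. Summing these bounds over all $m\ge m^\star$ then contradicts the choice of $m^\star$. This shows that \emph{some} level at or above $m^\star$ is good, not $m^\star$ itself, and it involves no chaining or geometric decay.

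\textbf{The fix is to name the cutoff directly with the paper's heavy-band rule} (\cref{eq:Lheavy}).
\begin{itemize}
\item Use $m=\lceil\log_2(L+1)\rceil$ dyadic bands $[2^{\ell-1},2^\ell)$.
\item Let $\ell_{\max}$ be the \emph{largest} $\ell$ with $\Pr(N\in[2^{\ell-1},2^\ell)\mid N\ge 1)\ge c/m$. Such an $\ell$ exists by pigeonhole since $c<1$.
\item Set $\calK=\{N<2^{\ell_{\max}}\}$.
\end{itemize}
All bands above $\ell_{\max}$ are light, so a union bound gives $\Pr(\calK^c)\le c\,p_{\mu,w}\le c$. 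This also settles your absolute-versus-relative question. Moreover, $\mathbb{E}[N\bbI_\calK]\ge 2^{\ell_{\max}-1}(c/m)\,p_{\mu,w}$. Plugging $M=2^{\ell_{\max}}$ into your Step 2 estimate gives $\|\nu_\calK/\nuhat\|_\infty\le 2m/c=O(c^{-1}\log L)$.

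With this choice your argument is a valid and somewhat simpler variant of the paper's. The paper keeps only the heavy bands together with $\{N=0\}$, and compares $\nu_\calK$ with $\nuhat$ band by band through the band-conditional laws $\nu_\ell$ (\Cref{lemma:nu_calK,lemma:nuhat-lb}). Your prefix event contains the paper's $\calK$, so it has at least as much mass. It obtains the same bound from the single pointwise estimate $1/N>2^{-\ell_{\max}}$ on $\calK\cap\{N\ge 1\}$.
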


\noindent That is, by shifting our target to the distribution $\nu_\calK$---where $\calK$ is an appropriate high-probability event defined formally in \cref{eq:calK}---we can sample from a distribution $\nuhat$ that approximates the target in the density ratio sense up to only a $\log(L)$ factor (as opposed to polynomial dependence on $L$), and achieve an acceptance probability that matches the skyline in \cref{eq:always-accepted}. This result sidesteps the hardness of sampling (\Cref{lemma:worst-case-sampling}) because the distribution approximated by inverted sampling, $\nu_\calK$, can be very different from $\nu$. Nonetheless, because $\calK$ has high probability, this guarantee is sufficient for our application within \mainalg.

\paragraph{How does \Cref{lemma:apxsample-informal} sidestep the hard sampling instance of \Cref{ex:samplinghardness}?} 

In order to better understand this result, we put it into context of the hard instance in \Cref{ex:samplinghardness}. For the construction in the example, if we define $\calK$ as $\{ X \ne (1,\cdots,1)\}$, we see that $\Pr (\calK) \ge 1 - \frac{1}{L}$ is a high probability event. This conditioning on $\calK$ has a dramatic effect on the resulting distribution $\nu_\calK$: since the all-$1$ sequence is the only sequence which contains the symbol $1$, conditioning on $\calK$ results in $\nu_\calK$ becoming the delta distribution on the symbol $2$. This is a very different target from $\nu = \Unif (\{1,2\})$, and sampling from $\nu_\calK$ no longer presents the same pathologies of sampling from $\nu$.

\subsection{Inverted Sampling Achieves $\nsample=\calOtilde(d)$ and $\nquery=\calOtilde(d\sqrt{T})$}\label{subsec:queryinverted}

We now return to the context of \mainalg, where we use inverted sampling to implement boosting at the length-$\tau$ scale. Let $\sigma$ denote the joint distribution over the $L$ length-$\tau$ instances obtained by labeling the boundary states on $\bx \sim \rho$. In each iteration $j$ of the boosting process, we apply inverted sampling  guarantees to draw instances from a distribution that \emph{covers} $\lambda_j$, which is the distribution proportional to $\overline{\sigma_{\calK_j}} (\cdot) \, w_j(\cdot)$ where $\overline{\sigma_{\calK_j}} = \frac{1}{L} \sum_{i=0}^{L-1} \Pr (X_i = \cdot \mid (X,\bm{\eta}) \in \calK_j)$ for $X \sim \sigma$, where $\calK_j$ denotes the high probability event defined in \cref{lemma:apxsample-informal} for an appropriate choice of $c$. The event $\calK_j$ in the boosting context has an intuitive description: Associating $X_i=\bx^i$ for $i=0,\cdots,L-1$, an instance $X = (X_0,\cdots,X_{L-1})$ is \textit{likely} to belong to $\calK_j$ (with a slight abuse of notation, we define this to mean $\Pr ((X,\bm{\eta}) \in \calK_j \mid X)$ is large) if, effectively, on some non-empty subset of $X_i$'s, the first $j-1$ models all tend to make mistakes together. On such instances, querying the label on any of the erroneous $X_i$'s is maximally informative for the learner, as it corrects the mistakes of many of the existing models. Furthermore, if it appears challenging to sample an $X$ such that any of the $X_i$'s are erroneous, i.e., $p_{\mu,w}$ is small, this is an indication that the weak learners are already good, since they agree on the correct label on all $X_i$'s.

\medskip
\noindent It remains to argue the conditioning on the event $\calK_j$ does not significantly influence the overall accuracy of the aggregated weak learners. For this, if $\calK_j$ is constructed choosing \smash{$c \le \frac{1}{8k}$} in \Cref{lemma:apxsample-informal}, we are guaranteed that the probability of \smash{$\calK_j^c$} under $\rho$ is at most \smash{$c \le \frac{1}{8k}$}. Viewing \smash{$\calK_j^c$} as a ``failure region'' and accounting for the probability mass which falls in these regions separately, we are guaranteed that \smash{$\cup_{j=0}^{k-1} \calK_j^c$} has probability at most $ck \le \frac{1}{8}$, which is within the total error budget of $\frac{1}{4}$.

\medskip
\noindent The analysis sketched here requires some subtle additional conditioning arguments to make everything go through, and we defer the discussion of these details to \Cref{subsec:depth1-final-proof-sketch}. Overall, by using inverted sampling to appropriately define the target weak learning distributions, we obtain an algorithm with the following query and sample complexity guarantees:
\begin{equation*}
    \nquery = \calOtilde \left( d \tau \cdot \polylog(L) + d/\veps_0 \right), \quad \text{and}, \quad \nsample = \calOtilde \big( d \cdot \polylog(L) \big)
\end{equation*}
Where $\veps_0 = \calO(1/L)$. The $d/\veps_0$ term in the query complexity comes from the cost of using inverted sampling to generate $\calO(d)$ samples for the full-CoT learners at the length-$\tau$ scale, while the $d \tau \cdot \polylog(L)$ term comes from labeling all $\tau$ intermediate states on the instances that are selected by inverted sampling (to train full-CoT learners via next-token prediction).
Optimizing over $L$, we get an algorithm with query complexity scaling as $\calOtilde (d\sqrt{T})$, but with an \textit{improved sample complexity of $\calOtilde (d)$} compared to the rejection sampling based approach described in \Cref{sec:warmup}. This improvement shows that it is possible to achieve sublinear-in-$T$ query complexity without paying for a comparable blowup in sample complexity.

\medskip\noindent
\emph{For more detail, we refer the reader to \cref{sec:semiauto-depth1-proofs}, which presents full pseudocode for a simplified version of \mainalg that uses a single level of recursion, along with a complete proof that it acheives $\nsample=\calOtilde(d)$ and $\nquery=\calOtilde(d\sqrt{T})$.}

\subsection{\texorpdfstring{$2^{\calOtilde(\sqrt{\log (T)})}$ Query and Sample Complexity via Deeper Recursion}{Subpolynomial Query and Sample Complexity via Deeper Recursion}}
\label{subsec:depth-H}

\noindent The depth-$1$ algorithm described above uses a full-CoT learner (i.e., vanilla next-token prediction) to weakly learn $\fstar_\tau$ for each distribution over length-$\tau$ instances, $\wrho_j$. Each invocation of this weak learning algorithm requires $\Omega(\tau)$ queries. To achieve the main guarantee for the full version of \mainalg in \cref{informaltheorem:main}, we use a deeper recursion in which the weak learner at the length-$\tau$ scale is itself an instantiation of the depth-$1$ algorithm in the prequel (for $T=\tau$). We iterate this recursion over multiple levels, decomposing instances into smaller and smaller sub-instances, with the length decreasing by a factor of $L$ at each level. We refer the reader back to \Cref{fig:framework} for the overall structure of the algorithm, noting that the learner sets $\varepsilon_0 = \calO \big( \frac{1}{L} \big)$ at each depth.\loose

\medskip
\noindent 
In the approach above, the depth of the recursion $H$ and the decomposition factor $L$ are free parameters, but need to satisfy $T = L^H$ to ensure that the deepest level of recursion (i.e., depth $0$) corresponds to solving length-$1$ problems (which is achieved through fitting a next-token predictor). The analysis of inverted sampling we worked out in \Cref{lemma:apxsample-informal} plays a key role in  analyzing the sample complexity and query complexity of the resulting algorithm. In particular, when we extend the same analysis to deeper levels of recursion, at any depth $h \le H$ (corresponding to solving instances of length $T/L^{H-h}$), generating samples from the appropriate weak learning distribution at this depth requires drawing $(\log(L))^{\calO (H-h)}$ examples from $\rho$. In particular, this implies that the total sample complexity of the algorithm scales as $\nsample = d \cdot (\log(L))^{\calO(H)}$, noting that we require $\calOtilde(d)$ samples at the deepest level to train a model via next-token prediction to achieve constant accuracy. On the other hand, the overall query complexity of the algorithm scales as $\nquery \le L H \nsample$: at each depth $h$ of the algorithm, inverted sampling subselects a single length-$T/L^{H-h+1}$ instance from a length $T/L^{H-h}$-length instance. This means that on each input in the dataset, inverted sampling makes $LH$ $\etoe$ queries, leading to the following bound on the overall query complexity:
\begin{equation*}
    \nsample = d \cdot (\log(L))^{\calO(H)} \text{ and } \nquery = LH \cdot d \cdot (\log(L))^{\calO(H)}
\end{equation*}
While the sample complexity alone is minimized with $H=1$ and $L=T$, the query complexity is large in this regime, scaling as $\calO(dT)$. However, by choosing $L$ and $H$ differently, both quantities can be made to scale subpolynomially in $T$. In particular, with \smash{$L = 2^{\sqrt{\log T}}$} (which corresponds to \smash{$H = \sqrt{\log T}$} to ensure $L^H = T$), we balance the $L$ and $(\log (L))^{\calO(H)}$ terms in the query complexity, which both scale as \vphantom{$2^{2^2_1}$}\smash{$2^{\calOtilde(\sqrt{\log(T)})}$}. With this choice, both the sample and query complexity scale as \vphantom{$2^{2^2_1}$}\smash{$2^{\calOtilde(\sqrt{\log T})} \cdot d$}, yielding \Cref{informaltheorem:main}.

\section{Discussion} \label{sec:discussion}

\noindent This work asks how \emph{curriculum} and \emph{composition} let a learner solve problems far beyond the reach of its base capabilities---a mechanism widely credited for the success of modern reasoning models, yet one with little theoretical foundation. Adopting semiautomaton simulation as a testbed, our results show that composition combined with curriculum can achieve superpolynomial reductions in supervision and computational cost relative to non-curriculum baselines, in both supervised fine-tuning (SFT) and reinforcement learning with verifiable rewards (RLVR). We close by discussing the limitations of our results, followed by several concrete technical questions they leave open.

\subsection{Simplifications in the Problems Formulation}

While our work is motivated by language model reasoning, our problem setting makes several simplifications; relaxing these is an important direction for future work.

\paragraph{Non-Markovian models.}
Our results use the Markovian structure of semiautomata simulation task to decompose the problem into shorter sub-problems. However, our results also require that the \emph{model we train} is Markovian, which rules out classes like Transformers that can attend to the full history. It is an interesting question to understand how to extend our results to settings in which the underlying transition function is still Markovian, but the model we train may not be.

\paragraph{Stochastic transition functions.}

Our main results apply for semiautomata transition functions, which are inherently deterministic. However, in many natural settings, including language model reasoning, it is natural to consider stochastic transition functions $\pi : S \times \Sigma \to \Delta_S$. This encompasses Markov chains, and other (controlled) stochastic processes, for which there is a rich history of work studying learnability~\citep{baum1966statistical,han2023optimal,gaitonde2025bypassing}. A natural objective is to return a terminal-state distribution $\fhat : (S \times \Sigma^T) \to \Delta_S$ which minimizes some divergence to the ground-truth terminal state distribution $\fstar_T$.\footnote{we will use $\fstar_T (\cdot|\bx)$ to denote the marginal distribution of the terminal state generated by $\fstar$ on the instance $\bx$.} When the learning objective is the (forward) KL divergence, $\bbE_{\bx \sim \rho} \big[ \KL{\fstar_T (\cdot|\bx)}{\fhat(\cdot|\bx)} \big]$, the idea of composition is still functional, as suggested by the chain rule for KL divergences.

\begin{proposition} \label{prop:chain-rule}
For any $\tau : T \pmod{\tau} \equiv 0$ and $L = \frac{T}{\tau}$,
\begin{align*}
    \bbE_{\bx \sim \rho} \big[ \KL{\fstar_T (\cdot|\bx)}{\fhat^{\circ L} (\cdot|\bx)} \big] \le L \cdot \bbE_{\bx \sim \rhobar} \big[ \KL{\fstar_\tau (\cdot|\bx)}{\fhat (\cdot|\bx)} \big]
\end{align*}
\end{proposition}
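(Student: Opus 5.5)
The plan is to compare the two processes at the level of the block-boundary states $(s_\tau, s_{2\tau}, \ldots, s_{L\tau})$ and not just the terminal state. The steps are data processing to pass to the joint law, the chain rule for KL to split it block by block, and the Markov property of $\fstar$ to identify each block's term with a marginal in $\rhobar$.

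First fix $\bx = (s_0, \bw_{1:T})$. Let $P_\bx$ denote the joint law of the boundary states $(S_1, \ldots, S_L)$ under the true process, where $S_\ell$ is the state at time $\ell\tau$. Let $Q_\bx$ denote the joint law of the states $(\widehat S_1, \ldots, \widehat S_L)$ generated by $\fhat^{\circ L}$.

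Write $\bx^\ell(s) = (s, \bw_{\tau(\ell-1)+1:\tau\ell})$ for the $\ell$-th block started from state $s$. By definition of composition (the stochastic analogue of \Cref{def:composition}), $Q_\bx$ factorizes as $\prod_{\ell=1}^L \fhat(\widehat s_\ell \mid \bx^\ell(\widehat s_{\ell-1}))$. Because $\fstar$ is Markovian and time-homogeneous, $P_\bx$ factorizes the same way with kernels $\fstar_\tau(s_\ell \mid \bx^\ell(s_{\ell-1}))$. This is the one structural point to check carefully: conditioned on $S_{\ell-1}$ and the words, the state after the next $\tau$ steps does not depend on $S_0, \ldots, S_{\ell-2}$, because the stochastic transitions use fresh randomness at every step.

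The terminal laws $\fstar_T(\cdot\mid\bx)$ and $\fhat^{\circ L}(\cdot\mid\bx)$ are the $L$-th coordinate marginals of $P_\bx$ and $Q_\bx$. By the data-processing inequality,
\[
\KL{\fstar_T(\cdot\mid\bx)}{\fhat^{\circ L}(\cdot\mid\bx)} \le \KL{P_\bx}{Q_\bx}.
\]
Applying the chain rule for KL to the factorized joints gives
\[
\KL{P_\bx}{Q_\bx} = \sum_{\ell=1}^L \bbE_{S_{\ell-1}\sim P_\bx}\Big[\KL{\fstar_\tau(\cdot\mid \bx^\ell(S_{\ell-1}))}{\fhat(\cdot\mid \bx^\ell(S_{\ell-1}))}\Big].
\]
The conditioning state $S_{\ell-1}$ is drawn from the \emph{true} process. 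That is exactly what makes the right-hand side an expectation under the ground-truth marginals and not under the model's own rollouts.

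Next take the expectation over $\bx\sim\rho$. For each $\ell$, the pair $(S_{\ell-1}, \bw_{\tau(\ell-1)+1:\tau\ell})$ with $\bx\sim\rho$ and $S_{\ell-1}$ drawn from the true dynamics has law $\rho_{\tau(\ell-1)+1:\tau\ell}$. Here this marginal is interpreted in the stochastic sense, with the start state of the block sampled from $\fstar$. The $\ell$-th term therefore equals $\bbE_{\bx'\sim\rho_{\tau(\ell-1)+1:\tau\ell}}[\KL{\fstar_\tau(\cdot\mid\bx')}{\fhat(\cdot\mid\bx')}]$. Summing over $\ell$ and using the definition \cref{eq:rhobar} of $\rhobar$ as the uniform mixture yields exactly $L\cdot\bbE_{\bx\sim\rhobar}[\cdot]$.

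I expect the main obstacle to be bookkeeping rather than a hard estimate. One point is to make sure $\rhobar$ is the right stochastic analogue, namely the marginals under the true process. The other is to justify the chain rule when some KL terms may be infinite, which is handled by the usual convention that both sides are then $+\infty$.
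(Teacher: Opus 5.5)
Your proof is correct and follows essentially the same route as the paper. Both arguments pass from the terminal law to the joint law of the block-boundary states (the paper does this in its final ``$\ge$'' step, you via data processing), apply the KL chain rule while conditioning on the \emph{true} boundary states, and identify each block term with the marginal $\rho_{\tau i+1:\tau(i+1)}$ that appears in $\rhobar$. Your explicit factorization of the model's joint law $Q_\bx$ through $\fhat^{\circ L}$ is also a bit more careful than the paper's write-up, which leaves the coupling between $\widehat S$ and the true states somewhat informal.
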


\noindent It is worth pointing out that this type of decomposition does not necessarily translate to other divergences (TV or Hellinger distances) which do not satisfy the chain rule. Furthermore, estimating whether $\pi_t (\cdot|\bx) \approx \fstar_t (\cdot|\bx)$ for some model $\pi$ (to decide whether an instance $\bx$ is ``solved'' or not), also requires observing the probability $\fstar_t (y|\bx)$ for $y \sim \fstar_t (\cdot|\bx)$, which may require considering other observation models which go beyond just observing states~\citep{mahajan2023learning}.

\subsection{Technical Questions}

Our results leave open several interesting technical questions, including tightening our sample complexity and computational guarantees, as well as further relaxing the coverage assumptions in our RLVR results. Below, we highlight a particularly interesting question: Improving the $T$ dependence in our iSFT results.

\subsubsection{Optimality of $T$ Dependency in \Cref{theorem:main-generic}}

Our main result in the SFT setting establishes query and sample complexity scaling \smash{$2^{\calOtilde(\sqrt{\log T})}$} for learning semiautomata. Understanding whether there are statistical or computational barriers to improving this further to achieve \emph{polylogarithmic} dependency on $T$ appears to be a fascinating and deep question. $\log(T)$ query complexity is a natural endpoint for divide-and-conquer approaches like \mainalg, but our current results fall short due to additional complications around error accumulation. Below we show a special case where $\polylog(T)$ is indeed achievable under stronger assumptions on the class $\calF$ than i.i.d. learnability: namely that it admits a finite mistake bound. We define the relevant notation below.

\begin{definition}[Mistake-bounded online learning for $\calF$ {\citep{littlestone1988learning}}] \label{def:mistake-bound}
Consider the following online learning game between an algorithm $\AlgOL$ and an adversary. The adversary fixes a transition function $\fstar \in \calF$, and at each step $i = 1, \ldots, N$: (1) the adversary adaptively selects a state-letter pair $(s^i,w^i) \in S \times \Sigma$ (possibly depending on all prior predictions of $\AlgOL$), (2) the algorithm predicts $\widehat{s}^i_+ \in S$ based on all prior observations, and (3) the true next-state $\fstar (s^i,w^i)$ is revealed. The worst-case number of mistakes of $\AlgOL$ is defined as,
\begin{equation*}
    M_N (\AlgOL) = \sup_{\fstar \in \calF} \; \sup \; \sum\nolimits_{i=1}^N \bbI \big( \widehat{s}^i_+ \ne \fstar (s^i,w^i) \big)
\end{equation*}
where the inner supremum is over all adaptive adversary strategies. $\calF$ admits a \textit{finite mistake bound} if there exists an online learning algorithm $\AlgOL$ such that $\sup_{N \ge 1} M_N (\AlgOL) \triangleq M^\star (\AlgOL) < \infty$.
\end{definition}

\begin{remark}
A finite mistake bound for $\calF$ is a stronger assumption than i.i.d. learnability. When $S = \{0,1\}$ and $\Sigma$ is arbitrary, i.i.d. learnability is equivalent to the finiteness of the VC dimension of $\calF$. A finite mistake bound is equivalent to the finiteness of the Littlestone dimension, $\Ldim (\calF)$~\citep{littlestone1988learning}, which is no smaller than the VC dimension. For simple classes such as thresholds, $\VCdim(\calF)=1$, while $\Ldim(\calF)$ can scale with the size of the domain of the functions.
\end{remark}

\noindent Assuming that $\calF$ admits a finite mistake bound, we can improve the sample and query complexity of learning semiautomata to achieve logarithmic dependence on $T$.

\begin{theorem} \label{theorem:online}
Let $\AlgOL$ denote an online learning algorithm with mistake bound $M^\star (\AlgOL) < \infty$. Consider any $\varepsilon \in [0,1]$. There exists an algorithm (\Cref{alg:semiauto_online}) which uses $\AlgOL$ as a subroutine; the randomized model $\fhat$ returned by this algorithm satisfies,
\begin{equation*}
    \bbE_{\bx \sim \rho, y_T \sim \fhat (\cdot|\bx)} \big[ \bbI (y_T \ne \fstar_T(\bx)) \big] \le \varepsilon,
\end{equation*}
as long as \smash{$\nsample \ge \Theta \big( M^\star (\AlgOL) \cdot \varepsilon^{-1} \big)$}. The query complexity is \smash{$\nquery = \Theta \big( \nsample \cdot \log(T) \big)$}.

\medskip
\noindent If $\AlgOL$ is instantiated as the standard optimal algorithm~\citep{littlestone1988learning}, we achieve sample complexity $\Theta \big( d \log(|S||\Sigma|) \cdot \varepsilon^{-1} \big)$, where $d$ denotes the Littlestone dimension of $\calF$, and query complexity at most a $\calO(\log(T))$ factor larger.
\end{theorem}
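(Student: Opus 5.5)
Given an online learner $\AlgOL$ with mistake bound $M^\star$, the plan is to build a randomized predictor by online-to-batch conversion. Its "mistakes" are located by binary search along the chain of thought, using $\calO(\log T)$ queries to $\etoe(\cdot)$ per sample.

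\textbf{Algorithm.} Fix the current hypothesis $\pi^{(i)}$ that $\AlgOL$ would use, i.e.\ its prediction rule given the history so far. Draw $\bx^i \sim \rho$ and compute the rollout $\pi^{(i)}_{1:T}(\bx^i)$. Query $\etoe$ once to test whether $\pi^{(i)}_T(\bx^i) = \fstar_T(\bx^i)$. If the terminal state is wrong, binary-search over $t \in [T]$ to find the \emph{first} time $t^\star$ at which the rollout deviates from the true chain. Each step of the search compares $\pi^{(i)}_t(\bx^i)$ with $\etoe$ applied to the length-$t$ prefix.

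The search is valid because of the Markov property: once the rollout diverges it may later coincide again, so agreement of states is not monotone in $t$. I would therefore search for the smallest $t$ with disagreement on a path that is known to be correct up to that point. Let $a$ be the last known agreement and $b$ a known disagreement. Because $\fstar$ is deterministic, I can check "agreement at $m$" for $a<m<b$. If the states agree at $m$, the first error after $m$ in the sub-instance starting from the true state $s_m$ still lies in $(m,b]$, since the model restarted from $s_m$ reproduces the same rollout. Thus the invariant "agree at $a$, disagree at $b$" is maintained, and at termination $b=a+1$.

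This yields a pair $(s_{a}, w_{a+1})$ with $\pi^{(i)}(s_a,w_{a+1}) \ne \fstar(s_a,w_{a+1})$, where $s_a$ is the true state. It is fed to $\AlgOL$ as a round on which it errs, with the true label revealed.

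\textbf{Mistake counting.} Every iteration with a terminal error forces one genuine mistake of $\AlgOL$ on an adversary-chosen point. On correct iterations no round is played, so $\AlgOL$'s state is unchanged; this is a legitimate adaptive adversary. Hence the number of erroneous iterations is at most $M^\star$ deterministically.

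\textbf{Online-to-batch conversion.} Output $\fhat$ as the uniform mixture over $\pi^{(1)},\dots,\pi^{(n)}$. Its expected error equals $\frac1n\sum_i \Pr_{\bx\sim\rho}[\pi^{(i)}_T(\bx)\ne\fstar_T(\bx)]$. Each summand is the conditional expectation of the indicator of error on the fresh sample $\bx^i$, so by a martingale argument the expected sum equals the expected number of erroneous iterations, which is at most $M^\star$. Thus the expected error is at most $M^\star/n \le \varepsilon$ once $n \ge M^\star/\varepsilon$. This is an in-expectation guarantee over the algorithm's randomness, matching the statement.

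\textbf{Costs.} The query count is $1+\lceil\log_2 T\rceil$ per sample, giving $\Theta(\nsample\log T)$. For the SOA instantiation, I would bound the mistake bound by $\Ldim(\calF)$ (multiclass SOA), or by $d\log(|S||\Sigma|)$ via a cruder bound, which gives the stated sample complexity.

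\textbf{Main obstacle.} The delicate step is correctness of the binary search, that is, ensuring the located transition is a true one-step mistake from a \emph{correct} state. This relies on determinism and time-homogeneity, so that restarting from a verified true state leaves the model's rollout suffix unchanged.
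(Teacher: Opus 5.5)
Your proposal is correct and is essentially the paper's proof. You roll out $\AlgOL$'s current model and use $\calO(\log T)$ queries to $\etoe(\cdot)$ to bisect for an index where the rollout agrees with the true chain at $t-1$ but not at $t$, which is a genuine one-step mistake from a true state; you feed that pair to $\AlgOL$ and finish with online-to-batch conversion over the uniform mixture. Your preliminary terminal check and one-query-per-step bisection under the invariant ``agree at $a$, disagree at $b$'' are cosmetic differences, and that invariant is exactly what justifies the paper's appeal to \Cref{lemma:nobreak}; note that the bisection finds \emph{some} agree-to-disagree transition rather than the first deviation, which is all the argument needs.
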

\noindent The proof of this result is given in \Cref{sec:online}.

\newpage

\bibliographystyle{plainnat}
\bibliography{refs}

\newpage
\phantomsection
\addcontentsline{toc}{section}{Appendices}

\appendix

\makeatletter
\let\mainaddcontentsline\addcontentsline
\renewcommand{\addcontentsline}[3]{
    \def\appendixtype{#2}
    \def\appendixsection{section}
    \def\appendixsubsection{subsection}
    \ifx\appendixtype\appendixsection
        \mainaddcontentsline{atoc}{#2}{#3}
    \else
        \ifx\appendixtype\appendixsubsection
            \mainaddcontentsline{atoc}{#2}{#3}
        \else
            \mainaddcontentsline{#1}{#2}{#3}
        \fi
    \fi
}
\makeatother

\section*{Appendix Contents}
\begingroup
\setcounter{tocdepth}{2}
\makeatletter
\@starttoc{atoc}
\makeatother
\endgroup

\subsection*{Appendix Organization}

In \Cref{sec:related-work} presents additional related work which studies the problem of learning automata; in \Cref{sec:regex-example} we discuss a concrete example illustrating the semiautomaton induced by a regular expression. \Cref{sec:semiauto-depth1-proofs} gives a self-contained treatment of $\WLdepth{}.\warmup$, which is a simplified, depth-$\frac12$ version of $\WLdepth{}$. \Cref{sec:mainproofsketch} lifts this analysis to the full recursive variant of $\WLdepth{}$ and proves \Cref{theorem:main-generic}, while \Cref{subsec:main-RL-proof} analyzes $\WLdepth{}.\RL$ and proves \Cref{theorem:main-RL}. The remaining appendices collect the supporting technical material: \Cref{sec:proof-lemmas} proves the main lemmas used in the analysis of $\WLdepth{}.\warmup$, and \Cref{sec:proof-auxiliary} gives auxiliary results, including the non-curriculum lower bounds, the potential and density-ratio calculations, the sampling lower bound, and the discussion-section results on online learning and stochastic model classes.

\section{Additional Related Work} \label{sec:related-work}

Semiautomata are closely related to \emph{automata}, which have a rich history in theoretical computer science~\citep{hopcroft2001introduction}. In learning theory specifically, automata have been the subject of a rich line of work, originating with Angluin's seminal $L^\star$ algorithm~\citep{angluin1987learning,rivest1989inference,freund1993efficient,kearns1994cryptographic,ergun1995learning}. The two models differ only in that automata define languages over words $\bw \in \Sigma^*$ based on state evolution and an acceptance function, whereas semiautomata only define state evolution. This distinction is salient when considering the task of PAC learning: for automata, the learner only accesses the acceptance decisions of the underlying system, while we consider learning semiautomata from feedback that reveals the state evolution more directly. 
    
\medskip\noindent
Most of the literature on learning automata has focused on settings where the state is completely unobserved, and the learner must infer about it through accept/reject decisions. Most closely related to the setting considered in this paper are the works of~\cite{angluin2009learning} and~\cite{angluin2015learning} who consider learning settings where terminal states are either fully or partially observable. The recent work of \cite{giapitzakis2025statistical} studies the statistical query complexity of PAC learning semiautomata from random examples, and establish strong SQ hardness for this task.

\subsection{An example of a semiautomaton induced by a regular expression} \label{sec:regex-example}

In this section we expand upon the regular language example in \cref{sec:applications}, by giving a detailed example of a regular expression with alphabetic length $k$ and induced semiautomaton. Consider the regular expression $\texttt{RegEx} \equiv (a|b)^*ab$ over $\Sigma=\{a,b\}$. This corresponds to the set of all strings which terminate in $ab$. The alphabetic length of $\texttt{RegEx}$ is $4$, with symbol occurrences $a_1,b_2,a_3,b_4$, where $a_1$ and $b_2$ correspond to occurrences in the subexpression $(a|b)^*$, while $a_3$ and $b_4$ correspond to occurrences in the final $ab$ term. Each regular expression can be compiled into a non-deterministic finite automaton, as we discuss next.

\begin{definition}[Non-deterministic finite automaton]
Fix an input vocabulary $\Sigma$ and a finite state space $Q$. A nondeterministic finite automaton (NFA) is defined by a transition function $f : Q \times \Sigma \to 2^Q$, which defines a set of states reachable from the current state for each input symbol, and a subset of states $Q_{\texttt{acc}} \subseteq Q$ known as the accepting states. Given an initial state $q_0 \in Q$ and an input sequence $\bw_{1:T} \in \Sigma^T$, the NFA realizes the sequences, $(\bm{q}_t)_{t=0}^T$, where $\bm{q}_0 = \{ q_0 \}$ and $\bm{q}_t = \cup_{q \in \bm{q}_{t-1}} f(q,w_t)$ captures all states reachable from $q_0$ under the input sequence $\bw_{1:t}$. The NFA accepts the input string $\bm{w}_{1:T}$ if $Q_{\texttt{acc}} \cap \bm{q}_T \ne \emptyset$.
\end{definition}

\medskip
\noindent \cite{glushkov1961abstract} shows how to write down an automaton corresponding to a regular language by associating one state for each symbol occurrence, as well as an initial state $0$. Namely, the state space of the Glushkov NFA is $Q=\{ 0,\cdots,4 \}$. The NFA is initialized at the state $0$; when the first letter of an input word is processed, the subsequent state can be either $a_1 \equiv 1$, $b_2 \equiv 2$ or $a_3 \equiv 3$. The (set-valued) transition function of the corresponding NFA, $\piNFA$, is described by the following map: \loose
\begin{align*}
    \piNFA (0,a) = \piNFA(1,a) = \piNFA (2,a) &= \{ 1,3 \} \\
    \piNFA (0,b) = \piNFA (1,b) = \piNFA (2,b) &= \{ 2 \} \\
    \piNFA (3,b) &= \{ 4 \} \\
    \text{For all other } (s,w) \in Q \times \Sigma,\ \piNFA (s,w) &= \emptyset
\end{align*}
An input word is accepted by $\piNFA$ if there exists a computational path which terminates in the state $4$. $\piNFA$ can be translated into a DFA $\piDFA$ using powerset determinization, which tracks the set of all possible states reachable by some computational path in the NFA. Thus, the states in $\piDFA$ correspond to subsets $S\subseteq Q$. For example, on the input word $\bw=(b,a,a,b)$, the subset-state trajectory is:
\begin{equation*}
    \{ 0 \} \xrightarrow{b} \{ 2 \} \xrightarrow{a} \{ 1,3 \} \xrightarrow{a} \{ 1,3 \} \xrightarrow{b} \{ 2,4 \}.
\end{equation*}
$\piDFA$ accepts the subset state as long as the final returned set contains any of the accepting state in $\piNFA$, i.e., $4$. Thus $(b,a,a,b)$ is accepted because its terminal subset is $\{2,4 \}$, which contains the accepting position $4$. This is correct as $\texttt{RegEx}$ recognizes the set of strings which end in $ab$.

\section{\texorpdfstring{Analyzing a Simpler Variant of \Cref{alg:semiauto_main}: $\WLdepth{}.\warmup$}{Analyzing a Variant of \Cref{alg:semiauto_main}: $\WLdepth{}.\warmup$}} \label{sec:semiauto-depth1-proofs}

This section presents a complete description and analysis for the simplified version of \mainalg sketched in \cref{sec:techniques}, which uses only a single level of recursion. The algorithm, which we denote by $\WLdepth{}.\warmup$, decomposes length-$T$ instances into length-$\tau$ ones, where $\tau = T/L$ and $L>1$ is a parameter to be specified. Then, a sequence of models $\fhat^0,\cdots,\fhat^{k-1}$ are trained to solve these length-$\tau$ instances, which are then aggregated using majority voting and composed $L$-fold to get a model which solves length-$T$ instances. Each $\fhat^j$ is trained by querying full CoTs on length-$\tau$ instances and carrying out next-token prediction (\Cref{def:erm}), which we refer to as $\TranscriptNTP$ (\Cref{alg:WLdepth0}). A pictorial depiction is provided in \Cref{fig:depth1}, and a formal description of $\WLdepth{}.\warmup$ is provided within \Cref{alg:semiauto_warmup}.

\medskip
\noindent The purpose of analyzing this variant of $\WLdepth{}$ is to expose and understand the mechanism by which inverted sampling (\Cref{alg:semi-subsample}) converts longer length-$T$ instances (i.e., the dataset $\Dprompt{}$) into shorter ones of length $\tau = T/L$ (i.e., the datasets \smash{$\Dout{j}$} for $j=0,\cdots,k-1$). Our main guarantee, established below, shows that $\WLdepth{}.\warmup$ achieves sample complexity scaling as $\calOtilde (d)$ and query complexity $\calOtilde (d\sqrt{T})$ in order to achieve constant error on length-$T$ instances.

\begin{theorem}[Guarantee for $\WLdepth{}.\warmup$] \label{theorem:depth1-final}
$\WLdepth{}.\warmup$ (\Cref{alg:semiauto_warmup}) with $L = \sqrt{T}$ draws a total of,
\begin{equation*}
    \nsample = \calO \big( d \log(|S|) \log (1/\delta) \cdot \log^9 (T) \big)
\end{equation*}
instances from $\rho$, and makes \smash{$\nquery = \calO(\nsample \sqrt{T})$} calls to the $\etoe(\cdot)$ oracle to return a model \smash{$\fhat$} such that with probability $1-\delta$,
\begin{equation*}
    \Pr_{\bx \sim \rho} \big[ \fhat (\bx) \ne \fstar_T (\bx) \big] \le \frac{1}{4}
\end{equation*}
\end{theorem}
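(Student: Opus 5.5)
The plan is to analyze $\WLdepth{}.\warmup$ as one round of boosting at the length-$\tau$ scale followed by $L$-fold composition. Here $\tau=T/L=\sqrt{T}$, the number of boosting rounds is $k=\Theta(\log L)$, and the target error under $\rhobar$ is $\varepsilon_0=\frac{1}{8L}$. By \Cref{obs:lg} it then suffices to show the following. With probability $1-\delta$, the plurality vote $\fhat=\maj(\fhat^0,\dots,\fhat^{k-1})$ has error at most $\varepsilon_0$ on $\rhobar$, up to an additive slack of $\frac{1}{8L}$ coming from failure regions. The composition then has error at most $L\cdot\frac{1}{4L}=\frac14$.

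The argument proceeds in three steps.

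\textbf{Step 1 (boosting potential).} I will use the standard boost-by-majority potential encoded by the weights $\alpha^{j,k}_r,\beta^{j,k}_r$ in \cref{eq:alpha}. If every $\fhat^j$ has error at most $\err_\star=\frac14$ on the reweighted distribution $\rhobar_j\propto\rhobar\,w_j$, then the plurality of the $k$ models errs on a $\rhobar$-mass of at most $\beta^{0,k}_0\le e^{-ck}$, which is at most $\varepsilon_0$ for $k=C\log L$. I will argue this by telescoping the potential $\mathbb{E}_{\rhobar}[\beta^{j,k}_{\rank_j}]$, which increases by at most the weighted error excess at each round.

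The weakened version is the one I need. Suppose each $\fhat^j$ is only accurate on $\nu_{\calK_j}$, the reweighted mixture conditioned on the event $\calK_j$ from \Cref{lemma:apxsample-informal}, with $c=\frac{1}{8kL}$. Then the same telescoping holds on $\bigcap_j\calK_j$. The mass of $\bigcup_j\calK_j^c$ is at most $kc\le\frac{1}{8L}$ and is charged to the error budget.

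\textbf{Step 2 (weak learning from inverted samples).} I will show that the dataset $\Dout{j}$ produced by $\subsample$ consists of i.i.d. draws from $\nuhat_j$. This holds because each length-$T$ instance contributes at most one sub-instance, and the randomness is independent across instances. I then train $\fhat^j$ via $\TranscriptNTP$, which queries the $\tau$ intermediate states of each sub-instance. The resulting next-state pairs come from the mixture of per-step marginals of $\nuhat_j$. Error $\frac{1}{4\tau}$ at the single-step level implies error $\frac14$ on length-$\tau$ instances by a union bound, but a rate of $\frac{1}{4\tau}$ would be costly. Instead I will apply a $\tau$-step chaining argument, as in the full-CoT analysis: ERM has consistent next-token predictions on all $\tau$ steps, so $\calOtilde(d\log|S|)$ CoTs suffice for error $\frac{1}{4\,\mathcal{O}(\log L)}$ under $\nuhat_j$.

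The density-ratio bound \cref{eq:apxsample}, $\|\nu_{\calK_j}/\nuhat_j\|_\infty\le\mathcal{O}(c^{-1}\log L)$, then converts error under $\nuhat_j$ into error under $\nu_{\calK_j}$. For this conversion the training accuracy under $\nuhat_j$ must be $\frac{1}{4}\cdot\frac{c}{\log L}=1/\mathrm{poly}\log T$. Paying $\mathrm{polylog}(T)$ factors in $\nweak$ for this is fine and is the source of the $\log^9 T$.

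\textbf{Step 3 (early stopping and counting).} If the acceptance probability $p_{\mu,w_j}$ from \cref{eq:always-accepted} is below roughly $\varepsilon_0 L=\frac18$, then the ensemble is already accurate. This is because $\Pr(\exists i:\text{segment } i \text{ has large weight})$ upper bounds $\frac{L}{\|w_j\|_\infty}$ times the weighted mass. Otherwise each draw from $\rho$ is accepted with constant probability, so $\nsample=k\cdot\mathcal{O}(\nweak)=\mathcal{O}(d\log|S|\log(1/\delta)\,\mathrm{polylog}\,T)$. The failure probability is split as $\delta/(2k)$ per round, with Chernoff bounds on the number of acceptances.

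Queries consist of $L=\sqrt T$ $\split$ calls per sample plus $\tau=\sqrt T$ CoT labels per accepted sub-instance, for a total of $\mathcal{O}(\nsample\sqrt T)$.

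The main obstacle is Step 1's conditioning. The events $\calK_j$ depend on the earlier models, and $\nu_{\calK_j}$ differs from $\rhobar_j$. I therefore need the boosting potential argument to tolerate accuracy being certified only under the conditioned laws. My plan is to define the final failure region as $\bigcup_j\calK_j^c$ at the level of the joint $(X,\bm\eta)$, and to run the telescoping on the sub-probability measure restricted to $\bigcap_j\calK_j$. This needs the potential increments to be expressible as expectations under $\nu_{\calK_j}$ up to normalization, and I would check that the normalizing constants compare to $p_{\mu,w_j}$ within the $\log L$ slack.
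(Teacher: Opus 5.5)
Your overall architecture matches the paper's: boost-by-majority at scale $\tau$, inverted sampling, a change of measure through $\|\nu_{\calK_j}/\nuhat_j\|_\infty$, a full-CoT base learner, and aborting low-acceptance rounds. The accounting, however, has a genuine gap. You route the bad events through $\rhobar$ and the $L$-fold union bound of \Cref{obs:lg}, which charges instance-level failures $L$ times and gives away exactly what inverted sampling buys.

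\textbf{The failure budget.} Budgeting the failure region as $\frac{1}{8L}$ forces $c=\frac{1}{8kL}$. The density ratio $\calO(c^{-1}\log L)=\calO(kL\log L)$ is then polynomial in $L$, so the required accuracy under $\nuhat_j$ is $\frac{c}{4\log L}=\Theta(1/(L\log^2 L))$, not $1/\mathrm{polylog}(T)$ as you state. This gives $\nsample=\calOtilde(d\sqrt{T})$. Since $\split$ spends $L$ queries per instance, it also gives $\nquery=\calOtilde(dT)$.

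Your closing idea of placing $\bigcup_j\calK_j^c$ at the level of $(\underline{X},\bm{\eta})$ is right, but that mass must be charged once, directly to the length-$T$ error. As in \Cref{lemma:depth1-breakdown}, the paper bounds
\[ \Pr_{\bx\sim\rho}\big[\fhat(\bx)\ne\fstar_T(\bx)\big]\le\Pr\big(\exists i:\ \ftilde(X_i)\ne\fstar_\tau(X_i),\ (\underline{X},\bm{\eta})\in\calK_{k-1}\big)+\Pr\big((\underline{X},\bm{\eta})\notin\calK_{k-1}\big). \]
It union-bounds over segments only inside $\calK_{k-1}$, via the potential $\Phi_j=\sum_i\bbE\big[\beta^{j,k}_{\rank_j(X_i)}\bbI((\underline{X},\bm{\eta})\in\calK_{j-1})\big]$. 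A constant $\Pr(\calK_{k-1}^c)\le\frac{1}{16}$ then suffices, so one can take $c=\frac{1}{32k}$.

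\textbf{The abort step.} Step 3 rests on a false inequality. The union bound gives $p_{\mu,w_j}\le\bbE[R_j]/\|w_j\|_\infty$ with $R_j=\sum_i w_j(X_i)$, not the reverse. Suppose that with probability $p$ all $L$ segments of an instance carry maximal weight, and otherwise none do. Then $p_{\mu,w_j}=p$, while $\bbE[R_j]=pL\|w_j\|_\infty$. A round with a bad weak learner can raise the potential by $(\err_j-\err_\star)\bbE[R_j]$, so small acceptance does not certify an accurate ensemble.

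The missing idea is to control only the bulk of the regression and charge its tail to the instance-level bad event. The paper aborts when the conditional acceptance $\ptilde_j=\Pr(|\calI_j|\ge 1\mid\calK_{j-1},\calH_{j-1})$ is at most $p_j^\star=1/(1024k^{5/2})$. From $\ptilde_j\ge\bbE\big[1-e^{-R_j/\alpha^{j,k}_{\max}}\mid\calK_{j-1},\calH_{j-1}\big]$ and Markov, this gives $\Pr(R_j>\rjmax\mid\calK_{j-1},\calH_{j-1})\le\frac{1}{32k^2}$.

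This tail is then folded into the good events by setting $\calU_j=\{\ptilde_j\le p_j^\star \text{ and } R_j>\rjmax\}^c$ and $\calK_j=\construct(\calU_j\cap\calK_{j-1}\,\|\,c,\sigma,\calH_{j-1})$. The consequences are:
\begin{itemize}
\item On $\calK_j$, an aborted round contributes at most $\rjmax$, and $\sum_j\rjmax\le\frac{1}{8}$.
\item The same $\calK_j$ define the training targets $\lambda_j$. In non-aborted rounds the density-ratio denominator $\Pr(\calU_j\cap\calK_{j-1}\mid|\calI_j|\ge 1,\calH_{j-1})$ is then $\gtrsim\ptilde_j$, giving the $\calO(k^{7/2}\log L)$ ratio.
\end{itemize}
The abort threshold must be $1/\mathrm{poly}(k)$ rather than your constant $\frac18$. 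With a constant threshold, the Markov tail bound exceeds one.
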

\begin{proof}
The proof sketch of this result and all necessary lemmas are introduced in \Cref{subsec:depth1-final-proof-sketch}. The final proof invoking these lemmas is presented in \Cref{subsec:depth1-final-proof}.
\end{proof}

\begin{algorithm}[t]
\caption{$\WLdepth{}.\warmup (\Dprompt{} \| T,\delta)$: Learning semiautomata via curriculum}\label{alg:semiauto_warmup}
\begin{algorithmic}[1]
\State \textbf{Input:} Class of semiautomaton transitions $\calF$ over state space $S$ and alphabet $\Sigma$,
\Statex \hspace{3em} Number of steps of semiautomaton simulation, $T$,
\Statex \hspace{3em} Target failure probability $\delta$,
\Statex \hspace{3em} Dataset of instances $\Dprompt{} = \{ \bx^i = (s^i_0,\bw^i) \}_{i=1}^{n}$.
% \Statex \hspace{3em} Weak learning algorithm $\Base(\cdot)$ for $\calF$ from i.i.d. next-state data (\Cref{def:iid-learning-F})
\State \textbf{Hyperparameters:} $L \in \bbN$ such that $T \pmod{L} \equiv 0$.
\State \textbf{Instantiate:} $\tau = T/L$,
\Statex \hspace{5.3em} $k = C\log(L)$ for a large constant $C>0$,
%\LineComment{{\color{blue}In $\WLdepth{}.\warmup$, this line is replaced by}}
% \Statex \NoComment{{\color{blue}\smash{$\fhat \gets \texttt{TranscriptERM} (\Dprompt{} \| \tau_0, \delta)$} (\Cref{alg:WLdepth0})}}
\State Split $\Dprompt{}$ into $k$ equal parts, $\big\{ \Dprompt{j} : j \in \{ 0,\cdots,k-1 \} \big\}$.
\For{$j=0,1,\cdots,k-1$,}
\State $\Dout{j} \gets \emptyset$.
\State Let $\calF_j \gets \big\{ \fhat^0,\cdots, \fhat^{j-1} \big\}$. \LineComment{{\color{blue} By convention, $\calF_0 = \emptyset$}}
\For{instances $\bx = (s_0,\bw) \in \Dprompt{j}$}
\State $D_\bx = \split (\bx \| L)$. \LineComment{{\color{blue} Defined in \Cref{alg:split}}}
\State $O \gets \subsample \big(D_\bx \| \calF_j, k \big)$. \LineComment{{\color{blue}Defined in \Cref{alg:semi-subsample}}}
\State If $O \ne \perp$, update $\Dout{j} \gets \Dout{j} \cup \{ O \}$
\EndFor
\State Train model $\fhat^j \gets \TranscriptNTP (\Dout{j} \| \tau, \delta/2k)$
\EndFor
\State \textbf{Return:} $\fhat = \ftilde^{\circ L}$, where $\ftilde = \maj \big( \big\{ \fhat^j : j \in \{ 0,\cdots,k-1\} \big\} \big)$. \LineComment{{\color{blue} $\maj (\cdot)$ returns the plurality of}}
\Statex \NoComment{{\color{blue}the models in its argument}}
\end{algorithmic}
\end{algorithm}

% =============================================================================
% $\WLdepth{}.\warmup$ as an instantiation of $\WLdepth{h}$
% =============================================================================
\begin{algorithm}[t]
\caption{$\WLdepth{}.\warmup \big( \Dprompt{} \, \big\|\, T, \delta \big)$}\label{alg:semiauto_warmup_inst}
\begin{algorithmic}[1]
\Statex {\color{gray} \# Learning semiautomata via curriculum, $\WLdepth{\frac{1}{2}}$ instantiated with base learner $\TranscriptNTP$}
\State \textbf{Input:} Class of semiautomaton transitions $\calF$ over state space $S$ and alphabet $\Sigma$,
\Statex \hspace{3em} Number of steps of semiautomaton simulation, $T$, with $T \pmod{L} \equiv 0$,
\Statex \hspace{3em} Target failure probability $\delta$,
\Statex \hspace{3em} Dataset of instances \smash{$\Dprompt{} = \{ \bx^i = (s_0^i,\bw^i) \}_{i=1}^{n}$}.
\State \textbf{Hyperparameters:} $L \in \bbN$ such that $T \pmod{L} \equiv 0$.
\State \textbf{Instantiate:} $\Base ( \, \cdot \,\|\, \delta' ) \triangleq$ $\TranscriptNTP ( \, \cdot \,\|\, T/L, \delta')$ \LineComment{{\color{blue} $\TranscriptNTP$ defined in \Cref{alg:WLdepth0}}}
\State Decomposition schedule: $L_{\frac{1}{2}} = L$.
\State Branching schedule: $k_{\frac{1}{2}} = C \log (L)$ for a large constant $C>0$.
\State \textbf{Return:} $\fhat \gets \WLdepth{\frac{1}{2}} \big( \Dprompt{} \,\big\|\, T, \delta \big)$ with weak learner $\Base ( \, \cdot \,\|\, \delta')$,
\Statex \hspace{3.5em} decomposition schedule $L_{\frac{1}{2}}$, and branching schedule $k_{\frac{1}{2}}$.
\end{algorithmic}
\end{algorithm}

\begin{algorithm}[t]
\caption{$\TranscriptNTP (D \| \tau, \delta)$: Full-CoT next-token prediction~\citep{joshi2025theory}}\label{alg:WLdepth0}
\begin{algorithmic}[1]
\Statex {\color{gray} \# Learn a model solving length-$\tau$ instances by labeling all intermediate states and running the $\NTP$ oracle.}
\State \textbf{Input:} Dataset of length-$\tau$ instances $D = \{ \bx^i = (s^i_0, \bw^i_{1:\tau}) \}_{i=1}^{n}$,
\Statex \hspace{3em} Supervision oracle $\etoe(\cdot)$, NTP oracle $\NTP(\cdot)$ for $\calF$,
\Statex \hspace{3em} Failure probability $\delta$.
\State \textbf{Instantiate:} $\Dstep \gets \emptyset$.
\For{$i = 1, \cdots, n$}
\For{$t = 0, \cdots, \tau - 1$}
\State Query intermediate state: $s^i_{t+1} \gets \etoe \big( (s^i_0, \bw^i_{1:t+1}) \big)$.
\State $\Dstep \gets \Dstep \cup \big\{ (s^i_t, w^i_{t+1}) \mapsto s^i_{t+1} \big\}$.
\EndFor
\EndFor
\State \textbf{Return:} $\fhat \gets \NTP (\Dstep)$.
\end{algorithmic}
\end{algorithm}

\begin{figure}[t]
\centering
\begin{tikzpicture}[
  >=stealth,
  node distance=1.6cm and 2.0cm,
  box/.style={draw, rounded corners, minimum width=12mm, minimum height=6mm},
  every node/.style={font=\small}
]
% ---------- parameters ----------
\def\k{2}          % number of middle-row boxes (εH)
\def\dx{2.5}       % horizontal spacing of middle row
\def\ymid{-1.8}
\def\ylow{-3.8}
\def\nseg{5}       % number of ε²H segments along the small line
\def\seglen{0.5}   % visual length of each segment
\def\arrowh{0.28}  % height of the small downward arrows above ticks
% --------------------------------

% Keep only the left middle box (F2) — no top box, no other middle boxes
\node[box,draw=white,fill=green!20] (e2) at (-\dx,\ymid) {$\substack{\big( T, \frac{1}{4}\big) \\ \text{Target dist: } \rho\\\fhat^{\circ L}}$};

% Left branch subtree
\node[box,draw=white,fill=blue!20] (eeL) at ($(e2)+(0,\ylow-\ymid,0)$) {$\substack{\big( \frac{T}{L}, \frac{1}{4L} \big)\\ \text{Target dist: } \rhobar\\ \fhat \gets \maj ( \{ \fhat^0,\cdots,\fhat^{k-1} \} )}$};
\draw[->] (e2) -- (eeL);

\pgfmathsetmacro{\offset}{0.75*\dx}
\node[box,draw=white,fill=green!20] (eeL2) at ($(eeL)+(-\offset,\ylow-\ymid,0)$) {$\substack{ \big( \frac{T}{L}, \frac{1}{4} \big) \\[1pt] \text{Target dist: } \rhobar_{k-1}}$};
\node[box,draw=white,fill=green!20] (eeR2) at ($(eeL)+(\offset,\ylow-\ymid,0)$) {$\substack{\big( \frac{T}{L}, \frac{1}{4} \big)\\[1pt] \text{Target dist: } \rhobar_0}$};
\node[below=0.2cm of eeR2, align=center] (text1) {$\substack{\fhat^0}$};
\node[below=0.2cm of eeL2, align=center] (text2) {$\substack{\fhat^{k-1}}$};
\node[left=0.61cm of eeR2, align=center] (text3) {$\substack{\cdots}$};
\draw[->] (eeL) -- (eeL2);
\draw[->] (eeL) -- (eeR2);

\draw[decorate, color=gray, decoration={brace, mirror, raise=2pt, amplitude=4pt}] ($(eeR2)+(-5.25,0.6)$) -- ($(eeR2)+(-5.25,-0.6)$);
\draw[decorate, color=gray, decoration={brace, mirror, raise=2pt, amplitude=3pt}] ($(eeR2)+(-5.25,-0.85)$) -- ($(eeR2)+(-5.25,-1.25)$);
\node[left=4.5cm of eeR2, anchor=east, font=\small] (BOT) {$\substack{\textbf{Boosting:}\\k \,=\, \Theta ( \log (L))\\\text{weak models}\\[1.5pt]\text{at length-}T/L}$};
\node[below left=0.55cm and 4.5cm of eeR2, anchor=east, font=\small] (BOT2) {$\substack{\text{Trained by \Cref{alg:WLdepth0}}\\\text{(querying full CoTs)}}$};

\node[above=0.7cm of BOT, anchor=south, font=\small] (MID) {$\substack{\textbf{Composition:}\\\text{ trading-off length}\\\text{for target error}}$};

\node[above=0.85cm of MID, anchor=south, font=\small] (TOP) {$\substack{\textbf{Target:}\\\text{Model solving}\\[1.5pt]\text{length-}T\text{ instances}}$};

\def\k{2}          % number of middle-row boxes (εH)
\def\dx{2.5}       % horizontal spacing of middle row
\def\ymid{-1.8}
\def\ylow{-3.8}
\def\nseg{7}       % number of ε²H segments along the small line
\def\seglen{0.75}   % visual length of each segment
\def\arrowh{0.28}  % height of the small downward arrows above ticks
\def\linepad{0.9}  % extra horizontal padding to elongate the timeline
\def\braceyoffset{0.16} % vertical offset of braces below the timeline
\def\gap{0.03} % horizontal gap for braces
\pgfmathtruncatemacro{\penseg}{\nseg-3}
\pgfmathtruncatemacro{\prevseg}{\nseg-2}
\pgfmathtruncatemacro{\lastseg}{\nseg-1}
\pgfmathsetmacro{\halfspan}{0.5*(\nseg-1)*\seglen}
\def\arrowlabelA{\smash{$s_0$}}
\def\arrowlabelB{\smash{$s_{\tau}$}}
\def\arrowlabelC{\smash{$s_{2\tau}$}}
\def\arrowlabelD{\smash{$\dots$}}
\def\arrowlabelE{\smash{$s_{T-\tau}$}}
\def\arrowlabelF{\smash{$s_T$}}
\def\arrowlabelAA{\smash{$s_0$}}
\def\arrowlabelBB{}
\def\arrowlabelCC{}
\def\arrowlabelDD{}
\def\arrowlabelEE{}
\def\arrowlabelFF{}
\coordinate (baseL) at ($(4,-4.1)$);
\coordinate (tickStart) at ($(baseL)+(-\halfspan,0.25)$);
\coordinate (tickEnd)   at ($(baseL)+(\halfspan,0.25)$);
\coordinate (lineStart) at ($(tickStart)+(0,0)$);
\coordinate (lineEnd)   at ($(tickEnd)+(0,0)$);

\draw (lineStart) -- (lineEnd) node[right=3pt, font=\small] {};
\node[fill=white, align=center, inner sep=2pt, font=\small] (midellipsis) at ($(baseL)+(0,0.25)$) {$\ldots$};
\node[below=0.1cm of midellipsis, align=center, font=\small] (text3) {$\substack{\text{Label states at intervals of }\tau\\ (L \text{ calls to } \etoe(\cdot))}$};

% \node[below right=0.3cm and -2.35cm of baseL, align=center] (text) {$\Big($ {\scriptsize\parbox{4.3cm}{$\rhobar$: decompose $\bx \sim \rho$ into $L$ shorter instances, sampling one at random}} $\Big)$};

\foreach \j/\label in {0/\arrowlabelA,1/\arrowlabelB,2/\arrowlabelC,\penseg/\arrowlabelD,\prevseg/\arrowlabelE,\lastseg/\arrowlabelF} {
  \coordinate (t\j) at ($(tickStart)+(\j*\seglen,0)$);
  \draw (t\j)++(0,-0.06) -- ++(0,0.12);
  \draw[->, >=stealth, line width=0.3pt] 
    ($(t\j)+(0,0.3)$) -- ($(t\j)+(0,0.10)$);
  \node[above=1pt, font=\scriptsize] (lab\j) at ($(t\j)+(0,0.3)$) {\label};
}

\coordinate (baseL2) at ($(4,-2.1)$);
\coordinate (tickStart2) at ($(baseL2)+(-\halfspan,0.25)$);
\coordinate (tickEnd2)   at ($(baseL2)+(\halfspan,0.25)$);
\coordinate (lineStart2) at ($(tickStart2)+(0,0)$);
\coordinate (lineEnd2)   at ($(tickEnd2)+(0,0)$);

\draw (lineStart2) -- (lineEnd2) node[right=3pt, font=\small] {};
\node[fill=white, align=center, inner sep=2pt, font=\small] (midellipsis2) at ($(baseL2)+(0,0.25)$) {$\ldots$};
\node[below=0.1cm of midellipsis2, align=center, font=\scriptsize] (text4) {$\bx = (s_0,\bw_{1:T}) \sim \rho$};

\foreach \j/\label in {0/\arrowlabelAA,1/\arrowlabelBB,2/\arrowlabelCC,\penseg/\arrowlabelDD,\prevseg/\arrowlabelEE,\lastseg/\arrowlabelFF} {
  \coordinate (t\j) at ($(tickStart2)+(\j*\seglen,0)$);
  \pgfmathparse{(\j==0)||(\j==6)?1:0}
  \ifdim\pgfmathresult pt>0pt
    \draw (t\j)++(0,-0.06) -- ++(0,0.12);
  \fi
  \pgfmathparse{(\j==0)?1:0}
  \ifdim\pgfmathresult pt>0pt
    \draw[->, >=stealth, line width=0.3pt] 
      ($(t\j)+(0,0.3)$) -- ($(t\j)+(0,0.10)$);
    \node[above=1pt, font=\scriptsize] (lab\j) at ($(t\j)+(0,0.3)$) {\label};
  \fi
}

\coordinate (baseL3) at ($(4,-6.1)$);
\coordinate (tickStart3) at ($(baseL3)+(-\halfspan,0.25)$);
\coordinate (tickEnd3)   at ($(baseL3)+(\halfspan,0.25)$);
\coordinate (lineStart3) at ($(tickStart3)+(0,0)$);
\coordinate (lineEnd3)   at ($(tickEnd3)+(0,0)$);

\draw (lineStart3) -- (lineEnd3) node[right=3pt, font=\small] {};
\node[fill=white, align=center, inner sep=2pt, font=\small] (midellipsis3) at ($(baseL3)+(0,0.25)$) {$\ldots$};
\node[below=0.1cm of midellipsis3, align=center, font=\small] (text5) {$\substack{\text{Select one of the $L$ instances to}\\[1.5pt]\text{train on using } \subsample (\cdot)\\\text{to get an example from } \smash{\rhobar_j}}$};

% \node[below right=0.3cm and -2.35cm of baseL, align=center] (text) {$\Big($ {\scriptsize\parbox{4.3cm}{$\rhobar$: decompose $\bx \sim \rho$ into $L$ shorter instances, sampling one at random}} $\Big)$};

\foreach \j/\label in {0/\arrowlabelA,1/\arrowlabelB,2/\arrowlabelC,\penseg/\arrowlabelD,\prevseg/\arrowlabelE,\lastseg/\arrowlabelF} {
  \coordinate (t\j) at ($(tickStart3)+(\j*\seglen,0)$);
  \draw (t\j)++(0,-0.06) -- ++(0,0.12);
  \draw[->, >=stealth, line width=0.3pt] ($(t\j)+(0,0.3)$) -- ($(t\j)+(0,0.10)$);
  \node[above=1pt, font=\scriptsize] (lab\j) at ($(t\j)+(0,0.3)$) {\label};
}
\end{tikzpicture}
\caption{A depiction of $\WLdepth{}.\warmup$. The notation $(T,\varepsilon)$ indicates the length of instances being solved and the desired error tolerance, respectively.}
\label{fig:depth1}
\end{figure}

\subsection{Proof Sketch of \Cref{theorem:depth1-final}} \label{subsec:depth1-final-proof-sketch}

In this section, we sketch the proof of \Cref{theorem:depth1-final}, first introducing some relevant notation. We will let $\fhat$ denote the model returned by \Cref{alg:semiauto_warmup}.

\paragraph{Notation.} Recall the definition of $\rho_{t_1:t_2}$ which is the distribution of $(s_{t_1-1},\bw_{t_1:t_2})$ for $(s_0,\bw_{1:T}) = \bx \sim \rho$ and where $s_{t_1-1} = \fstar_{t_1-1} (\bx)$. Let $\sigma$ denote the joint distribution of $\underline{X} = (X_0,\cdots,X_{L-1})$, where each $X_i = (s_t,\bw_{t+1:t+\tau})$, for $t = \tau i$ and $\bx = (s_0,\bw_{1:T}) \sim \rho$. Note that $\underline{X} \sim \sigma$ is a measurable function of the initial state $s_0$ and the word $\bw_{1:T}$, and can be constructed from the instance $\bx = (s_0,\bw_{1:T}) \sim \rho$ by querying $\etoe(\cdot)$ at $L$ timesteps ($t=\tau,2\tau,\cdots,T$). Let $\bX = S \times \Sigma^{\tau}$ denote the space of length-$\tau$ instances. Note that the support of $\sigma$ is $\bX^L$. \textit{Note that the distributions $\sigma$ and $\rho$, while closely related, are different from one another.} We also define the filtration $\calH_{j-1}$ which captures all the randomness up until the point where the dataset \smash{$\Dout{j}$} is collected in iteration $j$ (i.e., the randomness in the construction of the models \smash{$\fhat^0,\cdots,\fhat^{j-1}$} and the randomness in the instances $\bx \sim \rho$ which contribute to the datasets \smash{$\Dout{0},\cdots,\Dout{j-1}$}). In particular, $\calH_{k-1}$ denotes all the random variables involved in the execution of $\WLdepth{}.\warmup$.

\medskip
\noindent With this, we let $\sigmabar$ denote the distribution $\frac{1}{L} \sum_{i=0}^{L-1} \sigma (X_i = \cdot\,)$, which is the uniform mixture over marginals of $\sigma$, which is identical to $\rhobar$. Thus, our earlier discussion in \Cref{obs:lg} demonstrating the central role played by $\rhobar$ in enabling compositional generalization applies to $\sigmabar$ as well. All-in-all, the purpose of introducing additional notation in the form of $\sigma$ is to make it easier to write down the distributions over length-$\tau$ instances on which the weak learners $\fhat^j$ are trained.

\medskip
\noindent  First, we define a sequence of distributions, $\{ \rhobar_j \}_{j = 0}^{k-1}$ supported on $\bX$ obtained by applying a tilt to $\sigmabar$ (or equivalently $\rhobar$),
\begin{align} \label{eq:lambdajstar}
    \rhobar_j (\cdot) &\propto \Big( \frac{1}{L} \sum_{i=0}^{L-1} \sigma ( X_i = \cdot \,) \Big) w_j (\cdot) = \sigmabar (\cdot) \ w_j (\cdot)
\end{align}
where $w_j(\cdot)$ is the rank-based weight function defined in \cref{eq:alpha}. $\rhobar_j$ is the idealized ``good distribution'' under which the weak model in iteration $j$ should be trained as prescribed by boosting-by-filtering~\citep{freund1995boosting}. Sampling from \smash{$\rhobar_j$} is prohibitive, a point which we discuss more in \Cref{subsec:weak-learner-proof}. To this end, we define an alternate sequence of distributions which are easier to sample from, denoted $\{ \lambda_j \}_{j = 0}^{k-1}$, 
\begin{align}
        \lambda_j (\cdot) &\propto \overline{\sigma_{\calK_j}} (\cdot) \ w_j (\cdot) \text{, where, } \overline{\sigma_{\calK_j}} (\cdot) = \Big( \frac{1}{L} \sum_{i=0}^{L-1} \Pr \big( X_i = \cdot \mid (\underline{X},\bm{\eta}_{0:j}) \in \calK_j \text{ and } \calH_{j-1} \big) \Big)  \label{eq:lambdaj}
\end{align}
where the probability is calculated under $\underline{X} = (X_i)_{i=0}^{L-1} \sim \sigma$ and $\bm{\eta}_j = (\eta_{j,i})_{i=0}^{L-1} \sim \Unif([0,1])^{\otimes L}$, is a set of i.i.d. uniform random variables, with \smash{$\bm{\eta}_{0:j} = (\bm{\eta}_{j'})_{j'=0}^j$} and $\bm{\eta} = \bm{\eta}_{0:k-1}$. This definition hinges on the sequence of ``good'' events \smash{$(\calK_j)_{j=0}^{k-1}$}, such that $\calK_0 \subseteq \bX^L \times [0,1]^{(j+1)L}$, which we defer the definition of to later in \cref{eq:Kj-def}. Finally, define the sequence of random variables $\{ R_j \}_{j=0}^{k-1}$ which are measurable functions of $\underline{X} \sim \sigma$,
\begin{equation} \label{eq:Rj}
    R_j = \sum_{i=0}^{L-1} w_j (X_i), \text{ where, } w_j (X_i) = \alpha^{j,k}_{\rank_j (X_i)} \text{ and } \rank_j (X_i) = | \{ \fhat (X_i) = \fstar_\tau (X_i) : \fhat \in \calF_j \} |,
\end{equation}
where $\alpha^{j,k}_r$ is defined in \cref{eq:alpha}. $R_j$ will be referred to as the \textit{regression} in iteration $j$. The name ``regression'' is used to denote the fact that $R_j$ is a measure of how much the accuracy of the plurality of the models $\fhat^0,\cdots,\fhat^{j-1}$ can possibly worsen by, when the new model $\fhat^j$ is added to the ensemble.

\paragraph{Additional Notation.}  In order to state our results succinctly, we introduce some additional notation to make it easier to write down expressions involving events on different spaces. In particular for events $\calA$ defined on $\bX^L \times [0,1]^{k_A L}$ and $\calB$ defined on $\bX^L \times [0,1]^{k_B L}$ with $k_A \ge k_B$, we can define intersections by considering the smallest subset of $(\underline{X},\bm{\eta})$ with respect to which both events are measurable. Namely,
\begin{equation} \label{eq:intersection}
    \calA \cap \calB \iff \big\{ (\underline{X}, \bm{\eta}_{0:k_A-1}) : (\underline{X},\bm{\eta}_{0:k_A-1}) \in \calA \text{ and } (\underline{X},\bm{\eta}_{0:k_B-1}) \in \calB \big\}.
\end{equation}
Similarly, for a third event $\calC \subseteq \bX^L \times [0,1]^{k_C L}$ with $k_C \ge k_A$, containment statements of the form,
\begin{equation} \label{eq:containment}
    \calC \subseteq \calA \cap \calB \iff \calC  \subseteq \big\{ (\underline{X}, \bm{\eta}_{0:k_C-1}) : (\underline{X},\bm{\eta}_{0:k_A-1}) \in \calA \text{ and } (\underline{X},\bm{\eta}_{0:k_B-1}) \in \calB \big\}.
\end{equation}
are defined by defining both sides with respect to the minimal subset of $(\underline{X},\bm{\eta})$ with respect to which all events are measurable. A similar logic can be used to parse statements of the form $(\underline{X},\bm{\eta}) \in \calK_j$ by viewing the LHS as the singleton event in $\bX^L \times [0,1]^{kL}$ and defining membership via containment: $\{ (\underline{X},\bm{\eta}) \} \subseteq \calK_j$.

\medskip
\noindent We first decompose the test error of the model \smash{$\fhat$} returned by $\WLdepth{}.\warmup$ by a potential function argument, to track how the test error of the aggregated ensemble improves as more models are added.

\begin{lemma}[Test-error decomposition] \label{lemma:depth1-breakdown}
Let $\fhat$ denote the model returned by $\WLdepth{}.\warmup$. Then, for a sequence of events $\calK_{k-1} \subseteq \cdots \subseteq \calK_0 \subseteq \bX^L \times [0,1]^{kL}$, the prediction error of $\fhat$ under $\rho$ can be upper bounded as,
\begin{align*}
    \Pr_{\bx \sim \rho} \big[ \fhat (\bx) \ne \fstar_T (\bx) \big] &\le L \cdot \beta^{0,k}_0 + \sum_{j = 0}^{k-1} (\err_j - \err_\star) \cdot \bbE [ R_j \cdot \bbI ( (\underline{X},\bm{\eta}) \in \calK_j) \mid \calH_{k-1} ] + \Pr ( (\underline{X},\bm{\eta}) \not\in \calK_{k-1} \mid \calH_{k-1} )
\end{align*}
Recall that, $\beta^{j,k}_r$ is the weight function defined across \cref{eq:alpha,eq:rank}, $\err_\star = \frac{1}{4}$, while $\err_j$ is the test error of $\fhat^j$ under the distribution $\lambda_j \propto \overline{\sigma_{\calK_j}} (\cdot) w_j (\cdot)$ where $w_j$ is defined in \cref{eq:Rj},
\begin{equation*}
    \err_j = \Pr_{\Xbar \sim \lambda_j} \big[ \fhat^j_\tau (\Xbar) \ne \fstar_\tau (\Xbar) \big].
\end{equation*}
\end{lemma}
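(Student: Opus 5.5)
We reduce the length-$T$ error to errors on the $L$ blocks and then run a boosting-by-filtering potential argument, restricted to the nested events $\calK_j$.

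\textbf{Step 1: reduce to blocks.} Since $\fhat = \ftilde^{\circ L}$, a union bound as in \Cref{obs:lg} gives
\[
\Pr_{\bx\sim\rho}\big[\fhat(\bx)\ne\fstar_T(\bx)\big] \le \bbE_{\underline{X}\sim\sigma}\Big[\sum_{i=0}^{L-1}\bbI\big(\ftilde(X_i)\ne\fstar_\tau(X_i)\big)\Big].
\]
The point is that if every block is solved from its correct start state, the composed prediction is correct. We then split on whether $(\underline{X},\bm{\eta})\in\calK_{k-1}$. On $\calK_{k-1}^c$ we do not bound the per-instance count; instead we charge the failure of $\fhat$ on the underlying $\bx$ directly, since the top-level error event is a single indicator. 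This produces the additive term $\Pr((\underline{X},\bm\eta)\notin\calK_{k-1}\mid\calH_{k-1})$. On $\calK_{k-1}$ we bound the block-level sum $\sum_i\bbI(\ftilde(X_i)\ne\fstar_\tau(X_i))$.

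\textbf{Step 2: plurality error to final potential.} A plurality error at $X_i$ means at most $k/2$ of the $k$ models are correct. By the definition in \cref{eq:alpha}, $\beta^{k,k}_{\rank_k(X_i)}=\bbI(\rank_k(X_i)\le k/2)$. So the plurality error indicator is at most $\beta^{k,k}_{\rank_k(X_i)}$, and we define the potential
\[
\Phi_j = \bbE\Big[\textstyle\sum_i \beta^{j,k}_{\rank_j(X_i)}\,\bbI(\calK_{\min(j,k-1)})\Big].
\]

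\textbf{Step 3: telescoping increments.} Going from $j$ to $j+1$, $\rank_{j+1}=\rank_j+\bbI(\fhat^j\text{ correct})$. The recursion $\beta^{j,k}_r=\err_\star\beta^{j+1,k}_r+(1-\err_\star)\beta^{j+1,k}_{r+1}$, together with $\alpha^{j,k}_r=\beta^{j+1,k}_r-\beta^{j+1,k}_{r+1}$, gives pointwise
\[
\beta^{j+1,k}_{\rank_{j+1}(X_i)}-\beta^{j,k}_{\rank_j(X_i)} = \alpha^{j,k}_{\rank_j(X_i)}\big(\bbI(\fhat^j(X_i)\ne\fstar_\tau(X_i))-\err_\star\big).
\]
Summing over $i$ and taking expectations on $\calK_j$, the weights $w_j(X_i)=\alpha^{j,k}_{\rank_j(X_i)}$ reweight the block mixture conditioned on $\calK_j$ into exactly $\lambda_j$, with normalizer $\bbE[R_j\bbI(\calK_j)\mid\calH_{k-1}]$. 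Hence the increment is $(\err_j-\err_\star)\,\bbE[R_j\bbI(\calK_j)\mid\calH_{k-1}]$.

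\textbf{Step 4: nesting and the starting value.} Because $\calK_{j+1}\subseteq\calK_j$ and $\beta\ge0$, replacing $\bbI(\calK_{j+1})$ by $\bbI(\calK_j)$ on the next term can only decrease it, so the telescoping is an upper bound. Summing Step 3 over $j=0,\dots,k-1$ leaves the initial term. At $j=0$ every rank is $0$, and dropping the indicator bounds it by $L\beta^{0,k}_0$.

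\textbf{Main obstacle.} The delicate part is Step 3's identification of the conditional reweighted mixture with $\lambda_j$. The events $\calK_j$ depend on the coins $\bm\eta_{0:j}$, and $\fhat^j$ is $\calH_{j}$-measurable. We must therefore carefully use the fresh test copy $(\underline{X},\bm\eta)$, which is independent of the training randomness given $\calH_{k-1}$, and apply the intersection conventions of \cref{eq:intersection} so that each conditional probability is well-defined.
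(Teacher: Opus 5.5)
Your proposal is correct and follows the paper's proof essentially step for step. The paper likewise splits on $\calK_{k-1}$ before union-bounding over blocks and dominates plurality errors by $\beta^{k,k}_{\rank_k(X_i)}$ (\Cref{lemma:loss-decomp-Phi}). It then telescopes a Freund-style potential whose increments are identified through $\lambda_j$ and bounded using the nesting of the $\calK_j$ (\Cref{lemma:recursion}), with $\Phi_0 = L\beta^{0,k}_0$. Your differences are cosmetic: you index the potential by $\calK_j$ rather than $\calK_{j-1}$, and you use a pointwise increment identity in place of the expansion over rank transitions.

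One wording slip: in Step 4, replacing $\bbI(\calK_{j+1})$ by the larger $\bbI(\calK_j)$ can only \emph{increase} the next term, not decrease it. That increase is precisely why your computed increment upper-bounds the true one.
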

\begin{proof}
The proof of this result is presented in \Cref{subsec:depth1-breakdown-proof}.
\end{proof}

\noindent Note that \Cref{lemma:depth1-breakdown} allows the events $\calK_j$ to be defined on a subset of $\bX^L \times [0,1]^{kL}$ and the containment $\calK_j \subseteq \calK_{j-1}$ is defined via \cref{eq:containment}. Overall this result decomposes the test error of $\fhat$ into $3$ terms; the remainder of the proof of \Cref{theorem:depth1-final} is to show that, with high probability, each of these terms is upper bounded by a constant.

\medskip
\noindent The proof of \Cref{lemma:depth1-breakdown} introduces a potential function $\Phi_j$ which tracks the performance of the plurality of the models $\fhat^0,\cdots,\fhat^{j}$ trained until step $j+1$. The term \smash{$L \cdot \beta^{0,k}_0$} bounds the initial value of this potential, $\Phi_0$. The quantity $\beta^{0,k}_0$ decays exponentially with $k$, and so, as long as $k = \Omega (\log(L))$, the first term is bounded by a sufficiently small absolute constant, say $\frac{1}{16}$, which is within the target error guarantee of \Cref{theorem:depth1-final}. This bound on $k$ should be viewed as a requirement on the number of weak models to be trained for the plurality of the models to achieve error $\calO (1/L)$ under $\rho$.

\smallskip
\noindent The remaining terms in \Cref{lemma:depth1-breakdown} are easiest to interpret when $\calK_{k-1} = \cdots = \calK_0 = \bX^L \times [0,1]^{kL}$, where they reduce to $\sum_{j=0}^{k-1} (\err_j - \err_\star) \cdot \bbE [ R_j \mid \calH_{k-1} ]$. This expression brings out the role of the regression $R_j$ and a clean tradeoff to establish: in iterations where generating training examples from $\lambda_j$ is easy, $\err_j$ is likely to be smaller than the threshold $\err_\star = \frac{1}{4}$, resulting in $(\err_j - \err_\star) \cdot \bbE [ R_j \mid \calH_{k-1} ] \le 0$. In the iterations where sampling from $\lambda_j$ is expensive, we need to argue that $R_j$ is also likely to be small. The interpretation of such a result is that, in the iterations where sampling from $\lambda_j$ is expensive, there is sufficient leeway in the plurality of the existing models that adding an arbitrary \smash{$\fhat^j$} to the ensemble does not hurt the error significantly.

\medskip
\noindent Toward establishing such a guarantee, we first define an ``abort'' event which captures whether the dataset collected in iteration $j$ is sufficiently large. This is a proxy for whether sampling from $\lambda_j$ is easy, and will be a sufficient condition to capture whether $\err_j \le \err_\star$. We first introduce some notation in order to define this event. Let,
\begin{equation} \label{eq:pj}
    p_j = \Pr_{\bx \sim \rho} \big( \subsample (D_\bx \ \| \ \calF_j, k) \ne \perp \big| \ \calH_{j-1} \big), \text{ where, } D_\bx = \split (\bx \| L)
\end{equation}
denote the probability that an instance $\bx \sim \rho$ contributes a shorter instance $O$ to the dataset $\Dout{j}$ through split-and-inverted-sampling (cf. \Cref{alg:semi-subsample,alg:split}). It is a short calculation to derive the formula,
\begin{align} \label{eq:pj-formula}
    p_j = \Pr ( |\calI_j| \ge 1 \mid \calH_{j-1}) = \bbE \left[ 1 - \prod_{i=0}^{L-1} \bigg( 1 - \frac{\alpha^{j,k}_{\rank_j (X_i)}}{\alpha^{j,k}_{\max}} \bigg) \ \middle| \ \calH_{j-1} \right]
\end{align}
where $\calI_j = \Big\{ 0 \le i \le L-1 : \eta_{j,i} \le \frac{w_j (X_i)}{\| w_j \|_\infty} \Big\}$ denotes the random set of accepted indices in a run of $\subsample$ (\Cref{alg:semi-subsample}) when applied to $D_{\bx}$ for $\bx \sim \rho$ in the $j^{\text{th}}$ iteration; \smash{$\underline{X} = (X_i)_{i=0}^{L-1}$} is obtained by decomposing the length-$T$ instance $\bx$ into $L$ length-$\tau$ instances.\footnote{i.e., define $\underline{X} = (X_j)_{j=0}^{L-1}$ with $X_j = (s_{j\tau}, \bw_{j\tau+1:(j+1)\tau})$, is constructed from $\bx=(s_0,\bw_{1:T})$.} For completeness, we prove \cref{eq:pj-formula} later via the chain of equations in \cref{eq:010291900} to \cref{eq:010291901}. We also define a slight modification of $p_j$, which captures the same acceptance probability for split-and-inverted-sampling, but conditioned on the event $\calK_{j-1}$,
\begin{equation} \label{eq:pjtilde}
    \ptilde_j = \Pr \left( |\calI_j| \ge 1 \ \middle| \ \calK_{j-1}, \calH_{j-1} \right), \text{ and define } p_j^\star = \frac{1}{1024k^{5/2}}.
\end{equation}
While $p_j$ above captures the probability that a sample $\bx \sim \rho$ is accepted by split-and-inverted-sampling, we define the abort event in terms of $\ptilde_j$. In particular, this implies that there may be iterations $j$ where the dataset \smash{$\Dout{j}$} is large (i.e., $p_j$ is large), but the iteration still aborts because the effective dataset collected (as determined by $\ptilde_j$) is too small. Define,
\begin{equation} \label{eq:abort}
    \abort[j] = \big\{ \ptilde_j \le p_j^\star \big\}
\end{equation}

\subsubsection{Analyzing terms in the test-error decomposition of \Cref{lemma:depth1-breakdown}}
Our main argument to analyze the main summation $\sum_{j=0}^{k-1} (\err_j - \err_\star) \cdot \bbE [R_j \cdot \bbI ( (\underline{X},\bm{\eta}) \in \calK_j) \mid \calH_{k-1} ]$ in \Cref{lemma:depth1-breakdown} is shown next. We will split our analysis of the main summation into two parts, one across iterations that abort, and the other across iterations that don't abort. The high-level idea is to show that in any iteration $j$ which does not abort, $\err_j \le \err_\star$ with high probability, which means that the corresponding term in the summation is non-positive; in any iteration $j$ which aborts, the regression $R_j$ is small with moderate probability, and using the event $\calK$ to avoid the potentially long tails of this random variable. Throughout the sketch, we will emphasize the conditions required to define the event $\calK$, before ultimately defining it.
 
\paragraph{Aborted iterations.}  In any iteration $j$ which aborts, we show in \Cref{lemma:truthful-abort-semi-1} that this implies that $\ptilde_j$ must be small with high probability. This is further used to imply, in \Cref{lemma:abort=>lowreg}, that the regression $R_j$ is typically small: it will only ever exceed some threshold $\rjmax$ with moderately low probability $q_j \lesssim \frac{1}{k}$. Suppose the event $\calK_j$ satisfies the following condition in any iteration $j$,
\begin{equation*} \label{eq:C1} \tag{C1}
    (\underline{X},\bm{\eta}) \in \calK_j \implies \big\{ \abort[j] \text{ and } R_{j} > \rjmax \big\}^c.
\end{equation*}
Then, in any iteration $j$ which aborts, we can bound the corresponding term in the summation in \Cref{lemma:depth1-breakdown} by, 
\begin{align*}
    (\err_j - \err_\star) \cdot \bbE [ R_j \cdot \bbI ( (\underline{X},\bm{\eta}) \in \calK_j) \mid \calH_{k-1} ] &\le \bbE [ R_j \cdot \bbI ( R_{j} \le \rjmax ) \mid \calH_{k-1} ] \\
    &\le \bbE [ R_j \cdot \bbI (R_j \le \rjmax) \mid \calH_{k-1} ] \\
    &\le \rjmax.
\end{align*}
The $\rjmax$'s are shown to satisfy the upper bound, $\sum_{j=0}^{k-1} \rjmax \le \frac{1}{8}$ in \Cref{lemma:CoT-alphamax-sum}. This means that the contribution to the summation across all aborted iterations is bounded by a constant. The role of the failure probability on the regression upper bound, $q_j$, will appear later in showing that the (yet to be defined) sequence of events $\calK_0,\cdots,\calK_{k-1}$ satisfies with high probability,
\begin{equation*} \label{eq:C2}
    \Pr( (\underline{X},\bm{\eta}) \not\in \calK_{j-1} \mid \calH_{j-1} ) \le \frac{1}{16} \tag{C2}
\end{equation*}
With $j=k$, this is the final term in the error decomposition in \Cref{lemma:depth1-breakdown} and is established in \Cref{lemma:notUk}. The same lower bound on the probability of each $\calK_j$ will also be useful in showing guarantees for the weak models, $\fhat^j$, trained in non-aborted iterations. This is discussed as a part of the next section.

\paragraph{Non-aborted iterations.} If iteration $j$ does not abort, we show in \Cref{lemma:weak-learner} that with high probability, $\err_j \le \err_\star$. By extension, this implies that the corresponding term in the summation in \Cref{lemma:depth1-breakdown}  $(\err_j - \err_\star) \cdot \bbE [ R_j \cdot \bbI ( (\underline{X},\bm{\eta}) \in \calK_j ) \mid \calH_{k-1} ] \le 0$. Recall \smash{$\err_j = \Pr_{\Xbar \sim \lambda_j} \big[ \fhat^j_\tau (\Xbar) \ne \fstar_\tau (\Xbar) \big]$} itself is the test error of \smash{$\fhat^j$} under the distribution of instances \smash{$\lambda_j \propto \overline{\sigma_{\calK_j}} (\cdot) w_j(\cdot)$}, a distribution which depends on the event $\calK_j$. Letting \smash{$\lambdahat_j$} denote the distribution over instances which examples in the i.i.d. dataset \smash{$\Dout{j}$} follow, \Cref{lemma:weak-learner} is proved by showing that the following condition,
\begin{equation} \label{eq:C3}
    \left\| \frac{\lambda_j}{\lambdahat_j} \right\|_\infty \le \calO \left( \frac{\log(L)}{\ptilde_j} \right) \tag{C3}
\end{equation}
is satisfied for all iterations $j$ which do not abort, as long as $\calK_j$ is defined appropriately in terms of $\calK_{j-1}$ and $\calU_j$ (\Cref{lemma:density-ratio-1}). From the definition of an aborted iteration, we know that \smash{$\ptilde_j$} is at least \smash{$p_j^\star$}, which implies a density ratio upper bound scaling as \smash{$k^{\calO (1)} \log(L)$}. Note finally, that examples in the distribution \smash{$\Dout{j}$} come from \smash{$\lambdahat_j$}, and the acceptance probability of split-and-inverted-sampling to accept $\bx \sim \rho$ to generate a sample from this distribution is $p_j$; noticing that,
\begin{equation*}
    p_j = \Pr \big( |\calI_j| \ge 1 \mid \calH_{j-1} \big) \ge \Pr \big( |\calI_j| \ge 1 \mid \calK_{j-1}, \calH_{j-1} \big) \cdot \Pr \big( \calK_{j-1} \mid \calH_{j-1} \big) \gtrsim \ptilde_j
\end{equation*}
where the last inequality uses \eqref{eq:C2}.

\medskip
\noindent Finally, $\WLdepth{}.\warmup$ uses next-token prediction to learn the model $\fhat^j$ on this dataset. Using prior guarantees for next-token prediction~\citep{joshi2025theory} and a change-of-measure argument using \eqref{eq:C2}, we have learning guarantees for $\fhat^j$ under $\lambda_j$.

\medskip
\noindent The existence of an event $\calK_j$ establishing \eqref{eq:C1}, \eqref{eq:C2} and \eqref{eq:C3} is established in \Cref{lemma:density-ratio-1,lemma:notUk}. The above analysis across aborted and non-aborted iterations combines to simplify the summation in \Cref{lemma:depth1-breakdown} to the following inequality: with high probability,
\begin{align} 
    \Pr_{\bx \sim \rho} \big[ \fhat (\bx) \ne \fstar_T (\bx) \big] &\le L \cdot \beta^{0,k}_0 + \sum_{j=0}^{k-1} \rjmax  + \Pr ( (\underline{X},\bm{\eta}) \not\in \calK_{k-1} \mid \calH_{k-1} ) \nonumber \\
    &\le \frac{1}{16} + \frac{1}{8} + \frac{1}{16} = \frac{1}{4} \label{eq:testerrorub}
\end{align}
The final inequality follows from the bound on $\sum_{j=0}^{k-1} \rjmax \le \frac{1}{8}$, and choosing $k$ to be sufficiently large (as $\Theta (\log(L))$), which gives \smash{$L \cdot \beta_0^{0,k} \le \frac{1}{16}$}. This uses a bound which shows that \smash{$\beta^{0,k}_0$} decays exponentially fast in $k$ (\Cref{lemma:beta00}).

\subsection{Proving \Cref{theorem:main-generic}: relevant lemmata}
\noindent Having established the broader program, we introduce the lemmas mentioned above. We will focus on the non-aborted iterations first, and then discuss the aborted iterations next.

\subsubsection{Non-aborted iterations $\big(\, \ptilde_j \ge p^\star_j \big)$.}

In iterations $j$ which do not abort, we show that $\fhat^j$ achieves low test error under $\lambda_j$ with high probability, establishing a weak learning guarantee. Since the proof relies heavily on the $\calK_j$ events, we first define these events before proving that they satisfy the structural conditions laid out in \eqref{eq:C1} to \eqref{eq:C3}.

\paragraph{Definition of \smash{$(\calK_j)_{j=0}^{k-1}$}.} For each $0 \le j \le k-1$, the event $\calK_j \subseteq \bX^L \times [0,1]^{(j+1)L}$, we construct, is a function of $\underline{X} \sim \sigma$ and $\bm{\eta}_{0:j}$; recall \smash{$\bm{\eta} = (\bm{\eta}_{j'} : 0 \le j' \le k-1)$}, where \smash{$\bm{\eta}_{j'} = (\eta_{j',i})_{i=0}^{L-1} \sim \Unif([0,1])^{\otimes L}$} is a sequence of i.i.d. random variables. The $\calK_j$ events will be constructed recursively: for all $0 \le j \le k-1$,
\begin{equation} \label{eq:Kj-def}
    \calK_j \gets \construct ( \calU_j \cap \calK_{j-1} \, \| \, c,\sigma, \calH_{j-1} ) \text{ where } c = \frac{1}{32k},
\end{equation}
with $\calK_{-1} = \bX^L$. The definition of $\construct$ is deferred to later (\Cref{lemma:density-ratio-1}), and we define the \smash{$(\calU_j)_{j=0}^{k-1}$} events next. Each $\calU_j \subseteq \bX^L$ is a \textit{random} event which is a measurable function of $\calH_{j-1}$, and is defined as,
\begin{equation} \label{eq:Uj-def}
    \calU_j = \big\{ \ptilde_j \le p^\star_j \text{ and } R_j > \rjmax \big\}^c.
\end{equation}
Here, the regression, $R_j$ was defined earlier in \cref{eq:Rj} and the thresholds $\rjmax>0$ are defined later in \cref{eq:rj}, and the conditional acceptance probability $\ptilde_j$ is defined in \cref{eq:pjtilde}.

\medskip
\noindent In order to complete the definition of the $\calK_j$'s, we still need to define the $\construct$ algorithm which defines $\calK_j$ in terms of $\calU_j$ and $\calK_{j-1}$. In the subsequent lemma, we show the existence of such an algorithm which enables the resulting $\calK_j$'s to satisfy the structural constraints in \eqref{eq:C1} to \eqref{eq:C3}.

\begin{lemma}[$\calK_j$'s satisfy \eqref{eq:C1} and \eqref{eq:C3}] \label{lemma:density-ratio-1}
Fix some $0 \le j \le k-1$, and recall that \smash{$\lambdahat_j$} denotes the distribution over instances the i.i.d. dataset \smash{$\Dout{j}$} is sampled from. Consider any filtration \smash{$(\calH_j)_{j=0}^{k-1}$} and sequence of events, \smash{$(\calU_j)_{j=0}^{k-1}$} such that $\calU_j \subseteq \bX^L$ and is a measurable function of $\calH_{j-1}$. Fix some constant $c \in (0,1)$. There exists an algorithm $\construct ( \,\cdot\, \| \, c, \sigma, \calH_{k-1})$ which iteratively constructs a sequence of events $\calK_j \in \bX^L \times [0,1]^{(j+1)L}$ as:
\begin{equation*}
    \calK_j \gets \construct ( \calU_j \cap \calK_{j-1} \, \| \, c, \sigma, \calH_{j-1} )
\end{equation*}
where $\calK_{-1} = \bX^L$. The $\calK_j$ events satisfy the following conditions:
\begin{equation} \label{eq:12311111}
    \calK_j \subseteq \calU_j \cap \calK_{j-1} \quad \text{and} \quad \Pr ( (\underline{X},\bm{\eta}) \not\in \calK_j \mid (\underline{X},\bm{\eta}) \in \calU_j \cap \calK_{j-1} , \calH_{j-1} ) \le c
\end{equation}
Finally, the distribution $\lambda_j (\cdot) \propto \overline{\sigma_{\calK_j}} (\cdot) w_j(\cdot)$ over $\bX$ introduced in \cref{eq:lambdaj}, which depends on $\calK_j$ satisfies the following condition,
\begin{equation} \label{eq:11231}
    \left\| \frac{\lambda_j}{\lambdahat_j} \right\|_\infty \lesssim \frac{ c^{-1} \log (L)}{\Pr \left( \calU_j \cap \calK_{j-1}  \, \middle| \, |\calI_j| \ge 1 , \calH_{j-1} \right)}
\end{equation}
Here, $\calI_j$, defined in \cref{eq:pj-formula}, is the set of accepted indices in a run of split-and-inverted-sampling (\Cref{alg:split,alg:semi-subsample}) for $\bx \sim \rho$ in the $j^{\text{th}}$ iteration.
\end{lemma}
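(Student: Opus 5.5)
The plan is to build $\calK_j$ from $\calU_j \cap \calK_{j-1}$ (call it $\calB_j$) by discarding a few ``buckets'' indexed by the size $|\calI_j|$ of the accepted set in inverted sampling. This makes $\calK_j$ a function of $(\underline{X},\bm{\eta}_{0:j})$, as required. Throughout, we condition on $\calH_{j-1}$, so $w_j$, $\calU_j$ and $\calK_{j-1}$ are fixed, and we write $\|w_j\|_\infty$ for the normalizer used in $\subsample$.

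\textbf{Step 1: exact formulas.} Since $\bm{\eta}_j$ is independent of $(\underline{X},\bm{\eta}_{0:j-1})$, and therefore of $\calB_j$, we have for any event $\calA$
\begin{equation*}
\frac{w_j(x)}{\|w_j\|_\infty}\,\Pr(X_i = x, \calA) = \bbE\big[\bbI(X_i=x)\,\bbI(i\in \calI_j)\,\bbI_{\calA}\big],
\end{equation*}
provided $\calA$ does not depend on $\bm{\eta}_j$. The construction will discard only sets defined through $|\calI_j|$, which does depend on $\bm{\eta}_j$. To handle this, I will \emph{define} the unnormalized target by the right-hand form $N(x) = \bbE[\sum_{i\in\calI_j}\bbI(X_i=x)\,\bbI_{\calK_j}]$ and take $\lambda_j \propto N$. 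I expect verifying that this coincides with $\overline{\sigma_{\calK_j}}\,w_j$ as defined in \cref{eq:lambdaj} (where the probability is joint over $(\underline{X},\bm{\eta}_{0:j})$) to be a short bookkeeping step.

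On the sampler side, $\lambdahat_j(x) = \bbE\big[|\calI_j|^{-1}\sum_{i\in\calI_j}\bbI(X_i=x)\,\bbI(|\calI_j|\ge 1)\big]/p_j$.

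\textbf{Step 2: dyadic bucketing and the construction.} For $b = 0,\dots,\lceil\log_2 L\rceil$, let $\calE_b = \{2^b \le |\calI_j| < 2^{b+1}\}$. The set $\{|\calI_j|=0\}$ contributes nothing to $N$, so it is kept. Call bucket $b$ \emph{heavy} if
\begin{equation*}
\Pr(\calE_b\cap\calB_j) \ge \frac{c}{2\log_2 L}\,\Pr(|\calI_j|\ge1,\calB_j).
\end{equation*}
$\construct$ returns $\calK_j = \calB_j \setminus \bigcup_{b\text{ light}}\calE_b$. Then $\calK_j\subseteq \calB_j$ holds trivially. The discarded mass is at most $(\log_2 L+1)\cdot \tfrac{c}{2\log_2 L}\Pr(\calB_j)\le c\Pr(\calB_j)$, which gives \cref{eq:12311111}.

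\textbf{Step 3: density ratio.} Write $N_b$ and $D_b$ for the bucket-$b$ pieces of $N$ and of $p_j\lambdahat_j$. On $\calE_b$ we have $|\calI_j| < 2^{b+1}$, so pointwise $N_b \le 2^{b+1}D_b$. Let $b^*$ be the largest heavy bucket. Then
\begin{equation*}
N(x) \le 2^{b^*+1}\sum_b D_b(x) \le 2^{b^*+1}p_j\lambdahat_j(x).
\end{equation*}
For the normalizer $Z = \sum_x N(x) = \bbE[|\calI_j|\bbI_{\calK_j}]$, keeping only bucket $b^*$ gives
\begin{equation*}
Z \ge 2^{b^*}\Pr(\calE_{b^*}\cap\calB_j) \ge 2^{b^*}\cdot\frac{c}{2\log_2 L}\,\Pr(|\calI_j|\ge1,\calB_j).
\end{equation*}
Hence
\begin{equation*}
\frac{\lambda_j}{\lambdahat_j} \le \frac{2^{b^*+1}\,p_j}{Z} \lesssim \frac{c^{-1}\log L\cdot \Pr(|\calI_j|\ge1)}{\Pr(|\calI_j|\ge1,\calB_j)} = \frac{c^{-1}\log L}{\Pr(\calB_j \mid |\calI_j|\ge1,\calH_{j-1})},
\end{equation*}
which is \cref{eq:11231}.

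The main obstacle is Step 1. The issue is the ``conditioning includes $\bm{\eta}_j$'' subtlety: removing buckets defined through $|\calI_j|$ breaks the independence between the acceptance coins and the event. I will resolve it by the convention above, namely interpreting $\overline{\sigma_{\calK_j}}\,w_j$ through the joint law of $(\underline{X},\bm{\eta}_{0:j})$ restricted to $\calK_j$. Every other step is elementary averaging.
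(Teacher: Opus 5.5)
Your proof follows the paper's route. The paper's $\construct$ (\cref{eq:Lheavy-new,eq:calK-new}) keeps $\{|\calI_j|=0\}$ plus the dyadic buckets of $|\calI_j|$ that carry at least a $c/m$ fraction of the mass conditional on $\{|\calI_j|\ge 1\}\cap\calU_j\cap\calK_{j-1}$, exactly as you do. It then bounds the ratio by comparing, within each bucket, weight $1$ per accepted index against the sampler's weight $1/|\calI_j|\ge 2^{-b-1}$, and lower bounds the normalizer by the top heavy bucket. The paper packages this through the bucket laws $\nu_{j,\ell}$ and \Cref{lemma:nuhat-lb-new,lemma:nu_calK-new}; your direct comparison of unnormalized measures is the same computation. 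Your Steps~2--3 are correct for the law $\lambda_j'\propto N$.

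The step you call bookkeeping is false, however. Replacing $w_j(X_i)/\|w_j\|_\infty$ by $\bbI(i\in\calI_j)$ under $\bbI_{\calK_j}$ is exactly your Step~1 identity. As you noted, that identity needs the event to be independent of $\bm{\eta}_j$, and $\calK_j$ is not.

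A concrete counterexample: let $\underline{X}\equiv(a,b,\dots,b)$ be deterministic, let $\calU_j\cap\calK_{j-1}$ be the whole space, and set $w_j(a)=\|w_j\|_\infty$, $w_j(b)=\epsilon\|w_j\|_\infty$ with $(L-1)\epsilon<c/(2\log_2 L)$.
\begin{itemize}
\item Every bucket except $\{|\calI_j|=1\}$ is light, so $\calK_j=\{\calI_j=\{0\}\}$ and $N\propto\delta_a$.
\item Since $\underline{X}$ is deterministic, $\overline{\sigma_{\calK_j}}=\overline{\sigma}$, so $\overline{\sigma_{\calK_j}}\,w_j$ puts mass $\frac{(L-1)\epsilon}{1+(L-1)\epsilon}>0$ on $b$.
\end{itemize}

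The paper's proof makes the same silent identification: \Cref{lemma:nu_calK} computes $\nu_{\calK}$ as the output law of \Cref{proc:nu} filtered by $\calK$. So you have proved exactly what the paper's argument proves. But the potential recursion needs the literal $\lambda_j\propto\overline{\sigma_{\calK_j}}w_j$; see the change of measure after \cref{eq:1001010101} in the proof of \Cref{lemma:recursion}. The gap therefore has to be closed.

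The numerator survives. Let $G=\{0\}\cup\bigcup_{b\text{ heavy}}[2^b,2^{b+1})$ and $s=|\calI_j\setminus\{i\}|$. Conditioning on all coins except $\eta_{j,i}$ gives
$\frac{w_j(x)}{\|w_j\|_\infty}\Pr(X_i=x,\calK_j)\le 2\Pr\big(X_i=x,\,i\in\calI_j,\,|\calI_j|\le 2^{b^*+1},\,\calB_j\big)$,
which costs only a factor of $2$.

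The normalizer is the real problem. It becomes $\bbE\big[\sum_i \frac{w_j(X_i)}{\|w_j\|_\infty}\bbI_{\calK_j}\big]$ rather than $\bbE[|\calI_j|\,\bbI_{\calK_j}]$. The two differ on configurations where the coin $\eta_{j,i}$ moves $|\calI_j|$ across the boundary of $G$. So your bound $Z\ge 2^{b^*}\Pr(\text{bucket } b^*\cap\calB_j)$ does not transfer. You need a separate lower bound of order $\frac{c}{\log L}2^{b^*}\Pr(|\calI_j|\ge 1,\calB_j)$, for instance via concentration of $|\calI_j|$ given $(\underline{X},\bm{\eta}_{0:j-1})$.

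In the example above the literal ratio is still at most $2$, so \cref{eq:11231} may well hold. It is not established by the argument as written.
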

\begin{proof}
The proof of this lemma is discussed in \Cref{subsec:density-ratio-1-proof}. $\construct(\cdot)$ is defined in \cref{eq:calK-new}.
\end{proof}

\paragraph{Interpretation of \Cref{lemma:density-ratio-1}.} In order to interpret this result, consider the special case where $k=1$, $\calU_0 = \bX^L$ and $c$ is small (we remove the $j$ subscript on $\calK_j$, $w_j$ for succinctness). In this setting, $\construct$ creates an event $\calK \subseteq \bX^L \times [0,1]^L$ such that, $(a)$ $\Pr ( \calK ) \ge 1-c$ is a high probability event, and furthermore, $(b)$ \smash{$\big\| \frac{\nu}{\nuhat} \big\|_\infty \lesssim c^{-1} \log(L)$} where $\nu$ is the distribution $\propto \sigma_{\calK} (\cdot) w (\cdot)$ and $\nuhat$ is the distribution over inputs realized by split-and-inverted-sampling for $\bx \sim \rho$. We will focus on this special case and discuss it further in \Cref{subsec:density-ratio-1-proof}, showing that this guarantee is an exponential improvement over the best guarantee achievable if the event $\calK$ was forced to equal $\bX^L \times [0,1]^L$.

\medskip
\noindent While \Cref{lemma:density-ratio-1} gives us a bound on the density ratio between $\lambda_j$ and $\lambdahat_j$, the term in the denominator depends on the quantity $\Pr \big( \calU_j \cap \calK_{j-1} \mid |\calI_j| \ge 1 ,\calH_{j-1} \big)$ which is a-priori different from the bound we hinted at earlier in \eqref{eq:C3}, involving $\ptilde_j$. Our main argument will be to show that this term cannot be too small in an iteration which does not abort. Observe that when $\ptilde_j > p^\star_j$, the event $\calU_j$ is satisfied, and therefore,
\begin{align}
    \Pr ( \calU_j \cap \calK_{j-1} \mid |\calI_j| \ge 1, \calH_{j-1}) &= \Pr ( \calK_{j-1} \mid |\calI_j| \ge 1, \calH_{j-1})  \label{eq:drb-den-bd-1}\\
    &\ge \Pr ( |\calI_j| \ge 1 \text{ and } \calK_{j-1} \mid \calH_{j-1}) \nonumber\\
    &\ge \Pr ( |\calI_j| \ge 1 \mid \calK_{j-1} , \calH_{j-1}) \cdot \Pr (\calK_{j-1} \mid \calH_{j-1}) \ge \frac{15}{16} \cdot \ptilde_j \label{eq:drb-den-bd-2}
\end{align}
where we use \eqref{eq:C2} (which we establish next in \Cref{lemma:notUk}) to lower bound $\Pr (\calK_{j-1} \mid \calH_{j-1})$. By definition of the iteration $j$ not aborting, we know that \smash{$\ptilde_j \ge p^\star_j$} is lower bounded. And furthermore, by the definition of \smash{$p_j^\star$} as well as the manner in which we designed the sequence of events $\calK_j$ in \cref{eq:Kj-def} choosing \smash{$c=\frac{1}{32k}$}, we have the density ratio bound,
\begin{equation} \label{eq:lambdalambdahat-dbound}
    \left\| \frac{\lambda_j}{\lambdahat_j} \right\|_\infty \lesssim k^{7/2} \log(L)
\end{equation}
Next we establish that the $\calK_j$ events satisfy the condition \eqref{eq:C2}, showing that it covers a moderate amount of mass.

\begin{lemma}[$\calK_j$'s satisfy \eqref{eq:C2}] \label{lemma:notUk}
For all $0 \le j \le k-1$, $\Pr ( \calK_j \mid \calH_j ) = \Pr ( \calK_j \mid \calH_{j-1} ) \ge \frac{15}{16}$.
\end{lemma}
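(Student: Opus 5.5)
The plan is to argue by induction on $j$, using the recursive structure $\calK_j \subseteq \calU_j \cap \calK_{j-1}$ and the conditional-mass guarantee \cref{eq:12311111} from \Cref{lemma:density-ratio-1}. The equality $\Pr(\calK_j \mid \calH_j) = \Pr(\calK_j \mid \calH_{j-1})$ should come first and is essentially bookkeeping. The event $\calK_j$ is a function of $(\underline{X},\bm{\eta}_{0:j})$ together with quantities measurable with respect to $\calH_{j-1}$, since $\calU_j$ and $\construct$ only use the models $\fhat^0,\ldots,\fhat^{j-1}$. Here $(\underline{X},\bm{\eta})$ is a fresh test draw, independent of the training randomness. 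Hence the extra randomness revealed in $\calH_j$, namely the dataset $\Dout{j}$ and the model $\fhat^j$, is independent of the fresh draw once we condition on $\calH_{j-1}$, and conditioning on it does not change the probability.

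For the lower bound, I would chain the conditional guarantees. Write $\Pr(\calK_j^c) \le \Pr((\calU_j\cap\calK_{j-1})^c) + \Pr(\calK_j^c \mid \calU_j\cap\calK_{j-1})$. By \cref{eq:12311111} the last term is at most $c = \frac{1}{32k}$. Unrolling the recursion then gives
\begin{equation*}
\Pr(\calK_j^c \mid \calH_{j-1}) \le (j+1)c + \sum_{j'=0}^{j} \Pr\big(\calU_{j'}^c \mid \calH_{j'-1}\big) \le \frac{1}{32} + \sum_{j'=0}^{j} \Pr\big(\calU_{j'}^c \mid \calH_{j'-1}\big).
\end{equation*}
Here I again use independence of the fresh draw to replace the conditioning on $\calH_{j-1}$ by conditioning on $\calH_{j'-1}$ for each $j'$. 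It therefore remains to show that $\sum_{j'} \Pr(\calU_{j'}^c) \le \frac{1}{32}$.

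This last step is the main obstacle. By \cref{eq:Uj-def}, $\calU_{j'}^c = \{\ptilde_{j'} \le p^\star_{j'}\} \cap \{R_{j'} > \mathfrak{r}_{j'}\}$. The first condition is $\calH_{j'-1}$-measurable, so on the event $\{\ptilde_{j'} > p^\star_{j'}\}$ the probability vanishes. Otherwise I need a tail bound $\Pr(R_{j'} > \mathfrak{r}_{j'} \mid \calH_{j'-1}) \le q_{j'}$ with $q_{j'} \le \frac{1}{32k}$, which is the content the paper attributes to \Cref{lemma:abort=>lowreg}. Summing over $k$ iterations then gives at most $\frac{1}{32}$, for a total of $\frac{1}{16}$, and hence the claimed bound of $\frac{15}{16}$.

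The subtlety is circularity. The small-$\ptilde_{j'}$ bound is stated conditionally on $\calK_{j'-1}$, while the tail bound on $R_{j'}$ must hold under the law given $\calH_{j'-1}$. I would handle this by proving the tail bound on $\calK_{j'-1}$ using the conditional acceptance probability $\ptilde_{j'}$: a large regression makes inverted sampling accept with probability bounded below by roughly $1-e^{-R_{j'}/\|w\|_\infty}$. I would then pay $\Pr(\calK_{j'-1}^c)$ separately. That remainder is controlled by the induction hypothesis at $j'-1$, but it must not be re-added at every step. So I would carry the induction on the cumulative quantity $\Pr(\calK_j^c) \le \sum_{j'\le j}(c+q_{j'})$, rather than on $\frac{1}{16}$ directly.
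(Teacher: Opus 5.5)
Your proposal is correct and follows essentially the paper's argument: induct using $\calK_j \subseteq \calU_j \cap \calK_{j-1}$, lose at most $c = \frac{1}{32k}$ per step via \cref{eq:12311111}, bound $\calU_j^c$ only on $\calK_{j-1}$ (this is \Cref{lemma:UjHj}, obtained from the Markov argument of \Cref{lemma:abort=>lowreg}), and use independence of the fresh draw $(\underline{X},\bm{\eta})$ from the training randomness to pass between $\calH_{j-1}$ and $\calH_{j-2}$. The only difference is bookkeeping: you accumulate losses additively, $\Pr(\calK_j^c \mid \calH_{j-1}) \le \Pr(\calK_{j-1}^c \mid \calH_{j-2}) + c + q_j$, whereas the paper multiplies the factors $(1-c)(1-\frac{1}{32k^2})$; the two are equivalent, and the paper's $q_j \le \frac{1}{32k^2}$ is smaller than the $\frac{1}{32k}$ you need.
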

\begin{proof}
The proof of this result is discussed in \Cref{subsec:notUk-proof}.
\end{proof}

\medskip
\noindent \Cref{lemma:notUk} is proved by an iterative argument which peels off the contribution of the previous iterations $j' \le j$ to $\calK_j$ one step at a time. Note that these events depend on the behavior of the regression $R_j$ across aborted iterations, and so the tools required to prove this result are discussed later in \Cref{sec:aborted}.

\medskip
\noindent While \Cref{lemma:notUk} is used to arrive at the density ratio bound in \cref{eq:lambdalambdahat-dbound}, this result also gives us a handle on $\Pr (\calK_{k-1} \mid \calH_{k-1})$, which appears in the breakdown of the test-loss of $\fhat$ in \Cref{lemma:depth1-breakdown}.

\medskip
\noindent Finally, note that while \cref{eq:lambdalambdahat-dbound} gives us a bound on the density ratio between $\lambda_j$ and $\lambdahat_j$ in iterations which don't abort, we still need to argue that the learner can generate sufficiently many samples from the latter distribution (i.e., a sufficiently large dataset $\Dout{j}$) to learn from in these iterations. This will follow from the fact that a lower bound on $\ptilde_j$ in the non-aborted iteration, \textit{also certifies a lower bound on $p_j$ in these iterations.} Namely,
\begin{equation} \label{eq:pj-lb}
    \ptilde_j = \Pr ( |\calI_j| \ge 1 \mid \calK_{j-1}, \calH_{j-1}) \le \frac{\Pr ( |\calI_j| \ge 1 \text{ and } \calK_{j-1} \mid \calH_{j-1})}{\Pr (\calK_{j-1} \mid \calH_{j-1})} \le \frac{p_j}{(15/16)}
\end{equation}
Where the last inequality is by \Cref{lemma:notUk} and the definition of $p_j$. All in all, via a concentration argument, this implies that in iterations which don't abort, the dataset \smash{$\Dout{j}$} is unlikely to be small, and is at least $|\Dprompt{}| / k^{\calO(1)}$ in size. This is shown formally in the following lemma.

\begin{lemma}[Large $\ptilde_j \implies \nd_j$ instantiated with a large dataset] \label{lemma:truthful-abort-semi-1}
Suppose for a large constant $C_1 > 0$,
\begin{equation} \label{eq:dp-bound}
    |\Dprompt{}| \ge C_1 k^7 \log(L) \cdot ( d \log(|S|) + \log(k/\delta)).
\end{equation}
If $\ptilde_j > p_j^\star$ in iteration $j$, then $\Pr \big( |\Dout{j}| < C_2 k^{7/2} \log(L) \cdot ( d \log(|S|) + \log(k/\delta)) \mid \calH_{j-1} \big) \le \frac{\delta}{2k}$.
\end{lemma}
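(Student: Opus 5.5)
The plan is a multiplicative Chernoff bound, applied to the indicators of which prompts in $\Dprompt{j}$ contribute to $\Dout{j}$. Conditioned on $\calH_{j-1}$, the models $\fhat^0,\cdots,\fhat^{j-1}$ are fixed, so the weights $w_j$ and ranks $\rank_j$ are deterministic functions. The split $\Dprompt{j}$ is disjoint from the data used in earlier iterations and consists of fresh i.i.d.\ draws from $\rho$. The inverted-sampling coins $\bm{\eta}_j$ are fresh as well. Hence each $\bx\in\Dprompt{j}$ independently contributes an element to $\Dout{j}$ with probability exactly $p_j$, as defined in \cref{eq:pj}. This gives
\begin{equation*}
|\Dout{j}| \mid \calH_{j-1} \sim \mathrm{Bin}\big(|\Dprompt{}|/k,\; p_j\big).
\end{equation*}
Here $|\Dprompt{j}| = |\Dprompt{}|/k$ because the dataset is split into $k$ equal parts.

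Next I lower bound $p_j$. By \cref{eq:pj-lb}, which uses \Cref{lemma:notUk}, we have $p_j \ge \tfrac{15}{16}\ptilde_j$. The hypothesis $\ptilde_j > p_j^\star = \frac{1}{1024k^{5/2}}$ then gives $p_j \ge \frac{15}{16\cdot 1024\, k^{5/2}}$. Writing $A := d\log(|S|) + \log(k/\delta)$, the expected size satisfies
\begin{equation*}
\bbE\big[|\Dout{j}| \mid \calH_{j-1}\big] \ge \frac{|\Dprompt{}|}{k}\cdot \frac{c_0}{k^{5/2}} \ge c_0 C_1\, k^{7/2}\log(L)\, A,
\end{equation*}
where the last step uses \cref{eq:dp-bound}. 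I will choose $C_2 \le c_0 C_1/2$, so the target size is at most half the mean.

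The multiplicative Chernoff bound then gives
\begin{equation*}
\Pr\big(|\Dout{j}| < \tfrac12\mu \mid \calH_{j-1}\big) \le \exp(-\mu/8),
\end{equation*}
with $\mu$ the conditional mean. Since $\mu \gtrsim C_1\log(k/\delta)$, taking $C_1$ large makes this at most $\delta/2k$.

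The step needing the most care is the conditional independence claim. Two things must be verified. First, the truncation in Line~\ref{alg:semiauto_main:dbound} and the equal split must not depend on labels. Second, $\rank_j$ must use only $\fstar$-labels obtained through $\split$, which is deterministic given $\bx$. I also need to be sure that invoking \Cref{lemma:notUk} for $\Pr(\calK_{j-1}\mid\calH_{j-1})\ge 15/16$ is legitimate. This is not circular, since that lemma is established independently via the aborted-iteration analysis. I will treat it as given.
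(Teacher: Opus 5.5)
Your proposal is correct and takes essentially the same route as the paper. Conditioned on $\calH_{j-1}$, you write $|\Dout{j}|$ as a sum of $|\Dprompt{}|/k$ i.i.d.\ Bernoulli$(p_j)$ indicators, lower bound $p_j \ge \tfrac{15}{16}\ptilde_j > \tfrac{15}{16}p_j^\star$ via \cref{eq:pj-lb} and \Cref{lemma:notUk}, and finish with the multiplicative Chernoff bound $\Pr(|\Dout{j}| < n'p_j/2 \mid \calH_{j-1}) \le \exp(-n'p_j/8)$; your exponent bookkeeping $k^{7-1-5/2}=k^{7/2}$ and your remark that invoking \Cref{lemma:notUk} is not circular both check out.
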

\begin{proof}
The proof of this lemma is discussed in \Cref{subsec:truthful-abort-semi-proof}.
\end{proof}

\medskip
\noindent Finally, we invoke the density ratio bound in \cref{eq:lambdalambdahat-dbound}, along with the sufficiently large size of the dataset \smash{$\Dout{j}$} in any non-aborted iteration $j$ (\Cref{lemma:truthful-abort-semi-1}) to establish weak learning guarantees for \smash{$\fhat^j$}. Recall from $\WLdepth{}.\warmup$, that $\fhat^j$ is trained via next-token prediction on \smash{$\Dout{j}$}, which is a dataset of length-$\tau$ instances drawn from \smash{$\lambdahat_j$}. The prior work of \citep{joshi2025theory} established learning guarantees for next-token prediction, and via a change-of-measure argument to transfer learning guarantees under $\lambdahat_j$ to $\lambda_j$ (facilitated by the density ratio bound in \Cref{lemma:density-ratio-1}), this implies learning guarantees for $\fhat^j$ under the distribution $\lambda_j$. This is shown in the following lemma.

\begin{lemma}[Weak learning guarantee for $\fhat^j$] \label{lemma:weak-learner}
Consider any iteration $j$ where $\ptilde_j > p_j^\star$. With probability at least \smash{$1-\frac{\delta}{k}$}, the model \smash{$\fhat^j$} trained in iteration $j$ satisfies,
\begin{align*}
    \Pr_{\Xbar \sim \lambda_j} \big[ \fhat^j (\Xbar) \ne \fstar_\tau (\Xbar) \big] = \err_j \le \err_\star = \frac{1}{4}
\end{align*}
\end{lemma}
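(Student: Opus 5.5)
Conditioned on $\calH_{j-1}$ and on iteration $j$ not aborting (so $\ptilde_j > p^\star_j$), the weak-learning guarantee is obtained in three steps: make $\Dout{j}$ an i.i.d.\ sample from $\lambdahat_j$ of sufficient size, learn under $\lambdahat_j$ by next-token prediction, and transfer the error bound to $\lambda_j$ by a change of measure.

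\emph{Step 1: the dataset is i.i.d.\ and large.} Each instance in $\Dprompt{j}$ is a fresh draw from $\rho$, independent of $\calH_{j-1}$. $\split$ and $\subsample$ use fresh coins $\bm{\eta}_j$. Hence the accepted outputs $O$ are i.i.d.\ from $\lambdahat_j$, the law of $X_I$ given $|\calI_j|\ge1$. By \Cref{lemma:truthful-abort-semi-1}, with probability at least $1-\delta/2k$,
\[
|\Dout{j}| \ge C_2 k^{7/2}\log(L)\,\big(d\log|S| + \log(k/\delta)\big).
\]
Conditioned on its size, $\Dout{j}$ is still i.i.d.\ from $\lambdahat_j$, because acceptance decisions are independent across the input instances.

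\emph{Step 2: learning under $\lambdahat_j$.} $\TranscriptNTP$ labels every intermediate state of each length-$\tau$ instance and runs ERM on the resulting next-state pairs. I would invoke the full-CoT guarantee of \citet{joshi2025theory}: with $m$ i.i.d.\ length-$\tau$ instances, with probability at least $1-\delta/2k$,
\[
\Pr_{X\sim\lambdahat_j}\big[\fhat^j_\tau(X)\ne\fstar_\tau(X)\big] \lesssim \frac{d\log|S| + \log(k/\delta)}{m}.
\]
This is an error bound on the terminal state, and it carries no factor of $\tau$. The reason is that ERM is consistent with the entire labeled chain on every training instance, and the realizable bound for the class of full $\tau$-step rollouts $\{\pi_{1:\tau}\}$ needs only $\calOtilde(d)$ complexity, because the growth function of that class is controlled by that of $\calF$. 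If the cited bound instead carries a $\log\tau$ factor, I would absorb it into the constants, since $\tau\le T$ and $\log T$ factors are already tolerated.

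\emph{Step 3: change of measure.} By \cref{eq:lambdalambdahat-dbound}, which combines \Cref{lemma:density-ratio-1} with \cref{eq:drb-den-bd-1,eq:drb-den-bd-2}, \Cref{lemma:notUk}, and $\ptilde_j>p^\star_j=1/(1024k^{5/2})$, we have $\|\lambda_j/\lambdahat_j\|_\infty \le C' k^{7/2}\log L$. Therefore
\[
\err_j \le C' k^{7/2}\log(L)\cdot \Pr_{\lambdahat_j}\big[\fhat^j_\tau\ne\fstar_\tau\big] \le C' k^{7/2}\log(L)\cdot\frac{C(d\log|S|+\log(k/\delta))}{m}.
\]
Substituting the dataset size from Step 1 gives $\err_j\le 1/4$ once $C_2$ is large relative to $C'C$. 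A union bound over the two failure events gives overall probability at least $1-\delta/k$.

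\emph{Main obstacle.} The delicate part is Step 1's conditioning. $\lambda_j$ is defined through $\calK_j$, which depends on $\calH_{j-1}$, and $\ptilde_j$ is an $\calH_{j-1}$-measurable quantity. So the non-abort condition must be read as a property of $\calH_{j-1}$, with all randomness in $\Dprompt{j}$ and $\bm{\eta}_j$ treated as fresh. I would state everything conditionally on $\calH_{j-1}$ and verify that the density ratio bound of \Cref{lemma:density-ratio-1} holds pointwise for every realization of $\calH_{j-1}$ with $\ptilde_j>p^\star_j$.
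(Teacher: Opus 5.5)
Your proposal is correct and follows the paper's proof. The paper also combines three ingredients:
\begin{itemize}
\item the dataset-size guarantee of \Cref{lemma:truthful-abort-semi-1};
\item a full-CoT next-token-prediction bound under $\lambdahat_j$, stated as \Cref{theorem:lambdahatj-learn} as a corollary of \citet{joshi2025theory}, with no $\tau$ dependence, so your growth-function argument and $\log\tau$ hedge are unnecessary;
\item a change of measure through the $k^{7/2}\log(L)$ density-ratio bound \cref{eq:lambdalambdahat-dbound}, with a union bound over the two $\delta/2k$ failure events.
\end{itemize}
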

\begin{proof}
The proof of this result is discussed in \Cref{subsec:weak-learner-proof}.
\end{proof}

\noindent With this, we complete the analysis in the non-aborted iterations. Next we move on to the aborted iterations.

\subsubsection{Aborted iterations $\big( \, \ptilde_j \le p^\star_j \big)$} \label{sec:aborted}

In the aborted iterations, $\ptilde_j$ may be too small, so we are no longer guaranteed a small bound on the density ratio between $\lambda_j$ and $\lambdahat_j$ through \Cref{lemma:density-ratio-1}. In such iterations, our argument will be to show that the regression $R_j$, which controls the test-error of $\fhat$ (cf. \Cref{lemma:depth1-breakdown}) is small with moderate probability. In particular, from its definition in \cref{eq:pjtilde}, and the structure of $\calI_j$ as being a sum of independent indicators,
\begin{align}
    \ptilde_j = \Pr \big( |\calI_j| \ge 1 \mid \calH_{j-1}, \calK_{j-1} \big) &= \bbE \left[ 1 - \prod_{i=0}^{L-1} \bigg( 1 - \frac{\alpha^{j,k}_{\rank_j (X_i)}}{\alpha^{j,k}_{\max}} \bigg) \ \middle| \ \calK_{j-1}, \calH_{j-1} \right]
\end{align}
On the other hand, we have \smash{$R_j = \sum_{i=0}^{L-1} \alpha^{j,k}_{\rank_j (X_i)}$}. When $\ptilde_j$ is small, we can show a moderate probability upper bound on $R_j$ via an application of Markov's inequality. This implies that in iterations which abort, the regression is likely to be small. In order to state the bound on the regression, we define the threshold,
\begin{equation} \label{eq:rj}
    \rjmax \triangleq \frac{\alpha^{j,k}_{\max}}{16 \sqrt{k}}
\end{equation}

\begin{lemma}[Aborted iterations have low regression] \label{lemma:abort=>lowreg}
If $\ptilde_j \le p_j^\star$, then, $\Pr \big( R_j > \rjmax \mid \calK_{j-1}, \calH_{j-1} \big) \le \frac{1}{32k^2}$.
\end{lemma}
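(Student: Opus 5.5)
The plan is a Markov-inequality argument applied to the random variable $1 - \prod_i (1 - a_i)$. Its conditional expectation is exactly $\ptilde_j$, and it dominates a constant multiple of $\min(R_j/\alpha^{j,k}_{\max}, 1)$.

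\medskip
\noindent\textbf{Step 1 (normalize and write $\ptilde_j$ as an expectation).} Write $a_i = \alpha^{j,k}_{\rank_j(X_i)}/\alpha^{j,k}_{\max} \in [0,1]$. I will first check that $a_i \in [0,1]$, which needs two facts.
\begin{itemize}
\item Nonnegativity of the weights, $\alpha^{j,k}_r \ge 0$. This should follow from \cref{eq:alpha} by backward induction on $j$: each $\beta^{j,k}_r$ is a convex combination of $\beta^{j+1,k}_r$ and $\beta^{j+1,k}_{r+1}$, and $\beta^{j,k}_r$ is nonincreasing in $r$.
\item The normalization $\alpha^{j,k}_{\max} = \|w_j\|_\infty$.
\end{itemize}
With this notation, $R_j = \alpha^{j,k}_{\max} \sum_i a_i$. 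The expression for $\ptilde_j$ displayed just before the lemma gives $\ptilde_j = \bbE\big[\, 1 - \prod_{i}(1-a_i) \mid \calK_{j-1},\calH_{j-1} \big]$. To justify that formula, I will note the following. The event $\calK_{j-1}$ is measurable with respect to $(\underline{X},\bm{\eta}_{0:j-1})$. The fresh coins $\bm{\eta}_j$ are independent of these variables and of $\calH_{j-1}$. So conditioning on $\calK_{j-1}$ only changes the law of $\underline{X}$. Given $\underline{X}$, the acceptance indicators are independent Bernoulli$(a_i)$ variables, and the product formula follows.

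\medskip
\noindent\textbf{Step 2 (pointwise lower bound).} I will use $1-\prod_i(1-a_i) \ge 1 - e^{-\sum_i a_i} \ge (1-e^{-1})\min\big(\sum_i a_i, 1\big)$. The second inequality holds because $x \mapsto 1-e^{-x}$ is concave on $[0,1]$ and increasing. Set $\theta = \frac{1}{16\sqrt{k}} \le 1$. On the event $\{R_j > \rjmax\} = \{\sum_i a_i > \theta\}$, the left-hand side is therefore at least $(1-e^{-1})\theta$.

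\medskip
\noindent\textbf{Step 3 (Markov).} Applying Markov's inequality conditionally on $\calK_{j-1},\calH_{j-1}$, and using the hypothesis $\ptilde_j \le p_j^\star = \frac{1}{1024 k^{5/2}}$, gives
\begin{equation*}
\Pr\big(R_j > \rjmax \mid \calK_{j-1},\calH_{j-1}\big) \le \frac{\ptilde_j}{(1-e^{-1})\theta} \le \frac{16\sqrt{k}}{1024\,(1-e^{-1})\,k^{5/2}} = \frac{1}{64(1-e^{-1})k^2} \le \frac{1}{32k^2},
\end{equation*}
where the last inequality uses $64(1-e^{-1}) \approx 40.4 > 32$.

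\medskip
\noindent\textbf{Main obstacle.} The arithmetic is routine. The delicate part is the measure-theoretic bookkeeping in Step 1: $\calK_{j-1}$ depends on earlier coins, and $\calH_{j-1}$ fixes the models in $\calF_j$ (and hence $\rank_j$). I therefore need to confirm carefully that conditioning on $\calK_{j-1}$ leaves $\bm{\eta}_j$ uniform and independent of $\underline{X}$. Only then does the conditional product formula for $\ptilde_j$, and hence the Markov step, apply to the same conditional law of $\underline{X}$ under which $R_j$ is measured.
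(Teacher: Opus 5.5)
Your proposal is correct and follows essentially the same route as the paper. The paper also lower bounds $1-\prod_i(1-a_i)$ by $1-\exp(-R_j/\alpha^{j,k}_{\max})$, applies Markov's inequality conditionally on $\calK_{j-1},\calH_{j-1}$ at threshold $\theta=\frac{1}{16\sqrt{k}}$, and invokes $\ptilde_j\le p_j^\star$; the only differences are that it uses $1-e^{-\theta}\ge\theta/2$ (landing exactly on $\frac{1}{32k^2}$) where your concavity bound $(1-e^{-1})\theta$ gives a little more slack, and that your Step 1 spells out the measurability and independence justification the paper takes for granted.
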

\begin{proof}
The proof of this result is given in \Cref{lemma:abort=>lowreg-proof}.
\end{proof}

\noindent While the above result shows that $R_j$ is small with moderate probability, the bound on the error of $\fhat$ in \Cref{lemma:depth1-breakdown} relies on the \textit{expected} regression, which is a-priori not guaranteed to be small. This is handled by the fact that we are guaranteed $(\underline{X},\bm{\eta}) \in \calK_j \implies \{ \ptilde_j \le p^\star_j \text{ and } R_j > \rjmax \}^c$. This means that the regression in \Cref{lemma:depth1-breakdown} in aborted iterations is gated by the $\rjmax$ threshold, which is small, and we avoid the tails of $R_j$. We introduce a short lemma proving that the cumulative regression threshold, $\sum_{j=0}^{k-1} \rjmax$ is indeed small, necessary to complete this argument.

\begin{lemma}[Bound on cumulative regression] \label{lemma:CoT-alphamax-sum}
$\sum_{j=0}^{k-1} \rjmax \le \frac{1}{8}$.
\end{lemma}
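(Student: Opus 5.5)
The plan is to solve the backward recursion in \cref{eq:alpha} in closed form, recognize each $\alpha^{j,k}_r$ as a binomial point mass, and then sum anti-concentration bounds over $j$. By \cref{eq:rj}, the claim $\sum_{j=0}^{k-1}\rjmax \le \frac18$ is equivalent to
\[
\sum_{j=0}^{k-1} \alpha^{j,k}_{\max} \le 2\sqrt{k},
\]
so it suffices to show the normalized boosting weights sum to $\calO(\sqrt{k})$ with an explicit constant.

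\textbf{Step 1 (closed form).} I would show by backward induction on $j$ that, with $\err_\star = \frac14$,
\[
\beta^{j,k}_r = \Pr\big(r + Z_{k-j} \le k/2\big), \qquad Z_m \sim \mathrm{Bin}(m, \tfrac34).
\]
The base case $j=k$ is the indicator $\bbI(r \le k/2)$. The inductive step is exactly the recursion $\beta^{j}_r = \frac14\beta^{j+1}_r + \frac34\beta^{j+1}_{r+1}$: conditioning on whether the next model is correct (probability $\frac34$), which raises the rank by one. Taking differences then gives, with $m = k-j-1$,
\[
\alpha^{j,k}_r = \beta^{j+1,k}_r - \beta^{j+1,k}_{r+1} = \Pr\big(Z_m = \lfloor k/2\rfloor - r\big).
\]
Hence $\alpha^{j,k}_{\max} = \max_{0\le r\le j}\Pr(Z_m = \lfloor k/2\rfloor - r)$, a binomial point mass evaluated in the window $x \in [\lfloor k/2\rfloor - j, \lfloor k/2\rfloor]$.

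\textbf{Step 2 (per-term bounds).} I would split the sum over $j$ according to whether the mode $\approx 3m/4$ of $Z_m$ lies inside this window.
\begin{itemize}
\item For $j \le k/3 - \Omega(\sqrt{k})$, the mode exceeds $k/2$. The maximizing point is then $x = \lfloor k/2 \rfloor$, which sits $\gtrsim (k-3j)/4$ below the mean. A Chernoff/Hoeffding bound makes the mass at most $\exp(-c(k-3j)^2/k)$, and these terms sum to $\calO(\sqrt{k})$ with a small constant, since this is a Gaussian-tail sum over $u = k-3j$.
\item For the remaining $j$, I would use the standard bound on the maximal binomial point mass, $\max_x \Pr(Z_m = x) \le 1/\sqrt{2\pi m\cdot\frac{3}{16}}\,(1+o(1))$, and trivially $\le 1$ for the few $j$ with $m = \calO(1)$. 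Summing $\approx 0.92/\sqrt{m}$ over $m \le 2k/3$ gives about $1.84\sqrt{2k/3} \approx 1.5\sqrt{k}$.
\end{itemize}

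\textbf{Step 3 (constants).} The main obstacle is making the constant fit under $2\sqrt{k}$, since crude bounds on the point mass, such as $1/\sqrt{2mpq}$, would overshoot. I would first try the split above. It restricts the $1/\sqrt{m}$ regime to $m \lesssim 2k/3$ and handles the rest by the tail estimate. The additive $\calO(1)$ from small $m$ and the tail sum are absorbed because $k = C\log L$ with $C$ a large constant, so $\sqrt{k}$ dominates.

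If the constant still does not close, the fallback is a sharper Stirling-based local estimate of the point masses, which gives the exact leading constant $\approx 1.5$. This leaves room below $2$ once $k$ exceeds an absolute constant.
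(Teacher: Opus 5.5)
Your proposal is correct and is essentially the paper's argument: the paper also identifies $\alpha^{j,k}_{\max}$ with a point mass of a $\mathrm{Bin}(k-j-1,\tfrac34)$ law, bounds it by the maximal mass $\frac{1}{\sqrt{2\pi\cdot 3/16}}\cdot\frac{1}{\sqrt{k-j-1}} \le \frac{0.95}{\sqrt{k-j-1}}$ (with $\alpha^{k-1,k}_{\max}\le 1$), and sums over $j$. Your Chernoff split for $j \lesssim k/3$ is not needed, because summing that bound over all $j$ already gives $0.95(2\sqrt{k-1}-1)+1 \le 2\sqrt{k}$ for every $k$; the split only buys extra slack, which helps if you have just an asymptotic $(1+o(1))$ point-mass bound.
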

\begin{proof}
The proof of this lemma is similar to that of \cite[Lemma 3.9]{freund1995boosting}. Noting that for $\err_\star = \frac{1}{4}$, the mode of the binomial PMF gives for $0 \le j \le k-2$, \smash{$\alpha_{\max}^{j,k} \le \frac{1}{\sqrt{2\pi \cdot \frac{3}{16}}} \cdot \frac{1}{\sqrt{k-j-1}} \le \frac{0.95}{\sqrt{k-j-1}}$}, while for $\alpha_{\max}^{k-1,k} \le 1$. As a consequence,
\begin{equation*}
    \sum_{j=0}^{k-1} \rjmax \le \frac{1}{16\sqrt{k}} \left( \sum_{j=0}^{k-2} \frac{0.95}{\sqrt{k-j-1}} + 1 \right) \le \frac{1}{8}.
\end{equation*}
\end{proof}

\medskip
\noindent Having established all the necessary tools, we are ready to furnish a proof of \Cref{theorem:depth1-final} via bounding the test-error decomposition for $\fhat$ in \Cref{lemma:depth1-breakdown}.

\subsection[Proof of Theorem~\ref{theorem:depth1-final}]{Proof of \Cref{theorem:depth1-final}} \label{subsec:depth1-final-proof}

We prove accuracy, sample complexity and query complexity guarantees for $\WLdepth{}.\warmup$ below.

\paragraph{Accuracy bound.} Let $\calJ_{\abort} \triangleq \big\{ 0 \le j \le k-1 : \ptilde_j \le p^\star_j \big\}$ denote the set of aborted iterations, which is $\calH_{k-2}$ measurable. By the test-error decomposition in \Cref{lemma:depth1-breakdown},
\begin{align*}
    &\Pr_{\bx \sim \rho} \big[ \fhat (\bx) \ne \fstar_T (\bx) \big] \\&\le L \cdot \beta^{0,k}_0 + \sum_{j = 0}^{k-1} (\err_j - \err_\star) \cdot \bbE [ R_j \cdot \bbI ( (\underline{X},\bm{\eta}) \in \calK_j) \mid \calH_{k-1} ] + \Pr ( (\underline{X},\bm{\eta}) \not\in \calK_{k-1} \mid \calH_{k-1} )\\
    &\overset{(a)}{\le} \frac{1}{16} + \sum_{j = 0}^{k-1} (\err_j - \err_\star) \cdot \bbE [ R_j \cdot \bbI ( (\underline{X},\bm{\eta}) \in \calK_j) \mid \calH_{k-1} ] + \frac{1}{16} \\
    &\le \frac{1}{8} + \sum_{j \in \calJ_{\abort} } \bbE [ R_j \cdot \bbI ( (\underline{X},\bm{\eta}) \in \calK_j) \mid \calH_{k-1} ] + \sum_{j \not\in \calJ_{\abort} } (\err_j - \err_\star) \cdot \bbE [ R_j \cdot \bbI ( (\underline{X},\bm{\eta}) \in \calK_j) \mid \calH_{k-1} ] \\
    &\le \frac{1}{8} + \sum_{j \in \calJ_{\abort} } \bbE [ R_j \cdot \bbI \big( \{ \ptilde_j \le p_j^\star \text{ and } R_j > \rjmax \}^c \big) \mid \calH_{k-1} ] + L \sum_{j \not\in \calJ_{\abort} } (\err_j - \err_\star)_+ \\
    &\le \frac{1}{8} + \sum_{j \in \calJ_{\abort} } \rjmax + L \sum_{j \not\in \calJ_{\abort} } (\err_j - \err_\star)_+ \\
    &\overset{(b)}{\le} \frac{1}{4} + L \sum_{j \not\in \calJ_{\abort} } (\err_j - \err_\star)_+,
\end{align*}
where in $(a)$ we use the choice of $k = \Theta (\log(L))$ and the exponential decay of $\beta^{0,k}_0$ as a function of $k$ (cf. \Cref{lemma:beta00}), the upper bound on $\Pr ( (\underline{X},\bm{\eta}) \not\in \calK_{k-1} \mid \calH_{k-1})$ in \Cref{lemma:notUk}, and the definition of $\calK_j$. And in $(b)$ we invoke \Cref{lemma:CoT-alphamax-sum} below, showing $\sum_{j=0}^{k-1} \rjmax \le \frac{1}{8}$. In any iteration $j$ where $\abort[j]$ does not occur, by \Cref{lemma:weak-learner}, with probability $1 - \frac{\delta}{k}$, $\err_j \le \err_\star$. This implies, with probability at least $1-\delta$,
\begin{align*}
    \Pr_{\bx \sim \rho} \big[ \fhat (\bx) \ne \fstar_T (\bx) \big] &\le \frac{1}{4}.
\end{align*}

\paragraph{Bound on sample complexity of $\WLdepth{}.\warmup$.} The bound on the size of the instance dataset required to establish the above guarantee for the depth-$1$ construction is demonstrated in \Cref{lemma:truthful-abort-semi-1} and is,
\begin{equation*}
    |\Dprompt{}| \ge \log^8 (T) \cdot \big[ d \log(|S|) + \log(T/\delta) \big],
\end{equation*}
where we set $L = \sqrt{T}$ and $k = \log(L)$. This gives the desired bound on the size of the dataset of instances in \Cref{theorem:depth1-final}.

\paragraph{Bound on query complexity of $\WLdepth{}.\warmup$.} Observe that on every instance in the dataset $\Dprompt{}$, $\WLdepth{}.\warmup$ queries $\etoe (\cdot)$ exactly $T/\tau = L$ times within the $\split(\, \cdot \,\|\, L)$ subroutine. On the other hand, to train the weak model $\fhat^j$, $\WLdepth{}.\warmup$ makes a call to $\TranscriptNTP \big( \Dout{j} \,\|\, \tau, \delta/2k \big)$ which queries $\etoe (\cdot)$ $\tau$ times per instance in $\Dout{j}$. Summation across the invocations of this subroutine for $j=0,\cdots,k-1$, the overall query complexity is upper bounded by,
\begin{equation*}
    \sum_{j=0}^{k-1} \tau |\Dout{j}| + L |\Dprompt{}| \le (\tau + L) |\Dprompt{}| = 2 \sqrt{T} |\Dprompt{}|.
\end{equation*}
where in the last equation we plug in the choice $L = \tau = \sqrt{T}$. This gives the desired bound on the query complexity in \Cref{theorem:depth1-final}.

\begin{lemma}[Exponential decay of $\beta^{0,k}_0$] \label{lemma:beta00}
There exists an absolute constant $c > 0$ such that, $\beta^{0,k}_0 \le e^{-ck}$.
\end{lemma}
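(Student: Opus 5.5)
The plan is to unroll the recursion in \cref{eq:alpha} into a closed form and then apply a Chernoff bound. Set $\err_\star = \frac14$. The recursion $\beta^{j,k}_r = \err_\star \beta^{j+1,k}_r + (1-\err_\star)\beta^{j+1,k}_{r+1}$ with terminal condition $\beta^{k,k}_r = \bbI(r \le k/2)$ is exactly a backward expectation over $k-j$ independent steps. Each step either keeps the index $r$ fixed (with probability $\err_\star$) or increments it by one (with probability $1-\err_\star$). By induction on $k-j$, this gives
\begin{equation*}
    \beta^{j,k}_r = \Pr\big( r + Z \le k/2 \big), \qquad Z \sim \mathrm{Bin}(k-j,\, 1-\err_\star).
\end{equation*}
The inductive step conditions on the outcome of the first of the $k-j$ Bernoulli trials. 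The base case $j=k$ has $Z \equiv 0$, which matches the terminal condition.

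Specializing to $j=0$, $r=0$ gives $\beta^{0,k}_0 = \Pr(Z \le k/2)$ with $Z \sim \mathrm{Bin}(k, 3/4)$. The mean is $\bbE Z = 3k/4$, so the event $\{Z \le k/2\}$ is a downward deviation of $k/4$. Hoeffding's inequality then gives $\Pr(Z \le \bbE Z - k/4) \le \exp(-2(k/4)^2/k) = e^{-k/8}$. Hence the lemma holds with $c = 1/8$. A multiplicative Chernoff bound would also work, just with a different constant.

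The only step needing care is the closed-form identification. In particular, the orientation of the recursion must be checked: with probability $\err_\star$ the index stays at $r$, and with probability $1-\err_\star$ it moves to $r+1$. This matches the boosting-by-filtering interpretation, where $r$ counts correct weak models and each future model is correct with probability at least $3/4$. The requirement is that the incrementing probability exceeds $1/2$, so that the threshold $k/2$ lies strictly below the mean. With $\err_\star = 1/4 < 1/2$ this holds, and the bound follows.
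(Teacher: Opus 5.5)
Your proof is correct and matches the paper's argument. The paper likewise identifies $\beta^{0,k}_0$ with the probability that at most $k/2$ of $k$ independent coins with heads-probability $3/4$ come up heads, then applies Hoeffding's inequality; your induction on $k-j$ spells out the ``short calculation'' the paper omits and gives the explicit constant $c = 1/8$.
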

\begin{proof}
Consider a set of $k$ biased coins, $Z_1,\cdots,Z_k$, with probability of heads equal to $1-\err_\star = \frac{3}{4}$. It is a short calculation to see that \smash{$\beta^{0,k}_0$} equals the probability that at most $\frac{k}{2}$ of them come up heads. Since the expected number of heads is $\frac{3k}{4}$, by standard concentration arguments (say, Hoeffding's inequality), the statement of the lemma is established.
\end{proof}

\section{Analysis of $\WLdepth{}$ (\Cref{alg:semiauto_highacc}): Proof of \Cref{theorem:main-generic}} \label{sec:mainproofsketch}

This section presents the full analysis of \mainalg; for first-time readers, we recommend first reading the analysis of the simplified depth-1 version of \mainalg in \cref{sec:semiauto-depth1-proofs} before proceeding to this section. We prove the following generalized version of \Cref{theorem:main-generic}, which states the guarantee in terms of the sample complexity of the weak learner for next-state/token data, $\Base(\cdot)$, without assuming it satisfies a ``generic'' rate: $\nweak (\delta') \le \comp (\calF) \cdot \log (1/\delta')$.

\begin{theorem}[\Cref{theorem:main-generic} for general weak learners] \label{theorem:main} For any $\delta' \in (0,1)$, let $\Base(\, \cdot \,\|\, \delta')$ be any algorithm which learns an unknown $\fstar \in \calF$ to constant error from i.i.d. next-state data (\Cref{def:iid-learning-F}). Let $n_{\texttt{weak}} (\delta')$ denote its sample complexity to achieve error $\frac{1}{4}$ with probability at least $1-\delta'$.

\medskip
\noindent Let $\varepsilon,\delta \in (0,1)$ and suppose $H = \sqrt{\log_2(T)} \in \bbN$. Suppose $\WLdepth{} (\,\cdot\, \|\, T,\varepsilon,\delta)$ (\Cref{alg:semiauto_highacc}) is invoked using the weak learner $\Base$ on a dataset of \smash{$\nsample$} i.i.d. instances drawn from $\rho$, and queries the $\etoe (\cdot)$ oracle $\nquery$ times. The sample and query complexity required by the algorithm for the resulting model $\fhat : S \times \Sigma^T \to S$ to achieve $\Pr_{\bx \sim \rho} \big[ \fhat (\bx) \ne \fstar_T (\bx) \big] \le \varepsilon$ with probability at least $1-\delta$ are upper bounded by,
\begin{equation*}
    \nsample,\nquery \le h_\varepsilon (T) \cdot \left[ \frac{\nweak (\delta/h_\varepsilon (T)) \vee \log(h_\varepsilon (T)/\delta)}{\varepsilon} \right],
\end{equation*}
where $h_\varepsilon(T) = 2^{C \sqrt{\log(T)} \cdot \log \log (T)} \cdot \log^2(1/\varepsilon)$ for an absolute constant $C>0$. Furthermore, \smash{$n_{\texttt{base}} \le h_\varepsilon(T)$}, where \smash{$n_{\texttt{base}}$} is the number of invocations of \smash{$\Base$}.
\end{theorem}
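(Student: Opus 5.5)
The plan is to prove \Cref{theorem:main} by induction over the half-integer levels of the recursion in \Cref{alg:semiauto_main}, lifting the depth-$\frac12$ analysis of \Cref{sec:semiauto-depth1-proofs} to a generic \emph{one-step reduction lemma}. The lemma concerns $\WLdepth{h}$ at level $h$ with parameters $(L_h,k_h)$. Suppose it is run on a dataset of $n$ i.i.d.\ length-$\tau_h$ instances from an arbitrary distribution $\sigma_h$. Suppose also that every recursive call $\WLdepth{h-\frac12}$, when fed at least $n_{h-\frac12}$ i.i.d.\ samples from \emph{any} distribution, returns a model with error at most $\frac14$ with probability $1-\delta/2k_h$. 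Then the returned model $\fhat^{\circ L_h}$ has error at most the target $\varepsilon_h$ under $\sigma_h$ with probability $1-\delta$, provided
\[
n \ \ge\ n_h := C\,k_h^{7}\log(L_h)\,\big(n_{h-\frac12}\vee \log(k_h/\delta)\big).
\]
The target is $\varepsilon_h=\frac14$ at integer levels (composition with $L_h=L$, $k_h=C\log L$). At the half-integer levels $h+\frac12\le H$ we have $L=1$ and $k=C\log\log T$, and the target is a constant small enough to absorb the composition union bound at the level above. At the top level $H+\frac12$ we have $k=C\log(1/\varepsilon)\log\log(1/\varepsilon)$ and target $\varepsilon$.

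The proof of the reduction lemma follows \Cref{lemma:depth1-breakdown} verbatim, but with $\sigma$ replaced by the joint law of the $L_h$ sub-instances of $\bx\sim\sigma_h$, and with the weak-learning hypothesis used in place of $\TranscriptNTP$. The events $\calK_j$ are built by $\construct$ (\Cref{lemma:density-ratio-1}). \Cref{lemma:notUk,lemma:abort=>lowreg,lemma:CoT-alphamax-sum} carry over unchanged, since they use only the structure of $w_j$ and $\subsample$. The analogue of \Cref{lemma:weak-learner} then goes through in three steps. First, a non-aborted iteration receives $|\Dout{j}|\ge n_{h-\frac12}$ by \Cref{lemma:truthful-abort-semi-1}. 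Second, the recursive model has error $\le\frac14$ under $\lambdahat_j$. Third, the density ratio bound \cref{eq:lambdalambdahat-dbound} transfers this to error $\lesssim k^{7/2}\log L\cdot\frac14$ under $\lambda_j$.

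This third step is the main obstacle: a constant-error guarantee does not survive a $\mathrm{poly}(k)\log L$ density ratio. I would fix it by demanding a smaller error $\varepsilon_0 \asymp 1/(k_h^{7/2}\log L_h)$ from the child level, as \Cref{alg:semiauto_RL} does with its $\varepsilon_0$. This is exactly why the half-integer boosting levels exist: a half-integer level with $k=C\log\log T$ boosts a $\frac14$-accurate model down to error $1/\mathrm{polylog}(T)$ under its own input distribution. With $L=1$ there is no splitting at such a level, so the warmup potential argument gives error $\beta^{0,k}_0 + \sum_j r_j^{\max} + \Pr(\calK^c)$. Since $\sum_j r_j^{\max}$ and $\Pr(\calK^c)$ are only constants, I would rescale $c$, $p^\star_j$ and $r_j^{\max}$ by the target accuracy. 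The price is $\mathrm{poly}(1/\varepsilon_0)$ extra samples, i.e.\ $\mathrm{polylog}(T)$ per level. At the top level I would replace this step with the sharper boosting-by-filtering rate, $\calO(1/\varepsilon)$ samples, by using $1/\varepsilon$-scaled thresholds.

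For the accounting, unrolling the recursion $n_h \le (\log T)^{\calO(1)} n_{h-\frac12}$ over $2H+1$ levels gives
\[
\nsample \le (\log T)^{\calO(H)}\cdot \mathrm{polylog}(1/\varepsilon)\,\varepsilon^{-1}\,\nweak(\delta_0) = 2^{\calO(\sqrt{\log T}\log\log T)}\cdot(\cdots),
\]
since $H=\sqrt{\log_2 T}$. The number of $\Base$ calls is $\prod_h k_h\le h_\varepsilon(T)$, and setting $\delta_0=\delta/h_\varepsilon(T)$ makes a union bound over all of them cost at most $\delta$. For queries, each instance entering level $h$ costs $L$ calls to $\etoe$ in $\split$, and each level passes on at most one sub-instance per input instance. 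Hence the queries charged to any top-level sample total at most $k$-weighted $LH$ per level, giving $\nquery\le LH\prod_h k_h\,\nsample = 2^{\calOtilde(\sqrt{\log T})}\nsample$ with $L=2^{\sqrt{\log T}}$. Finally, \Cref{theorem:main-generic} follows by substituting $\nweak(\delta')\le\comp(\calF)\log(1/\delta')$.
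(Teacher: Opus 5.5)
Your architecture is the paper's. Integer levels split, run inverted sampling with the events $\calK_j$, and pay a change of measure $\|\lambda_j/\lambdahat_j\|_\infty\lesssim k^{7/2}\log L$. Half-integer levels ($L=1$, $k=\Theta(\log\log T)$) drive the children's error to $1/\mathrm{polylog}(T)$ so that this ratio is absorbed, as in \Cref{lemma:weak-learner-H}. The top level boosts to $\varepsilon$, and unrolling gives $(\log T)^{\calO(H)}$.

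What does not go through as written is your analysis of the intermediate half-integer levels. You propose to rerun the warm-up potential argument there with nontrivial $\calK_j$, rescaling $c$, $p^\star_j$, $\rjmax$ by $\varepsilon_0$. Once $\calK_j$ is nontrivial, the child in round $j$ is trained on $\lambdahat_j\propto\sigma w_j$ (writing $\sigma$ for the node's input law). The potential argument, however, needs $\err_j\le\err_\star$ under $\lambda_j\propto\overline{\sigma_{\calK_j}}\,w_j$. At $L=1$, in a non-aborted round, $\calK_j=\calK_{j-1}$, and the density ratio equals $p_j/(\ptilde_j\Pr(\calK_{j-1}))\le 1+\Pr(\calK_{j-1}^c)/(\ptilde_j\Pr(\calK_{j-1}))$. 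Equality holds when the discarded region carries maximal weight.

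Forcing $\sum_j\rjmax\lesssim\varepsilon_0$ requires $\theta\lesssim\varepsilon_0/\sqrt{k}$ in the Markov step of \Cref{lemma:abort=>lowreg}. Your only control on the discarded mass is then $\Pr(\calK_{j-1}^c)\lesssim\sum_{j'<j}p^\star_{j'}/\theta$. With the uniform rescaling of $p^\star_j$ you describe, the ratio bound is therefore of order $k/\theta\gtrsim k^{3/2}/\varepsilon_0$. The integer-level children only guarantee error $\frac14$ under $\lambdahat_j$, so your claim that non-aborted rounds satisfy $\err_j\le\err_\star$ is not established at these levels.

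At $L=1$ none of this machinery is needed, and the paper does not use it (\Cref{lemma:easy-half}); instead, take $\calK_j$ trivial.
\begin{itemize}
\item Then $\subsample$ is exact rejection sampling, so $\lambda_j=\lambdahat_j$ and the children's $\frac14$ guarantee applies with no change of measure.
\item In an aborted round the regression is the single term $R_j=\alpha^{j,k}_{\rank_j(X)}$, with $\bbE[R_j]=p_j\,\alpha^{j,k}_{\max}$ exactly, so there is no tail to truncate.
\item With $p^\star_j=e^{-ck}$ the decomposition gives error at most $\beta^{0,k}_0+e^{-ck}\sum_j\alpha^{j,k}_{\max}\le 3\sqrt{k}\,e^{-ck}$, at sample cost $2ke^{ck}$ times the child's requirement.
\end{itemize}
This is precisely the boosting-by-filtering argument you already plan to use at the top level. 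Use it at every half-integer level, and the rest of your induction and accounting goes through.

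Two smaller corrections. First, the half-integer target is not ``a constant small enough to absorb the composition union bound.'' Composition is paid for by $k_h=C\log L$ at the integer level itself ($L\beta^{0,k}_0\le\frac{1}{16}$); the half-integer target must be $1/\mathrm{polylog}(T)$ for the density-ratio reason you give later. Second, the factor $\prod_h k_h$ in your query bound is unnecessary. Each top-level instance is routed to a single child at every level, so it costs at most $\sum_h L_h=\calO(HL)$ queries.
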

\begin{proof}
The main proof is discussed in \Cref{sec:mainproof}.
\end{proof}

\noindent \Cref{theorem:main-generic} itself follows from this result by setting $\nweak (\delta') = \comp(\calF) \log(1/\delta')$ and simplifying.

\medskip
\noindent $\WLdepth{}$ is an invocation of \smash{$\WLdepth{H+\frac{1}{2}}$} with an appropriately chosen decomposition and branching schedule. \smash{$\WLdepth{H+\frac{1}{2}}$} itself follows a tree structure where each instance of \smash{$\WLdepth{h} (\, \cdot \, \| \, t, \delta )$} invokes \smash{$\WLdepth{h-\frac{1}{2}} (\, \cdot \, \| \, t, \delta )$} $k_h$ times. This is depicted in \Cref{fig:flowchart}. Prior to this we introduce some notation.

\begin{figure}[t]
\centering
\begin{tikzpicture}[
  >=stealth,
  node distance=1.6cm and 2.0cm,
  box/.style={draw, rounded corners, minimum width=12mm, minimum height=6mm},
  every node/.style={font=\small}
]
% ---------- parameters ----------
\def\k{3}          % number of middle-row boxes (εH)
\def\dx{2.5}       % horizontal spacing of middle row
\def\ymid{-1.75}
\def\ylow{-3.75}
\def\nseg{5}       % number of ε²H segments along the small line
\def\seglen{0.5}   % visual length of each segment
\def\arrowh{0.28}  % height of the small downward arrows above ticks
% --------------------------------

\node[box,draw=white,fill=green!20] (H1/2) at (2.75,4) {$\substack{\left( T,\varepsilon \right)\\ \text{Target dist: } \rho \\ \text{Return: } \maj ( \{ \fhat^{k-1},\cdots,\fhat^0\})}$};
\draw[decorate, color=gray, decoration={brace, mirror, raise=2pt, amplitude=4pt}] ($(H1/2)+(-1.75,0.6)$) -- ($(H1/2)+(-1.75,-0.6)$);
\node[left=0.6cm of H1/2, anchor=east, font=\small] (texttt) {$\substack{\textbf{Boosting by}\\\textbf{majority vote}}$};

\draw[draw=white,fill=orange!20,rounded corners] (4.2,-2.6) rectangle ++(2,5.45);
\node[box,draw=white,fill=green!20] (H0-2) at (5.25,2) {\vphantom{$\substack{\left( T,\varepsilon \right)\\ \text{Target dist: } \rho \\ \text{Return: }\fhat^{\circ L}}$} \hphantom{asefaera}};

% Top node
\draw[draw=white,fill=orange!20,rounded corners] (-4.13,-2.6) rectangle ++(8.15,5.45);
\node[box,draw=white,fill=green!20] (H0) at (0,2) {$\substack{\left( T,\frac{1}{4} \right)\\ \text{Target dist: } v \gets \lambdahat_{\nd} \\ \text{Return: }\pihat^{(\nd)} \gets (\ftilde)^{\circ L}}$};
\node[box, draw=white,fill=blue!20] (H) at (0,0) {$\substack{( \tau_{H-1},\varepsilon_0 )\\ \text{Target dist: } \overline{v} \\ \ftilde \gets \maj ( \{ \fhat^{(\nd_{k-1})}, \cdots,\fhat^{(\nd_0)} \}) }$};
\node[right=0.33cm of H, anchor=west, font=\scriptsize] (H0label) {\big($\varepsilon_0 = \frac{1}{4L}$\big)};
\draw[->] (H1/2) -- (H0);
\draw[->] (H1/2) -- (H0-2);

\node[box,fill=red!20,draw=orange!20] (labelnd) at (3.25,2.35) {$\substack{\nd}$};

\draw[decorate, color=gray, decoration={brace, mirror, raise=2pt, amplitude=4pt}] ($(H0)+(-1.4,0.6)$) -- ($(H0)+(-1.4,-0.6)$);
\node[left=0.2cm of H0, anchor=east, font=\small] (texttt) {$\substack{\textbf{Compose model}}$};

\node[align=center, font=\footnotesize] at ($(H0)!0.5!(H) + (-3,0)$) {$\substack{\text{Decompose into}\\\text{shorter problems}}$};
\draw[{Circle[length=2pt]}-{Circle[length=1pt]}] ($(H0)!0.5!(H)+(0.033,0)$) -- ($(H0)!0.5!(H)+(-1.9,0)$);
\draw[decorate, color=gray, decoration={brace, mirror, raise=2pt, amplitude=4pt}] ($(H)+(-1.9,0.55)$) -- ($(H)+(-1.9,-0.55)$);
\node[left=0.25cm of H, anchor=east, font=\small] (texttt) {$\substack{\textbf{Boosting by}\\[1.5pt]\textbf{majority vote}}$};

\draw[draw=white,fill=gray!20,rounded corners] (-7.15,-7.75) rectangle ++(9.25,4.2);

% \draw[decorate, color=gray, decoration={brace, mirror, raise=2pt, amplitude=4pt}] ($(H)+(-4.5,-2.5)$) -- ($(H)+(-4.5,-4)$);

\draw[->] (H0) -- (H);
% Middle row: k boxes labeled εH
\foreach \i [evaluate=\i as \x using (-\i+0.5*(\k+1))*1.03*\dx] in {1,3} {%
  \node[box, draw=white,fill=green!20] (e\i) at (\x,\ymid) {$\substack{\left(\tau_{H-1},\frac{1}{4}\right)\\\text{Target dist: }
    \ifnum\i=3
      \lambda_{\nd_{k-1}}\\\fhat^{(\nd_{k-1})} % <-- your special label for i=3
    \else
      \lambda_{\nd_0}\\\fhat^{(\nd_0)}     % <-- default label
    \fi}$};
  \draw[->] (H) -- (e\i);
}
\node[align=center] at ($(e1)!0.5!(e3)$) {$\substack{\cdots}$};

% First middle box splits into two ε²H boxes (with wider separation)
\pgfmathsetmacro{\offset}{1*\dx}
\pgfmathsetmacro{\offsettwo}{0.9*\dx}
\node[box,fill=red!20,draw=gray!20] (labelnd) at (0.92,-4.03) {$\substack{\nd' = \; \nd_{k-1}}$};
\node[box, dotted] (eeL) at ($(e3)+(0,-2.75,0)$) {$\substack{\big( \tau_{H-1}, \gamma_0 \big)  \\ \text{Target dist: } \lambdahat_{\nd'}\\[1pt] \fhat^{(\nd')} \gets \maj (\{ \fhat^{(\nd'_{k-1})},\cdots, \fhat^{(\nd'_0)} \})}$};
\node[below right=-0.4cm and 0.1cm of eeL, anchor=west, font=\tiny] (eeLlabel) {$\left(\gamma_0 = \frac{1}{\polylog(L)}\right)$};

\draw[decorate, color=gray, decoration={brace, mirror, raise=2pt, amplitude=4pt}] ($(eeL)+(-2.3,0.69)$) -- ($(eeL)+(-2.3,-0.69)$);
\node[left=0.33cm of eeL, anchor=east, font=\footnotesize] (texttt) {$\substack{\textbf{Boosting by}\\[1pt]\textbf{majority vote}}$};

\draw[->] (e3) -- (eeL);
\node[align=center, font=\footnotesize] at ($(eeL)!0.55!(e3) + (-3,0)$) {$\substack{\text{Change of measure}\\\text{(only in the analysis)}}$};
\draw[{Circle[length=2pt]}-{Circle[length=1pt]}] ($(eeL)!0.55!(e3)+(0.033,0)$) -- ($(eeL)!0.55!(e3)+(-1.7,0)$);

\node[box,draw=white,fill=green!20] (eeL2) at ($(eeL)+(-\offsettwo,-2.25,0)$) {$\substack{\left(\tau_{H-1}, \frac{1}{4} \right)\\ \text{Target dist: } \lambdahat_{\nd'_{k-1}}\\[1pt] \fhat^{(\nd'_{k-1})} }$};
\node[box,draw=white,fill=green!20] (eeR2) at ($(eeL)+(\offsettwo,-2.25,0)$) {$\substack{\left( \tau_{H-1}, \frac{1}{4} \right)\\ \text{Target dist: } \lambdahat_{\nd'_{0}}\\[1pt] \fhat^{(\nd'_{0})}}$};
\draw[->] (eeL) -- (eeL2);
\draw[->] (eeL) -- (eeR2);
\node[align=center] at ($(eeL2)!0.5!(eeR2)$) {$\substack{\cdots}$};

\node (bbL) at ($(e1)+(0,-2.12,0)$) {$\vdots$};
\draw[->] (e1) -- (bbL);

% --- Small horizontal line centered under left bottom box ---
% Baseline y for the small line (lowered slightly)
% Your existing code
\coordinate (baseL) at ($(bbL.south)+(1,-2)$);
% Compute start and end so it's centered under eeL
\coordinate (lineStart) at ($(baseL)+(-0.5*\nseg*\seglen,0.25)$);
\coordinate (lineEnd)   at ($(baseL)+( 0.5*\nseg*\seglen,0.25)$);

% Main H-length line
\draw[decorate, decoration={brace, raise=3pt}]
  ($(lineStart)+(0,0.25)$) -- ($(lineEnd)+(0,0.25)$);
\node[above, font=\tiny] at ($(lineStart)!0.5!(lineEnd) + (0,0.45)$) {$\tau_H = T$};
\draw (lineStart) -- (lineEnd) node[below left=2pt and -5pt, font=\tiny] {$L$ pieces};

% === New code for underbrace + zoomed-in line ===

% Last segment of the main line
\coordinate (lastSegStart) at ($(lineEnd)+(-\seglen,0)$);
\coordinate (lastSegMid)   at ($(lineStart)!0.5!(lineEnd)$);

% Underbrace under the last segment
\draw[decorate, decoration={brace, raise=4pt}]
  ($(lastSegMid)+(0.25,0)$) -- ($(lastSegMid)-(0.25,0)$);

% Arrow down from the last segment (zoom arrow)
\coordinate (zoomCenter) at ($(lastSegMid)+(0,-0.75)$);
\coordinate (zoomStart) at ($(lastSegMid)+(0,-0.23)$);
\draw[-] (zoomStart) -- (zoomCenter);
\draw[decorate, decoration={brace, mirror, raise=-3pt}]
  ($(zoomCenter)+(1,0)$) -- ($(zoomCenter)-(1,0)$);

% Parameters for the zoomed-in line
\def\nsubseg{4}      % number of little segments in the zoomed-in view

% Zoomed-in line (again centered below)
\coordinate (subStart) at ($(zoomCenter)+(-0.5*\nsubseg*\seglen,-0.5)$);
\coordinate (subEnd)   at ($(zoomCenter)+( 0.5*\nsubseg*\seglen,-0.5)$);

\draw (subStart) -- (subEnd);
\draw (subStart) -- (subEnd) node[below left=2pt and -5pt, font=\tiny] {$L$ pieces};

\foreach \j in {0,...,\nsubseg} {
  \coordinate (tsj) at ($(subStart)+(\j*\seglen,0)$);
  % tick
  \draw (tsj)++(0,-0.06) -- ++(0,0.12);
  % smaller downward arrow (reduced arrowhead size)
  \draw[->, >=stealth, line width=0.3pt] 
    ($(tsj)+(0,\arrowh)$) -- ($(tsj)+(0,0.10)$);
}
\node[below, font=\tiny] at ($(subStart)+(0.5*\seglen,0)$) {$\tau_{H-2}$};

% Subdivision ticks and labels; downward arrows above each tick
\foreach \j in {0,...,\nseg} {
  \coordinate (tj) at ($(lineStart)+(\j*\seglen,0)$);
  % tick
  \draw (tj)++(0,-0.06) -- ++(0,0.12);
  % smaller downward arrow (reduced arrowhead size)
  \draw[->, >=stealth, line width=0.3pt] 
    ($(tj)+(0,\arrowh)$) -- ($(tj)+(0,0.10)$);
}

% Only label the 1st and 2nd sub-segments with ε²H
\node[below, font=\tiny] at ($(lineStart)+(0.5*\seglen,0)$) {$\tau_{H-1}$};
\node[left=8.35cm of H1/2, anchor=east, font=\small, text width=2.25cm, align=center] (L-2) {Level $H+\frac{1}{2}$ ($\WLdepth{H+\frac{1}{2}}$)};
\node[left=5.95cm of H0, anchor=east, font=\small, text width=2.25cm, align=center] (L-1) {Level $H$ ($\WLdepth{H}$)};
\node[left=6cm of H, anchor=east, font=\small] (L0) {};
\node[left=3.45cm of e3, anchor=east, font=\small, text width=2.25cm, align=center] (L1) {Level $H-\frac{1}{2}$ ($\WLdepth{H-\frac{1}{2}}$)};
\node[left=2.9cm of eeL, anchor=east, font=\small] (L2) {};
\node[left=1.25cm of eeL2, anchor=east, font=\small, text width=2.25cm, align=center] (L3) {Level $H-1$ ($\WLdepth{H-1}$)};
\node[below=3.25cm of eeL, anchor=south, font=\small] (L3) {$\vdots$};

\end{tikzpicture}
\caption{The recursive call of invocations within structure of $\WLdepth{}$ when the target error is $\varepsilon$: each green box corresponds to a call to $\WLdepth{h}(\cdot)$ for some $h \in \{0,\frac{1}{2},1,\ldots,H+\frac{1}{2}\}$. From each integer level $h$ to the next one, instances decrease by a factor $L_h = L$ in length.}
\label{fig:flowchart}
\end{figure}

\paragraph{Notation.} For $h \in \bbZ$, let $\bX_h = S \times \Sigma^{\tau_h}$ denote the space of $\tau_h$-length instances. Let $\bX_{h+\frac{1}{2}} = \bX_h$ for all such $h$.

\paragraph{Tree diagram corresponding to the recursive structure of \Cref{alg:semiauto_main}.} The recursive curriculum proceeds with $\WLdepth{h}$ calling $k_h$ instances of $\WLdepth{h-\frac{1}{2}}$. The diagram in \Cref{fig:flowchart} arranges these invocations into a tree: the root corresponds to a call of $\WLdepth{H+\frac{1}{2}}$ and going down the tree recursively invokes $\WLdepth{H},\WLdepth{H-\frac{1}{2}}$ and so on. The number of nodes at any level $h$ is $k_h \times k_{h+\frac{1}{2}} \times \cdots \times k_{H+\frac{1}{2}}$. We let ``$\nd$'' denote a node in the tree, let $\pt (\nd)$ denote the parent of $\nd$, $\ch (\nd)$ denote the ordered set of children of $\nd$\footnote{Note that the children of a node are not interchangeable} and $\sib (\nd) = \ch (\pt (\nd))$ denote its siblings. Models are \textit{trained} only at nodes at level $0$ of the tree, but are hierarchically combined to define models induced at intermediate nodes of the tree. We also refer to a node $\nd$ being \textit{resolved} to indicate that the model corresponding to $\nd$ has been learned. This is only possible if all level-$0$ nodes in the induced subtree with $\nd$ as root have been trained. Furthermore, the construction can be thought of as resolving nodes in the tree in a depth-first fashion: at any level $h$, the instantiations of $\WLdepth{h}$ can only be resolved when the ones ``prior'' to it (in the depth-first ordering) at the same level have been resolved. We use the notation $\calT$ to collect the set of all nodes in the tree, and $\calT_h$ to denote the set of all nodes at a given level $h \in \{ \frac{1}{2},\cdots,H+\frac{1}{2} \}$. $\root$ denotes the singular node in $\calT_{H+\frac12}$ (i.e., the root node of $\calT$).

\medskip
\noindent Each node is associated with a collection of objects we define next. For any $h \in \{ 0,\frac{1}{2},\cdots,H+\frac{1}{2} \}$ and $\nd \in \calT_h$, let \smash{$\fhat^{(\nd)}$} denote a model associated with it, defined recursively as,
\begin{equation} \label{eq:fcombine}
    \fhat^{(\nd)} = \fhat^{\circ L_h}, \text{ where, } \fhat \gets \maj \big( \big\{ \fhat^{(\nd')} : \nd' \in \ch (\nd) \big\} \big).
\end{equation}
where $\fhat^{(\nd)}$ for $\nd \in \calT_0$ are induced by invocations of $\Base$. Next, as we carried out in \cref{lemma:weak-learner}, define a sequence of distributions for each node. For $\nd$ at level $h$, let $j$ denote its index within its ordered set of siblings, $\sib (\nd)$ and let the first $j$ siblings of $\nd$ be denoted by the set $\sib_{< j} (\nd)$. Then, for $X \in \bX_h$,
\begin{equation} \label{eq:wnode}
    w_{\nd} (X) = \alpha^{j,k}_{\rank (X \| \nd)} \text{, where, } \rank (X \| \nd ) = \big| \big\{ \fhat^{(\nd')} (X) = \fstar_{\tau_h} (X) : \nd' \in \sib_{< j} (\nd) \big\} \big| \in [0,j]
\end{equation}
and $\alpha^{j,k}_r$ is defined in \cref{eq:alpha}. The rank of a node captures how many of its earlier siblings collect any reward on $X$. Finally, for any $\nd \in \calT$ we use the notation $\Dnd$ to indicate the dataset of instances which the corresponding invocation of $\WLdepth{h}$ processes. First, we prove a short lemma which argues that for any $\nd$, $\Dnd$ is composed of independent instances. 

\begin{lemma}[Independence of per-node datasets] \label{lemma:node-independence}
For any $\nd \in \calT$ the instances in the corresponding dataset, $\Dnd$, are independent.
\end{lemma}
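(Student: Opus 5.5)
We argue by induction down the tree $\calT$, from the root to the leaves. The claim to carry is slightly stronger than stated: for every node $\nd$ at level $h$, \emph{conditionally on the history} $\calH(\nd)$ (all randomness used to produce the models at earlier nodes in the depth-first order, i.e.\ the models in $\sib_{<j}(\nd)$ and those at earlier siblings of $\nd$'s ancestors), the instances in $\Dnd$ are i.i.d. Unconditional independence of $\Dnd$ then follows whenever the sibling models are treated as fixed.

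\emph{Base case.} At the root $\root \in \calT_{H+\frac12}$ the claim holds because $\Dprompt{}$ consists of i.i.d.\ draws from $\rho$. Truncation to the first $\nsample^\star$ instances (\cref{alg:semiauto_main:dbound}) and partitioning into $k_h$ parts are deterministic index operations. Hence each part $\Dprompt{j}$ is again a collection of i.i.d.\ instances, and the parts are mutually independent.

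\emph{Inductive step.} Consider a node $\nd$ whose dataset consists of independent instances given $\calH(\nd)$, and let $\nd_j$ be its $j$th child. The dataset $\Dout{j}$ passed to $\nd_j$ is obtained from the part $\Dprompt{j}$ by mapping each instance $\bx$ to $O(\bx) = \subsample(\split(\bx\|L_h)\|\calF_j,k_h)$ and discarding the outputs equal to $\perp$. Here $\split$ is a deterministic function of $\bx$, since the $\etoe$ oracle is deterministic. $\subsample$ depends only on $\bx$, on fresh independent uniforms $(\eta_i)$ and the final uniform choice, and on the models $\calF_j = \{\fhat^{(\nd_0)},\dots,\fhat^{(\nd_{j-1})}\}$.

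The key point is that $\calF_j$ is measurable with respect to $\calH(\nd)$ together with the datasets $\Dprompt{0},\dots,\Dprompt{j-1}$ and their internal coins. These are independent of $\Dprompt{j}$ because the parts are disjoint. Therefore, conditionally on $\calH(\nd_j)$, the $O(\bx)$ for $\bx \in \Dprompt{j}$ are obtained by applying one fixed measurable map, with independent private randomness, to each of the independent inputs, so they are independent. Discarding the $\perp$ values preserves independence of the retained elements; conditioning on which indices survive still leaves each survivor distributed as $O(\bx)$ given $O(\bx)\neq\perp$. Because the inputs are i.i.d., the survivors are also identically distributed, with law $\lambdahat_{\nd_j}$.

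The half-integer levels are the case $L_h=1$, where $\split$ returns $\bx$ itself and $\subsample$ reduces to rejection sampling; the same argument applies verbatim. Iterating down to level $0$ yields the claim for every node, including the leaves where $\Base$ is invoked.

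The step requiring the most care is the conditioning bookkeeping. We must make precise that $\calF_j$ is independent of $\Dprompt{j}$. We must also verify that deleting $\perp$ outputs, a data-dependent selection, does not create dependence; this holds because the selection is done per-instance with independent coins.
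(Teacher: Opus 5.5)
Your proof is correct and follows the same route as the paper. Both induct from the root: each instance of $D_{\pt(\nd)}$ is processed on its own by $\split$ and $\subsample$ with fresh coins and is never reused, so independence passes from $D_{\pt(\nd)}$ to $\Dnd$. Your write-up is more careful than the paper's in two places: you make the conditioning on the earlier-sibling models $\calF_j$ explicit, and you explain why discarding the $\perp$ outputs is harmless. Your conditional statement is the real content of the lemma. Unconditionally, the instances at a child node share the random $\calF_j$, so they are only exchangeable, and you should drop the word ``unconditional.''
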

\begin{proof}
This is proved recursively. When $\nd = \root$, this is true since $\Dnd = \Dprompt{}$. For its subsequent children nodes, $\WLdepth{H}$ constructs the dataset \smash{$\Dnd \gets \Dout{j}$} by using inverted sampling (\Cref{alg:semi-subsample}) to process instances in $\Dprompt{}$ to generate shorter instances. This sampling routine takes in a single instance $\bx$ and returns either $\perp$ or a single shorter instance computed from $\bx$. The instance $\bx$ is never reused elsewhere in the recursion, implying that independence is preserved across the dataset of shorter instances computed. The same argument inductively shows that $D_{\nd}$ is composed of independent instances so long as $D_{\pt (\nd)}$ is composed of independent instances.
\end{proof}

\noindent With this, for $\nd$ at level $h$ or $h + \frac{1}{2}$, we let $\lambdahat_{\nd}$ denote the distribution over instances of length $\tau_h$ from which the elements of $\Dnd$ are sampled. Define $\lambdahat_{\nd}$ as the target distribution over instances of length $\tau_h$. At level $h=H+\frac12$, we have that, $\lambdahat_{\root} = \rho$. With this notation introduced, we are ready to state two inductive assumptions on the correctness of invocations of $\WLdepth{h}$ at level $h$ (for integer and non-integer values of $h$), which together show that with high probability, the depth $h+1$ version succeeds conditioned on this. Define the following sequences recursively,
\begin{align}
    \text{If } h \in \bbN,\ \nsample^\star (h+\tfrac{1}{2},2k_{h+\frac12} \delta) &= \nsample^\star(h,\delta) \cdot 2 k_{h+\frac12} e^{c k_{h+\frac12}}, \nonumber\\
    \text{And, } \nsample^\star (h+1,2k_{h+1} \delta) &= \nsample^\star(h+\tfrac12,\delta) \cdot C_3 k_{h+1}^{7/2} \log(L_{h+1})\label{eq:nsamplestar}\\
    \text{And, } \forall h \ge 0,\ \nquery^\star (h+\tfrac{1}{2},2k_h \delta) &= L_h \cdot \nsample^\star (h+\tfrac{1}{2},\delta ) + k_h \cdot \nquery^\star (h,\delta), \label{eq:nquerystar}
\end{align}
where $\nsample^\star (0,\delta) = \nquery^\star (0,\delta) = \big[ \nweak (\delta) \vee \log(1/\delta) \big]$, and $C_3 > 0$ is a large absolute constant, while in the definition of $\nsample^\star$, $c$ is chosen to be the same constant as in the exponent of \smash{$\beta^{0,k}_0$} in \cref{lemma:beta00}. With the choices of decomposition schedule $( L_h )_{h \ge 0}$ and branching schedule $( k_h )_{h \ge 0}$ as described in \Cref{alg:semiauto_highacc}, we get that, for $h \le H$,
\begin{align*}
    \nsample^\star (h,\delta),\ \nquery^\star (h,\delta) &\le (\log(T))^{\calO(h)} \cdot \big[ \nweak (\delta / (\log(T))^{\calO(h)} ) \vee \log( 1/\delta ) \big]
\end{align*}
and for $h = H+\frac{1}{2}$,
\begin{align*}
    \nsample^\star (H+\tfrac{1}{2},\delta),\ \nquery^\star (H+\tfrac{1}{2},\delta) &\le \frac{h_\varepsilon(T)}{\varepsilon} \cdot \big[ \nweak \big(\delta / h_\varepsilon(T) \big) \vee \log(1/\delta) \big].
\end{align*}
where $h_\varepsilon(T) = 2^{C \sqrt{\log(T)} \log\log(T)} \cdot \log^2 (1/\varepsilon)$. The proof of \Cref{theorem:main} relies on establishing inductive guarantees on the sample and query complexity needed at each level of the algorithm. This is separated into guarantees for integer levels, and for half-integer levels in the two hypotheses below.

\begin{induction}[label=induction:h1/2]{h+\frac{1}{2}}
Consider any \smash{$\nd \in \calT_{h+\frac{1}{2}}$} and semiautomata simulation instances of length $\tau_{h}$ over the target distribution over instances, $\lambdahat_{\nd}$. When invoked on a dataset of independent instances drawn from $\lambdahat_{\nd}$ of size $\nsample (h+\tfrac{1}{2},\delta) \ge \nsample^\star (h+\tfrac{1}{2},\delta )$,
$\WLdepth{h+\frac{1}{2}}$ returns a deterministic model $\fhat$ such that with probability $1-\delta$,
\begin{equation*}
    \Pr_{X \sim \lambdahat_{\nd}} \big( \fhat (X) \ne \fstar_{\tau_h} (X) \big) \le 3 \sqrt{k_{h + \frac{1}{2}}} \cdot e^{- c k_{h + \frac{1}{2}}}.
\end{equation*}
where $c>0$ is the constant in the exponent of $\beta^{0,k}_0$ in \cref{lemma:beta00}. In the process, the number of queries made to $\etoe(\cdot)$ is upper bounded by $\nquery^\star (h+\tfrac{1}{2},\delta)$.
\end{induction}

\begin{induction}[label=induction:h1]{h+1}
Consider any $\nd \in \calT_{h+1}$ and semiautomata simulation instances of length $\tau_{h+1}$ over the target distribution over instances, \smash{$\lambdahat_{\nd}$}. When invoked on a dataset of independent instances drawn from $\lambdahat_{\nd}$ of size at least, $\nsample (h+1, \delta) \ge \nsample^\star (h+1, \delta)$, then the invocation $\WLdepth{h+1}$ corresponding to $\nd$ returns a deterministic model $\fhat$ such that with probability $1-\delta$,
\begin{equation*}
    \Pr_{X \sim \lambdahat_{\nd}} \big( \fhat (X) \ne \fstar_{\tau_{h+1}} (X) \big) \le \frac{1}{4}.
\end{equation*}
The number of queries made to $\etoe(\cdot)$ is at most $\nquery^\star (h+1,\delta)$.
\end{induction}

\begin{lemma}[Aggregation steps] \label{lemma:induction-half}
For any $h \in \bbZ$, $\IndHyp_{h} (\delta) \implies \IndHyp_{h+\frac{1}{2}} (2k_{h+\frac{1}{2}} \delta)$.
\end{lemma}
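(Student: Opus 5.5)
The plan is to analyze the half-integer level as plain boosting-by-filtering with no composition. Fix $\nd \in \calT_{h+\frac12}$ and write $k = k_{h+\frac12}$. At this level $L_{h+\frac12} = 1$, so $\split(\bx \| 1)$ returns the single instance $\bx$ with its terminal label. $\subsample$ then reduces to standard rejection sampling: it keeps $\bx$ with probability $w_j(\bx)/\|w_j\|_\infty$. Consequently, each child $\nd_j$ receives an i.i.d. dataset (by \Cref{lemma:node-independence}) from the \emph{exact} tilted distribution $\lambdahat_{\nd_j} \propto \lambdahat_{\nd}(\cdot)\, w_j(\cdot)$. No event $\calK_j$ or density-ratio change of measure is needed here, since there is nothing to invert. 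I would then redo the potential-function decomposition of \Cref{lemma:depth1-breakdown} with $L=1$ and trivial $\calK_j$:
\[
\Pr_{X\sim\lambdahat_\nd}\big(\fhat(X)\ne\fstar_{\tau_h}(X)\big) \le \beta^{0,k}_0 + \sum_{j=0}^{k-1} (\err_j-\err_\star)\,\bbE_{X\sim\lambdahat_\nd}[w_j(X)],
\]
where $\err_j$ is the error of $\fhat^{(\nd_j)}$ under $\lambdahat_{\nd_j}$.

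Next I would split the iterations according to the acceptance probability $p_j = \bbE_{\lambdahat_\nd}[w_j]/\|w_j\|_\infty$, using threshold $p^\star = e^{-ck}$. If $p_j \ge p^\star$, then the truncated dataset of size $\nsample^\star(h+\frac12,2k\delta) = \nsample^\star(h,\delta)\cdot 2k e^{ck}$, split into $k$ parts, gives each child about $2e^{ck}\nsample^\star(h,\delta)$ raw instances. A multiplicative Chernoff bound shows that at least $\nsample^\star(h,\delta)$ of them are accepted, except with probability $\le\delta$ (this uses $\nsample^\star(h,\delta)\ge\log(1/\delta)$). $\IndHyp_h(\delta)$ then gives $\err_j\le\frac14=\err_\star$ with probability $1-\delta$, so these terms are nonpositive. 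If instead $p_j<p^\star$, I would bound the term crudely by $\bbE[w_j] = p_j\|w_j\|_\infty \le p^\star \alpha^{j,k}_{\max}$. Summing with the binomial-mode bound $\alpha^{j,k}_{\max}\lesssim 1/\sqrt{k-j}$ (as in \Cref{lemma:CoT-alphamax-sum}) gives $\sum_j \alpha^{j,k}_{\max} \le 2\sqrt{k}$, so these iterations contribute at most $2\sqrt{k}e^{-ck}$. Together with $\beta_0^{0,k}\le e^{-ck}$ from \Cref{lemma:beta00}, the total error is at most $3\sqrt{k}e^{-ck}$, which is the claimed bound.

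For the failure probability, there are at most $k$ children, each failing with probability at most $2\delta$ (a $\delta$ from Chernoff and a $\delta$ from $\IndHyp_h(\delta)$). A union bound gives $2k\delta$, matching $\IndHyp_{h+\frac12}(2k\delta)$. The query count follows directly. $\split$ makes $L_h$-type boundary queries per instance; at this level that is one terminal-label query per instance, which is covered by the $L_h\cdot\nsample^\star$ term in \cref{eq:nquerystar}. On top of this, each of the $k$ children uses at most $\nquery^\star(h,\delta)$ queries by $\IndHyp_h(\delta)$, since the child datasets are truncated to $\nsample^\star(h,\delta)$.

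The main obstacle is the adaptivity: the weights $w_j$ depend on the earlier children's models. All the statements above must therefore be made conditionally on the history $\calH_{j-1}$, and the split of the data into disjoint parts is what makes each child's dataset fresh and i.i.d. given that history. The second point needing care is the error decomposition itself. I would obtain it by specializing the proof of \Cref{lemma:depth1-breakdown}, tracking the Freund potential $\Phi_j = \bbE[\beta^{j,k}_{\rank_j}]$ and using $\Phi_j-\Phi_{j+1} = (\err_\star-\err_j)\bbE[w_j]$ exactly. No composition factor $L$ or $\calK$-failure term appears in this specialization.
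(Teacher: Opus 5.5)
Your proposal is correct and follows essentially the same route as the paper's proof (\Cref{lemma:easy-half}). The shared steps are:
\begin{itemize}
\item specializing the potential decomposition to $L=1$ with a trivial $\calK$;
\item treating rounds with $p_{\nd_j}\le e^{-ck}$ as aborted and bounding each such term by $e^{-ck}\alpha^{j,k}_{\max}$, then summing via the binomial-mode bound;
\item using Chernoff plus $\IndHyp_h(\delta)$ in the remaining rounds;
\item taking a union bound over the $k$ children to get failure probability $2k\delta$.
\end{itemize}
One small constant-factor slip, which the paper shares: with mean acceptance count $2\nsample^\star(h,\delta)$, the Chernoff failure probability is $\exp(-\nsample^\star(h,\delta)/4)$. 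So you need $\nsample^\star(h,\delta)\ge 4\log(1/\delta)$, not just $\log(1/\delta)$; this constant can be absorbed into the definition of $\nsample^\star$.
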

\begin{proof}
This lemma is restated in more detail and proved in \Cref{lemma:easy-half}.
\end{proof}

\begin{lemma}[Aggregation+composition steps] \label{lemma:induction}
Consider any value of $h \in \{ 0,1,\cdots,H-1 \}$ and suppose $k_{h+1} = \log(L) = c_2 \sqrt{\log(T)}$ and $k_{h+\frac{1}{2}} = c_3 \log\log(T)$ for appropriate constants $c_2,c_3>0$. Then, we have the recursion, $\IndHyp_{h+\frac{1}{2}} (\delta) \implies \IndHyp_{h+1} (2k_{h+1} \delta)$.
\end{lemma}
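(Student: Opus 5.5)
The plan is to rerun the depth-$1$ argument of \Cref{sec:semiauto-depth1-proofs} (\Cref{theorem:depth1-final}), with the full-CoT base learner $\TranscriptNTP$ replaced by $\IndHyp_{h+\frac{1}{2}}(\delta)$. Fix $\nd \in \calT_{h+1}$ with target distribution $\lambdahat_{\nd}$ over length-$\tau_{h+1}$ instances. In the proof, $\lambdahat_{\nd}$ plays the role of $\rho$, the per-instance split yields the joint law $\sigma$ over $L$ sub-instances of length $\tau_h$, and the children $\nd_0,\dots,\nd_{k-1}$ with $k = k_{h+1}$ play the roles of the iterations $j$.

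\textbf{Step 1: error decomposition.} Apply \Cref{lemma:depth1-breakdown} verbatim, since it uses only the plurality and composition structure and not how the $\fhat^j$ were trained. The events $\calK_j$ are built by $\construct$ with $c = 1/(32k)$, and the events $\calU_j$ are defined exactly as in \cref{eq:Uj-def}. This gives
\[
\Pr_{\lambdahat_{\nd}}(\text{error}) \le L\beta^{0,k}_0 + \sum_j (\err_j-\err_\star)\,\bbE[R_j\bbI(\calK_j)] + \Pr(\calK_{k-1}^c).
\]
With $k = C\log L$, \Cref{lemma:beta00} makes the first term at most $1/16$. \Cref{lemma:notUk} bounds the last term by $1/16$. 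For aborted iterations, \Cref{lemma:abort=>lowreg} together with \Cref{lemma:CoT-alphamax-sum} bounds their total contribution by $1/8$. None of these three lemmas depends on the base learner, so they carry over unchanged.

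\textbf{Step 2: non-aborted iterations, the main obstacle.} Here we must show $\err_j \le \err_\star$, which replaces \Cref{lemma:weak-learner}. \Cref{lemma:independence} (\Cref{lemma:node-independence}) shows that $\Dout{j}$ consists of independent draws from $\lambdahat_{\nd_j}$. \Cref{lemma:truthful-abort-semi-1}, with the threshold $C_2k^{7/2}\log L\cdot(\cdot)$ replaced by $\nsample^\star(h+\frac12,\delta)$, shows that $|\Dout{j}| \ge \nsample^\star(h+\tfrac12,\delta)$ with probability $1-\delta$ (failure probability absorbed into the $2k_{h+1}\delta$ budget). This is where the recursion $\nsample^\star(h+1,\cdot) = \nsample^\star(h+\frac12,\cdot)\,C_3k^{7/2}\log L$ is consumed: $p_j \gtrsim \ptilde_j \ge p_j^\star = 1/(1024k^{5/2})$, and we need $\log$ factors for concentration. $\IndHyp_{h+\frac12}(\delta)$ then gives error at most $3\sqrt{k_{h+\frac12}}\,e^{-ck_{h+\frac12}}$ under $\lambdahat_{\nd_j}$. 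By the density-ratio bound \cref{eq:lambdalambdahat-dbound} (from \Cref{lemma:density-ratio-1} and \cref{eq:drb-den-bd-2}), the error under $\lambda_j$ is at most
\[
C k^{7/2}\log L \cdot 3\sqrt{k_{h+\frac12}}\,e^{-ck_{h+\frac12}}.
\]
The crux is checking that this is at most $1/4$. Since $k = c_2\sqrt{\log T}$ and $\log L = \sqrt{\log T}$, the prefactor is $\mathrm{poly}(\log T)$, so choosing $k_{h+\frac12} = c_3\log\log T$ with $c_3$ large makes the product at most $1/4$. This is exactly why the half-level boosting is needed: it produces high rather than constant accuracy, so that the accuracy survives the $\polylog$ change of measure. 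The subtle point I would verify carefully is that the child is trained on the actual inverted-sampling distribution $\lambdahat_{\nd_j}$, not on $\lambda_j$, so the transfer goes in the direction $\lambda_j \ll \lambdahat_{\nd_j}$ that \Cref{lemma:density-ratio-1} supplies.

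\textbf{Step 3: bookkeeping.} A union bound over the $k$ children's $\IndHyp_{h+\frac12}(\delta)$ failures and the $k$ dataset-size failures gives total failure probability $2k\delta$. The query count is $L$ calls to $\split$ per input instance plus the children's queries, i.e.
\[
L\cdot\nsample^\star(h+1,\cdot) + k\cdot\nquery^\star(h+\tfrac12,\delta),
\]
matching the recursion for $\nquery^\star$ (\cref{eq:nquerystar}). Finally, the returned model is deterministic because the plurality vote and the composition of deterministic models are deterministic.
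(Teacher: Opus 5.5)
Your proposal is correct and follows the paper's proof (via \Cref{lemma:easy}). Both reuse the depth-$1$ error decomposition, the events $\calK_j$ built by $\construct$ with $c = 1/(32k)$, and the abort analysis with $\rho \gets \lambdahat_{\nd}$. Both replace the next-token-prediction weak-learning step by $\IndHyp_{h+\frac{1}{2}}(\delta)$ followed by the $\calO(k^{7/2}\log L)$ change of measure, which is absorbed by taking $k_{h+\frac{1}{2}} = c_3 \log\log T$.

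One small attribution fix: the $1/8$ bound on aborted iterations comes from the gating $\calK_j \subseteq \calU_j$, which forces $R_j \le \rjmax$ there, together with \Cref{lemma:CoT-alphamax-sum}. \Cref{lemma:abort=>lowreg} enters only through the mass bound in \Cref{lemma:notUk}.
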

\begin{proof}
The base case (sample and query complexity of $\IndHyp_{0} (\delta')$ at $h=0$) are directly implied by the definition of the sample and query complexity of $\Base (\, \cdot \,\|\, \delta')$ in terms of $\nweak (\delta')$. This lemma is restated in more detail and proved in \Cref{lemma:easy}.
\end{proof}

\noindent Of the two claims, the first one is easier to show, since it is a direct consequence of boosting-by-filtering~\citep{freund1995boosting}. In particular, since there is no composition step involved in going from level $h$ to level $h+\frac{1}{2}$, the distributional shift is only due to the boosting step, and the analysis of this is standard. Next, we sketch how to bootstrap $\IndHyp_h$ into $\IndHyp_{h+\frac{1}{2}}$, implying the statement of \Cref{lemma:induction-half}.

\subsection{Analysis of half-integer steps: Proof of \Cref{lemma:induction-half}}

In this section we analyze \Cref{lemma:induction-half}. We fix a level $h \in \bbZ$ and a particular $\nd \in \calT_{h+\frac{1}{2}}$. We let $\fhat^0,\cdots,\fhat^{k-1}$ denote the $k$ models returned by $\WLdepth{h}$, corresponding to nodes in $\ch (\nd)$ (and which are invoked on distributions $\lambdahat_{\nd'}$ for $\nd' \in \ch (\nd)$). Let us index the nodes in $\ch (\nd)$ as $( \nd_0,\cdots,\nd_{k-1})$. To prove \Cref{lemma:induction-half}, we translate the analysis of $\WLdepth{}.\warmup$ (cf. \Cref{theorem:depth1-final}) for the case of $L=1$ and setting $\bX \gets \bX_h$. Since $L=1$, some parts of the analysis greatly simplify.

\medskip
\noindent First we begin by introducing some notation before jumping into the proof of the result. Let $(\calH_{\nd_j})_{j=0}^{k-1}$ denote the filtration where $\calH_{\nd_{j-1}}$ captures all randomness up until the point where the dataset $D_{\nd_j}$ is collected: this includes the randomness in the construction of the models \smash{$\fhat^{(\nd_0)},\cdots,\fhat^{(\nd_{j-1})}$} corresponding to nodes $\nd_0,\cdots,\nd_{j-1} \in \ch(\nd)$, as well as the randomness in the instances from $\Dnd$ which contribute to the datasets $D_{\nd_0},\cdots,D_{\nd_{j-1}}$. Define the following distribution over $\bX_h$,
\begin{equation} \label{eq:lambdahatnodejstar-half}
    \lambda_{\nd_j} (\cdot) \propto \lambdahat_{\nd} (\cdot) w_{\nd_j} (\cdot),
\end{equation}
where $w_{\nd_j} (\bx) = \alpha_{r}^{j,k}$ for $r = \rank ( \bx \|\, \nd_j)$.
Recall that $\rank (\,\cdot \,\|\, \nd_j)$ (\cref{eq:wnode}) captures the number of models among \smash{$\big\{ \fhat^0,\cdots,\fhat^{j-1} \big\}$} which correctly predict the terminal state of the instance $X \in \bX_h$. First, we write down a test-error decomposition, similar to the one in \cref{lemma:depth1-breakdown}, where we break down the accuracy of the model returned by an instantiation of $\WLdepth{h+\frac{1}{2}}$ (i.e., corresponding to some $\nd$) into the accuracy of the models returned by the instances of $\WLdepth{h}$ it spawns (i.e., corresponding to $\ch (\nd)$).

\medskip
\noindent For the purpose of this section, we will denote $k \gets k_{h+\frac{1}{2}}$ and $L \gets L_{h+\frac{1}{2}} = 1$ to simplify notation.

\begin{lemma}[Test-error decomposition] \label{lemma:depthH-breakdown-half} Let $\fhat^{(\nd)}$ denote the model returned by the instantiation of $\WLdepth{h+\frac{1}{2}}$ corresponding to $\nd \in \calT_{h+\frac{1}{2}}$. Let \smash{$\fhat^{(\nd_j)}$} denote the model returned by $\WLdepth{h}$ corresponding to each \smash{$\nd_j \in \ch(\nd)$}. Then,
\begin{equation*}
    \Pr_{X \sim \lambdahat_{\nd}} \big[ \fhat^{(\nd)} (X) \ne \fstar_{\tau_h} (X) \big] \le \beta^{0,k}_0 + \sum_{j=0}^{k-1}  (\err_j - \err_\star) \cdot \bbE_{X \sim \lambdahat_{\nd}} \big[ \alpha^{j,k}_{\rank (X \| \nd_j)} \mid \calH_{\nd_{k-1}} \big]
\end{equation*}
Here, $\beta^{j,k}_r$ is defined in \cref{eq:alpha}, and $\err_j$ is the test error of $\fhat^{(\nd_j)}$ under the distribution $\lambda_{\nd_j}$,
\begin{equation*}
    \err_j = \Pr_{X \sim \lambda_{\nd_j}} \big[ \fhat^{(\nd_j)} (X) \ne \fstar_{\tau_h} (X) \big]
\end{equation*}
\end{lemma}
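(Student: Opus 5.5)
This is the standard boosting-by-majority potential argument of \cite{freund1995boosting}. With $L=1$ there is no composition and no event $\calK$, so the argument is a specialization of \Cref{lemma:depth1-breakdown}. For $X \in \bX_h$ and $0 \le j \le k$, write $r_j(X) = \rank(X \,\|\, \nd_j)$ for the number of correct votes among $\fhat^{(\nd_0)},\dots,\fhat^{(\nd_{j-1})}$, with $r_k(X)$ the count over all $k$ children. Define the potential
\begin{equation*}
\Phi_j = \bbE_{X \sim \lambdahat_{\nd}}\big[ \beta^{j,k}_{r_j(X)} \mid \calH_{\nd_{k-1}} \big].
\end{equation*}
The boundary values are what make the telescope work. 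At $j=0$ every rank is zero, so $\Phi_0 = \beta^{0,k}_0$. At $j=k$ we have $\beta^{k,k}_r = \bbI(r \le k/2)$. If the plurality $\fhat^{(\nd)}$ errs on $X$, then at most half the children are correct: a strict correct majority is automatically the plurality, since $\fstar_{\tau_h}(X)$ is a single label. Hence $\Pr_{X\sim\lambdahat_{\nd}}[\fhat^{(\nd)}(X) \ne \fstar_{\tau_h}(X)] \le \Phi_k$.

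Next, bound the one-step increments. Condition on $X$; then $r_{j+1}(X) = r_j(X) + Z_j(X)$, where $Z_j(X) = \bbI(\fhat^{(\nd_j)}(X) = \fstar_{\tau_h}(X))$. The recursion \cref{eq:alpha} together with $\alpha^{j,k}_r = \beta^{j+1,k}_r - \beta^{j+1,k}_{r+1}$ gives the exact identity
\begin{equation*}
\beta^{j+1,k}_{r + Z} - \beta^{j,k}_r = \big( (1-Z) - \err_\star \big)\, \alpha^{j,k}_r .
\end{equation*}
To verify it, expand $\beta^{j,k}_r = \err_\star \beta^{j+1,k}_r + (1-\err_\star)\beta^{j+1,k}_{r+1}$ and check the cases $Z=0$ and $Z=1$ separately. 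Taking expectation over $X\sim\lambdahat_{\nd}$ yields
\begin{equation*}
\Phi_{j+1} - \Phi_j = \bbE_{X \sim \lambdahat_{\nd}}\big[\alpha^{j,k}_{r_j(X)}\,\bbI(\fhat^{(\nd_j)}(X)\ne \fstar_{\tau_h}(X))\big] - \err_\star\, \bbE_{X \sim \lambdahat_{\nd}}\big[\alpha^{j,k}_{r_j(X)}\big].
\end{equation*}
Since $w_{\nd_j}(X) = \alpha^{j,k}_{r_j(X)}$ and $\lambda_{\nd_j} \propto \lambdahat_{\nd}\, w_{\nd_j}$ by \cref{eq:lambdahatnodejstar-half}, the first expectation equals $\err_j \cdot \bbE_{X \sim \lambdahat_{\nd}}[\alpha^{j,k}_{r_j(X)}]$. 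The increment is therefore $(\err_j - \err_\star)\,\bbE_{X \sim \lambdahat_{\nd}}[\alpha^{j,k}_{r_j(X)} \mid \calH_{\nd_{k-1}}]$.

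Summing over $j = 0,\dots,k-1$ and telescoping gives $\Phi_k = \beta^{0,k}_0 + \sum_{j} (\err_j-\err_\star)\,\bbE[\alpha^{j,k}_{\rank(X\|\nd_j)} \mid \calH_{\nd_{k-1}}]$. Combined with the bound $\Pr[\fhat^{(\nd)}(X) \ne \fstar_{\tau_h}(X)] \le \Phi_k$, this is the claimed decomposition.

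The only delicate points are bookkeeping. First, $\lambda_{\nd_j}$ and $\err_j$ must be taken conditionally on $\calH_{\nd_{j-1}}$, so that the models are fixed; this is why everything is conditioned on $\calH_{\nd_{k-1}}$, and a fresh $X$ is independent of that history. Second, when $\bbE[w_{\nd_j}] = 0$ the distribution $\lambda_{\nd_j}$ is undefined, but the corresponding term vanishes anyway. Third, one must check the plurality-tie convention against the indicator $\bbI(r \le k/2)$, which holds because ties are counted as errors.
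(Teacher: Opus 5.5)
Your proposal is correct and is the paper's argument: the paper proves this lemma by specializing the potential-function proof of \Cref{lemma:depth1-breakdown} (via \Cref{lemma:loss-decomp-Phi,lemma:recursion}) to $L=1$ with $\calK$ taken to be the whole space, which yields exactly your potential $\Phi_j$, your endpoint bounds $\Phi_0=\beta^{0,k}_0$ and $\Pr[\fhat^{(\nd)}(X)\ne\fstar_{\tau_h}(X)]\le\Phi_k$, and your one-step identity. The only difference is cosmetic: with $\calK$ trivial your increment is an exact equality, whereas the paper's general version has an inequality that comes from the nesting $\calK_j\subseteq\calK_{j-1}$.
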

\begin{proof}
Since there is no composition step involved, this result is immediate from the proof of \Cref{lemma:depth1-breakdown} by the choice $L=1$, $\calK = \bX^L$, and $\rho \gets \lambdahat_{\nd}$, $\sigma \gets \lambdahat_{\nd}$ and $\lambda_j \gets \lambda_{\nd_j}$.
\end{proof}

\noindent Since there is no composition involved in going from step $h$ to $h+\frac{1}{2}$, the learner has a sampling oracle for $\lambdahat_{\nd}$, which can be used to generate samples from $\lambda_{\nd_j}$ by rejection sampling (which coincides with inverted sampling when $L=1$). In order to analyze the error decomposition in \Cref{lemma:depthH-breakdown-half} first we define events which track whether the instances of $\WLdepth{h}$ corresponding to nodes in $\ch (\nd)$ are invoked on sufficiently large datasets to satisfy the inductive guarantee we assume for nodes at level $h$. To this end, define for $0 \le j \le k-1$,
\begin{equation*}
    \abort_h [\nd_j] = \big\{  p_{\nd_j} \le p^\star_j \big\}, \text{ where, } p^\star_j = e^{-ck}
\end{equation*}
where $c > 0$ is the constant in \cref{lemma:beta00}. With this, the next lemma simply uses the definition of $\IndHyp_h$ to argue that if any node $\nd_j \in \ch (\nd)$ does not abort, we can get a weak learning guarantee for $\nd_j$.

\begin{lemma}[Weak learner guarantee for $\WLdepth{h}$] 
\label{lemma:weak-learner-H-half}
Assume the inductive hypothesis $\IndHyp_h (\delta)$. Conditioned on the event that $\abort_h [\nd_j]$ is false, with probability at least $1-\delta$, the model \smash{$\fhat^{(\nd_j)}$} returned by $\nd_j$ satisfies,
\begin{align*}
    \Pr_{X \sim \lambda_{\nd_j}} \big[ \fhat^{(\nd_j)} (X) \ne \fstar_{\tau_h} (X) \big] = \err_j \le \err_\star \triangleq \frac{1}{4}
\end{align*}
\end{lemma}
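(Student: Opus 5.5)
The plan is to verify the two conditions needed to apply $\IndHyp_h(\delta)$ to the child node $\nd_j$: its dataset $D_{\nd_j}$ must consist of independent draws from a single distribution, and it must have size at least $\nsample^\star(h,\delta)$. Once both hold, the hypothesis gives $\Pr_{X \sim \lambdahat_{\nd_j}}[\fhat^{(\nd_j)}(X) \ne \fstar_{\tau_h}(X)] \le \frac14$ with probability $1-\delta$. It then remains to identify $\lambdahat_{\nd_j}$ with $\lambda_{\nd_j}$ from \cref{eq:lambdahatnodejstar-half}.

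\textbf{Identifying the sampling distribution.} Since $L_{h+\frac12}=1$, $\split$ returns the single labeled instance. $\subsample$ then draws one $\eta \sim \Unif([0,1])$ and accepts $X$ exactly when $\eta \le w_{\nd_j}(X)/\|w_{\nd_j}\|_\infty$, so it is plain rejection sampling. Conditional on $\calH_{\nd_{j-1}}$, the weight $w_{\nd_j}$ is fixed, because it depends only on the earlier siblings' models. The inputs $X \in D_{\nd}$ assigned to part $j$ are fresh and independent by \Cref{lemma:node-independence}, and they were not used to build those earlier models. Hence each accepted sample is distributed exactly as $\lambdahat_{\nd}(\cdot) w_{\nd_j}(\cdot)/Z = \lambda_{\nd_j}$, and the accepted samples are mutually independent given $\calH_{\nd_{j-1}}$. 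So $\lambdahat_{\nd_j}=\lambda_{\nd_j}$ with no change-of-measure loss; this is the simplification $L=1$ buys over \Cref{lemma:density-ratio-1}. There is one subtlety: the acceptance count is random, and the accepted set is \emph{not} a fixed-size i.i.d. sample. I would handle this by conditioning on the count, since given $|D_{\nd_j}| = m$ the $m$ accepted points are still i.i.d. from $\lambda_{\nd_j}$. Truncation to the first $\nsample^\star$ points (Line \ref{alg:semiauto_main:dbound}) also preserves this.

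\textbf{Dataset size.} On $\neg\abort_h[\nd_j]$ we have $p_{\nd_j} > e^{-ck}$. The input part has size $|D_\nd|/k \ge \nsample^\star(h+\frac12,2k\delta)/k = 2e^{ck}\nsample^\star(h,\delta)$ by \cref{eq:nsamplestar}, so the acceptance count is Binomial with mean at least $2\nsample^\star(h,\delta)$. A multiplicative Chernoff bound shows that the count falls below $\nsample^\star(h,\delta)$ with probability at most $\exp(-\nsample^\star(h,\delta)/4) \le \delta$, using $\nsample^\star(h,\delta) \ge \log(1/\delta)$ from the base case. Combining this with the $\delta$ failure probability of $\IndHyp_h$ gives a total failure probability of $2\delta$. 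The lemma's statement says $1-\delta$; I would either absorb the constant into the $2k_{h+\frac12}\delta$ budget of \Cref{lemma:induction-half} or rescale $\delta$ inside the Chernoff step.

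\textbf{Main obstacle.} The delicate part is the conditioning. The abort event and $w_{\nd_j}$ are both $\calH_{\nd_{j-1}}$-measurable, so I will carry out everything conditionally on $\calH_{\nd_{j-1}}$. This makes the child's input a genuine i.i.d. sample of random size from a fixed distribution, which is what $\IndHyp_h$ requires.
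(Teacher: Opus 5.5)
Your proposal is correct and matches the paper's argument. The paper's proof is just the one-line appeal to $\IndHyp_h(\delta)$, and it rests on the same two facts you verify. First, for $L_{h+\frac12}=1$ inverted sampling is exact rejection sampling, so the child's data are i.i.d.\ from $\lambda_{\nd_j}$. Second, the dataset-size guarantee of \Cref{lemma:truthful-abort-semi-H-half} supplies enough samples, and the extra $\delta$ per child is absorbed into the $1-2k\delta$ of \Cref{lemma:easy-half}, exactly as you propose.

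There is one small constant slip. The bound $\exp(-\nsample^\star(h,\delta)/4)\le\delta$ needs $\nsample^\star(h,\delta)\ge 4\log(1/\delta)$, not merely $\ge\log(1/\delta)$. Enlarging the constant in the base case of $\nsample^\star$ fixes this, and the paper's own Chernoff step is equally loose here.
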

\begin{proof}
This result is a direct consequence of the definition of $\IndHyp_h (\delta)$.
\end{proof}

\noindent Next we argue about the probability that any of the nodes in $\ch (\nd)$ abort. Define,
\begin{align*}
    p_{\nd_j} = \Pr_{\bx \sim \lambdahat_{\nd}} \big( \subsample ( \{ \bx \} \ \| \ \calF_j, k ) \ne \perp \big), \text{ where, } \calF_j = \big\{ \fhat^{(\nd_0)},\cdots,\fhat^{(\nd_{j-1})} \big\}.
\end{align*}
This is the probability with which an instance $\bx \sim \lambdahat_{\nd}$ is accepted by split-and-inverted-sampling (\Cref{alg:split,alg:semi-subsample}). Next we show that if $p_{\nd_j}$ is large, then the instantiation of $\WLdepth{h}$ corresponding to $\nd_j$ is likely to be instantiated with a large dataset, implying that $\nd_j$ indeed returns a weak learner with high probability.

\begin{lemma}[Truthful aborts at half-levels] \label{lemma:truthful-abort-semi-H-half}
Suppose $\WLdepth{h+\frac{1}{2}} ( \,\cdot\, \|\, \tau_{h+\frac{1}{2}},\delta)$ (cf. \Cref{alg:semiauto_main}) is invoked on a dataset of at least \smash{$\nsample^\star (h+\frac{1}{2},\delta) = 2k e^{c k} \cdot \nsample^\star (h,\delta)$} instances where $c>0$ is the constant in the exponent of \smash{$\beta^{0,k}_0$} in \cref{lemma:beta00}. Then, for any index $j$ such that $p_{\nd_j} \ge e^{-c k}$, $\Pr \big( |D_{\nd_j}| < \nsample^\star (h, \delta) \mid \calH_{\nd_{j-1}} \big) \le \delta$.
\end{lemma}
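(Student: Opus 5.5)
The plan is a direct concentration argument, following the proof of \Cref{lemma:truthful-abort-semi-1} but simplified because $L_{h+\frac12}=1$.

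First, fix the structure of the data. In the invocation of $\WLdepth{h+\frac{1}{2}}$ corresponding to $\nd$, the input dataset $\Dnd$ is truncated to $\nsample^\star(h+\frac12,\delta)$ instances and split into $k$ equal parts $\Dprompt{0},\dots,\Dprompt{k-1}$. Part $\Dprompt{j}$ therefore contains $m \triangleq \nsample^\star(h+\frac12,\delta)/k = 2e^{ck}\,\nsample^\star(h,\delta)$ instances.

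By \Cref{lemma:node-independence}, these instances are independent draws from $\lambdahat_{\nd}$. Since the parts are disjoint, $\Dprompt{j}$ is also independent of $\calH_{\nd_{j-1}}$. Conditioned on $\calH_{\nd_{j-1}}$, the models in $\calF_j$ are fixed, so the weight function $w_{\nd_j}$ is fixed as well. Each instance $\bx\in\Dprompt{j}$ then enters $D_{\nd_j}$ independently. With $L=1$, $\split$ returns the single instance itself and $\subsample$ reduces to ordinary rejection sampling. The acceptance probability is exactly $p_{\nd_j}$, and the coins $\eta$ are fresh for each instance.

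Second, apply concentration. Conditionally on $\calH_{\nd_{j-1}}$, the size $|D_{\nd_j}|$ is distributed as $\mathrm{Bin}(m,p_{\nd_j})$. When $p_{\nd_j}\ge e^{-ck}$, its mean is at least $m e^{-ck} = 2\nsample^\star(h,\delta)$. The multiplicative Chernoff bound with deviation factor $1/2$ gives
\[
\Pr\big(|D_{\nd_j}| < \nsample^\star(h,\delta) \mid \calH_{\nd_{j-1}}\big) \le \exp\big(-\nsample^\star(h,\delta)/4\big).
\]

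Third, convert this tail into $\delta$. This step needs $\nsample^\star(h,\delta)\ge 4\log(1/\delta)$. At the base level, $\nsample^\star(0,\delta)=\nweak(\delta)\vee\log(1/\delta)$. The recursion \cref{eq:nsamplestar} only multiplies by factors at least $1$, and it is applied with $\delta$ shrinking. Hence $\nsample^\star(h,\delta)\ge \log(1/\delta)$ holds at every level.

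The main obstacle is the constant $4$. I would absorb it in one of two ways: enlarge the factor $2$ in $\nsample^\star(h+\frac12,\cdot)$ to a larger constant, or note that the base case can be taken as $C\log(1/\delta)$ without changing any stated rates. Either way the tail is at most $\delta$.

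A secondary point is independence: the accepted count must be a sum of independent Bernoullis given $\calH_{\nd_{j-1}}$. This holds because $\Dprompt{j}$ is disjoint from the earlier parts that generated $\calF_j$, and because each instance receives its own independent coins $\eta$.
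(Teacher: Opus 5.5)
Your proposal is correct and follows the paper's argument. The paper defers to the proof of \Cref{lemma:truthful-abort-semi-1}: it writes $|D_{\nd_j}|$, conditionally on $\calH_{\nd_{j-1}}$, as a sum of independent Bernoulli$(p_{\nd_j})$ indicators over the $1/k$ fraction of $\Dnd$ and applies the multiplicative Chernoff bound. Your observation that the factor $2$ in $2ke^{ck}$ only gives a tail of $\exp(-\nsample^\star(h,\delta)/4)$, so a constant must be absorbed into the base case or the multiplier, is a legitimate point that the paper glosses over with ``sufficiently large'' constants.
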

\begin{proof}
The proof of this result is identical to that of \Cref{lemma:truthful-abort-semi-1}, where we use the Chernoff bound to control the deviations of $n'$ coins each having probability of heads at least $p_{\nd_j}$.
\end{proof}

\noindent The complement case is when $p_{\nd_j}$ is small and $\nd_j$ is no longer likely to succeed. Similar to in the analysis of $\WLdepth{}.\warmup$ where we established an explicit formula for $p_j$, $p_{\nd_j}$ satisfies the equation,
\begin{equation*} \label{eq:pj-formula-1}
    p_{\nd_j} = \bbE_{X \sim \lambdahat_{\nd}} \left[ \frac{\alpha^{j,k}_{\rank (X \| \nd_j)}}{\alpha^{j,k}_{\max}} \ \middle| \ \calH_{\nd_{j-1}} \right]
\end{equation*}
Which corresponds to setting $L=1$ in \cref{eq:pj-formula}. In particular, combining with \Cref{lemma:truthful-abort-semi-H-half}, the test error decomposition in \Cref{lemma:depthH-breakdown-half}, the upper bound on $\beta_0^{0,k}$ from \Cref{lemma:CoT-alphamax-sum} and simplifying, we arrive at the following result which implies \Cref{lemma:induction-half}.

\begin{lemma}[Analysis of aggregation step] \label{lemma:easy-half}
Let $\fhat^{(\nd)}$ denote the model returned by the instantiation of \smash{$\WLdepth{h+\frac{1}{2}}$} corresponding to some \smash{$\nd \in \calT_{h+\frac{1}{2}}$}. Then, assuming the hypothesis $\IndHyp_h (\delta)$, we have that with probability at least $1 - 2 k \delta$,
\begin{equation*}
    \Pr_{X \sim \lambdahat_{\nd}} \big[ \fhat^{(\nd)} (X) \ne \fstar_{\tau_h} (X) \big] \le 3 \sqrt{k} \cdot e^{-c k}
\end{equation*}
Further, under $\IndHyp_h (\delta)$, this guarantee is achieved if $\WLdepth{h+\frac{1}{2}}$ draws $\nsample^\star (h+ \frac{1}{2}, 2k\delta)$ samples from \smash{$\lambdahat_{\nd}$}. The number of queries made to $\etoe(\cdot)$ is at most \smash{$\nquery^\star (h+ \frac{1}{2}, 2k\delta)$}.
\end{lemma}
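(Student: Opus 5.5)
We start from the test-error decomposition of \Cref{lemma:depthH-breakdown-half}. This splits the error of $\fhat^{(\nd)}$ under $\lambdahat_{\nd}$ into two pieces: the initial-potential term $\beta^{0,k}_0$, which is at most $e^{-ck}$ by \Cref{lemma:beta00}, and the sum $\sum_j (\err_j-\err_\star)\,\bbE_{X\sim\lambdahat_{\nd}}[\alpha^{j,k}_{\rank(X\|\nd_j)}\mid\calH_{\nd_{k-1}}]$. We then split the indices $j$ according to whether $\abort_h[\nd_j]$ occurs, i.e.\ whether $p_{\nd_j}\le p^\star_j=e^{-ck}$. This is $\calH_{\nd_{j-1}}$-measurable, so the split is legitimate.

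\textbf{Non-aborted indices.} Here we apply \Cref{lemma:truthful-abort-semi-H-half}: since the input dataset has at least $\nsample^\star(h+\frac12,\cdot)=2ke^{ck}\nsample^\star(h,\delta)$ instances, we get $|D_{\nd_j}|\ge\nsample^\star(h,\delta)$ except with probability $\delta$. By \Cref{lemma:node-independence}, the instances in $D_{\nd_j}$ are independent. Because $L=1$, inverted sampling reduces to exact rejection sampling, so they are distributed exactly according to $\lambda_{\nd_j}$, and there is no change of measure to pay for. \Cref{lemma:weak-learner-H-half} (i.e.\ $\IndHyp_h(\delta)$) then gives $\err_j\le\err_\star$ with probability $1-\delta$. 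Consequently each such term is nonpositive. A union bound over the $k$ indices, with two failure events per index, yields total failure probability $2k\delta$.

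\textbf{Aborted indices.} Here we cannot control $\err_j$, so we bound the term crudely by $(\err_j-\err_\star)\le 1$ times $\bbE[\alpha^{j,k}_{\rank}]$. By the formula for $p_{\nd_j}$ with $L=1$, this expectation equals $\alpha^{j,k}_{\max}\,p_{\nd_j}\le\alpha^{j,k}_{\max}e^{-ck}$. The step that needs the most care is summing these contributions. Using the binomial-mode bound from the proof of \Cref{lemma:CoT-alphamax-sum}, namely $\alpha^{j,k}_{\max}\le 0.95/\sqrt{k-j-1}$ for $j\le k-2$ and at most $1$ for $j=k-1$, we get $\sum_j\alpha^{j,k}_{\max}\le 0.95\cdot 2\sqrt{k}+1\le 2\sqrt k+1$. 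Hence the aborted contribution is at most $(2\sqrt k+1)e^{-ck}$.

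\textbf{Conclusion.} Adding the $\beta^{0,k}_0\le e^{-ck}$ term gives total error at most $(2\sqrt k+2)e^{-ck}\le 3\sqrt k\,e^{-ck}$ for $k\ge4$ (adjusting constants otherwise). For the sample count, the algorithm uses exactly $\nsample^\star(h+\frac12,2k\delta)$ draws by the recursion \eqref{eq:nsamplestar}. For the query count, there are no splitting queries, since $L_{h+\frac12}=1$ means the only cost is labeling each instance's terminal state once to compute ranks. The $k$ child invocations each cost at most $\nquery^\star(h,\delta)$ by $\IndHyp_h$. Together these match \eqref{eq:nquerystar}.
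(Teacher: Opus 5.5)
Your proposal is correct and follows essentially the same route as the paper. Both use the decomposition of \Cref{lemma:depthH-breakdown-half}, split on $\abort_h[\nd_j]$, get $\err_j\le\err_\star$ on non-aborted children via \Cref{lemma:truthful-abort-semi-H-half} and $\IndHyp_h(\delta)$, bound aborted terms by $\alpha^{j,k}_{\max}p_{\nd_j}\le\alpha^{j,k}_{\max}e^{-ck}$, take a union bound to reach $1-2k\delta$, and use the query recursion \eqref{eq:nquerystar}. The only cosmetic difference is that the paper obtains $\sum_j\alpha^{j,k}_{\max}\le 2\sqrt{k}$ directly from \Cref{lemma:CoT-alphamax-sum} (i.e., $\sum_j\rjmax\le\frac18$), which removes your $k\ge 4$ caveat; this is harmless anyway, since $k$ is taken large.
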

\begin{proof}
We first prove the bound on the accuracy, and subsequently show the recursion on the sample and query complexity.

\paragraph{Bound on accuracy.} Conditioned on the event that $\abort_h [\nd_j]$ is false for some $j$, we have that $\err_j \le \err_\star$. By a union bound, across all iterations where $\abort_h [\nd_j]$ is true, by \Cref{lemma:depthH-breakdown-half}, and using the definition of $p_{\nd_j}$, w.p. at least $1 - 2k\delta$, 
\begin{align*}
    \Pr_{X \sim \lambdahat_{\nd}} \big[ \fhat^{(\nd)} (X) \ne \fstar_{\tau_h} (X) \big] &\le \beta_0^{0,k} + \sum_{j=0}^{k-1} e^{-k} \cdot \alpha^{j,k}_{\max} \\
    &\overset{(i)}{\le} e^{-c k} + 2 \sqrt{k} \cdot e^{-c k} \\
    &\le 3 \sqrt{k} \cdot e^{-c k}
\end{align*}
In $(i)$, we use \Cref{lemma:CoT-alphamax-sum,lemma:beta00} (the bound on $\sum_{j=0}^{k-1} \alpha^{j,k}_{\max}$ is implied by the bound on $\sum_{j=0}^{k-1} \rjmax$). The bound on the number of samples drawn required from $\lambdahat_\nd$ is derived in \Cref{lemma:truthful-abort-semi-H-half}.

\paragraph{Bound on query and sample complexity of $\WLdepth{h+\frac{1}{2}}$.}

$\WLdepth{h+\frac{1}{2}}$ must be invoked on a dataset of size \smash{$\nsample^\star \big(h + \frac{1}{2}, 2k\delta\big) = \nsample^\star (h, \delta) \cdot 2 k e^{ck}$} to be able to invoke \Cref{lemma:truthful-abort-semi-H-half}. On the other hand, the query complexity of $\WLdepth{h+\frac{1}{2}}$ is upper bounded by,
\begin{align} \label{eq:110000}
    \nquery^\star \big(h + \tfrac{1}{2}, 2k\delta\big) &= \nsample^\star (h+\tfrac{1}{2}, 2k\delta) + k \cdot \nquery^\star (h,\delta).
\end{align}
The $\nsample^\star (h+\tfrac{1}{2},2k\delta)$ term accounts for labeling the terminal state in each invocation of $\subsample$ - note that $\WLdepth{h+\frac{1}{2}}$ processes a dataset of at most $\nsample^\star (h+\tfrac{1}{2},2k\delta)$ instances (cf.~\Cref{alg:semiauto_main:dbound} of \Cref{alg:semiauto_main}). The $k \cdot \nquery^\star (h,\delta)$ term arises as the cost of training the $k$ children invocations of $\WLdepth{h}$.
\end{proof}

\subsection{Proof of \Cref{lemma:induction}}

This is the more challenging of the two induction steps (\Cref{lemma:induction,lemma:induction-half}), since we have to deal with the composition step. Fortunately, the analysis is quite similar to the one we carried out for $\WLdepth{}.\warmup$. To draw the parallel most easily, in this section we will denote the branching factor as $k \gets k_{h+1}$, and the decomposition factor as $L \gets L_{h+1}$.

\medskip
\noindent We define a sequence of events $\calK_{\nd_0},\cdots\calK_{\nd_{k-1}} \subseteq \bX_h^L \times [0,1]^{kL}$ which play the same role as the $\calK_0,\cdots,\calK_{k-1}$ events in \Cref{lemma:depth1-breakdown}; here $\calK_{\nd_j} \subseteq \bX_h^L \times [0,1]^{(j+1)L}$. We index the nodes in $\ch (\nd)$ as $( \nd_0,\cdots,\nd_{k-1})$. To prove \Cref{lemma:induction}, we translate the analysis of $\WLdepth{}.\warmup$ (cf. \Cref{theorem:depth1-final}) by setting $\bX \gets \bX_h$, $\rho \gets \lambdahat_{\nd}$. In correspondence with this choice, we denote $\underline{\lambdahat_{\nd}}$ as the distribution obtained from $\lambdahat_{\nd}$ by the same operation which takes $\rho \mapsto \sigma$, by converting it into a distribution over $(S \times \Sigma^{\tau_h})^{L}$ from a distribution over $S \times \Sigma^{\tau_{h+1}}$ by labeling every $L^{\text{th}}$ state via $f^\star$. On the other hand, the equivalent of $\lambda_j$ in \Cref{lemma:density-ratio-1} is the following distribution over $\bX_h$,
\begin{equation} \label{eq:lambdahatnodejstar}
    \lambda_{\nd_j} (\cdot) \propto \left( \frac{1}{L} \sum_{i=0}^{L-1} \Pr \big( X_i = \cdot \mid (\underline{X},\bm{\eta}) \in \calK_{\nd_j}, \calH_{\nd_{j-1}} \big) \right) w_{\nd_j} (\cdot),
\end{equation}
where the probability is computed over $X = (X_0,\cdots,X_{L-1}) \sim \underline{\lambdahat_{\nd}}$ and $\bm{\eta} \sim \Unif([0,1])^{\otimes kL}$, and recall $\calH_{\nd_{j-1}}$ captures all randomness until the dataset $D_{\nd_j}$ is collected. The definition of the events $\calK_{\nd_j}$ is deferred to below \cref{eq:Und-def}, while the weight function \smash{$w_{\nd_j} (X) = \alpha_{r}^{j,k}$} for $r = \rank (X \| \nd_j)$ is as defined in \cref{eq:wnode}. For \smash{$\underline{X} = (X_0,\cdots,X_{L-1}) \sim \underline{\lambdahat_{\nd}}$}, we analogously define the regression of $\nd_j$ as the random variable,
\begin{equation*}
    R_{\nd_j} = \sum_{i=0}^{L-1} \alpha^{j,k}_{\rank (X_i \| \nd_j)}.
\end{equation*}
We first present a decomposition of the test error analogous to \Cref{lemma:depth1-breakdown} in the analysis of $\WLdepth{}.\warmup$. Prior to doing so, we mention that expressions involving events on different spaces are interpreted as described in \cref{eq:intersection,eq:containment}.

\begin{lemma}[Test-error decomposition] \label{lemma:depthH-breakdown} Let $\fhat^{(\nd)}$ denote the model returned by the instantiation of $\WLdepth{h+1}$ corresponding to $\nd \in \calT_{h+1}$. Let $\fhat^{(\nd_j)}$ denote the model returned by $\WLdepth{h+\frac{1}{2}}$ corresponding to $\nd_j \in \ch(\nd)$. For any sequence of events \smash{$\calK_{\nd_{k-1}} \subseteq \cdots \subseteq \calK_{\nd_0} \subseteq \bX_h^L \times [0,1]^{kL}$}, 
\begin{align*}
    &\Pr_{X \sim \lambdahat_{\nd}} \big[ \fhat^{(\nd)} (X) \ne \fstar_{\tau_{h+1}} (X) \big] \\
    &\le L \cdot \beta^{0,k}_0 + \sum_{j=0}^{k-1}  (\err_j - \err_\star) \cdot \bbE \big[ R_{\nd_j} \cdot \bbI ( (\underline{X},\bm{\eta}) \in \calK_{\nd_j}) \mid \calH_{\nd_{k-1}} \big] + \Pr \big( (\underline{X},\bm{\eta}) \not\in \calK_{\nd_{k-1}} \mid \calH_{\nd_{k-1}} \big)
\end{align*}
where, $\underline{X} \sim \underline{\lambdahat_{\nd}}$, $\bm{\eta} \sim \Unif([0,1])^{\otimes kL}$, the weight function $\beta^{j,k}_r$ is defined in \cref{eq:alpha,eq:rank} while $\err_j$ is the test error of \smash{$\fhat^{(\nd_j)}$ under the distribution $\lambda_{\nd_j}$, $\err_j = \Pr_{X \sim \lambda_{\nd_j}} \big[ \fhat^{(\nd_j)} (X) \ne \fstar_{\tau_h} (X) \big]$}.
\end{lemma}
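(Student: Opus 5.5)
The plan is to follow the proof of \Cref{lemma:depth1-breakdown} verbatim, under the substitutions $\rho \gets \lambdahat_{\nd}$, $\sigma \gets \underline{\lambdahat_{\nd}}$, $\bX \gets \bX_h$, $\tau \gets \tau_h$, $\calK_j \gets \calK_{\nd_j}$, $\lambda_j \gets \lambda_{\nd_j}$ and $\calH_{j-1} \gets \calH_{\nd_{j-1}}$. Only two structural facts were used there, and both still hold. First, $\fhat^{(\nd)} = \ftilde^{\circ L}$ with $\ftilde = \maj(\{\fhat^{(\nd_j)}\}_{j})$. Second, the instance $X \sim \lambdahat_{\nd}$ is a deterministic function of $\underline{X} \sim \underline{\lambdahat_{\nd}}$, obtained by labeling every $\tau_h$-th state via $\fstar$. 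The proof then proceeds in three steps.

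\textbf{Step 1 (composition union bound, as in \Cref{obs:lg}).} If $\fhat^{(\nd)}(X) \ne \fstar_{\tau_{h+1}}(X)$, then some block $i$ has $\ftilde(X_i) \ne \fstar_{\tau_h}(X_i)$. A plurality error on $X_i$ implies that at most half of the $k$ models are correct on $X_i$, i.e.\ $\bbI(\rank_k(X_i) \le k/2) = \beta^{k,k}_{\rank_k(X_i)}$. I will then split according to $\calK_{\nd_{k-1}}$:
\[
\Pr[\text{err}] \le \bbE\Big[\textstyle\sum_{i=0}^{L-1} \beta^{k,k}_{\rank_k(X_i)} \,\bbI\big((\underline{X},\bm{\eta}) \in \calK_{\nd_{k-1}}\big)\Big] + \Pr\big((\underline{X},\bm{\eta}) \notin \calK_{\nd_{k-1}} \mid \calH_{\nd_{k-1}}\big).
\]
This produces the last term of the bound.

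\textbf{Step 2 (potential telescoping).} Define $\Phi_j = \bbE\big[\sum_i \beta^{j,k}_{\rank_j(X_i)} \bbI(\calK_{\nd_{j-1}}) \mid \calH_{\nd_{k-1}}\big]$, with $\calK_{\nd_{-1}}$ the full space, so that $\Phi_0 = L\beta^{0,k}_0$ because every rank is $0$ at $j=0$. Fix $j$ and $X_i$, and write $r = \rank_j(X_i)$. The recursion \cref{eq:alpha} gives the pointwise identity
\[
\beta^{j+1,k}_{\rank_{j+1}(X_i)} = \beta^{j,k}_r + \alpha^{j,k}_r\big(\bbI(\fhat^{(\nd_j)}(X_i) \ne \fstar_{\tau_h}(X_i)) - \err_\star\big).
\]
I will multiply this identity by $\bbI(\calK_{\nd_j})$ and sum over $i$. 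The nesting $\calK_{\nd_j} \subseteq \calK_{\nd_{j-1}}$ together with $\beta \ge 0$ then gives
\[
\Phi_{j+1} \le \Phi_j + \bbE\Big[\textstyle\sum_i w_{\nd_j}(X_i)\big(\bbI(\text{miss}_i) - \err_\star\big)\bbI(\calK_{\nd_j})\Big].
\]

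\textbf{Step 3 (identifying $\err_j$).} By the definition \cref{eq:lambdahatnodejstar} of $\lambda_{\nd_j}$,
\[
\bbE\Big[\textstyle\sum_i w_{\nd_j}(X_i)\, g(X_i)\, \bbI(\calK_{\nd_j})\Big] = \bbE\big[R_{\nd_j}\bbI(\calK_{\nd_j})\big] \cdot \bbE_{\lambda_{\nd_j}}[g]
\]
for any $g$. Since the mixture is normalized by exactly this mass, the increment equals $(\err_j - \err_\star)\,\bbE[R_{\nd_j}\bbI(\calK_{\nd_j}) \mid \calH_{\nd_{k-1}}]$. Summing over $j = 0,\dots,k-1$ and combining with Step 1 yields the claim.

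The main obstacle is the measurability bookkeeping in Step 3. The event $\calK_{\nd_j}$ depends on $\bm{\eta}_{0:j}$ and on $\calH_{\nd_{j-1}}$. The model $\fhat^{(\nd_j)}$ is $\calH_{\nd_j}$-measurable but independent of the fresh test draw $(\underline{X},\bm{\eta})$. So conditioning on $\calH_{\nd_{k-1}}$ rather than on $\calH_{\nd_{j-1}}$ leaves the test-time law unchanged, and this is what makes the normalization identity valid. I will check this exactly as in the warmup case, using the conventions \cref{eq:intersection,eq:containment} for events on different spaces.
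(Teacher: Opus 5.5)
Your proposal is correct and follows the paper's route. The paper proves this lemma by instantiating \Cref{lemma:depth1-breakdown} with $\rho \gets \lambdahat_{\nd}$, $\sigma \gets \underline{\lambdahat_{\nd}}$, $\lambda_j \gets \lambda_{\nd_j}$, $\calK_j \gets \calK_{\nd_j}$ and $\calH_j \gets \calH_{\nd_j}$, and that lemma is proved by your three steps: the composition union bound of \Cref{lemma:loss-decomp-Phi}, the potential recursion of \Cref{lemma:recursion}, and telescoping from $\Phi_0 = L\beta^{0,k}_0$. Your pointwise identity $\beta^{j+1,k}_{\rank_{j+1}(X_i)} = \beta^{j,k}_r + \alpha^{j,k}_r\bigl(\bbI(\text{miss}) - \err_\star\bigr)$ is a compact form of the paper's case split on rank transitions, and conditioning on $\calH_{\nd_{k-1}}$ while using independence of the fresh draw $(\underline{X},\bm{\eta})$ is a clean way to justify the normalization step.
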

\begin{proof}
The proof of this result follows from the statement of \Cref{lemma:depth1-breakdown} by setting $\rho \gets \lambdahat_{\nd}$, which corresponds to setting $\sigma \gets \underline{\lambdahat_{\nd}}$ and $\lambda_j \gets \lambda_{\nd_j}$, and $\calK_j \gets \calK_{\nd_j}$ and $\calH_j \gets \calH_{\nd_j}$.
\end{proof}

\noindent The prediction error of $\fhat^{(\nd_j)}$ is measured under $\lambda_{\nd_j}$ in the definition of $\err_j$, but the corresponding invocation of $\WLdepth{h+\frac{1}{2}}$ trains on data drawn from a different distribution \smash{$\lambdahat_{\nd_j}$}. Our next result shows that these two distributions satisfy a density ratio bound when the event $\calK_{\nd_j}$ is chosen appropriately.

\begin{lemma}[Density ratio bound at depth $h+\frac12$] \label{lemma:density-ratio-H}
Fix some $0 \le j \le k-1$, and recall that \smash{$\lambdahat_{\nd_j}$} denote the distribution over instances the i.i.d. dataset \smash{$D_{\nd_j}$} is sampled from. Consider the filtration \smash{$(\calH_{\nd_j})_{j=0}^{k-1}$} and sequence of events, \smash{$(\calU_{\nd_j})_{j=0}^{k-1}$} such that $\calU_{\nd_j} \subseteq \bX_h^L$ and is a measurable function of $\calH_{j-1}$. Fix some constant $c \in (0,1)$. Construct a sequence of events $\calK_{\nd_j} \in \bX_h^L \times [0,1]^{(j+1)L}$ via:
\begin{equation*}
    \calK_{\nd_j} \gets \construct ( \calU_{\nd_j} \cap \calK_{{\nd_{j-1}}} \, \| \, c, \underline{\lambdahat_{\nd}}, \calH_{{\nd_{j-1}}} )
\end{equation*}
where $\calK_{\nd_{-1}} = \bX_h^L$ and $\construct(\cdot)$ is defined in \Cref{lemma:density-ratio-1}. Then, for each $0 \le j \le k-1$,
\begin{equation*}
    \calK_{\nd_j} \subseteq \calU_{\nd_j} \cap \calK_{{\nd_{j-1}}} \quad \text{and} \quad \Pr ( \calK_{\nd_j} \mid \calU_{\nd_j} \cap \calK_{\nd_{j-1}} , \calH_{\nd_{j-1}} ) \ge 1-c
\end{equation*}
And the following density ratio bound is satisfied,
\begin{equation*}
    \left\| \frac{\lambda_{\nd_j}}{\lambdahat_{\nd_j}} \right\|_\infty \lesssim \frac{ c^{-1} \log (L)}{\Pr \big( \calU_{\nd_j} \cap \calK_{{\nd_{j-1}}}  \mid |\calI_{\nd_j}| \ge 1 , \calH_{{\nd_{j-1}}} \big)}
\end{equation*}
Here, $\calI_{\nd_j}$ is the set of accepted indices in a run of split-and-inverted-sampling (\Cref{alg:split,alg:semi-subsample}) for $\bx \sim \lambdahat_{\nd}$ (i.e., \smash{$\underline{X} \sim \underline{\lambdahat_{\nd}}$}).

\end{lemma}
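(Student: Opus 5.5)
The plan is to obtain \Cref{lemma:density-ratio-H} as a direct instantiation of \Cref{lemma:density-ratio-1}, which is stated for an arbitrary filtration, an arbitrary sequence of $\calH_{j-1}$-measurable events $\calU_j\subseteq\bX^L$, and an arbitrary source distribution $\sigma$ over $\bX^L$. We use the substitution $\bX\gets\bX_h$, $\sigma\gets\underline{\lambdahat_{\nd}}$, $\calH_{j}\gets\calH_{\nd_j}$, $\calU_j\gets\calU_{\nd_j}$, $w_j\gets w_{\nd_j}$, and $\calF_j\gets\{\fhat^{(\nd_0)},\dots,\fhat^{(\nd_{j-1})}\}$. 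Under this substitution, $\lambda_j$ from \cref{eq:lambdaj} becomes exactly $\lambda_{\nd_j}$ from \cref{eq:lambdahatnodejstar}.

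The first step is to check that the objects entering $\construct$ and the inverted-sampling analysis have the same structure as in the warm-up. Since $\split(\cdot\,\|\,L)$ applied to $\bx\sim\lambdahat_{\nd}$ labels every $\tau_h$-th boundary state, $\underline{X}\sim\underline{\lambdahat_{\nd}}$ is a joint distribution over $\bX_h^L$ with possibly correlated coordinates. That is precisely the generality of \Cref{def:srmm}; no independence across blocks was used in \Cref{lemma:density-ratio-1}.

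Next, the coins $\bm{\eta}_j$ used by $\subsample$ in iteration $j$ are fresh i.i.d.\ uniforms, independent of $\calH_{\nd_{j-1}}$. Given $\calH_{\nd_{j-1}}$, the weights $w_{\nd_j}$ are fixed, because they depend only on the already-resolved siblings $\nd_0,\dots,\nd_{j-1}$, which are $\calH_{\nd_{j-1}}$-measurable.

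We also need the analogue of $\lambdahat_j$. We invoke \Cref{lemma:node-independence} to confirm that $D_{\nd_j}$ consists of independent draws from a single distribution $\lambdahat_{\nd_j}$, conditionally on $\calH_{\nd_{j-1}}$. Concretely, $\lambdahat_{\nd_j}$ is the law of the output of $\subsample(\split(\bx\,\|\,L)\,\|\,\calF_j,k)$ conditioned on $\neq\perp$, for $\bx\sim\lambdahat_{\nd}$. This matches the definition used in \Cref{lemma:density-ratio-1}. The set $\calI_{\nd_j}$ is the accepted index set in that run.

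With these identifications in place, the conclusions transfer verbatim. The containment $\calK_{\nd_j}\subseteq\calU_{\nd_j}\cap\calK_{\nd_{j-1}}$ and the conditional mass bound $\Pr(\calK_{\nd_j}\mid\calU_{\nd_j}\cap\calK_{\nd_{j-1}},\calH_{\nd_{j-1}})\ge 1-c$ are the complement form of \cref{eq:12311111}. The density ratio bound is \cref{eq:11231}, with $\log L$ now referring to $L=L_{h+1}$.

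The main obstacle, and the step I would scrutinize most carefully, is making sure \Cref{lemma:density-ratio-1} does not secretly use properties specific to $\rho$ at the top level. Two properties are at risk. The first is that $\rho$ is the law of an i.i.d.\ sample unaffected by earlier iterations. The second is that $\calU_j$ depends only on $\underline{X}$ and not on the coins.

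For the first, within a node, different $j$ use disjoint splits $\Dprompt{j}$ of $\Dnd$. Hence, conditionally on $\calH_{\nd_{j-1}}$, fresh instances in the $j$-th part are still distributed as $\lambdahat_{\nd}$. For the second, $\calU_{\nd_j}$ is defined, as in \cref{eq:Uj-def}, through $\ptilde$ and $R_{\nd_j}$, which are functions of $\underline{X}$ and $\calH_{\nd_{j-1}}$ only.

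If both checks pass, the proof reduces to citing \Cref{lemma:density-ratio-1} with $\sigma\gets\underline{\lambdahat_{\nd}}$.
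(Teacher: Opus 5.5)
Your proposal is correct and follows the paper's proof, which just observes that \Cref{lemma:density-ratio-1} applies verbatim with $\sigma \gets \underline{\lambdahat_{\nd}}$, $\lambda_j \gets \lambda_{\nd_j}$, $\lambdahat_j \gets \lambdahat_{\nd_j}$, $\calH_j \gets \calH_{\nd_j}$. Your additional checks are what that one-line reduction implicitly relies on, and they are fine: correlated blocks are allowed, the coins are fresh, the data splits are disjoint, and the weights are determined by $\calH_{\nd_{j-1}}$.
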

\begin{proof}
The statement and proof of this result mirrors that of \Cref{lemma:density-ratio-1}. The relationship between $\underline{\lambda_{\nd}}$, $\lambda_{\nd_j}$, $\lambdahat_{\nd_j}$ is identical to the relationship between $\sigma$, \smash{$\lambda_j$} and \smash{$\lambdahat_j$} in the earlier lemma.
\end{proof}

\noindent The density ratio bound from this lemma allows the transfer of learning guarantees under $\lambdahat_{\nd_j}$ to $\lambda_{\nd_j}$ via a change-of-measure argument in iterations where $\Pr \big( \calU_{\nd_j} \cap \calK_{{\nd_{j-1}}}  \mid |\calI_{\nd_j}| \ge 1 , \calH_{{\nd_{j-1}}} \big)$ is sufficiently large. As in the analysis of $\WLdepth{}.\warmup$, we define marginal and conditional acceptance probability of split-and-inverted-sampling (similar to $\ptilde_j$ in \cref{eq:pjtilde}), and a corresponding abort event to track which child invocations of $\WLdepth{h}$ are invoked on sufficiently large effective datasets. For $0 \le j \le k-1$, define,
\begin{equation*}
    \ptilde_{\nd_j} = \Pr ( |\calI_{\nd_j}| \ge 1 \mid \calK_{\nd_{j-1}}, \calH_{\nd_{j-1}}) = \bbE \left[ 1 - \prod_{i=0}^{L-1} \Bigg( 1 - \frac{\alpha^{j,k}_{\rank (X_i \| \nd_j)}}{\alpha^{j,k}_{\max}} \Bigg) \, \middle| \, \calK_{\nd_{j-1}}, \calH_{\nd_{j-1}} \right]
\end{equation*}
Note that $\ptilde_{\nd_j}$ is closely related to $p_{\nd_j} = \Pr ( |\calI_{\nd_j}| \ge 1 \mid \calH_{\nd_{j-1}})$, which is the probability that split-and-inverted-sampling accepts \smash{$\bx \sim \lambdahat_{\nd}$}. With this in place, we define,
\begin{equation*}
    \abort_{h+\frac{1}{2}} [\nd_j] = \big\{ \ptilde_{\nd_j} \le p^\star_j \big\}
\end{equation*}
where $p^\star_j$ is as defined earlier in \cref{eq:pjtilde}. With this in place, we explicitly instantiate the sequence of $\calK_{\nd_j}$ events. First, we define,
\begin{equation} \label{eq:Und-def}
    \calU_{\nd_j} = \big\{ \ptilde_{\nd_j} \le p^\star_j \text{ and } R_{\nd_j} \ge \mathfrak{r}_j \big\}^c,
\end{equation}
where the threshold $\mathfrak{r}_j$ is as defined earlier in \cref{eq:rj}. $\calU_{\nd_j}$ captures the event that the regression is controlled across all aborted iterations. This definition is the extension of the version we defined earlier in the analysis of $\WLdepth{}.\warmup$ in \cref{eq:Uj-def}. With this definition of $\calU_{\nd_j}$, $\calK_{\nd_j}$ is defined recursively as,
\begin{equation*}
    \calK_{\nd_j} \gets \construct ( \calU_{\nd_j} \cap \calK_{{\nd_{j-1}}} \, \| \, c, \underline{\lambdahat_{\nd}}, \calH_{{\nd_{j-1}}} ) \text{ with } c = \frac{1}{32k},
\end{equation*}
with $\calK_{\nd_{-1}} = \bX_h^L$.

\medskip
\noindent In the following lemma, we control the total probability mass of $\calK_{\nd_j}$. 

\begin{lemma}[Mass on good events] \label{lemma:notUk-H}
For all $0 \le j \le k-1$, $\Pr ( \calK_{\nd_j} \mid \calH_{\nd_{j-1}}) \ge \frac{15}{16}$.
\end{lemma}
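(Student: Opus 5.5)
We prove \Cref{lemma:notUk-H} by induction on $j$, mirroring the proof of \Cref{lemma:notUk} for $\WLdepth{}.\warmup$. The idea is to peel off the loss of mass one iteration at a time. Since $\calK_{\nd_j} \subseteq \calU_{\nd_j} \cap \calK_{\nd_{j-1}}$, the complement decomposes as
\begin{equation*}
    \calK_{\nd_j}^c \subseteq \calK_{\nd_{j-1}}^c \cup \big( \calK_{\nd_{j-1}} \cap \calU_{\nd_j}^c \big) \cup \big( (\calU_{\nd_j} \cap \calK_{\nd_{j-1}}) \setminus \calK_{\nd_j} \big).
\end{equation*}
Unrolling this decomposition down to $\calK_{\nd_{-1}} = \bX_h^L$ gives
\begin{equation*}
    \Pr(\calK_{\nd_j}^c \mid \calH_{\nd_{j-1}}) \le \sum_{j'=0}^{j} \Big[ \Pr\big(\calU_{\nd_{j'}}^c \mid \calK_{\nd_{j'-1}}, \calH_{\nd_{j'-1}}\big) + \Pr\big(\calK_{\nd_{j'}}^c \mid \calU_{\nd_{j'}} \cap \calK_{\nd_{j'-1}}, \calH_{\nd_{j'-1}}\big) \Big].
\end{equation*}
Each conditional probability is at least as large as the corresponding joint probability, so this unrolled sum is a valid upper bound.

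The second term in each summand is controlled by \Cref{lemma:density-ratio-H}, which gives a bound of at most $c = \frac{1}{32k}$ per iteration. For the first term, recall $\calU_{\nd_{j'}}^c = \{\ptilde_{\nd_{j'}} \le p^\star_{j'} \text{ and } R_{\nd_{j'}} \ge \mathfrak{r}_{j'}\}$. The condition on $\ptilde_{\nd_{j'}}$ is $\calH_{\nd_{j'-1}}$-measurable, so there are two cases. If iteration $j'$ does not abort, the term is zero. If it aborts, the analogue of \Cref{lemma:abort=>lowreg} at level $h+\frac12$ applies: the argument carries over verbatim with $\sigma \gets \underline{\lambdahat_{\nd}}$, because it only uses the product formula for $\ptilde_{\nd_{j'}}$ (conditional on $\calK_{\nd_{j'-1}}$) and Markov's inequality relating $R_{\nd_{j'}}$ to $1-\prod_i(1-\alpha_{r_i}/\alpha_{\max})$. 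This bounds the term by $\frac{1}{32k^2}$.

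Summing over $j' \le j \le k-1$ gives
\begin{equation*}
    \Pr(\calK_{\nd_j}^c \mid \calH_{\nd_{j-1}}) \le k\Big(\tfrac{1}{32k}+\tfrac{1}{32k^2}\Big) \le \tfrac{1}{16},
\end{equation*}
which is the claim.

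The main obstacle is the conditioning bookkeeping. Each $\calU_{\nd_{j'}}$ and $\calK_{\nd_{j'}}$ is defined relative to $\calH_{\nd_{j'-1}}$, while the statement conditions on the later $\calH_{\nd_{j-1}}$. This requires arguing that the fresh draw $(\underline{X},\bm\eta)$ is independent of the datasets used so far; \Cref{lemma:node-independence} provides this, since the instances in $D_\nd$ are never reused. Under that independence, conditioning on the larger filtration only fixes the previously random models and thresholds, and does not change the law of $(\underline{X},\bm{\eta})$. A second subtlety is the mismatch between the event definitions and the abort lemma: the event uses $R_{\nd_j} \ge \mathfrak{r}_j$ while the abort lemma bounds the strict tail $R > \mathfrak r_j$. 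Markov's inequality handles the non-strict version equally well, so this causes no difficulty.
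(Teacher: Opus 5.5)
Your proposal is correct and matches the paper's argument, whose proof of this lemma simply defers to that of \Cref{lemma:notUk}: the paper peels off one iteration multiplicatively, $\Pr(\calK_{\nd_j}\mid\calH_{\nd_{j-1}}) \ge (1-c)\,(1-\tfrac{1}{32k^2})\,\Pr(\calK_{\nd_{j-1}}\mid\calH_{\nd_{j-2}})$, while you unroll the same two losses additively by a union bound (the $c=\tfrac{1}{32k}$ loss from $\construct$ and the Markov-based abort bound on $\calU_{\nd_j}^c$), and both give at least $1-\tfrac{1}{32}-\tfrac{1}{32k}\ge\tfrac{15}{16}$. One misattribution: the independence your conditioning bookkeeping needs holds simply because $(\underline{X},\bm{\eta})$ is a fresh test draw, independent of the training history by construction; it does not come from \Cref{lemma:node-independence}, which concerns independence among the training instances in $\Dnd$.
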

\begin{proof}
The proof of this result is identical to that of \Cref{lemma:notUk}.
\end{proof}

\noindent With this in place, in any iteration $j$ which does not abort, i.e., $\ptilde_{\nd_j} \ge p^\star_j$, following the analysis we described earlier in \cref{eq:drb-den-bd-1} to \cref{eq:drb-den-bd-2}, allows lower bounding \smash{$\Pr \big( \calU_{\nd_j} \cap \calK_{\nd_{j-1}} \mid |\calI_{\nd_j}| \ge 1, \calH_{\nd_{j-1}} \big)$} by $\Omega(\ptilde_{\nd_j})$. This implies that in any such iteration we have the density ratio bound,
\begin{equation} \label{eq:lambdalambdahat-dbound-H}
    \left\| \frac{\lambda_{\nd_j}}{\lambdahat_{\nd_j}} \right\|_\infty \lesssim k^{7/2} \log(L)
\end{equation}
Furthermore, in any non-aborted iteration $j$, we also show that the size of the dataset $D_{\nd_j}$ passed on to the child node $\nd_j$ to train a model via $\WLdepth{h}$.

\begin{lemma}[Large $p_{\nd_j} \implies \nd_j$ instantiated with a large dataset] \label{lemma:truthful-abort-semi-H}
Fix any $0 \le j \le k-1$ such that $\ptilde_{\nd_j} \ge p^\star_j$. If $\WLdepth{h+1}$ is invoked on a dataset with at least \smash{$C_2 k^{7/2} \log(L) \cdot \nsample^\star (h+\frac{1}{2}, \delta)$} instances for sufficiently large $C_2>0$, then \smash{$\Pr ( |D_{\nd_j}| < \nsample^\star ( h + \frac{1}{2}, \delta) \mid \calH_{\nd_j} ) \le \delta$}.
\end{lemma}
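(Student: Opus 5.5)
The plan is to mirror the proof of \Cref{lemma:truthful-abort-semi-1}, replacing $\rho$ by $\lambdahat_{\nd}$ and $\sigma$ by $\underline{\lambdahat_{\nd}}$. Let $n'$ denote the number of instances in the $j^{\text{th}}$ part $\Dprompt{j}$ of the dataset of $\nd$. This is a $1/k$ fraction of the truncated input, so $n' \ge C_2 k^{5/2} \log(L) \cdot \nsample^\star(h+\frac12,\delta)$ after absorbing constants. By \Cref{lemma:node-independence}, these $n'$ instances are independent draws from $\lambdahat_{\nd}$, and they are also independent of $\calH_{\nd_{j-1}}$, because each instance of $D_{\nd}$ is routed to exactly one child and never reused. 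Conditioned on $\calH_{\nd_{j-1}}$, the models in $\calF_j$ are therefore fixed. Each instance then independently contributes an element to $D_{\nd_j}$ (namely $O \ne \perp$ in \Cref{alg:semiauto_main}) with probability exactly $p_{\nd_j} = \Pr(|\calI_{\nd_j}| \ge 1 \mid \calH_{\nd_{j-1}})$, since $\split$ and $\subsample$ use fresh coins $\bm{\eta}_j$. Hence $|D_{\nd_j}|$, conditioned on the history, is distributed as $\mathrm{Bin}(n', p_{\nd_j})$.

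The next step is to lower bound $p_{\nd_j}$ using the non-abort hypothesis $\ptilde_{\nd_j} \ge p^\star_j$. As in \cref{eq:pj-lb},
\[
p_{\nd_j} \ge \Pr\big(|\calI_{\nd_j}|\ge1,\ \calK_{\nd_{j-1}} \mid \calH_{\nd_{j-1}}\big) = \ptilde_{\nd_j}\cdot \Pr(\calK_{\nd_{j-1}}\mid \calH_{\nd_{j-1}}) \ge \tfrac{15}{16}\, p^\star_j ,
\]
where the last inequality uses \Cref{lemma:notUk-H}. The case $j=0$ is trivial, since $\calK_{\nd_{-1}}$ is the whole space. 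Recalling $p^\star_j = 1/(1024 k^{5/2})$, this gives $\bbE[|D_{\nd_j}| \mid \calH_{\nd_{j-1}}] \ge n' p_{\nd_j} \gtrsim C_2 \log(L)\, \nsample^\star(h+\frac12,\delta)$. For $C_2$ large, this mean is at least twice $\nsample^\star(h+\frac12,\delta)$.

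It then remains to apply a multiplicative Chernoff bound. It gives $\Pr(\mathrm{Bin}(n',p) < np/2) \le \exp(-np/8)$. Since $\nsample^\star(h+\frac12,\delta) \ge \log(1/\delta)$, which holds by the base case $\nsample^\star(0,\delta) \ge \log(1/\delta)$ and monotonicity of the recursion \cref{eq:nsamplestar}, and $C_2$ is large, the exponent $n'p_{\nd_j}/8$ exceeds $\log(1/\delta)$. The failure probability is therefore at most $\delta$. If the algorithm truncates $D_{\nd_j}$ at the child, this is harmless: we only need a lower bound on the size.

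The main obstacle is the conditioning. The statement conditions on $\calH_{\nd_j}$, while the natural argument conditions on $\calH_{\nd_{j-1}}$. I would argue that the claim is really about the law of $|D_{\nd_j}|$ given the history up to its collection, which is $\calH_{\nd_{j-1}}$ in the paper's indexing convention. I would also need to check carefully that $\ptilde_{\nd_j}$ and $\calK_{\nd_{j-1}}$ are $\calH_{\nd_{j-1}}$-measurable objects, so that the non-abort condition can legitimately be treated as fixed when invoking Chernoff. This measurability is what \Cref{lemma:density-ratio-H} and the construction of $\calU_{\nd_j}$ provide.
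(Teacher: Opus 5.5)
Your proposal is correct and follows the paper's proof. The paper likewise treats $|D_{\nd_j}|$ as a binomial count over the $1/k$ fraction of $\Dnd$, with success probability $p_{\nd_j} \ge \frac{15}{16}\ptilde_{\nd_j} \ge \frac{15}{16}p^\star_j$ (via \cref{eq:pj-lb} and \Cref{lemma:notUk-H}), and closes with a multiplicative Chernoff bound for large $C_2$. Your reading of the conditioning as being on $\calH_{\nd_{j-1}}$ is also right: it matches \Cref{lemma:truthful-abort-semi-1}, whereas conditioning on $\calH_{\nd_j}$, which already contains $D_{\nd_j}$, would make the probability degenerate.
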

\begin{proof}
The proof of this result follows the same approach as that of \Cref{lemma:truthful-abort-semi-1}, where we use a Chernoff bound to control the deviations of $|D_{\nd_j}|$ which is an independent subsample of a $1/k$ fraction of $\Dnd$ with probability $p_{\nd_j}$, and noticing that $p_{\nd_j} \ge \Omega (p_j^\star)$ in non-aborted iterations (cf. \cref{eq:pj-lb}). Using the definition of $p^\star_j$ completes the proof.
\end{proof}

\medskip
\noindent Next, we use the density ratio bound in \cref{eq:lambdalambdahat-dbound-H}, along with the sufficiently large size of the dataset \smash{$D_{\nd_j}$} in any non-aborted iteration $j$ (as implied by \Cref{lemma:truthful-abort-semi-H}) to establish weak learning guarantees for \smash{$\fhat^{(\nd_j)}$}. By the induction hypothesis at the previous depth (\smash{$\IndHyp_{h+\frac{1}{2}}$}), and a change of measure argument, we next show that the model $\fhat^{(\nd_j)}$  satisfies a weak learning guarantee under the distribution $\lambda_{\nd_j}$.

\begin{lemma}[Weak learning guarantee at depth $h+\frac{1}{2}$] \label{lemma:weak-learner-H}
Consider any iteration $j$ such that $\ptilde_{\nd_j} \ge p^\star_j$. Assume \smash{$\IndHyp_{h+\frac{1}{2}} (\delta)$} and furthermore that \smash{$k_{h+1},\log(L_{h+1}) \le \log^{\calO(1)}(T)$}. Suppose \smash{$k_{h+\frac{1}{2}} = c_1 \log\log(T)$} for a sufficiently large constant $c_1$, then with probability at least $1-2\delta$, the model \smash{$\fhat^{(\nd_j)}$} returned by $\nd_j$ satisfies,
\begin{align*}
    \Pr_{X \sim \lambda_{\nd_j}} \big[ \fhat^{(\nd_j)} (X) \ne \fstar_{\tau_h} (X) \big] = \err_j \le \err_\star = \frac{1}{4}
\end{align*}
\end{lemma}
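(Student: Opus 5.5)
Conditioned on $\calH_{\nd_{j-1}}$, the plan is to reduce to the inductive hypothesis $\IndHyp_{h+\frac{1}{2}}(\delta)$ applied at node $\nd_j$, and then transfer its guarantee from $\lambdahat_{\nd_j}$ to $\lambda_{\nd_j}$ by a change of measure. Two facts are needed to invoke the hypothesis. The first is that the instances in $D_{\nd_j}$ are independent draws from $\lambdahat_{\nd_j}$. This follows from \Cref{lemma:node-independence}: every parent instance contributes at most one sub-instance through $\subsample$, and conditionally on $\calH_{\nd_{j-1}}$ these contributions are i.i.d. The second is that $|D_{\nd_j}| \ge \nsample^\star(h+\frac12,\delta)$ with probability at least $1-\delta$. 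Since iteration $j$ does not abort ($\ptilde_{\nd_j} \ge p^\star_j$), this is exactly \Cref{lemma:truthful-abort-semi-H}. A union bound over these two failure events accounts for the $1-2\delta$.

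On the intersection of these events, $\IndHyp_{h+\frac{1}{2}}(\delta)$ returns a deterministic model satisfying
\begin{equation*}
\Pr_{X\sim\lambdahat_{\nd_j}}\big(\fhat^{(\nd_j)}(X)\ne \fstar_{\tau_h}(X)\big)\le 3\sqrt{k_{h+\frac12}}\,e^{-c k_{h+\frac12}}.
\end{equation*}
The change of measure then gives
\begin{equation*}
\err_j \le \Big\|\tfrac{\lambda_{\nd_j}}{\lambdahat_{\nd_j}}\Big\|_\infty \cdot 3\sqrt{k_{h+\frac12}}\,e^{-c k_{h+\frac12}}.
\end{equation*}
For the density ratio I will use \cref{eq:lambdalambdahat-dbound-H}, which holds in non-aborted iterations. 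It is derived from \Cref{lemma:density-ratio-H} with $c=\frac{1}{32k}$, together with the denominator bound $\Pr(\calU_{\nd_j}\cap\calK_{\nd_{j-1}}\mid |\calI_{\nd_j}|\ge1,\calH_{\nd_{j-1}})\ge \frac{15}{16}\ptilde_{\nd_j}$, as in \cref{eq:drb-den-bd-1,eq:drb-den-bd-2} using \Cref{lemma:notUk-H}, and the bound $\ptilde_{\nd_j}\ge p^\star_j = (1024 k^{5/2})^{-1}$. The result is $\|\lambda_{\nd_j}/\lambdahat_{\nd_j}\|_\infty \le C' k_{h+1}^{7/2}\log(L_{h+1})$.

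It remains to check that $C' k_{h+1}^{7/2}\log(L_{h+1})\cdot 3\sqrt{k_{h+\frac12}}\,e^{-c k_{h+\frac12}}\le \frac14$. By assumption $k_{h+1},\log L_{h+1}\le \log^{\calO(1)}(T)$, so the prefactor is at most $(\log T)^{a}$ for some constant $a$. Choosing $k_{h+\frac12}=c_1\log\log T$ makes $e^{-ck_{h+\frac12}}=(\log T)^{-cc_1}$, and the extra factor $\sqrt{k_{h+\frac12}}=O(\sqrt{\log\log T})$ is absorbed once $c_1$ is large enough relative to $a$ and $c$. For small $T$ the constant $C'$ is absorbed in the same way.

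The main subtlety is the conditioning. The density ratio bound is stated conditionally on $\calH_{\nd_{j-1}}$, while $\lambda_{\nd_j}$ depends on $\calK_{\nd_j}$, which is built from $\calH_{\nd_{j-1}}$ and fresh coins. The argument therefore has to be organized so that the randomness in $\fhat^{(\nd_j)}$, namely the draws of $D_{\nd_j}$ and the recursion below it, is independent of the definition of $\lambda_{\nd_j}$ given $\calH_{\nd_{j-1}}$. The abort/non-abort event must also be $\calH_{\nd_{j-1}}$-measurable. Both properties hold by construction, but they need to be checked carefully before the change of measure is applied.
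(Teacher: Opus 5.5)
Your proposal is correct and matches the paper's proof step for step. Both use \Cref{lemma:truthful-abort-semi-H} for the dataset size, $\IndHyp_{h+\frac12}(\delta)$ for the error under $\lambdahat_{\nd_j}$, the change of measure via \cref{eq:lambdalambdahat-dbound-H}, and $k_{h+\frac12}=c_1\log\log T$ to beat the $(\log T)^{\calO(1)}$ prefactor.

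One bookkeeping fix: independence of $D_{\nd_j}$ holds surely by \Cref{lemma:node-independence}, so it is not a failure event. The two failure events in your union bound, each of probability at most $\delta$, should instead be the small-dataset event and the failure event of $\IndHyp_{h+\frac12}(\delta)$ itself.
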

\begin{proof}
The proof of this result uses the density ratio bound we derived in \Cref{eq:lambdalambdahat-dbound}, the accuracy guarantee implied by $\IndHyp_{h+\frac{1}{2}} (\delta)$, and a change-of-measure argument. In particular, if $\abort_{h+\frac{1}{2}} [\nd_j]$ does not occur, then with probability at least $1-\delta$, the instance of $\WLdepth{h+\frac{1}{2}}$ corresponding to $\nd_j$ is invoked on a dataset of size at least \smash{$\nsample^\star (h+\tfrac{1}{2}, \delta)$} (cf. \Cref{lemma:truthful-abort-semi-H}). By $\IndHyp_{h+\frac{1}{2}} (\delta)$, with probability at least $1-\delta$, the model \smash{$\fhat^{(\nd_j)}$} returned by this node satisfies,
\begin{equation*}
    \Pr_{X \sim \lambdahat_{\nd_j}} \big[ \fhat^{(\nd_j)} (X) \ne \fstar_{\tau_h} (X) \big] \le 3 \sqrt{k_{h+\frac{1}{2}}} \cdot e^{-c k_{h+\frac{1}{2}}}.
\end{equation*}
By the density ratio bound in \cref{eq:lambdalambdahat-dbound-H}, with probability at least $1-2\delta$,
\begin{equation*}
    \Pr_{X \sim \lambda_{\nd_j}} \big[ \fhat^{(\nd_j)} (X) \ne \fstar_{\tau_h} (X) \big] \le \calO \big( k_{h+1}^{7/2} \log(L_{h+1}) \big) \cdot \Pr_{X \sim \lambdahat_{\nd_j}} \big[ \fhat^{(\nd_j)} (X) \ne \fstar_{\tau_h} (X) \big] \le \frac{1}{4}.
\end{equation*}
for $k_{h+1} = \log(L) = \log^{\calO(1)} (T)$, choosing $k_{h+\frac{1}{2}} = \Omega (\log \log (T))$ with a sufficiently large implicit constant gives the final inequality.
\end{proof}

\noindent Next we analyze the aborted iterations, i.e., $\ptilde_{\nd_j} \le p_j^\star$. In these iterations, we analyze the regression $R_{\nd_j}$ and show that if $p_{\nd_j}$ is small, then $R_{\nd_j}$ is also likely to be small.

\begin{lemma}[Aborted iterations have low regression] \label{lemma:abort=>lowreg-H}
Define $\mathfrak{r}_j$ as in \cref{eq:rj}. Supposing that $\ptilde_{\nd_j} \le p_j^\star$, then, \smash{$\Pr_{\underline{X} \sim \underline{\lambdahat_{\nd}}} \big( R_{\nd_j} \ge \mathfrak{r}_j \mid \calK_{\nd_{j-1}}, \calH_{\nd_{j-1}} \big) \le \frac{1}{32k}$}.
\end{lemma}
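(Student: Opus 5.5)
The plan is a Markov-inequality argument comparing $R_{\nd_j}$ to the acceptance probability of inverted sampling, exactly as in \Cref{lemma:abort=>lowreg}. Write $a_i = \alpha^{j,k}_{\rank (X_i \| \nd_j)}/\alpha^{j,k}_{\max} \in [0,1]$ for $\underline{X} = (X_0,\cdots,X_{L-1}) \sim \underline{\lambdahat_{\nd}}$. Then $R_{\nd_j} = \alpha^{j,k}_{\max} \sum_i a_i$. Conditionally on $(\underline{X},\calH_{\nd_{j-1}})$ and $\calK_{\nd_{j-1}}$, the indicators $\bbI(\eta_{j,i} \le a_i)$ are independent. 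Note that $\calK_{\nd_{j-1}}$ depends only on $\underline{X}$ and $\bm{\eta}_{0:j-1}$, not on $\bm{\eta}_j$. It follows that $\Pr(|\calI_{\nd_j}| \ge 1 \mid \underline{X}, \calK_{\nd_{j-1}}, \calH_{\nd_{j-1}}) = 1 - \prod_i (1-a_i)$. This is exactly the integrand defining $\ptilde_{\nd_j}$.

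The key pointwise step uses the elementary inequality $1 - \prod_i (1-a_i) \ge 1 - e^{-\sum_i a_i} \ge (1-e^{-1}) \min\{1, \sum_i a_i\}$. On the event $\{R_{\nd_j} \ge \mathfrak{r}_j\}$ we have $\sum_i a_i \ge \mathfrak{r}_j/\alpha^{j,k}_{\max} = \frac{1}{16\sqrt{k}}$ by \cref{eq:rj}. Hence on that event the conditional acceptance probability is at least $\frac{1-e^{-1}}{16\sqrt{k}}$. Taking expectations conditional on $\calK_{\nd_{j-1}},\calH_{\nd_{j-1}}$ then gives
\[
\ptilde_{\nd_j} \ge \frac{1-e^{-1}}{16\sqrt{k}} \cdot \Pr\big( R_{\nd_j} \ge \mathfrak{r}_j \mid \calK_{\nd_{j-1}}, \calH_{\nd_{j-1}} \big).
\]

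Next, apply the abort hypothesis $\ptilde_{\nd_j} \le p^\star_j = \frac{1}{1024 k^{5/2}}$ and rearrange. This gives $\Pr(R_{\nd_j} \ge \mathfrak{r}_j \mid \cdot) \le \frac{16\sqrt{k}}{(1-e^{-1})\,1024\,k^{5/2}} \le \frac{1}{32k^2} \le \frac{1}{32k}$, which is the claimed bound with room to spare.

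The only delicate point is the conditioning structure. I need to verify that the weights $w_{\nd_j}$ (hence the $a_i$) are $\calH_{\nd_{j-1}}$-measurable functions of $X_i$. I also need to check that the fresh coins $\bm{\eta}_j$ are independent of the event $\calK_{\nd_{j-1}}$ given $\underline{X}$. With both in place, conditioning on $\calK_{\nd_{j-1}}$ only reweights the law of $\underline{X}$ and does not disturb the product formula. After that the argument is routine.
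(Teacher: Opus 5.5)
Your proposal is correct and matches the paper's proof, which is the same argument as \Cref{lemma:abort=>lowreg}: lower-bound the conditional acceptance probability by $1-\exp(-R_{\nd_j}/\alpha^{j,k}_{\max})$, apply a Markov-type bound at the threshold $\theta = \frac{1}{16\sqrt{k}}$, and use $\ptilde_{\nd_j} \le p^\star_j = \frac{1}{1024k^{5/2}}$ to obtain $\frac{1}{32k^2} \le \frac{1}{32k}$. Your inequality $1-e^{-x} \ge (1-e^{-1})\min\{1,x\}$ plays the role of the paper's $1-e^{-\theta} \ge \theta/2$; this changes only the constant, and the resulting bound still fits within $\frac{1}{32k}$.
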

\begin{proof}
The proof of this structural result is identical to the analysis of \Cref{lemma:abort=>lowreg}, connecting $\ptilde_{\nd_j}$ and $R_{\nd_j}$ via Markov's inequality.
\end{proof}

\noindent Finally, we combine these results to prove \Cref{lemma:induction}. We restate this result in slightly more detail below.

\begin{lemma}[Analysis of composition step] \label{lemma:easy}
Let $\fhat^{(\nd)}$ denote the model returned by the instantiation of \smash{$\WLdepth{h+1}$} corresponding to some \smash{$\nd \in \calT_{h+1}$}. Then, assuming \smash{$L_{h+1} = L = 2^{\sqrt{\log_2(T)}}$}, \smash{$k_{h+1} = k = c_1 \sqrt{\log(T)}$} and $k_{h+\frac{1}{2}} = c_2 \log\log(T)$ for appropriate $c_1,c_2>0$, as well as $\IndHyp_{h+\frac{1}{2}} (\delta)$, with probability at least $1 - 4 \delta k_{h+1}$,
\begin{equation*}
    \Pr_{X \sim \lambdahat_{\nd}} \big[ \fhat^{(\nd)} (X) \ne \fstar_{\tau_{h+1}} (X) \big] \le \frac{1}{4}
\end{equation*}
Assuming $\IndHyp_{h+\frac{1}{2}} (\delta)$, this guarantee is achieved as long as $\WLdepth{h+1}$ draws,
\begin{equation*}
    \nsample^\star (h + 1, 2k\delta) = \nsample^\star (h + \tfrac{1}{2}, \delta) \cdot C_2 k_{h+1}^{7/2} \log(L_{h+1})
\end{equation*}
samples from $\lambdahat_{\nd}$, for some large constant $C_2 > 0$. In the process, the number of queries made to $\etoe(\cdot)$ is upper bounded by,
\begin{equation*}
    \nquery^\star (h + 1, 2k\delta) = L_{h+1} \cdot \nsample^\star (h+1,2k\delta) + k_{h+1} \cdot \nquery^\star (h+\tfrac{1}{2},\delta).
\end{equation*}
\end{lemma}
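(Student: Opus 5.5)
The plan is to mirror the accuracy proof of \Cref{theorem:depth1-final} almost verbatim, with the substitutions already recorded in \Cref{lemma:depthH-breakdown,lemma:density-ratio-H}: replace $\rho$ by $\lambdahat_{\nd}$, $\sigma$ by $\underline{\lambdahat_{\nd}}$, $\lambda_j$ by $\lambda_{\nd_j}$ and $\calK_j$ by $\calK_{\nd_j}$. The one genuinely new ingredient is the weak-learning guarantee of each child. It comes from the inductive hypothesis $\IndHyp_{h+\frac12}(\delta)$ combined with the change of measure in \Cref{lemma:weak-learner-H}, rather than from next-token prediction.

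The first step starts from the test-error decomposition of \Cref{lemma:depthH-breakdown} and bounds its three terms separately. For the first term, $L\beta^{0,k}_0 \le L e^{-ck}$ by \Cref{lemma:beta00}, and with $k = c_1\sqrt{\log T}$ and $\log L = \sqrt{\log_2 T}$, a large enough $c_1$ makes this at most $\frac1{16}$. For the last term, \Cref{lemma:notUk-H} gives at most $\frac1{16}$. The middle sum splits into aborted and non-aborted iterations, as in the proof of \Cref{theorem:depth1-final}.

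In an aborted iteration ($\ptilde_{\nd_j}\le p^\star_j$), the definition $\calK_{\nd_j}\subseteq\calU_{\nd_j}$ from \cref{eq:Und-def} forces $R_{\nd_j}\le\mathfrak r_j$ on $\calK_{\nd_j}$. Hence the term is at most $\mathfrak r_j$, and \Cref{lemma:CoT-alphamax-sum} bounds the sum of these by $\frac18$. \Cref{lemma:abort=>lowreg-H} enters only through \Cref{lemma:notUk-H}.

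In a non-aborted iteration, the first task is to lower bound the denominator in \Cref{lemma:density-ratio-H} by $\frac{15}{16}\ptilde_{\nd_j}\ge\frac{15}{16}p^\star_j$, exactly as in \cref{eq:drb-den-bd-1,eq:drb-den-bd-2}. This yields \cref{eq:lambdalambdahat-dbound-H}. Then \Cref{lemma:truthful-abort-semi-H} guarantees $|D_{\nd_j}|\ge\nsample^\star(h+\frac12,\delta)$ except with probability $\delta$. By \Cref{lemma:weak-learner-H} we get $\err_j\le\err_\star$ except with probability a further $\delta$, so that term of the sum is nonpositive. For this we must check that $\calO(k^{7/2}\log L)\cdot 3\sqrt{k_{h+\frac12}}e^{-ck_{h+\frac12}}\le\frac14$. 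Since $k^{7/2}\log L=\log^{\calO(1)}T$, this holds for $k_{h+\frac12}=c_2\log\log T$ with $c_2$ large. A union bound over the $k$ children, at $2\delta$ each, together with the probability-$\delta$ failures from the dataset-size lemma, gives total failure at most $4k\delta$ and error at most $\frac1{16}+\frac18+\frac1{16}=\frac14$.

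The sample bound $\nsample^\star(h+1,2k\delta)=C_2k^{7/2}\log L\cdot\nsample^\star(h+\frac12,\delta)$ is exactly the hypothesis of \Cref{lemma:truthful-abort-semi-H}. For the query count, each instance of $\Dnd$ (truncated to $\nsample^\star(h+1,2k\delta)$) is split once, costing $L$ queries, and each of the $k$ children costs $\nquery^\star(h+\frac12,\delta)$ by the inductive hypothesis. This gives the stated recursion.

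The main obstacle is the conditioning bookkeeping. The events $\calU_{\nd_j}$ and $\calK_{\nd_j}$ depend on $\calH_{\nd_{j-1}}$, and $\ptilde_{\nd_j}$ is itself random. The high-probability statements must therefore be made conditionally on $\calH_{\nd_{j-1}}$ and then unrolled, with \Cref{lemma:notUk-H} holding uniformly over histories. I would handle this by fixing the history, applying each lemma conditionally, and only then taking the union bound.
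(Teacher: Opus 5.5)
Your proposal is correct and follows essentially the same route as the paper: you start from the decomposition of \Cref{lemma:depthH-breakdown}, bound $L\beta^{0,k}_0$ and the mass outside $\calK_{\nd_{k-1}}$ by $\frac{1}{16}$ each, cap aborted iterations by $\mathfrak{r}_j$ using $\calK_{\nd_j}\subseteq\calU_{\nd_j}$ and \Cref{lemma:CoT-alphamax-sum}, make non-aborted terms nonpositive via the change-of-measure guarantee of \Cref{lemma:weak-learner-H}, and read off the sample and query recursions from \Cref{lemma:truthful-abort-semi-H} and the split/children cost accounting. The only cosmetic difference is that the $2\delta$ per-child failure in \Cref{lemma:weak-learner-H} already includes the dataset-size failure, so adding it again is redundant, though the total still stays within the stated $4k\delta$ budget.
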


\begin{proof}
We first prove the bound on the accuracy, and subsequently show the recursion on the sample and query complexity.

\paragraph{Bound on accuracy.} Let $\calJ_{\abort} [\nd] \triangleq \big\{ 0 \le j \le k-1 : \ptilde_{\nd_j} \le p^\star_j \big\}$ denote the set of aborted iterations, which is $\calH_{\nd_{k-2}}$ measurable. By the test-error decomposition in \Cref{lemma:depthH-breakdown}, the choice of $k = \Omega (\log(L))$ (along with the upper bound on $\beta^{0,k}_0$ in \Cref{lemma:beta00}) and the lower bound we prove in \Cref{lemma:notUk-H} showing $\Pr ( \calK_{\nd_{k-1}} \mid \calH_{\nd_{k-1}}) \ge \frac{15}{16}$,
\begin{align}
    &\Pr_{X \sim \lambdahat_{\nd}} \big[ \fhat^{(\nd)} (X) \ne \fstar_{\tau_{h+1}} (X) \big] \nonumber\\
    &\le \frac{1}{16} + \sum_{j=0}^{k-1}  (\err_j - \err_\star) \cdot \bbE \big[ R_{\nd_j} \cdot \bbI ( (\underline{X},\bm{\eta}) \in \calK_{\nd_j}) \mid \calH_{\nd_{k-1}} \big] + \frac{1}{16} \label{eq:4300651}\\
    &= \frac{1}{8} + \sum_{j \in \calJ_{\abort} } \bbE [ R_{\nd_j} \cdot \bbI \big( \{ \ptilde_{\nd_j} \le p_{\nd_j}^\star \text{ and } R_{\nd_j} > \rjmax \}^c \big) \mid \calH_{\nd_{k-1}} ] + L \sum_{j \not\in \calJ_{\abort} } (\err_j - \err_\star)_+ \nonumber\\
    &= \frac{1}{8} + \sum_{j \in \calJ_{\abort} } \rjmax + L \sum_{j \not\in \calJ_{\abort} } (\err_j - \err_\star)_+ \nonumber\\
    &\le \frac{1}{4} + L \sum_{j \not\in \calJ_{\abort} } (\err_j - \err_\star)_+ \label{eq:430065}
\end{align}
where the last inequality uses \Cref{lemma:CoT-alphamax-sum}, which shows that $\sum_{j=0}^{k-1} \mathfrak{r}_j \le \frac{1}{8}$. By \Cref{lemma:weak-learner-H}, with probability $1 - 2\delta$, in any iteration $j \not\in \calJ_{\abort} [\nd]$, $\err_j \le \err_\star$. A union bound completes the bound on accuracy.

\paragraph{Query and sample complexity of $\WLdepth{h+1}$.} The sample complexity of $\WLdepth{h+1}$ is $\nsample \big(h + 1,2k\delta\big) = \nsample^\star (h+\tfrac{1}{2}, \delta) \cdot C_2 k^{7/2} \log(L)$ as required by \Cref{lemma:truthful-abort-semi-H}. The query complexity of $\WLdepth{h+1}$ is then bounded as,
\begin{align} \label{eq:11000000}
    \nquery^\star \big(h + 1, 2k\delta\big) &= L \cdot \nsample^\star (h+1,2k\delta) + k \cdot \nquery^\star (h+\tfrac{1}{2},\delta).
\end{align}
where the $L \cdot \nsample^\star (h+1,2k\delta)$ term accounts for the query cost of invoking $\split(D_\bx \| L)$ and the $k \cdot \nquery^\star (h+\tfrac{1}{2},\delta)$ term for training the $k$ children via $\WLdepth{h+\frac{1}{2}}$.
\end{proof}

\noindent With this, we are ready to furnish a proof of the final recursive guarantee.

\subsection[Final Recursive Guarantee: Proof of \Cref{theorem:main}]{Final Recursive Guarantee: Proof of \Cref{theorem:main}} \label{sec:mainproof}

\paragraph{Correctness guarantee.} By \Cref{lemma:induction-half,lemma:induction}, we know that $\IndHyp_{H+\frac{1}{2}} (\delta_{H+\frac{1}{2}})$ is true where $\delta_{H+\frac{1}{2}} = \delta \cdot \prod_{h' \in \{ 0,\frac{1}{2},\cdots,H+\frac{1}{2} \}} 2 k_{h'}$. With $k_{H+\frac{1}{2}}$ chosen as $c^{-1} \log(1/\varepsilon) \log\log(1/\varepsilon)$ and redefining $\delta \gets \delta / \prod_{h' \in \{ 0,\frac{1}{2},\cdots,H+\frac{1}{2} \}} 2 k_{h'}$, we get that with probability at least $1-\delta$,
\begin{equation*}
    \Pr_{\bx \sim \lambdahat_{\root}} \left[ \fhat (\bx) \ne \fstar_T (\bx) \right] \le \varepsilon.
\end{equation*}
Noting that $\lambdahat_{\root}$ is the distribution over $\Dnd$ with $\nd = \root$, which is simply the initial distribution over instances, $\rho$, this proves the correctness guarantee for $\WLdepth{}$. Note that $\prod_{h' \in \{ 0,\frac{1}{2},\cdots,H+\frac{1}{2} \}} 2 k_{h'} \le h_\varepsilon(T) = (\log(T))^{\calO(H)} \cdot \log^2 (1/\varepsilon)$. This guarantee requires $\Base$ to be instantiated with $\delta_0 = \delta/h_\varepsilon(T)$.

\paragraph{Sample and query complexity.}

By unrolling the induction guarantees in \Cref{lemma:induction,lemma:induction-half}, and using the definition of $\IndHyp (H+\frac{1}{2},\delta)$, the sample complexity requirement of $\WLdepth{}$ resolves to, $\nsample (H + \frac{1}{2},\delta) \ge \nsample^\star (H + \frac{1}{2},\delta)$ where, plugging in the choices of $(k_h)_{h \ge 0}$ and $(L_h)_{h \ge 0}$,
\begin{equation*}
    \nsample^\star (H + \tfrac{1}{2},\delta) \le \frac{h_\varepsilon (T)}{\varepsilon} \cdot \big[ \nweak \big( \delta/h_\varepsilon (T) \big) \vee \log (1/\delta) \big]
\end{equation*}
where $h_\varepsilon (T) = (\log(T))^{\calO(H)} \cdot \log^2(1/\varepsilon)$ upper bounds $\prod_{h' \in \{ 0,\frac{1}{2},\cdots,H+\frac{1}{2}\}} 2k_{h'}$. The query complexity upper bound also resolves to the same quantity,
\begin{align*}
    \nquery^\star (H + \tfrac{1}{2},\delta) \le \frac{h_\varepsilon (T)}{\varepsilon} \cdot \big[ \nweak \big( \delta/h_\varepsilon (T) \big) \vee \log (1/\delta) \big]
\end{align*}
This completes the proof of \Cref{theorem:main}.

\section{Analysis of $\WLdepth{}.\RL$ (\Cref{alg:semiauto_RL}): Proof of \Cref{theorem:main-RL}} \label{sec:semiautomata_RL_proofs}

$\WLdepth{}.\RL$ is an invocation of $\WLdepth{1}$, but where the base learner is instantiated as $\Base (\, \cdot \,\|\, \delta') \equiv \Vlearn \big( \cdot \big\|\, B,\varepsilon_0, \delta' \big)$ (cf \Cref{alg:Vlearn}) where \smash{$\varepsilon_0^{-1} \asymp k_{\frac12}^{7/2} \log \big( L_{\frac12} \big)$}, with decomposition schedule $L_{\frac{1}{2}} = L \triangleq T/B$ and $L_1 = 1$, and branching schedule $k_{\frac{1}{2}} = C \log(T/B)$ and $k_1 = C \log(1/\varepsilon)$ for a sufficiently large absolute constant $C>0$.

\medskip
\noindent We inherit the tree notation from the previous section to arrange the recursive calls of $\WLdepth{h}$. Here, the tree $\calT$ has levels $\{0,\frac{1}{2},1\}$, with $\calT_1=\{\root\}$. For $h\in\{\frac{1}{2},1\}$ and each $\nd\in\calT_h$, let $\ch(\nd)=(\nd_0,\ldots,\nd_{k_h-1})$ denote the ordered collection of $k_h$ children of $\nd$, each lying at level $h-\frac{1}{2}$, and set
\begin{equation*}
    \calT_{h-\frac{1}{2}} = \bigcup_{\nd\in\calT_h}\ch(\nd).
\end{equation*}
For $1 \le j\le k_h$, write $\ch_{<j}(\nd)=\{\nd_0,\ldots,\nd_{j-1}\}$ for the first $j$ children of $\nd$. In particular, $\calT_{\frac{1}{2}}=\ch(\root)$ and $|\calT_{\frac{1}{2}}|=k_1$, while $|\calT_0|=k_1 k_{\frac{1}{2}}$. As before, $\Dnd$ denotes the dataset processed by the invocation of $\WLdepth{h}$ corresponding to $\nd$, and \smash{$\lambdahat_{\nd}$} denotes the distribution from which the elements of $\Dnd$ are sampled; in particular, $\lambdahat_{\root}=\rho$. If $\nd\in\calT_0$, then $\fhat^{(\nd)}$ is the model returned by $\Base$. If $\nd\in\calT_h$ for $h\in\{1,\frac{1}{2}\}$, define
\begin{equation*}
    \fhat^{(\nd)} = \big(\ftilde^{(\nd)}\big)^{\circ L_h}, \text{ where, } \ftilde^{(\nd)} = \maj\big(\{\fhat^{(\nd')}:\nd'\in\ch(\nd)\}\big)
\end{equation*}
Thus the root has $k_1$ level-$\frac{1}{2}$ children which it aggregates with $L_1=1$, while each level-$\frac{1}{2}$ node returns the $L$-fold composition of $k_{\frac{1}{2}}$ models which solve length-$B$ instances.

\subsection{Guess-and-Check: Simulating $\etoe(\cdot)$ via $\piref$ and $\calV$} \label{app:guess-and-check}

The $\etoe(\cdot)$ oracle is used in \Cref{alg:semiauto_main} in two places: first, to compute the block-boundary states needed by $\split (\, \cdot \,\|\, L)$, and second, to compute the ranks used by $\subsample (\cdot)$. In \Cref{alg:semiauto_RL}, the $\etoe(\cdot)$ oracle is not available to the learner. Instead, the learner has access to a reference model $\piref$ and an outcome verifier $\calV$. However, both uses of $\etoe(\cdot)$ can be simulated using the reference model and the outcome verifier via a guess-and-check procedure.

\begin{observation}[Simulating $\etoe(\cdot)$ via guess-and-check] \label{obs:etoe-simulation}
Fix any length-$B$ instance $\bx \in S \times \Sigma^{B}$. By rolling out $\piref$ on $\bx$ and querying $\calV$ to verify each candidate terminal state, the correct terminal state $\fstar_B (\bx)$ can be identified with probability at least $1-\delta'$ using at most $\calO(\Cout \log(1/\delta'))$ calls to $\piref$ and $\calV$.
\end{observation}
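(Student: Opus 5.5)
The plan is a direct repeated-trials argument that uses the outcome-coverage half of \cref{eq:cov} together with the exactness of the verifier $\calV$ (\Cref{def:verifier}). Fix a length-$B$ instance $\bx \in S \times \Sigma^B$. We run the following procedure for $m$ rounds. In each round, draw an independent rollout $\by \sim \piref_{1:B}(\cdot \mid \bx)$ by generating $B$ states autoregressively from $\piref$, and read off its last state $y_B$. Query $\calV(\bx, y_B)$. If the answer is $1$, output $y_B$ and stop. If all $m$ rounds fail, output an arbitrary state, or a failure flag.

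The first step is soundness. Since $\calV$ returns exactly $\bbI(y_B = \fstar_B(\bx))$, any state the procedure outputs after an accepting query equals $\fstar_B(\bx)$. So the procedure can only err if it never accepts; a wrong guess can never be certified.

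The second step bounds the probability that all rounds fail. Because $\inf_{\bx} \piref_B(y^\star_B \mid \bx) \ge 1/\Cout$, each independent rollout has terminal state equal to $\fstar_B(\bx)$ with probability at least $1/\Cout$. The probability that all $m$ rounds miss is therefore at most $(1 - 1/\Cout)^m \le e^{-m/\Cout}$. Setting $m = \lceil \Cout \log(1/\delta') \rceil$ makes this at most $\delta'$.

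Each round uses one rollout of $\piref$ and one verifier call. The total is therefore at most $m = \calO(\Cout \log(1/\delta'))$ calls to $\piref$ (counted as length-$B$ rollouts, i.e.\ $\calO(B\,\Cout\log(1/\delta'))$ state generations) and the same number of calls to $\calV$.

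There is no real obstacle. The only points to check are that rounds use fresh, independent randomness from $\piref$, and that the coverage bound in \cref{eq:cov} holds uniformly over $\bx$. Uniformity matters because the procedure will later be applied to blocks whose start states are themselves outputs of earlier guess-and-check calls. Conditioned on those earlier outputs being correct, each such block is a valid length-$B$ instance, so the same bound applies.

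For the downstream use in \Cref{alg:semiauto_RL}, the $T/B$ sequential calls needed to label one length-$T$ instance can then be union-bounded. Each call is run with failure probability $\delta_1$.
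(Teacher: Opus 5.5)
Your proposal is correct and is the same argument as the paper's. The paper's proof is the one-line remark that, by outcome coverage, each independent rollout of $\piref$ produces $\fstar_B(\bx)$ as its terminal state with probability at least $(\Cout)^{-1}$; you have written out what that remark leaves implicit, namely that the exact verifier makes any accepted guess correct and that $(1-1/\Cout)^m \le e^{-m/\Cout} \le \delta'$ once $m = \lceil \Cout \log(1/\delta') \rceil$.
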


\noindent The proof is immediate from the outcome coverage assumption: each independent rollout produces the correct terminal state with probability at least $(\Cout)^{-1}$. Therefore, for a length-$T$ instance, the boundary states required by $\split (\bx \| L)$ can be identified by applying \Cref{obs:etoe-simulation} sequentially to the $L=T/B$ consecutive length-$B$ blocks, where the blocks are processed sequentially, utilizing the terminal state of each instance as the starting state for the next block. This costs $\calO(L\Cout\log(L/\delta'))$ verifier queries and $\calO(T\Cout\log(L/\delta'))$ reference-model state generations per instance. We will refer to this approach as \guessandcheck.

\medskip
\noindent Similarly, to compute ranks in $\subsample$, for a child $\nd_j$ of a level-$\frac{1}{2}$ node $\nd$, the learner evaluates $\fhat^{(\nd')}(\bx)$ for $\nd'\in\ch_{<j}(\nd)$ and queries $\calV (\bx, \fhat^{(\nd')}(\bx))$ to test whether this terminal prediction is correct. This requires at most $k_{\frac{1}{2}}$ verifier queries per length-$B$ sub-instance.

\subsection{Analysis of the $h=\frac{1}{2}$ Level: Boosting with Composition} \label{subsec:main-RL-half-proof}

We first analyze a node \smash{$\nd\in\calT_{\frac{1}{2}}$}, which is one of the \smash{$k_1$} invocations of $\WLdepth{\frac{1}{2}}$ within $\WLdepth{}.\RL$. The analysis closely follows the proof of $\WLdepth{}$ in \Cref{lemma:induction} with $\tau=B$, and with $\TranscriptNTP$ replaced by $\Vlearn$. Calls to $\etoe(\cdot)$ are simulated as described in \Cref{obs:etoe-simulation}. In this section, we will let $k$ denote $k_{\frac{1}{2}}$ for succinctness.

\medskip
\noindent We define an abstract sequence of events $\calK_{\nd_j}$ for $0 \le j \le k-1$, which will be useful in defining the distributions under which we instantiate weak learning guarantees for $\fhat^0,\cdots,\fhat^{k-1}$. We translate the analysis of $\WLdepth{}.\warmup$ to this setting, with the translation $\bX \to S\times\Sigma^B$ and $\rho \to \lambdahat_{\nd}$. In correspondence with this choice, let $\underline{\lambdahat_{\nd}}$ denote the joint distribution of $\underline{X} = (X_0,\ldots,X_{L-1})$, where $X_i = ( s_{Bi}, \bw_{Bi+1:B(i+1)})$ and $s_{Bi}=\fstar_{Bi}(\bx)$ for \smash{$\bx=(s_0,\bw_{1:T})\sim\lambdahat_{\nd}$}. Similar to how we defined it earlier in \cref{eq:lambdahatnodejstar} in the analysis of $\WLdepth{}$, $\lambda_{\nd_j}$ is defined as the following distribution over $S\times\Sigma^B$,
\begin{equation} \label{eq:lambda-RL}
    \lambda_{\nd_j}(\cdot) \propto \left( \frac{1}{L}\sum_{i=0}^{L-1} \Pr \big( X_i=\cdot \mid ( \underline{X},\bm{\eta}) \in \calK_{\nd_j}, \calH_{\nd_{j-1}} \big) \right) w_{\nd_j}(\cdot),
\end{equation}
where the probability is computed under $\underline{X} \sim \underline{\lambdahat_{\nd}}$ and $\bm{\eta} \sim \Unif ([0,1])^{\otimes kL}$. The $\calK_{\nd_j}$ events are defined later, below \cref{eq:Und-RL}, while the weight function \smash{$w_{\nd_j}(X)$ equals $\alpha^{j,k}_{r}$} with $r=\rank(X\|\nd_j) = \big| \big\{ \pihat^{(\nd')} (X) = \fstar_{B} (X) : \nd' \in \sib_{< j} (\nd) \big\} \big|$. For $\underline{X}=(X_0,\ldots,X_{L-1})\sim\underline{\lambdahat_{\nd}}$, we define the regression of $\nd_j$ as,
\begin{equation*}
    R_{\nd_j} = \sum_{i=0}^{L-1} \alpha^{j,k}_{\rank(X_i \| \nd_j)}.
\end{equation*}
As a consequence of the test error breakdown in \Cref{lemma:depthH-breakdown}, we have the following decomposition: for any sequence of events $\calK_{\nd_{k-1}} \subseteq \cdots \subseteq \calK_{\nd_0} \subseteq \bX_h^L \times [0,1]^{kL}$, 
\begin{align}
    &\Pr_{X \sim \lambdahat_{\nd}} \big[ \fhat^{(\nd)} (X) \ne \fstar_T (X) \big] \nonumber\\
    &\le L \cdot \beta^{0,k}_0 + \sum_{j=0}^{k-1}  (\err_j - \err_\star) \cdot \bbE \big[ R_{\nd_j} \cdot \bbI ( (\underline{X},\bm{\eta}) \in \calK_{\nd_j}) \mid \calH_{\nd_{k-1}} \big] + \Pr \big( (\underline{X},\bm{\eta}) \not\in \calK_{\nd_{k-1}} \mid \calH_{\nd_{k-1}} \big) \label{eq:breakdown-RL}
\end{align}
where probabilities and expectations are over  $\underline{X} \sim \underline{\lambdahat_{\nd}}$, $\bm{\eta} \sim \Unif([0,1])^{\otimes kL}$, the weight function $\beta^{j,k}_r$ is defined in \cref{eq:alpha,eq:rank}, while $\err_\star = \frac{1}{4}$ and $\err_j$ is the test error of the model \smash{$\fhat^{(\nd_j)}$} returned by $\Base$ corresponding to $\nd_j \in \ch (\nd)$ under the distribution $\lambda_{\nd_j}$. That is,
\begin{equation} \label{eq:errj-RL}
    \err_j = \Pr_{X \sim \lambda_{\nd_j}} \big[ \fhat^{(\nd_j)} (X) \ne \fstar_{B} (X) \big].
\end{equation}
% where probabilities and expectations are over $\underline{X} \sim \underline{\lambdahat_{\nd}}$ and $\bm{\eta} = (\eta_{i,j} : 0 \le i \le L-1, 0 \le j \le k-1) \overset{\text{i.i.d.}}{\sim} \Unif([0,1])$, while $\err_\star=\frac{1}{4}$ and $\err_j$ is the error of the model $\fhat^{(\nd_j)}$ under the distribution over inputs $\lambda_{\nd_j}$,
% \begin{equation*}
%     \err_j = \Pr_{X \sim \lambda_{\nd_j}} \big[ \fhat^{(\nd_j)}(X) \ne \fstar_B(X) \big].
% \end{equation*}
Note that the model $\fhat^{(\nd_j)}$ is trained under i.i.d. examples drawn from the distribution \smash{$\lambdahat_{\nd_j}$}, but the prediction error $\err_j$ above is calculated under the distribution $\lambda_{\nd_j}$. In the sequel, we will show the $\calK_{\nd_j}$ events can be chosen in such a way that the two distributions satisfy a bound on the density ratio. To this end, define,
\begin{equation} \label{eq:Und-RL}
    \calU_{\nd_j} = \big\{ \ptilde_{\nd_j} \le p_j^\star \text{ and } R_{\nd_j} \ge \mathfrak{r}_j \big\}^c
\end{equation}
where $\mathfrak{r}_j$ is defined in \cref{eq:rj}. This event captures the regression across all aborted iterations not growing to be too large, and extends the definitions we introduced earlier (e.g., \cref{eq:Und-def}) in the analysis of $\WLdepth{}.\warmup$ and $\WLdepth{}$. The event $\calK_{\nd_j}$ is obtained recursively via,
\begin{equation*}
    \calK_{\nd_j} \gets \construct \big( \calU_{\nd_j} \cap \calK_{\nd_{j-1}} \mid c,\underline{\lambdahat_{\nd}},\calH_{\nd_{j-1}} \big) \text{ where } c= \frac{1}{32k}
\end{equation*}
where $\construct(\cdot)$ was defined earlier in \cref{lemma:density-ratio-H}. With this definition, the $\calK_{\nd_j}$ events inherit the property we established earlier in \Cref{lemma:notUk-H}: for all $0 \le j \le k-1$,
\begin{equation} \label{eq:notUk-RL}
    \Pr ( \calK_{\nd_j} \mid \calH_{\nd_{j-1}}) \ge \frac{15}{16}.
\end{equation}
By following the same argument we used to derive \cref{eq:lambdalambdahat-dbound-H}, in any iteration $j$ which does not abort, i.e., $\ptilde_{\nd_j} \ge p^\star_j$, where $p^\star_j$ was defined in \cref{eq:pjtilde}, we have,
\begin{equation} \label{eq:density-ratio-RL}
    \left\| \frac{\lambda_{\nd_j}}{\lambdahat_{\nd_j}} \right\|_\infty \lesssim k^{7/2} \log(L).
\end{equation}
Define $\nweak^{\mathrm{RL}} (\delta)$ as any upper bound on the sample complexity of $\Vlearn$ (\Cref{prop:Vlearn-B}) at the target error $\varepsilon_0 = \frac{1}{C_1 k^{7/2} \log(L)}$, say,
\begin{equation} \label{eq:nweak-RL}
    \nweak^{\mathrm{RL}} (\delta) = \calO \Big( k^{7/2} \log(L) \cdot \big( d \log(B |S| \Cblock) + \log(1/\delta) \cdot \log(k)\log\log(L) \big) \Big).
\end{equation}
where $C_1$ is sufficiently large.

\begin{lemma}[Large $\ptilde_{\nd_j} \implies \nd_j$ instantiated with a large dataset] \label{lemma:abort=>lowreg-RL}
Suppose, $|\Dnd| \ge \nsample^\star(\frac{1}{2},\delta) = C_1 k^{7/2} \log(L) \cdot \nweak^{\mathrm{RL}}(\delta/2k)$ and consider any iteration $j$ which does not abort, i.e., $\ptilde_{\nd_j} \ge p^\star_j$. Then, with probability $1-\frac{\delta}{2k}$, $|D_{\nd_j}| \ge \nweak^{\mathrm{RL}}(\delta/2k)$.
\end{lemma}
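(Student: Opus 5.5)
The plan is to mirror the proof of \Cref{lemma:truthful-abort-semi-1}: a binomial concentration argument on the size of $D_{\nd_j}$, conditional on $\calH_{\nd_{j-1}}$. In iteration $j$, $\WLdepth{\frac12}$ processes the $j$-th part $\Dnd^{j}$ of the (truncated) dataset. By \Cref{alg:semiauto_main} this part has size $|\Dnd|/k$, and it is disjoint from the data used to train $\fhat^{(\nd_0)},\ldots,\fhat^{(\nd_{j-1})}$. By \Cref{lemma:node-independence} its instances are independent draws from $\lambdahat_{\nd}$. They are also independent of $\calH_{\nd_{j-1}}$, because the earlier models were trained on disjoint parts.

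Conditioned on $\calH_{\nd_{j-1}}$, each instance of $\Dnd^{j}$ therefore contributes to $D_{\nd_j}$ independently, with probability $p_{\nd_j}=\Pr(|\calI_{\nd_j}|\ge 1\mid \calH_{\nd_{j-1}})$. The randomness here is over $\bx\sim\lambdahat_{\nd}$, the guess-and-check labels, and the coins $\bm\eta_j$. One subtlety: $\texttt{GuessAndCheck}$ may mislabel block boundaries. I will condition on the event that all guess-and-check calls succeed; this costs probability at most $\delta_1\cdot \nsample^{\star,\texttt{RL}}$, which is absorbed in the global union bound. On that event, $\split$ produces exactly the blocks of $\underline{\lambdahat_{\nd}}$. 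With this, $|D_{\nd_j}|\sim \mathrm{Bin}(|\Dnd|/k,\,p_{\nd_j})$.

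Next I lower bound $p_{\nd_j}$ in a non-aborted iteration, exactly as in \cref{eq:pj-lb}:
\[
\ptilde_{\nd_j}=\Pr(|\calI_{\nd_j}|\ge1\mid\calK_{\nd_{j-1}},\calH_{\nd_{j-1}})\le \frac{p_{\nd_j}}{\Pr(\calK_{\nd_{j-1}}\mid\calH_{\nd_{j-1}})}\le \tfrac{16}{15}\,p_{\nd_j},
\]
where the last step uses \cref{eq:notUk-RL}. Hence $p_{\nd_j}\ge \tfrac{15}{16}p^\star_j=\tfrac{15}{16}\cdot\frac{1}{1024k^{5/2}}$. This gives
\[
\bbE[|D_{\nd_j}|\mid\calH_{\nd_{j-1}}]\ge \frac{|\Dnd|}{k}\cdot\frac{c'}{k^{5/2}}\ge c' C_1 k^{7/2}\log(L)\,\nweak^{\mathrm{RL}}(\delta/2k)\cdot k^{-7/2},
\]
which is at least $2\nweak^{\mathrm{RL}}(\delta/2k)$ once $C_1$ is large enough (and the $\log L$ factor only helps).

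Finally I apply a multiplicative Chernoff bound: $\Pr(|D_{\nd_j}|<\mu/2)\le e^{-\mu/8}$ with $\mu\ge 2\nweak^{\mathrm{RL}}(\delta/2k)$. Because $\nweak^{\mathrm{RL}}(\delta/2k)$ contains a term $\gtrsim\log(2k/\delta)$ by \cref{eq:nweak-RL}, $e^{-\mu/8}\le \delta/2k$ for suitable constants, which gives the claim.

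The main obstacle is the conditioning bookkeeping: $\calK_{\nd_{j-1}}$ is defined on $(\underline X,\bm\eta_{0:j-1})$ while acceptance uses $\bm\eta_j$, and $\ptilde_{\nd_j}$ is $\calH_{\nd_{j-1}}$-measurable. I need $|D_{\nd_j}|$ to be a sum of conditionally i.i.d.\ indicators given $\calH_{\nd_{j-1}}$, with success probability exactly the unconditional $p_{\nd_j}$ and not the $\calK$-conditioned $\ptilde_{\nd_j}$. This works because the fresh instances and fresh coins in iteration $j$ are independent of the history.
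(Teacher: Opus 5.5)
Your proposal is correct and matches the paper's argument. In both, $|D_{\nd_j}|$ is a binomial count over the $|\Dnd|/k$ fresh instances with success probability $p_{\nd_j} \ge \tfrac{15}{16}\ptilde_{\nd_j} \ge \tfrac{15}{16}p^\star_j$ (via \cref{eq:notUk-RL}, exactly as in \cref{eq:pj-lb}), and a multiplicative Chernoff bound finishes it. Your handling of guess-and-check failures goes beyond what the paper does, but phrase it as a coupling with the exact-oracle run, paying the failure probability additively. Conditioning on success is not quite right, because the success probability varies with the instance, so conditioning would slightly reweight $\lambdahat_{\nd}$.
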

\begin{proof}
The proof is identical to that of \Cref{lemma:truthful-abort-semi-H}.
\end{proof}

\noindent Next, we establish weak learning guarantees for the model \smash{$\fhat^j$} in iterations $j$ which did not abort. Recall that $\WLdepth{}.\RL$ uses $\Vlearn$ (\Cref{alg:Vlearn}), which is trained to solve length-$\tau$ instances in the datasets $D_{\nd_j}$ for $j=0,\cdots,k-1$. Our prior work~\citep{rajaraman2026learning} established guarantees for $\Vlearn$, and via a change-of-measure argument to transfer learning guarantees under $\lambdahat_{\nd_j}$ to $\lambda_{\nd_j}$ (facilitated by the density ratio bound in \cref{eq:density-ratio-RL}), this implies learning guarantees for $\fhat^{(\nd_j)}$ under the distribution $\lambda_{\nd_j}$, which is precisely $\err_j$.

\begin{lemma}[Weak learning guarantee for $\Vlearn$] \label{lemma:weak-learner-RL}
Fix a child $\nd_j\in\ch(\nd)$. With probability at least $1-\frac{\delta}{k}$, in any iteration $j$ which does not abort, $\err_j \le \err_\star = \frac{1}{4}$, where $\err_j$ is defined in \cref{eq:errj-RL} and $\err_\star = \frac{1}{4}$.
\end{lemma}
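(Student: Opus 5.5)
The plan mirrors the proof of \Cref{lemma:weak-learner-H}, replacing the inductive hypothesis by the guarantee for $\Vlearn$ in \Cref{prop:Vlearn-B}. Fix an iteration $j$ that does not abort, i.e.\ $\ptilde_{\nd_j} \ge p^\star_j$; this event is $\calH_{\nd_{j-1}}$-measurable, so we may condition on it and on $\calH_{\nd_{j-1}}$. The argument has four steps.

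\textbf{Step 1 (data size and independence).} By \Cref{lemma:abort=>lowreg-RL}, with probability at least $1-\frac{\delta}{2k}$, the dataset $D_{\nd_j}$ passed to $\Base$ has size at least $\nweak^{\mathrm{RL}}(\delta/2k)$. By the argument of \Cref{lemma:node-independence}, its elements are i.i.d.\ draws from $\lambdahat_{\nd_j}$ conditionally on $\calH_{\nd_{j-1}}$.

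One subtlety is that in the RL setting the boundary states in $\split$ come from \guessandcheck, which can fail. I will absorb this failure into the overall failure budget, via the choice $\delta_1 = \delta/\nsample^{\star,\texttt{RL}}$ and a union bound over instances. On the complementary event, the labels and the ranks computed in $\subsample$ agree exactly with those produced by $\etoe(\cdot)$. Hence $\lambdahat_{\nd_j}$ is the same distribution as in the iSFT analysis, and the rank computation via $\calV$ is exact because the verifier is noiseless.

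\textbf{Step 2 (apply the $\Vlearn$ guarantee).} Apply \Cref{prop:Vlearn-B} with target distribution $\rhoB \gets \lambdahat_{\nd_j}$, target error $\varepsilon_0$ with $\varepsilon_0^{-1} = C k^{7/2}\log(L)$, and failure probability $\delta/2k$. The definition of $\nweak^{\mathrm{RL}}$ in \cref{eq:nweak-RL} is exactly the sample size \Cref{prop:Vlearn-B} requires at this accuracy; the $\log(1/\varepsilon_0)$ factor accounts for the $\log(k)\log\log(L)$ term. So with probability at least $1-\frac{\delta}{2k}$,
\[
\Pr_{X\sim\lambdahat_{\nd_j}}\big[\fhat^{(\nd_j)}(X)\neq \fstar_B(X)\big]\le \varepsilon_0 .
\]

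\textbf{Step 3 (change of measure).} By the density ratio bound \cref{eq:density-ratio-RL}, $\|\lambda_{\nd_j}/\lambdahat_{\nd_j}\|_\infty \le C' k^{7/2}\log(L)$ in non-aborted iterations. Therefore
\[
\err_j \le C' k^{7/2}\log(L)\,\varepsilon_0 \le \tfrac14 ,
\]
once the constant $C$ in $\varepsilon_0$ is large relative to $C'$. A union bound over Steps 1 and 2 gives total failure probability at most $\frac{\delta}{k}$.

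\textbf{Main obstacle.} The delicate step is justifying \cref{eq:density-ratio-RL} in this setting: it is inherited from \Cref{lemma:density-ratio-H}, which needs \Cref{eq:notUk-RL} and the event structure $\calK_{\nd_j}$ defined in terms of $\ptilde_{\nd_j}$. I would check explicitly that nothing in the $\construct$-based argument depends on how boundary states are obtained, beyond their correctness on the good event of Step 1. I would also check that $\lambda_{\nd_j}$ is defined with respect to the true $\fstar$-labeled $\underline{\lambdahat_{\nd}}$ rather than the guessed labels. If this does not go through cleanly, the fallback is to define the whole analysis on the intersection with the event that all guess-and-check calls succeed, which has probability at least $1-\delta$ overall.
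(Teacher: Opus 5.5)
Your proposal is correct and follows the paper's proof essentially step for step: the dataset-size guarantee of \Cref{lemma:abort=>lowreg-RL}, the guarantee of \Cref{prop:Vlearn-B} at accuracy $\varepsilon_0 \asymp (k^{7/2}\log L)^{-1}$ under $\lambdahat_{\nd_j}$, the change of measure through \cref{eq:density-ratio-RL}, and a union bound over the two $\delta/2k$ failure events. Your extra care about \texttt{GuessAndCheck} failures (working on, or coupling to, the event that every boundary label is exact) addresses a point the paper leaves implicit in its choice $\delta_1 = \delta/\nsample^{\star,\texttt{RL}}$, and it is the right way to make that step rigorous.
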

\begin{proof}
By \Cref{eq:density-ratio-RL}, \smash{$\|\lambda_{\nd_j}/\lambdahat_{\nd_j}\|_\infty \lesssim k^{7/2} \log (L)$}. In order for $\fhat^{(\nd_j)}$ to achieve error $\frac{1}{4}$ under $\lambda_{\nd_j}$, it suffices for it to achieve prediction error at most \smash{$\varepsilon_0 = (C_1 k^{7/2} \log (L))^{-1}$} under \smash{$\lambdahat_{\nd_j}$} as long as $C_1>0$ is sufficiently large. This is guaranteed by \Cref{prop:Vlearn-B} and noting that by \Cref{lemma:abort=>lowreg-RL}, with probability at least $1-\frac{\delta}{2k}$, $|D_{\nd_j}| \ge \nweak^{\mathrm{RL}} (\delta/2k)$ meets the required sample threshold for $\Vlearn$ to achieve error $\varepsilon_0$ with probability at least $1-\frac{\delta}{2k}$.
\end{proof}

\noindent With these guarantees in place, we can finally derive a guarantee for $\pihat^{(\nd)}$.

\begin{lemma}[Constant-accuracy guarantee for $h=\frac{1}{2}$ learners] \label{prop:RL-half-constant}
Suppose $\nd\in\calT_{\frac{1}{2}}$ is invoked on a dataset $\Dnd$ consisting of at least \smash{$\nsample^\star(\tfrac{1}{2},\delta)$} independent instances drawn from \smash{$\lambdahat_{\nd}$}, where \smash{$\nsample^\star(\tfrac{1}{2},\delta)$} is defined in \Cref{lemma:abort=>lowreg-RL}. The associated model \smash{$\fhat^{(\nd)}:S\times\Sigma^T\to S$} satisfies, with probability at least $1-\delta$, 
\begin{equation*}
    \Pr_{\bx\sim\lambdahat_{\nd}} \big[ \fhat^{(\nd)}(\bx)\ne \fstar_T(\bx) \big]\le\frac{1}{4}.
\end{equation*}
Furthermore, the query complexity in achieving this guarantee is upper bounded by,
\begin{equation*}
    \calOtilde\left( \nsample^\star(\tfrac{1}{2},\delta) \left( \frac{T \Cout}{B} + \Cblock\right) \right),
\end{equation*}
and computational cost $\calOtilde\left( \nsample^\star(\tfrac{1}{2},\delta) \cdot \big( T\Cout+B\Cblock \big) \right)$.
\end{lemma}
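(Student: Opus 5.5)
The plan is to mirror the proof of \Cref{lemma:easy}, treating $\nd\in\calT_{\frac12}$ as a composition-plus-boosting node with $L = T/B$ and $k = k_{\frac12}$. The base learner is $\Vlearn$ in place of an invocation of $\WLdepth{h-\frac12}$, and $\etoe(\cdot)$ is simulated by \guessandcheck.

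\textbf{Accuracy.} We start from the test-error decomposition \cref{eq:breakdown-RL} and bound its three terms.
\begin{itemize}
\item The term $L\beta^{0,k}_0$ is at most $\frac1{16}$ by \Cref{lemma:beta00}, using $k = C\log(T/B)$ with $C$ large.
\item The last term is at most $\frac1{16}$ by \cref{eq:notUk-RL} applied at $j=k-1$.
\item For the middle sum, we split iterations into aborted ($\ptilde_{\nd_j}\le p^\star_j$) and non-aborted ones, exactly as in the proof of \Cref{lemma:easy}.
\end{itemize}
On aborted iterations, $\calK_{\nd_j}\subseteq \calU_{\nd_j}$ forces $R_{\nd_j}<\mathfrak r_j$ on $\calK_{\nd_j}$. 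Their total contribution is therefore at most $\sum_j\mathfrak r_j\le\frac18$ by \Cref{lemma:CoT-alphamax-sum}. Here the mass bound of \cref{eq:notUk-RL} itself rests on the analogue of \Cref{lemma:abort=>lowreg-H}, which goes through unchanged since it is purely structural.

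On non-aborted iterations, \Cref{lemma:weak-learner-RL} gives $\err_j\le\err_\star$ with probability $1-\delta/k$, so these terms are nonpositive. A union bound over the $k$ children yields total error at most $\frac14$ with probability $1-\delta$. The failure probability of \guessandcheck (chosen as $\delta_1$ per call) is folded into this union bound.

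\textbf{Exactness of simulated labels.} One subtlety must be checked: the analysis requires split labels and ranks to be exactly correct. Ranks are computed exactly by the verifier. Boundary states are exact on the high-probability event of \Cref{obs:etoe-simulation}, since the verifier never accepts a wrong state. On that event the datasets are distributed exactly as in the iSFT analysis, so \Cref{lemma:node-independence} and the density-ratio argument of \cref{eq:density-ratio-RL} apply verbatim.

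\textbf{Costs.} We tally costs per instance of $\Dnd$ and then multiply by $\nsample^\star(\frac12,\delta)$.
\begin{itemize}
\item \emph{Split.} \guessandcheck costs $\calOtilde(L\Cout)=\calOtilde(T\Cout/B)$ verifier queries and $\calOtilde(T\Cout)$ generated states.
\item \emph{Ranks.} Computing ranks in $\subsample$ costs at most $k$ verifier queries per sub-instance, i.e., $\calOtilde(L)$ queries and $\calOtilde(T)$ model generations per instance. This is dominated by the split cost.
\item \emph{Training.} Each of the $k$ calls to $\Vlearn$ runs on $|D_{\nd_j}|\le|\Dnd|$ samples at target error $\varepsilon_0$. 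By \Cref{prop:Vlearn-B} it costs $\calOtilde(\Cblock)$ queries and $\calOtilde(B\Cblock)$ generations per sample. Here $\varepsilon_0^{-1}$ and $k$ are polylogarithmic, so they are absorbed into the $\calOtilde(\cdot)$.
\end{itemize}
Summing gives the stated $\calOtilde\big(\nsample^\star(\frac12,\delta)(T\Cout/B+\Cblock)\big)$ queries and $\calOtilde\big(\nsample^\star(\frac12,\delta)(T\Cout+B\Cblock)\big)$ computation.

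\textbf{Main obstacle.} The step I expect to be hardest is justifying that $\Vlearn$'s error guarantee on $\lambdahat_{\nd_j}$ transfers cleanly to $\lambda_{\nd_j}$. \Cref{prop:Vlearn-B} is stated for i.i.d.\ data from a fixed distribution. So I would first verify, via \Cref{lemma:node-independence} and the conditioning on $\calH_{\nd_{j-1}}$, that $D_{\nd_j}$ is i.i.d.\ from $\lambdahat_{\nd_j}$ given the past. I would then check that the guess-and-check failure event does not bias this distribution, by conditioning on success, which has probability at least $1-\delta_1$ per instance.
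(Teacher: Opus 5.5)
Your proposal is correct and follows essentially the same route as the paper. You bound the three terms of \cref{eq:breakdown-RL} using \Cref{lemma:beta00}, \cref{eq:notUk-RL}, \Cref{lemma:CoT-alphamax-sum} for the aborted iterations (where $\calK_{\nd_j}\subseteq\calU_{\nd_j}$ caps the regression) and \Cref{lemma:weak-learner-RL} for the non-aborted ones, and then tally the guess-and-check and $\Vlearn$ costs per instance of $\Dnd$.

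Your extra bookkeeping on exactness of the simulated labels and on rank costs goes beyond what the paper spells out. For the guess-and-check failure, conditioning on success slightly tilts $\lambdahat_{\nd}$. It is cleaner to couple with the run that uses the exact oracle: the two runs coincide on the success event, so you can simply add $|\Dnd|\,\delta_1$ to the failure probability.
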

\begin{proof}
Let $\calJ_{\abort} [\nd] = \{ 0 \le j \le k-1 : \ptilde_{\nd_j} \le p_j^\star \}$ denote the set of aborted iterations. Combining \cref{eq:breakdown-RL} and \Cref{lemma:weak-learner-RL}, we arrive at the following sequence of inequalities,
\begin{align*}
    &\Pr_{\bx \sim \lambdahat_{\nd}} \big[ \fhat^{(\nd)}(\bx) \ne \fstar_T(\bx) \big] \\
    &\le L \cdot \beta^{0,k}_0
    + \sum_{j=0}^{k-1} (\err_j - \err_\star) \cdot \bbE \big[ R_{\nd_j} \cdot \bbI ((\underline{X},\bm{\eta}) \in \calK_{\nd_j}) \mid \calH_{\nd_{k-1}} \big] + \Pr ( (\underline{X},\bm{\eta}) \not\in \calK_{\nd_{k-1}} \mid \calH_{\nd_{k-1}} ) \\
    &\overset{(a)}{\le} \frac{1}{8}
    + \sum_{j=0}^{k-1} (\err_j - \err_\star) \cdot \bbE \big[ R_{\nd_j} \cdot \bbI ((\underline{X},\bm{\eta}) \in \calK_{\nd_j}) \mid \calH_{\nd_{k-1}} \big] \\
    &\overset{(b)}{\le} \frac{1}{8}
    + \sum_{j \in \calJ_{\abort}} \rjmax + L \sum_{j \not\in \calJ_{\abort}} (\err_j - \err_\star)_+
\end{align*}
where $(a)$ uses \cref{eq:notUk-RL} and the choice of $k=\Theta(\log L)$, which bounds $L \beta^{0,k}_0 \le 1/16$ (cf. \Cref{lemma:beta00}), and $(b)$ uses the same analysis we carried out in \cref{eq:4300651} to \cref{eq:430065}. By \Cref{lemma:CoT-alphamax-sum} and \Cref{lemma:weak-learner-RL}, with probability at least $1-\delta$,
\begin{equation*}
    \Pr_{\bx \sim \lambdahat_{\nd}} \big[ \fhat^{(\nd)}(\bx) \ne \fstar_T(\bx) \big] \le \frac{1}{8} + \frac{1}{8} = \frac{1}{4}.
\end{equation*}

\paragraph{Sample and query complexity, and computational cost.} The sample complexity at the $h=\frac{1}{2}$ depth is \smash{$\nsample^\star(\tfrac{1}{2},\delta)$} from \cref{prop:RL-half-constant}. For the bounds on the query complexity and computational cost, simulating block boundaries on \smash{$\nsample^\star(\tfrac{1}{2},\delta)$} instances contributes a total of \smash{$\calOtilde( \nsample^\star(\tfrac{1}{2},\delta) \cdot (T/B)\Cout)$} verifier queries and \smash{$\calOtilde( \nsample^\star(\tfrac{1}{2},\delta) \cdot T\Cout)$} reference-model state generations. The calls to $\Vlearn$ use $\calOtilde( \nsample^\star(\tfrac{1}{2},\delta) )$ accepted length-$B$ training instances in total, which contributes \smash{$\calOtilde( \nsample^\star(\tfrac{1}{2},\delta) \cdot \Cblock)$} verifier queries and requires \smash{$\calOtilde( \nsample^\star(\tfrac{1}{2},\delta) \cdot \Cblock)$} length-$B$ rollouts from $\piref$. Combining both bounds results in the bound on query complexity and computational cost.
\end{proof}

\subsection{Analysis of the $h=1$ Level: Boosting to Error $\varepsilon$} \label{subsec:main-RL-proof}

We next analyze the root node of $\calT$. Since $L_1=1$, this node only carries out aggregation and no composition. $\root$ invokes the constant-accuracy guarantee from \Cref{prop:RL-half-constant} at each node in $\calT_{\frac{1}{2}}$, on a sequence of reweighted distributions over length-$T$ instances, and aggregates the resulting models by plurality vote. Because $L_1=1$, $\subsample$ produces \emph{exact} i.i.d.\ samples from the tilted distribution $\lambda_{\root_j}\propto\rho\cdot w_{\root_j}$, so no change-of-measure or good-event truncation is needed (in contrast with the half level). The analysis exactly mirrors the no-composition aggregation step of \Cref{lemma:easy-half}. For the purpose of this section, we will let $k_1$ be denoted as $k$.

\medskip
\noindent Throughout, we write $\root_j$ for the $j^\text{th}$ child of the root and set $\rho=\lambdahat_{\root}$. The models $\{\fhat^{(\root_j)}:0\le j<k\}$ produced by the half-level instances at the children of the root play the role of the weak learners at this level. For $X\sim\rho$, the rank $\rank(X\|\root_j)$ counts how many of the previous models $\fhat^{(\root_0)},\ldots,\fhat^{(\root_{j-1}})$ correctly predict $\fstar_T(X)$. On the other hand, the weight \smash{$w_{\root_j}(X)=\alpha^{j,k}_{\rank(X\|\root_j)}$} defines the distribution $\subsample$ samples from (via rejection sampling from $\rho$) in the $j^\text{th}$ child node of $\root$. In particular, since $L_1=1$, $\subsample$ samples i.i.d.\ from,
\begin{equation*}
    \lambdahat_{\root_j} (\cdot) \propto \rho(\cdot)\,w_{\root_j}(\cdot),
\end{equation*}
The marginal acceptance probability of $\subsample$ at round $j$ is, by specializing \cref{eq:pj-formula} to $L=1$,
\begin{equation} \label{eq:pj-RL-top}
    p_{\root_j} = \bbE_{X\sim\rho} \left[\frac{\alpha^{j,k}_{\rank(X\|\root_j)}}{\alpha^{j,k}_{\max}}\,\middle|\,\calH_{\root_{j-1}}\right],
\end{equation}
where $\calH_{\root_{j-1}}$ is all the randomness within the algorithm up until the dataset $D_{\root_j}$ is collected. 
$p_{\root_j}$ is the probability with which an instance \smash{$\bx \sim \lambdahat_{\root}$} is accepted by split-and-inverted-sampling (\Cref{alg:split,alg:semi-subsample}). For each child $\root_j\in\ch(\root)$, define the abort event,
\begin{equation} \label{eq:abort:rootj}
    \abort_1 [\root_j] = \big\{  p_{\root_j} \le p^\star_j \big\}, \text{ where, } p^\star_j = e^{-ck}
\end{equation}
where $c > 0$ is the constant in \cref{lemma:beta00}. Firstly, as a consequence of the test-error decomposition we proved earlier in \Cref{lemma:depthH-breakdown-half}, with $\nd \gets \root$ and $\lambdahat_{\nd} \gets \rho$, the model $\fhat^{(\root)}$ satisfies,
\begin{equation} \label{eq:breakdown-top}
    \Pr_{\bx\sim\rho} \big[ \fhat^{(\root)}(\bx)\ne \fstar_T(\bx)\big]
    \le \beta^{0,k}_0
    + \sum_{j=0}^{k-1}(\err_j-\err_\star) \cdot\bbE_{X\sim\rho} \big[ \alpha^{j,k}_{\rank(X\|\root_j)} \big],
\end{equation}
where $\err_\star=\frac{1}{4}$ and \smash{$\err_j=\Pr_{\Xbar\sim\lambdahat_{\root_j}} \big[ \fhat^{(\root_j)}(\Xbar)\ne \fstar_T(\Xbar)\big]$}. First we argue that in any iteration which doesn't abort, the dataset $D_{\root_j}$ must be sufficiently large.

% $|D_{\root_j}| < \nsample^\star(\tfrac{1}{2},\delta)$
% where $\nsample^\star(\tfrac{1}{2},\delta)=\widetilde\calO(d)$ is defined in \Cref{prop:RL-half-constant}. If $\abort_1[\root_j]$ occurs, the corresponding invocation of $\WLdepth{\frac{1}{2}}$ aborts.

\begin{lemma}[Truthful aborts at $\root$] \label{lemma:truthful-abort-RL-top}
Suppose \smash{$|D_\root| \ge \nsample^\star (1,2k\delta) \triangleq 2 e^{c k} \cdot \nsample^\star (\frac{1}{2},\delta)$}, where $c>0$ is the constant in the exponent of \smash{$\beta^{0,k}_0$} in \cref{lemma:beta00}. Then, for any iteration $j$ such that $p_{\root_j} \ge e^{-c k}$, $\Pr \big( |D_{\root_j}| < \nsample^\star (\frac{1}{2}, \delta) \mid \calH_{\root_{j-1}} \big) \le \delta$.
\end{lemma}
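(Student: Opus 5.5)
The plan is a direct Chernoff-bound argument, mirroring \Cref{lemma:truthful-abort-semi-H-half}. By line~\ref{alg:semiauto_main:dbound} and the splitting step of \Cref{alg:semiauto_main}, the dataset $D_\root$ of size $n \ge \nsample^\star(1,2k\delta) = 2e^{ck}\,\nsample^\star(\tfrac12,\delta)$ is divided into $k$ equal parts. In iteration $j$, the part $\Dprompt{j}$ has size $n' = n/k$, and it consists of instances that were not used in iterations $0,\ldots,j-1$. One caveat is that the factor of $k$ in $n' = n/k$ has to be absorbed; I would handle this either by reading the constant in $\nsample^\star(1,\cdot)$ as including $k$ (as it does in \cref{eq:nsamplestar} via the factor $2k_{h+\frac12}$) or by noting that the statement's threshold is meant per part.

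The key observation is about what happens conditional on $\calH_{\root_{j-1}}$. The models $\fhat^{(\root_0)},\ldots,\fhat^{(\root_{j-1})}$ are then fixed. Since $L_1=1$, $\split$ returns only the instance itself, labeled by its terminal state, and $\subsample$ accepts it iff $\eta \le w_{\root_j}(\bx)/\alpha^{j,k}_{\max}$. By \Cref{lemma:node-independence} and the fact that the parts are disjoint, the instances in $\Dprompt{j}$ remain i.i.d.\ from $\rho$ and independent of $\calH_{\root_{j-1}}$. Consequently the acceptance indicators are i.i.d.\ Bernoulli with mean exactly $p_{\root_j}$ as given in \cref{eq:pj-RL-top}, and $|D_{\root_j}|$ is distributed as $\mathrm{Bin}(n', p_{\root_j})$.

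Now apply the multiplicative Chernoff bound. Since $p_{\root_j}\ge e^{-ck}$, the mean satisfies $\bbE|D_{\root_j}| \ge n' e^{-ck} \ge 2\,\nsample^\star(\tfrac12,\delta)$. The lower tail then gives
\[
\Pr\big(|D_{\root_j}| < \tfrac12 \bbE|D_{\root_j}|\big) \le \exp\big(-\bbE|D_{\root_j}|/8\big) \le \exp\big(-\nsample^\star(\tfrac12,\delta)/4\big).
\]
From \cref{eq:nweak-RL} and \Cref{lemma:abort=>lowreg-RL} we have $\nsample^\star(\tfrac12,\delta) \gtrsim \log(1/\delta)$, with a large enough implicit constant, so the right-hand side is at most $\delta$.

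The main obstacle is bookkeeping rather than any real difficulty. One piece is the $1/k$ splitting factor already discussed. The other is confirming that the guess-and-check simulation of the oracle (\Cref{obs:etoe-simulation}) does not break the independence argument. Its failures only corrupt labels, and those failures are handled separately by a union bound with failure parameter $\delta_1$, so they should not affect the counting of accepted samples.
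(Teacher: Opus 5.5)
Your proposal is correct and follows the paper's argument. The paper proves this lemma by deferring to \Cref{lemma:truthful-abort-semi-H-half}, and ultimately to \Cref{lemma:truthful-abort-semi-1}: conditional on $\calH_{\root_{j-1}}$, $|D_{\root_j}|$ is a sum of $n'=|D_\root|/k$ i.i.d.\ Bernoulli$(p_{\root_j})$ indicators, and the multiplicative Chernoff bound finishes. The factor-of-$k$ caveat you raise is genuine: as literally stated, the mean $n'p_{\root_j}$ is only guaranteed to be $2\nsample^\star(\tfrac12,\delta)/k$, whereas the analogous half-level lemma uses the threshold $2k e^{ck}\,\nsample^\star(\cdot)$. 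So the threshold should be read as $2k e^{ck}\,\nsample^\star(\tfrac12,\delta)$, exactly as you propose.
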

\begin{proof}
The proof of this result follows the same structure as \Cref{lemma:truthful-abort-semi-H-half}.
\end{proof}

\noindent As a consequence of this lemma, in the iterations that don't abort, the corresponding instantiations of $\WLdepth{\frac{1}{2}}$ are invoked on a dataset $D_{\root_j}$ containing at least $\nsample^\star(\frac{1}{2},\delta)$ samples.

\begin{lemma}[Weak learner guarantee for $h=\frac{1}{2}$] 
\label{lemma:weak-learner-RL-top}
Consider any $j$ such that $p_{\root_j} \ge p_j^\star$. With probability at least $1-2\delta$, the model \smash{$\fhat^{(\root_j)}$} returned by $\root_j$ satisfies,
\begin{align*}
    \Pr_{X \sim \lambda_{\root_j}} \big[ \fhat^{(\root_j)} (X) \ne \fstar_T (X) \big] = \err_j \le \err_\star \triangleq \frac{1}{4}
\end{align*}
\end{lemma}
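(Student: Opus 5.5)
This follows the pattern of \Cref{lemma:weak-learner-H-half}, with \Cref{prop:RL-half-constant} playing the role of the inductive hypothesis. The first point to settle is which distribution $\err_j$ is measured under. Because $L_1=1$, $\subsample$ applied to a single length-$T$ instance $\bx\sim\rho$ reduces to plain rejection sampling: it accepts with probability $w_{\root_j}(\bx)/\|w_{\root_j}\|_\infty$ and otherwise returns $\perp$. Conditioned on $\calH_{\root_{j-1}}$, the accepted instances in $D_{\root_j}$ are therefore i.i.d. from $\lambdahat_{\root_j}\propto\rho\,w_{\root_j}$, and this coincides exactly with $\lambda_{\root_j}$. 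No good event $\calK$ and no change of measure are needed. Independence of these instances follows from \Cref{lemma:node-independence}, since each $\bx\in D_{\root}$ contributes to at most one child dataset and is processed independently of the others given $\calH_{\root_{j-1}}$.

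The proof then proceeds in two steps. First, since $p_{\root_j}\ge p_j^\star=e^{-ck}$, I invoke \Cref{lemma:truthful-abort-RL-top}. Given that $|D_\root|\ge \nsample^\star(1,2k\delta)$ and the split of $D_\root$ into $k$ parts, it shows that with probability at least $1-\delta$ (conditionally on $\calH_{\root_{j-1}}$) we have $|D_{\root_j}|\ge \nsample^\star(\tfrac12,\delta)$. This is a Chernoff bound on the number of accepted coins, each with success probability $p_{\root_j}$. Second, on this event the node $\root_j\in\calT_{\frac12}$ is run on at least $\nsample^\star(\tfrac12,\delta)$ independent draws from $\lambdahat_{\root_j}$. \Cref{prop:RL-half-constant}, applied with $\nd\gets\root_j$, then gives $\Pr_{\bx\sim\lambdahat_{\root_j}}[\fhat^{(\root_j)}(\bx)\ne\fstar_T(\bx)]\le\frac14$ with probability at least $1-\delta$. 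Since $\lambdahat_{\root_j}=\lambda_{\root_j}$, this is exactly $\err_j\le\err_\star$. A union bound over the two failure events gives probability at least $1-2\delta$.

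The main obstacle is bookkeeping rather than a new idea. If \Cref{prop:RL-half-constant} is invoked on a dataset larger than $\nsample^\star(\frac12,\delta)$, it only uses the first $\nsample^\star(\frac12,\delta)$ instances (the truncation step in \Cref{alg:semiauto_main}), and that prefix is still i.i.d. from $\lambdahat_{\root_j}$. The guarantee of \Cref{prop:RL-half-constant} must also hold conditionally on $\calH_{\root_{j-1}}$, which fixes the target distribution $\lambdahat_{\root_j}$ before the data are drawn. This is legitimate because the weights $w_{\root_j}$ depend only on the earlier models. Finally, the internal randomness of $\texttt{GuessAndCheck}$ labels within the child must be correct with high probability; this is absorbed by the choice $\delta_1=\delta/\nsample^{\star,\texttt{RL}}$ and a union bound.
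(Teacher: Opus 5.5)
Your proposal is correct and follows the paper's own argument. Non-abort ($p_{\root_j}\ge e^{-ck}$) gives a large enough child dataset via \Cref{lemma:truthful-abort-RL-top}; \Cref{prop:RL-half-constant} applied at $\root_j$ then gives error at most $\frac14$ under $\lambdahat_{\root_j}$, which equals $\lambda_{\root_j}$ exactly because $L_1=1$; and a union bound over the two $\delta$-failure events yields $1-2\delta$. The extra bookkeeping you add (truncation to the first $\nsample^\star(\frac12,\delta)$ instances, conditioning on $\calH_{\root_{j-1}}$, and \texttt{GuessAndCheck} failures) simply makes explicit what the paper's one-line proof leaves implicit.
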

\begin{proof}
Identical to \Cref{lemma:weak-learner-H-half}. If $\abort_1[\root_j]$ is false, then the instance of \smash{$\WLdepth{\frac{1}{2}}$} at $\root_j$ is invoked with at least $\nsample^\star(\tfrac{1}{2},\delta)$ accepted instances from \smash{$\lambdahat_{\root_j}$}. As a consequence of \Cref{prop:RL-half-constant}, we have that, \smash{$\Pr_{\Xbar\sim\lambda_{\root_j}} \big[ \fhat^{(\root_j)}(\Xbar)\ne \fstar_T(\Xbar)\big] \le \frac{1}{4}$} with probability at least $1-\delta$.
\end{proof}

\noindent Combining these lemmas, we establish a guarantee for the model $\pi^{(\root)}$.

\begin{lemma}[Guarantee at $\root$] \label{lemma:RL-top-boosting}
Fix $\varepsilon \in (0,1)$. Let $\fhat \gets \fhat^{(\root)}$ denote the model returned by the instantiation of \smash{$\WLdepth{1}$} corresponding to $\root$, i.e., the output of $\WLdepth{}.\RL$. Suppose $k = C\log(1/\varepsilon)$. Then, with probability at least $1 - 2 k \delta$, $\Pr_{X \sim \lambdahat_{\root}} \big[ \fhat (X) \ne \fstar_T (X) \big] \le \varepsilon$. Furthermore, this guarantee is achieved if $\root$ is invoked on a dataset of $\nsample^\star (1, 2k\delta)$ samples from \smash{$\lambdahat_{\root}$}.
\end{lemma}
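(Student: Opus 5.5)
The plan mirrors the proof of \Cref{lemma:easy-half}, since at the root $L_1=1$ and no composition or good-event truncation is involved. I would start from the test-error decomposition \cref{eq:breakdown-top}. Then I would partition the indices $j\in\{0,\dots,k-1\}$ into the aborted set $\calJ_{\abort}=\{j: p_{\root_j}\le p^\star_j\}$ and its complement. Whether $j\in\calJ_{\abort}$ is determined by $\calH_{\root_{j-1}}$, so the case split is legitimate round by round.

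For non-aborted rounds, I would combine \Cref{lemma:truthful-abort-RL-top} with \Cref{lemma:weak-learner-RL-top}. The first guarantees that $|D_{\root_j}|\ge\nsample^\star(\frac12,\delta)$ except with probability $\delta$. The second then gives $\err_j\le\err_\star$ except with a further probability $\delta$. In those rounds the summand $(\err_j-\err_\star)\,\bbE_{X\sim\rho}[\alpha^{j,k}_{\rank(X\|\root_j)}]$ is therefore nonpositive and can be dropped. A union bound over the $k$ rounds yields the claimed overall failure probability $2k\delta$.

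For aborted rounds, I would use the trivial bound $\err_j-\err_\star\le 1$ together with the identity \cref{eq:pj-RL-top}, which gives
\[
\bbE_{X\sim\rho}\big[\alpha^{j,k}_{\rank(X\|\root_j)}\big]=\alpha^{j,k}_{\max}\, p_{\root_j}\le e^{-ck}\,\alpha^{j,k}_{\max}.
\]
Summing over $j$ and applying the estimate $\sum_j\alpha^{j,k}_{\max}\lesssim\sqrt{k}$ from the proof of \Cref{lemma:CoT-alphamax-sum}, together with $\beta^{0,k}_0\le e^{-ck}$ from \Cref{lemma:beta00}, bounds the total error by $3\sqrt{k}\,e^{-ck}$. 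This is the same bound as in \Cref{lemma:easy-half}.

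The remaining, and only slightly delicate, step is the parameter choice that makes $3\sqrt{k}e^{-ck}\le\varepsilon$. With $k=C\log(1/\varepsilon)$ the bound becomes $3\sqrt{C\log(1/\varepsilon)}\,\varepsilon^{cC}$, which is at most $\varepsilon$ once $C$ is a large enough multiple of $1/c$, for instance $cC\ge2$ with $\varepsilon$ bounded away from $1$ (treating that range separately). One should also check that the density ratio between $\lambda_{\root_j}$ and $\lambdahat_{\root_j}$ is exactly $1$, because $\subsample$ with $L=1$ is exact rejection sampling. This justifies applying \Cref{prop:RL-half-constant} directly under $\lambdahat_{\root_j}$.

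Finally, for the sample requirement I would note that \Cref{lemma:truthful-abort-RL-top} needs exactly $|D_\root|\ge\nsample^\star(1,2k\delta)=2e^{ck}\nsample^\star(\frac12,\delta)$. This comes from a Chernoff bound on the roughly $|D_\root|/k$ instances per round, each accepted with probability at least $e^{-ck}$. I expect no genuine obstacle, since everything reduces to results already established in the paper. The main care needed is tracking the failure probabilities so that they sum to $2k\delta$, and making sure the constant $c$ in $p^\star_j$ matches the one in \Cref{lemma:beta00}.
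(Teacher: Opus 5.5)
Your proposal is correct and follows the paper's proof essentially step for step: start from \cref{eq:breakdown-top}, discard the non-aborted rounds via \Cref{lemma:truthful-abort-RL-top,lemma:weak-learner-RL-top} with a union bound giving probability $1-2k\delta$, bound each aborted round by $e^{-ck}\alpha^{j,k}_{\max}$ using \cref{eq:pj-RL-top}, and sum with \Cref{lemma:CoT-alphamax-sum,lemma:beta00} to get $3\sqrt{k}\,e^{-ck}\le\varepsilon$. Two small points, both glossed over in the paper as well and neither affecting the argument: getting $3\sqrt{k}\,e^{-ck}\le\varepsilon$ requires $k$ to exceed an absolute constant, not merely $cC\ge 2$ (e.g.\ take $k=C\max\{\log(1/\varepsilon),1\}$); and since each round receives only $|D_{\root}|/k$ instances, your Chernoff argument needs $|D_{\root}|\gtrsim k\,e^{ck}\,\nsample^\star(\tfrac12,\delta)$ (as in \Cref{lemma:truthful-abort-semi-H-half}), which is an extra $\log(1/\varepsilon)$ factor over the stated $2e^{ck}\,\nsample^\star(\tfrac12,\delta)$.
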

\begin{proof}
Conditioned on the event that $\abort_1 [\root_j]$ is false for some $j$, with probability at least $1-2\delta$, we have that $\err_j \le \err_\star$ (\Cref{lemma:weak-learner-RL-top}). Noting that in any iteration $j$ where $\abort_1 [\root_j]$ is true, $p_{\root_j} \le p_j^\star$. Using the definition of $p_{\root_j}$ and by a union bound, w.p. at least $1 - 2k\delta$, 
\begin{align*}
    \Pr_{X \sim \lambdahat_{\root}} \big[ \fhat^{(\root)} (X) \ne \fstar_T (X) \big] &\le \beta_0^{0,k} + \sum_{j=0}^{k-1} e^{-ck} \cdot \alpha^{j,k}_{\max} \\
    &\overset{(a)}{\le} e^{-c k} + 2 \sqrt{k} \cdot e^{-c k} \\
    &\le 3 \sqrt{k} \cdot e^{-c k}
\end{align*}
In $(a)$, we use \Cref{lemma:CoT-alphamax-sum,lemma:beta00} (the bound on $\sum_{j=0}^{k-1} \alpha^{j,k}_{\max}$ is implied by the bound on $\sum_{j=0}^{k-1} \rjmax$). Finally, plugging in $k = C \log(1/\varepsilon)$ for sufficiently large $C>0$ completes the proof.
\end{proof}

\subsection{Proof of \Cref{theorem:main-RL}}
Recall that $\WLdepth{}.\RL$ is the invocation of $\WLdepth{1}$ at $\root$ with the base learner $\Vlearn$. The accuracy guarantee for $\WLdepth{}.\RL$ is immediate: \Cref{lemma:RL-top-boosting}, applied at $\root$ with \smash{$\lambdahat_{\root} = \rho$} shows that the model \smash{$\fhat^{(\root)}$} returned by $\WLdepth{}.\RL$ satisfies $\Pr_{\bx\sim\rho} \big[ \fhat^{(\root)}(\bx)\ne \fstar_T(\bx) \big] \le\varepsilon$ with probability at least $1-2k_1\delta$, and redefining $\delta \gets \delta/2k_1$ gives us the bound on accuracy. It remains to bound the sample, query, and computational complexity of $\WLdepth{}.\RL$.

\paragraph{Bound on sample complexity.} By \Cref{lemma:RL-top-boosting}, $\root$ requires a dataset of $\nsample^\star (1,\delta) = \calOtilde(d/\varepsilon)$ instances drawn from $\rho$, resulting in the upper bound on sample complexity of $\WLdepth{}.\RL$.

\paragraph{Bound on query complexity.} Verifier queries are incurred by $\WLdepth{}.\RL$ in two places. First, at $\root$, the aggregation step of $\WLdepth{1}$ uses $\calV$ to compute the rank of each of the $\nsample^\star(1,\delta)$ length-$T$ instances with respect to the $k_1$ child models, contributing $\nsample^\star(1,\delta)$ verifier queries in total. Second, at each child $\nd\in\calT_{\frac{1}{2}}$, by \Cref{prop:RL-half-constant}, each invocation of $\WLdepth{\frac{1}{2}}$ incurs a burn-in query cost of
\begin{equation*}
    \calOtilde\left( \frac{dT \Cout}{B} + d\Cblock\right)=\frac{\nburnin}{B},
\end{equation*}
Summing over the $k_1=\calOtilde(1)$ children in $\calT_{\frac{1}{2}}$ and combining with the contribution from $\root$ gives,
\begin{equation*}
    \nquery \le \calOtilde\left(\frac{d}{\varepsilon}\right)+\frac{\nburnin}{B}.
\end{equation*}
Note that these bounds are stated for when the target error probability is $2 k_1 \delta$. Replacing $\delta \gets \delta/2k_1$ results in a bound which is larger by a factor of $\log(2k_1) = \calOtilde(1)$.

\paragraph{Bound on computational cost.} The computational cost of $\WLdepth{}.\RL$ likewise decomposes into a contribution at $\root$ and contributions at the children $\nd\in\calT_{\frac{1}{2}}$. At $\root$, evaluating each of the $k_1$ length-$T$ child models on each of the \smash{$\calOtilde(d/\varepsilon)$} instances costs \smash{$\calOtilde(dT/\varepsilon)$} state computations. At each child $\nd\in\calT_{\frac{1}{2}}$, the invocation of $\WLdepth{\frac{1}{2}}$ incurs a burn-in computational cost of \smash{$\calOtilde(dT\Cout+dB\Cblock)=\nburnin$} by \Cref{prop:RL-half-constant}. Summing over the \smash{$k_1=\calOtilde(1)$} children gives
\begin{equation*}
    \ncomp \le \calOtilde\left(\frac{dT}{\varepsilon}\right)+\nburnin.
\end{equation*}
Note that, as in the analysis of the query complexity, this bound is stated for when the target error probability is $k_1 \delta$. Replacing $\delta \gets \delta/k_1$ results in a bound which is larger by a factor of $\log(k_1) = \calOtilde(1)$. Combining the accuracy, sample, query, and computational bounds completes the proof of \Cref{theorem:main-RL}.

\section{Proofs of Lemmas}
\label{sec:proof-lemmas}

\subsection[Prediction Error Decomposition via Potentials: Proof of Lemma~\ref{lemma:depth1-breakdown}]{Upper Bounding Prediction Error via Potentials: Proof of \Cref{lemma:depth1-breakdown}} \label{subsec:depth1-breakdown-proof}

In this section, we will produce an upper bound which decomposes the error of the model returned by the depth-$1$ construction into the regression accumulated across iterations which abort. In order to establish this lemma, we will first introduce a potential function which will serve as a proxy to bounding the accuracy of the plurality model. For $0 \le j \le k$, define the potential function,
\begin{equation*}
    \Phi_j = \sum_{i=0}^{L-1} \bbE \big[ \beta^{j,k}_{\rank_j (X_i)} \cdot \bbI ( (\underline{X},\bm{\eta}) \in \calK_{j-1} ) \mid \calH_{k-1}  \big]
\end{equation*}
Here, $\underline{X} \sim \sigma$ and $\bm{\eta} = (\bm{\eta}_j)_{j=0}^{k-1}$ where $\bm{\eta}_j = (\eta_{j,i})_{i=0}^{L-1}$. The proof of \Cref{lemma:depth1-breakdown} follows by first showing how the potential function $\Phi_k$ relates to the test loss of the final model returned by $\WLdepth{}.\warmup$, $\fhat$. Then, we show how $\Phi_k$ can be analyzed by writing it down as $\sum_{j=0}^{k-1} \Phi_{j+1} - \Phi_j$ and bounding the successive differences.

\begin{lemma}[Potential function decomposition] \label{lemma:loss-decomp-Phi}
Let $\fhat$ denote the model returned by $\WLdepth{}.\warmup$. Then,
\begin{equation*}
    \Pr_{\bx \sim \rho} \big[ \fhat (\bx) \ne \fstar_T (\bx) \big] \le \Phi_k + \Pr ( (\underline{X},\bm{\eta}) \not\in \calK_{k-1} \mid \calH_{k-1}).
\end{equation*}
\end{lemma}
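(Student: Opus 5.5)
We will prove the lemma by a union bound over the $L$ blocks, followed by a pointwise comparison between the plurality vote and the terminal weight $\beta^{k,k}$.

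\emph{Union bound over blocks.} Fix $\calH_{k-1}$, so the models $\fhat^0,\ldots,\fhat^{k-1}$ and the events $\calK_j$ are determined. Draw $\bx\sim\rho$, form $\underline{X}=(X_0,\ldots,X_{L-1})\sim\sigma$ by labeling boundary states, and draw independent fresh coins $\bm{\eta}$. Since $\fhat=\ftilde^{\circ L}$ with $\ftilde=\maj(\{\fhat^j\})$, the Markov property of $\fstar$ gives the following: if $\ftilde(X_i)=\fstar_\tau(X_i)$ for every $i$, then the composed rollout reproduces every true boundary state and outputs $\fstar_T(\bx)$. By induction over blocks, the first block on which $\ftilde$ errs, evaluated at the true start state, is a block where $\ftilde(X_i)\ne\fstar_\tau(X_i)$. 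Hence
\begin{equation*}
\Pr_{\bx\sim\rho}[\fhat(\bx)\ne\fstar_T(\bx)] \le \Pr\big((\underline{X},\bm{\eta})\notin\calK_{k-1}\mid\calH_{k-1}\big) + \bbE\Big[\sum_{i=0}^{L-1}\bbI\big(\ftilde(X_i)\ne\fstar_\tau(X_i)\big)\,\bbI\big((\underline{X},\bm{\eta})\in\calK_{k-1}\big)\,\Big|\,\calH_{k-1}\Big].
\end{equation*}
Here we split on $\calK_{k-1}$ and union bound only on the good event.

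\emph{Comparison with the terminal potential.} By \cref{eq:alpha}, $\beta^{k,k}_r=\bbI(r\le k/2)$. We need $\bbI(\ftilde(X_i)\ne\fstar_\tau(X_i))\le\bbI(\rank_k(X_i)\le k/2)$. If more than $k/2$ of the $k$ models predict the correct state, the correct label has a strict majority, so it is also the plurality and $\ftilde$ is correct. Contrapositively, an error forces $\rank_k(X_i)\le k/2$. With the natural extension of \cref{eq:rank} to $j=k$, the summand is then at most $\beta^{k,k}_{\rank_k(X_i)}\bbI(\calK_{k-1})$. Summing over $i$ yields exactly $\Phi_k$, whose indicator is also on $\calK_{k-1}$.

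\emph{Main obstacle.} There is no deep step here. The point needing care is the measurability and indexing convention: $\Phi_k$ uses the event $\calK_{k-1}$, which depends on $\bm{\eta}_{0:k-1}$. These coins must be drawn independently of $\bx$ and of $\calH_{k-1}$, and they do not affect $\fhat$. The split on $\calK_{k-1}$ is therefore a valid decomposition of the test probability, interpreted via \cref{eq:intersection}. We also need to confirm that ties in the plurality are harmless, which the strict inequality $r>k/2$ guarantees.
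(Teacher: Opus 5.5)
Your proposal is correct and matches the paper's proof. You split on $\calK_{k-1}$, union bound over the $L$ blocks (the composed model is correct whenever the plurality is correct on every true-start-state block $X_i$), and bound each error indicator by $\beta^{k,k}_{\rank_k(X_i)}\,\bbI((\underline{X},\bm{\eta})\in\calK_{k-1})$, which sums to $\Phi_k$. Your strict-majority condition $\rank_k(X_i) > k/2$ is slightly cleaner than the paper's step (b), which sums over $r \le \lceil k/2\rceil$ before identifying the result with $\beta^{k,k}_{\rank_k(X_i)} = \bbI(\rank_k(X_i)\le k/2)$.
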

\begin{proof}
The proof of this result is deferred to \Cref{sec:error-pot-relation-proof}.     
\end{proof}

\noindent Next we show how the potential difference $\Phi_{j+1} - \Phi_j$ evolves, by relating it to the $\err_j$ error terms. The following lemma is adapted from \cite[Lemma 3.7]{Freund}, with a subtle modification to account for the nested $\calK_j$ events.

\begin{lemma}[Recursive bound on potentials] \label{lemma:recursion}
Let $\err_j$ be as defined in the statement of \Cref{lemma:weak-learner}. The potential function $\Phi_j$ satisfies the following recurrence relation,
\begin{align*}
    \Phi_{j+1} - \Phi_j \le (\err_j - \err_\star) \cdot \bbE \big[ R_j \cdot \bbI ( (\underline{X},\bm{\eta}) \in \calK_j) \mid \calH_{k-1} \big]
\end{align*}
\end{lemma}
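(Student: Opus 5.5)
The plan is to follow the Freund boost-by-majority potential argument ``per block'' and conditional on the good events. Expand $\Phi_{j+1}$ using the recursion defining $\beta^{j+1,k}$. For each block $X_i$, $\rank_{j+1}(X_i) = \rank_j(X_i) + \bbI(\fhat^j(X_i) = \fstar_\tau(X_i))$. Hence
\[
\beta^{j+1,k}_{\rank_{j+1}(X_i)} = \beta^{j+1,k}_{\rank_j(X_i)+1} + \alpha^{j,k}_{\rank_j(X_i)}\, \bbI\big(\fhat^j(X_i)\ne \fstar_\tau(X_i)\big),
\]
using $\alpha^{j,k}_r = \beta^{j+1,k}_r - \beta^{j+1,k}_{r+1}$. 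Likewise, the recursion for $\beta^{j,k}$ gives $\beta^{j,k}_r = \beta^{j+1,k}_{r+1} + \err_\star\, \alpha^{j,k}_r$. Subtracting the two identities blockwise yields the pointwise identity
\[
\beta^{j+1,k}_{\rank_{j+1}(X_i)} - \beta^{j,k}_{\rank_j(X_i)} = w_j(X_i)\Big(\bbI\big(\fhat^j(X_i)\ne\fstar_\tau(X_i)\big) - \err_\star\Big).
\]
This uses $w_j(X_i) = \alpha^{j,k}_{\rank_j(X_i)}$ from \cref{eq:Rj}.

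The indicators in the two potentials differ: $\Phi_{j+1}$ uses $\calK_j$, while $\Phi_j$ uses $\calK_{j-1} \supseteq \calK_j$. I will write
\[
\Phi_{j+1} - \Phi_j = \sum_i \bbE\big[(\beta^{j+1,k}_{\rank_{j+1}} - \beta^{j,k}_{\rank_j})\,\bbI(\calK_j)\big] - \sum_i \bbE\big[\beta^{j,k}_{\rank_j(X_i)}\,\bbI(\calK_{j-1}\setminus\calK_j)\big].
\]
Since $\beta \ge 0$ (it is a binomial tail probability), the second term is nonpositive and can be dropped. This is the ``subtle modification'' relative to Freund, and it is exactly why the nesting $\calK_j \subseteq \calK_{j-1}$ from \Cref{lemma:density-ratio-1} is required.

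For the first term, substitute the pointwise identity. The $\err_\star$ part gives $-\err_\star\,\bbE[R_j\,\bbI(\calK_j)\mid\calH_{k-1}]$ directly. The error part is $\sum_i \bbE[w_j(X_i)\,\bbI(\fhat^j(X_i)\ne\fstar)\,\bbI(\calK_j)]$, and I identify it with $\err_j\cdot\bbE[R_j\,\bbI(\calK_j)]$. Since $\fhat^j$ is $\calH_{j}\subseteq\calH_{k-1}$-measurable and $(\underline X,\bm\eta)$ is fresh, rewrite $\sum_i \bbE[g(X_i)\,\bbI(\calK_j)]$ as $L\,\Pr(\calK_j)\,\bbE_{X\sim\overline{\sigma_{\calK_j}}}[g(X)]$. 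Applying this with $g = w_j\,\bbI(\text{err})$ and with $g = w_j$ gives the ratio $\bbE_{\overline{\sigma_{\calK_j}}}[w_j\,\bbI(\text{err})]/\bbE_{\overline{\sigma_{\calK_j}}}[w_j]$. By definition of $\lambda_j \propto \overline{\sigma_{\calK_j}}\,w_j$ in \cref{eq:lambdaj}, this ratio is exactly $\err_j$. Combining the two parts yields $(\err_j-\err_\star)\,\bbE[R_j\,\bbI(\calK_j)\mid\calH_{k-1}]$, which is the claimed bound.

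The main obstacle is the conditioning bookkeeping. $\overline{\sigma_{\calK_j}}$ is defined conditional on $\calH_{j-1}$, whereas the potential conditions on $\calH_{k-1}$. I need to argue that the fresh test pair $(\underline X,\bm\eta)$ is independent of the training randomness, so that conditioning on $\calH_{k-1}$ versus $\calH_{j-1}$ changes nothing once $\fhat^j$ is fixed. I also need to handle the degenerate case $\Pr(\calK_j)\,\bbE[w_j]=0$, where both sides vanish.
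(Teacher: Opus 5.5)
Your proposal is correct and follows essentially the same route as the paper. The paper also splits $\rank_{j+1}(X_i)$ according to whether $\fhat^j$ is correct on $X_i$, and uses $\alpha^{j,k}_r=\beta^{j+1,k}_r-\beta^{j+1,k}_{r+1}$ together with the $\beta$-recursion. Via the definition of $\lambda_j$ it rewrites the correct-prediction term as $(1-\err_j)\,\bbE[R_j\,\bbI(\calK_j)\mid\calH_{k-1}]$, and it then drops the mass on $\calK_{j-1}\setminus\calK_j$ using $\beta\ge 0$ and $\calK_j\subseteq\calK_{j-1}$. Your per-block pointwise identity is a cleaner packaging of the paper's sum over ranks, and your explicit treatment of the $\calH_{j-1}$ versus $\calH_{k-1}$ conditioning and of the degenerate case where $\lambda_j$ is undefined addresses points the paper leaves implicit.
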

\begin{proof}
The proof of this result is deferred to \Cref{sec:pot-recursion-proof}.
\end{proof}

\subsubsection{Proof of \Cref{lemma:depth1-breakdown}}

By cascading \Cref{lemma:recursion} across $j$ and combining with \Cref{lemma:loss-decomp-Phi}, we get,
\begin{align*}
    \Pr_{\bx \sim \rho} \big[ \fhat (\bx) \ne \fstar_T (\bx) \big] &\le \Phi_k + \Pr ( (\underline{X},\bm{\eta}) \not\in \calK_{k-1} \mid \calH_{k-1}) \\
    &\le \Phi_0 + \sum_{j=0}^{k-1} (\err_j - \err_\star) \cdot \bbE \big[ R_j \cdot \bbI ( (\underline{X},\bm{\eta}) \in \calK_j) \mid \calH_{k-1} \big] + \Pr ( (\underline{X},\bm{\eta}) \not\in \calK_{k-1} \mid \calH_{k-1}).
\end{align*}
Noting that $\Phi_0 = \sum_{i=0}^{L-1} \bbE \big[ \beta^{0,k}_{\rank_0(X_i)} \mid \calH_{k-1} \big] = L \cdot \beta^{0,k}_{0}$ completes the proof.

\subsection[Sampling Approximately from $\lambda_j$: Proof of Lemma~\ref{lemma:density-ratio-1}]{Sampling Approximately from $\lambda_j$: Proof of \Cref{lemma:density-ratio-1}} \label{subsec:density-ratio-1-proof}

In this section, we will prove \Cref{lemma:density-ratio-1} by studying the underlying problem of sampling from a reweighted mixture of marginals (\Cref{def:srmm}) which captures its more general structure. 

\medskip
\noindent The problem of sampling from the reweighted mixture of marginals posits that the learner has sampling access to a distribution $\mu$ over some product space $\bX^L$, with the objective of approximately generating samples from the distribution $\nu (\cdot) \propto \frac{1}{L}\sum_{i=0}^{L-1} \mu(X_i = \cdot\,) w(\cdot)$ for some weight function $w : \bX \to [0,1]$. For appropriately chosen $w$, $\nu$ captures the idealized distribution to train weak learners under when boosting is used to train a model with small error under $\mubar$. When connected back to \Cref{lemma:density-ratio-1}, $\mu$ corresponds to the distribution $\sigma$ over tuples (of size $L$) of instances of length $\tau$, while $\nu (\cdot)$ corresponds to a reweighted version of the distribution $\rhobar (\cdot) = \frac{1}{L}\sum_{i=0}^{L-1} \rho (X_i = \cdot\,)$. We use $\underline{X} = (X_0,\cdots,X_{L-1})$ to denote samples from $\mu$. The first part of this section will focus on the case where there is a single weight function and the goal is to sample from its reweighted mixture of marginals, $\nu$.

\paragraph{What does it mean to sample from $\nu$ \textit{approximately}?} It is worth stopping to ask what kind notion of approximation we are willing to tolerate to sample from $\nu$. In the context of boosting, the learner aims to train a model which achieves constant (say, $\frac{1}{4}$) error under $\nu$. To achieve such a guarantee, it suffices for the learner to generate a sample from some other distribution $\nuhat$ which satisfies the following \textit{approximate coverage} guarantee:
\begin{equation} \label{eq:apx-cov}
    \Pr_{X \sim \nu} \left( \frac{\nu (X)}{\nuhat (X)} \ge \Cmax \right) \le \delta_{\text{cov}}
\end{equation}
In particular, under the above assumption, as long as $\delta_{\text{cov}} < \frac{1}{4}$, a model which achieves error $\varepsilon = \Cmax^{-1} (\frac{1}{4}-\delta_{\text{cov}})$ under the distribution over inputs $\nuhat$ also achieves error $\frac{1}{4}$ under the distribution $\nu$. The requirement in \cref{eq:apx-cov} is natural, and in fact (via an application of Markov's inequality) implied by closeness of $\nu$ and $\nuhat$ in any $f$-divergence as long as $f$ grows faster than a linear function. In particular, a bound on $\KL{\nu}{\nuhat}$ implies \cref{eq:apx-cov} with appropriately chosen $C$ and $\delta_C$. 

\medskip
\noindent Sampling from a distribution with approximate coverage to $\nu$ is only a sufficient condition for transferring error guarantees under $\nuhat$ to under $\nu$. Since we aim to train models via boosting, it may suffice to use a weaker notion of approximation, and we discuss a candidate in the next paragraph. The main reason for doing so is not merely convenience: in \Cref{lemma:worst-case-sampling} we establish a \textit{strong lower bound} to achieve the guarantee in \cref{eq:apx-cov} unless $\Cmax$ scales as $\poly(L)$ or the sampling algorithm uses $\poly(L)$ samples from $\mu$.

\subsubsection{A weakening of approximate coverage}

Rather than requiring the learner to satisfy \cref{eq:apx-cov}, we will consider the following weakened version. Let $\mutilde$ denote any distribution on $\bX^L$ such that $\TV{\mu}{\mutilde}$ is at most $c$. Define $\nutilde \propto \overline{\mutilde} (\cdot) w(\cdot)$ where $\overline{\mutilde}$ is the mixture of marginals of $\mutilde$. Define the following weakened version of approximate coverage,
\begin{equation} \label{eq:apx-apx-cov}
    \text{There exists } \mutilde \text{ such that } \TV{\mu}{\mutilde} \le c \text{ and } \Pr_{X \sim \nutilde} \left( \frac{\nutilde (X)}{\nuhat (X)} \ge \Cmax \right) \le \delta_{\text{cov}}.
\end{equation}
In particular, this relaxation only requires a sampling algorithm to achieve approximate coverage with respect to the reweighted mixture of marginals of \textit{any distribution $\mutilde$ which is close to $\mu$}. Note that the operation of taking the mixture of marginals and tilting is highly ``non-Lipschitz'', so even though $\mu$ and $\mutilde$ are close to one another, $\nu$ and $\nutilde$ may be far apart.

\medskip
\noindent In order to realize the above style of guarantee, we will construct $\mutilde$ by considering restrictions/conditionals of $\mu$ unto some high probability set. In particular consider some events $\calK \subseteq \bX^L \times [0,1]^{L}$ and define,
\begin{equation*}
    \mu_\calK (\cdot) = \Pr (\underline{X} = \cdot\, |\, (\underline{X},\bm{\eta}) \in \calK) \text{ and } \overline{\mu_\calK} (\cdot) = \frac{1}{L} \sum_{i=0}^{L-1} \Pr (X_i = \cdot\, |\, (\underline{X},\bm{\eta}) \in \calK)
\end{equation*}
where $\bm{\eta} \sim \Unif([0,1])^{\otimes L}$ is a sequence of i.i.d. uniform random variables. If $\calK = \bX^L \times [0,1]^L$ is total, $\mu_\calK = \mu$. The first result we establish in this section shows that when $\calK$ is chosen carefully, the guarantee in \cref{eq:apx-apx-cov} can be achieved with $\nutilde \gets \nu_\calK \propto \overline{\mu_\calK} (\cdot) w (\cdot)$, with $\delta_{\text{cov}} = 0$ and $\Cmax = \calO(c^{-1} \log (L))$, for any $c>0$ bounding the TV distance between $\mu$ and $\mu_\calK$. %We state this for a single weight function below, and generalize it to the case where there are $k$ weight functions in \Cref{lemma:apxsample-k}.

\begin{lemma}[Approximate sampling from $\nu_\calK$] \label{lemma:apxsample-specialcase}
Fix any distribution $\mu$ over $\bX^L$ and non-negative weight function $w : \bX \to [0,1]$. Let $\nuhat$ be the distribution over $\bX$ induced by the procedure:

\begin{procedure}[label=proc:nuhat]{}
Draw $\underline{X} \sim \mu$ until one is accepted by the following filter: Sample random variables $\bm{\eta} = (\eta_i)_{i=0}^{L-1} \sim \unif ([0,1])^{\otimes L}$, and define $\calI = \big\{ \eta_i \le \frac{w (X_i)}{\| w \|_\infty} : 0 \le i \le L-1 \big\}$; accept $\underline{X}$ and $|\calI|\ge 1$ and return $X_I$ for $I \sim \unif (\calI)$.
\end{procedure}
\smallskip
\noindent Fix any $c \in (0,1)$. There exists $\calK \subseteq \bX^L \times [0,1]^{L}$ such that, $\Pr ( (\underline{X},\bm{\eta}) \not\in \calK ) \le c$. Furthermore,
\begin{align} \label{eq:drb}
    \left\| \frac{\nu_\calK}{\nuhat} \right\|_\infty \le \Cmax, \text{ where } \Cmax = \calO \left( c^{-1} \cdot \log (L) \right)
\end{align}
Finally, the probability that $\underline{X} \sim \mu$ is accepted within \Cref{proc:nuhat} to generate a sample from $\nuhat$ is,
\begin{equation} \label{eq:pmuw}
    p_{\mu,w} = \bbE \left[ 1 - \prod_{i=0}^{L-1} \left( 1 - \frac{w (X_i)}{\| w \|_\infty} \right) \right]
\end{equation}
\end{lemma}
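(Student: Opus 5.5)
We can take $\|w\|_\infty=1$ without loss of generality, since rescaling $w$ changes neither $\nu_\calK$ nor $\nuhat$. The acceptance formula \cref{eq:pmuw} is immediate: conditioned on $\underline{X}$, the events $\{i\in\calI\}$ are independent with probabilities $w(X_i)$. So $\Pr(|\calI|=0\mid \underline{X})=\prod_i(1-w(X_i))$, and we take the expectation over $\underline{X}$.

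The main work is \cref{eq:drb}. The plan is to let $\calK$ depend only on the \emph{deterministic} load $S(\underline{X}) := \sum_{i=0}^{L-1} w(X_i)\in[0,L]$, and not on $\bm{\eta}$. This choice makes $\nu_\calK$ explicit:
\begin{equation*}
\nu_\calK(x) = \frac{\bbE\big[\sum_i \bbI(X_i=x)\, w(X_i)\, \bbI(\calK)\big]}{\bbE[S\,\bbI(\calK)]}.
\end{equation*}
For $\nuhat$, write
\begin{equation*}
\nuhat(x)=\frac{1}{p_{\mu,w}}\,\bbE\Big[\sum_i \bbI(X_i=x,\ i\in\calI)/|\calI|\Big].
\end{equation*}
Conditioning on $\underline{X}$ and on $i\in\calI$, we have $|\calI| = 1+\sum_{i'\ne i}\bbI(i'\in\calI)$, so Jensen gives $\bbE[1/|\calI|\mid \underline{X}, i\in\calI]\ge 1/(1+S)$. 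Hence
\begin{equation*}
\nuhat(x)\ge \frac{1}{p_{\mu,w}}\,\bbE\Big[\sum_i \bbI(X_i=x)\,w(X_i)/(1+S)\Big].
\end{equation*}
Comparing this with $\nu_\calK$ pointwise, it suffices to choose $\calK=\{S\le M\}$ with
\begin{equation*}
\frac{(1+M)\,p_{\mu,w}}{\bbE[S\,\bbI(S\le M)]}\lesssim c^{-1}\log L \qquad\text{and}\qquad \Pr(S>M)\le c.
\end{equation*}
We will also use $p_{\mu,w}\le \bbE[\min(1,S)]$, which follows from $1-\prod_i(1-w_i)\le\min(1,\sum_i w_i)$.

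To select $M$, partition $[0,L]$ into the band $[0,1)$ and dyadic bands $[2^m,2^{m+1})$ for $m=0,\dots,\lceil\log_2 L\rceil$, about $\log L$ bands in all. Let $q_m$ be the probability that $S$ lies in band $m$. The ratio above measures how much $\nu_\calK$ overweights heavy-load instances relative to $\nuhat$: a band at level $2^m$ carries weight about $2^m q_m$ under $\nu_\calK$ but only about $q_m$ under $\nuhat$, because each accepted tuple contributes total mass one. We therefore want to truncate at a level $M=2^{m^\star+1}$ such that the kept mass $\bbE[S\,\bbI(S\le M)]$ is at least $\Omega(c\,M/\log L)$ times $\bbE[\min(1,S)]$, while the discarded tail $\Pr(S>M)$ is at most $c$.

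This selection step is the one I expect to be the main obstacle. My first attempt is a pigeonhole argument. Scan the bands from the top down and let $m^\star$ be the largest index with $\Pr(S\ge 2^{m^\star})> c$, so that discarding everything above band $m^\star$ costs at most $c$. Among the at most $\log L$ bands up to $m^\star$, some band must carry a $1/\log L$ fraction of the mass that is needed to lower bound $\bbE[S\,\bbI(S\le M)]$ by roughly $c\,2^{m^\star}/\log L$. Separately, $\bbE[\min(1,S)]\le \Pr(S\ge 1)+\bbE[S\,\bbI(S<1)]$, and both pieces are dominated by kept mass.

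If the clean ``top band'' choice fails, for instance because band $m^\star$ itself has tiny mass while $\Pr(S\ge 2^{m^\star})>c$ comes from lower bands, the fallback is to choose $M$ to minimize $(1+M)/\bbE[S\,\bbI(S\le M)]$ over the dyadic grid subject to $\Pr(S>M)\le c$, and to argue through a telescoping sum over bands that the minimum is $O(c^{-1}\log L)$. Once $M$ is fixed, combining the pointwise bound with the normalizer bounds gives $\|\nu_\calK/\nuhat\|_\infty\le \Cmax=\calO(c^{-1}\log L)$, and $\Pr(\calK^c)=\Pr(S>M)\le c$ holds by construction.
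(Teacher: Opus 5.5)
Your reduction is correct, and it takes a different route from the paper. Because $\calK=\{S\le M\}$ depends only on $\underline X$, the formula for $\nu_\calK$ follows directly from the definition. The Jensen step $\bbE[1/|\calI| \mid \underline X,\ i\in\calI]\ge 1/(1+S)$ is valid because the other indicators stay independent given $\underline X$. Together these reduce \cref{eq:drb} to finding $M$ with $\Pr(S>M)\le c$ and $(1+M)\,p_{\mu,w}\lesssim c^{-1}\log(L)\,\bbE[S\,\bbI(S\le M)]$.

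What is missing is the choice of $M$, and your top-band rule does fail. Take $L\ge 3$, $w(z)=w(y)=1$, $w(o)=0$. Let $\underline X=(z,\ldots,z)$ with probability $c$, $(y,o,\ldots,o)$ with probability $\epsilon$, and $(o,\ldots,o)$ otherwise. Then $m^\star=0$ and $M=2$, so $\nu_\calK$ is the point mass at $y$ while $\nuhat(y)=\epsilon/(c+\epsilon)$, and the ratio is unbounded as $\epsilon\to 0$. The mass certifying $\Pr(S\ge 2^{m^\star})>c$ can lie entirely in the discarded tail.

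The fix is to pigeonhole over the bands at or above the feasibility threshold. Let $q_m=\Pr(S\in[2^m,2^{m+1}))$ for $0\le m\le\lfloor\log_2 L\rfloor$. Let $m_c\ge -1$ be the smallest index such that $M=2^{m_c+1}$ satisfies $\Pr(S>M)\le c$; feasibility is monotone in $M$.
\begin{itemize}
\item If $m_c\ge 0$, infeasibility of $2^{m_c}$ gives $\sum_{m\ge m_c}q_m=\Pr(S\ge 2^{m_c})>c$. So some $m\ge m_c$ has $q_m>c/(\lfloor\log_2 L\rfloor+1)$. Taking $M=2^{m+1}$ gives $\bbE[S\,\bbI(S\le M)]\ge 2^m q_m\ge(1+M)q_m/3$. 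Hence the ratio is at most $3p_{\mu,w}/q_m=O(c^{-1}\log L)$, using only $p_{\mu,w}\le 1$.
\item If $m_c=-1$, every dyadic $M\ge 1$ is feasible. Pigeonhole $\bbE[\min(1,S)]=\bbE[S\,\bbI(S<1)]+\sum_m q_m$ over its $O(\log L)$ terms, taking $M=1$ if the first term wins. Together with $p_{\mu,w}\le\bbE[\min(1,S)]$, this gives ratio $O(\log L)$.
\end{itemize}
With this step filled in, your argument is complete.

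The paper builds $\calK$ from the random count $|\calI|$ rather than the deterministic load $S$. It keeps $\{|\calI|=0\}$ together with every dyadic band of $|\calI|$ carrying at least a $c/m$ fraction of the law of $|\calI|$ given acceptance. So its discarded set is a union of light bands, not a tail. It then writes $\nu_\calK$ and $\nuhat$ as mixtures of the same per-band output laws $\nu_\ell$, with weights roughly $2^\ell$ times the band probability versus the band probability itself. Normalizing by the top kept band, whose probability is at least $c/m$, gives the bound.

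Your version replaces this band-by-band comparison with one Jensen inequality and a single threshold. A side benefit is that your $\calK$ is $\underline X$-measurable, so $\nu_\calK$ is exactly the $w$-tilt of the conditional mixture of marginals. The paper's event depends on $\bm\eta$ through $|\calI|$, so it is correlated with the events $\{i\in\calI\}$, and its proof works with the output law of the sampling procedure restricted to $\calK$.

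Your construction also relativizes. Taking $\calK=\calU\cap\{S\le M\}$, with the tail condition measured under $\Pr(\cdot\mid\underline X\in\calU)$, reproduces the denominator $\Pr(\underline X\in\calU\mid|\calI|\ge1)$ of \Cref{lemma:apxsample}. It only needs the new coins to be independent of the conditioning event, as they are in \Cref{lemma:density-ratio-1}. The paper's formulation, for its part, mirrors the sampling procedure directly, with identical mixture components on both sides of the comparison.
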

\begin{proof}
This result is a corollary of \Cref{lemma:apxsample} setting $\calU = \bX^L$.
\end{proof}

\begin{remark}
Note that applying $\subsample (D_{\bx} \| \Pi_j,k)$ where $D_\bx \gets \split (\bx|L)$ (\Cref{alg:split,alg:semi-subsample}) for $\bx = (s_0,\bw_{1:T}) \sim \rho$ is identical to applying \Cref{proc:nuhat} to $\underline{X}$ constructed from $\bx$ by splitting and labeling the intermediate states using $f^\star$: i.e., $\underline{X} = (X_i)_{i=0}^{L-1}$ where $X_i = (s_{i\tau},\bw_{i\tau+1:(i+1)\tau})$ where $s_{i\tau} = f^\star_{i \tau} (\bx)$.
\end{remark}

\medskip
\noindent Note that \Cref{proc:nuhat} achieves the guarantee in \Cref{lemma:apxsample-specialcase} using just a single draw from $\mu$ if \smash{$p_{\mu,w} = 1$}. Even in this special case, any sampling algorithm which is forced to achieve approximate coverage \cref{eq:apx-cov} with respect to $\mu$ (i.e., \cref{eq:apx-apx-cov} with $c=0$) achieves compete with the best guarantee achievable if $\mu$ is allowed to be slightly perturbed to $\mutilde$. We will discuss this further in \cref{sec:sampling-lb-coverage}.

\medskip
\noindent \Cref{lemma:apxsample-specialcase} itself is a special case of \Cref{lemma:apxsample} below, which extends this result to the setting where the learner aims to approximately sample from the reweighted mixture of marginals of $\mu$. 

\begin{lemma}[Approximate sampling from $\nu_\calK$] \label{lemma:apxsample}
Fix any distribution $\mu$ over $\bX^L$ and non-negative weight function $w : \bX \to [0,1]$. Let $\nuhat$ be the distribution over $\bX$ induced by \Cref{proc:nuhat}. Consider any event $\calU \subseteq \bX^L$ and fix $c \in (0,1)$. There exists $\calK \subseteq \bX^L \times [0,1]^{L}$ such that,
\begin{equation*}
    (\underline{X},\bm{\eta}) \in \calK \implies \underline{X} \in \calU \quad \text{ and } \quad \Pr ( (\underline{X},\bm{\eta}) \not\in \calK \mid \underline{X} \in \calU ) \le c.
\end{equation*}
Furthermore,
\begin{align} \label{eq:drb-relative}
    \left\| \frac{\nu_\calK}{\nuhat} \right\|_\infty \le \Cmax, \text{ where } \Cmax = \calO \left( \frac{c^{-1} \cdot \log (L)}{\Pr (\underline{X} \in \calU \mid |\calI| \ge 1 )} \right)
\end{align}
Finally, the probability that $\underline{X} \sim \mu$ is accepted within \Cref{proc:nuhat} to generate a sample from $\nuhat$ is $p_{\mu,w}$ (cf. \cref{eq:pmuw}).
\end{lemma}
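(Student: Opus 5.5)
The plan is to take $\calK$ to depend only on $\underline{X}$ (trivially extended to $\bX^L\times[0,1]^L$). Concretely, $\calK=\{\underline{X}\in\calU,\ M(\underline{X})\le m\}$, where $a_i=w(X_i)/\|w\|_\infty$, $M(\underline{X})=\sum_{i=0}^{L-1}a_i=\bbE[|\calI|\mid\underline{X}]$, and $m$ is a dyadic threshold chosen below. Because $\calK$ is $\underline{X}$-measurable, the tilted mixture has the explicit form
\begin{equation*}
\nu_\calK(x)=\frac{\bbE\big[\sum_i \bbI(X_i=x)\,a_i\,\bbI_\calK\big]}{\bbE[M\,\bbI_\calK]},
\end{equation*}
using that $\Pr(i\in\calI\mid\underline{X})=a_i$. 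The formula for the acceptance probability $p_{\mu,w}$ in \cref{eq:pmuw} is immediate, since the $\eta_i$ are independent given $\underline{X}$ and $\Pr(|\calI|\ge1\mid\underline{X})=1-\prod_i(1-a_i)$.

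For the density ratio, write $\nuhat(x)=p_{\mu,w}^{-1}\,\bbE\big[\sum_i\bbI(X_i=x)\bbI(i\in\calI)/|\calI|\big]$ and drop the terms outside $\calK$ to get a lower bound. Conditionally on $\underline{X}$ and on $i\in\calI$, the quantity $|\calI|-1$ is a sum of independent Bernoullis with mean at most $M$. By Jensen, $\bbE[1/|\calI|\mid \underline{X},i\in\calI]\ge 1/(1+M)\ge1/(1+m)$ on $\calK$. Hence
\begin{equation*}
\nuhat(x)\ge \frac{\bbE\big[\sum_i\bbI(X_i=x)a_i\bbI_\calK\big]}{(1+m)\,p_{\mu,w}},
\qquad\text{so}\qquad
\Big\|\frac{\nu_\calK}{\nuhat}\Big\|_\infty\le\frac{(1+m)\,p_{\mu,w}}{\bbE[M\,\bbI_\calK]}.
\end{equation*}
Next I bound $p_{\mu,w}$. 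Since $\Pr(|\calI|\ge1\mid\underline{X})\le\min(M,1)$, we have $p_{\mu,w}=\Pr(\underline{X}\in\calU,|\calI|\ge1)/\Pr(\calU\mid|\calI|\ge1)\le \bbE[\min(M,1)\bbI_\calU]/\Pr(\calU\mid|\calI|\ge1)$. This is exactly where the denominator $\Pr(\calU\mid|\calI|\ge1)$ of \cref{eq:drb-relative} comes from. It therefore remains to choose $m$ so that $\Pr(\underline{X}\in\calU,M>m)\le c\Pr(\calU)$ (giving $\Pr(\calK^c\mid\calU)\le c$) and
\begin{equation*}
(1+m)\,\bbE[\min(M,1)\bbI_\calU]\lesssim c^{-1}\log(L)\,\bbE[M\,\bbI_{\calU,M\le m}].
\end{equation*}

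The main obstacle is this choice of $m$, i.e.\ extracting only a $\log L$ rather than a $\poly(L)$ loss. I would stratify $\calU$ by dyadic levels of $M\in[0,L]$, namely $\{M<1\}$ and $\{2^{l-1}<M\le2^l\}$ for $l=1,\dots,\lceil\log_2L\rceil$. On level $l\ge1$ we have $\min(M,1)=1$ and $M\asymp2^l$. The cases are as follows. If the $\{M<1\}$ stratum carries most of $\bbE[\min(M,1)\bbI_\calU]$, take $m=1$: the ratio is then $O(1)$, and the tail mass condition is handled by pushing $m$ up to the first dyadic level where the tail $\Pr(\calU,M>m)$ drops below $c\Pr(\calU)$. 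Otherwise I choose $m=2^{l^\star}$ as the smallest level whose tail mass is at most $c\Pr(\calU)$. Then level $l^\star$ and everything below it contributes at least a $c$-fraction of $\Pr(\calU)$ at scale $\gtrsim 2^{l^\star-1}$ in $\bbE[M\bbI_\calK]$.

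If that first attempt loses too much, because the mass just below $l^\star$ is spread thinly, I would fall back to pigeonholing over the $O(\log L)$ levels. There exists a level $l$ whose contribution $2^l\Pr(\calU,\text{level }l)$ is at least a $1/\log L$ fraction of $\sum_{l'\le l}\Pr(\calU,\text{level }l')\cdot 2^{l}$. Combined with the $c$-tail cutoff, this yields $\Cmax=O(c^{-1}\log L/\Pr(\calU\mid|\calI|\ge1))$. Finally, \Cref{lemma:apxsample-specialcase} follows by setting $\calU=\bX^L$.
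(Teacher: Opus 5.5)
Everything up to the choice of $m$ is correct, and it is a genuinely different route from the paper's. You stratify by the conditional mean $M$ and use one threshold plus Jensen. The paper instead stratifies by the realized size $|\calI|$ (which depends on $\bm{\eta}$), keeps the ``heavy'' dyadic levels, and sandwiches $\nu_\calK$ and $\nuhat$ around a common mixture $\sum_\ell \Pr(\cdot)\,\nu_\ell$. Your $\underline{X}$-measurable $\calK$ even buys something: your closed form for $\nu_\calK$ is exact, whereas for an $\bm{\eta}$-dependent event $\Pr(i\in\calI,\calK\mid\underline{X})\neq a_i\Pr(\calK\mid\underline{X})$ in general.

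The gap is the step you yourself call the main obstacle: neither prescription for $m$ works. The \emph{smallest} dyadic $m$ with $\Pr(\calU,M>m)\le c\Pr(\calU)$ can be arbitrarily bad. Take $\calU=\bX^L$ and let $M=L$ with probability exactly $c$, $M=L/2$ with probability $\epsilon$, and $M=0$ otherwise. The smallest feasible threshold is $m=L/2$. There $\bbE[M\bbI_{\calK}]=\epsilon L/2$ but $(1+m)\bbE[\min(M,1)]\approx cL/2$, so your inequality fails once $\epsilon\ll c^2/\log L$. If the weighted blocks of the two configurations take distinct values, the true ratio $\|\nu_\calK/\nuhat\|_\infty=(c+\epsilon)/\epsilon$ is large too, so no sharper estimate rescues this $m$.

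Your claim that level $l^\star$ and the levels below it carry a $c$-fraction of mass at scale $2^{l^\star-1}$ is false, because the excess tail mass can sit entirely above $l^\star$. Replacing $M=0$ by $M=0.9$ (for small $c$) puts you in your first case. Pushing $m=1$ up to the first feasible level then loses a factor $\Theta(L)$ when $\epsilon\ll 1/L$. In both examples $m=L$ works, so you must sometimes go \emph{above} the first feasible level. Your fallback pigeonhole compares level $l$ with the mass \emph{below} it. It therefore controls neither the tail above $l$ nor the comparison with $\bbE[\min(M,1)\bbI_{\calU}]$.

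The missing idea is to threshold at the \emph{largest} non-negligible level, which is exactly the role of $\ell_{\max}=\max L_{\text{heavy}}$ in the paper. Define:
\begin{itemize}
\item $A=\bbE[\min(M,1)\mid\calU]$, $N=\lceil\log_2 L\rceil$, and $P_l=\Pr(2^{l-1}<M\le 2^l\mid\calU)$;
\item $l_{\max}$ as the largest $l\ge 1$ with $P_l\ge cA/(2N)$, or $l_{\max}=0$ if there is none;
\item $m=2^{l_{\max}}$.
\end{itemize}
Then $\Pr(M>m\mid\calU)<N\cdot cA/(2N)\le c/2$. If $l_{\max}\ge 1$, then $\bbE[M\bbI_{\calK}]\ge 2^{l_{\max}-1}P_{l_{\max}}\Pr(\calU)$, so $(1+m)A\Pr(\calU)/\bbE[M\bbI_{\calK}]\le 8N/c$. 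If $l_{\max}=0$, then $\bbE[M\bbI(M\le 1)\mid\calU]\ge(1-c/2)A$, and the same ratio is at most $4$. Plugging this into your reduction gives $\|\nu_\calK/\nuhat\|_\infty\le \max(8N/c,4)/\Pr(\calU\mid|\calI|\ge 1)$, as required.
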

\begin{proof}
This result is proved formally in \Cref{sec:apxsample-proof}, and a consequence of the two bounds we establish in \Cref{lemma:nu_calK} and \Cref{lemma:nuhat-lb}. The former is a pointwise upper bound on $\nu_{\calK}$, and the latter, a pointwise lower bound on $\nuhat$. The bound on the acceptance probability of $\underline{X} \sim \mu$ by \Cref{proc:nuhat} is a direct calculation using the fact that for $\underline{X}$ to be accepted, $|\calI| \ne 0$, equivalent to the existence of an index $0 \le i \le L-1$ such that \smash{$\eta_{i} \le \frac{w(X_i)}{\| w \|_\infty}$.}
\end{proof}

\noindent The extension of this result to the case where there are multiple weight functions, is relatively straightforward and discussed further in \Cref{sec:apxsample-k-proof} where we also prove \Cref{lemma:density-ratio-1}. In particular \Cref{lemma:density-ratio-1} considers the case we have a sequence of $k$ (potentially random) weight functions $(w_j)_{j=0}^{k-1}$ and events $(\calU_j)_{j=0}^{k-1}$, and constructs a sequence of events $(\calK_j)_{j=0}^{k-1}$ which enable density ratio bounds akin to \cref{eq:drb-relative}.

\subsubsection{A sampling lower bound for achieving approximate coverage} \label{sec:sampling-lb-coverage}

In order to further interpret the bound in \Cref{lemma:apxsample-specialcase}, it will be helpful to mention a worst case bound which pertains to the guarantee achievable in the absence of the conditioning event $\calK$. We show that sampling from a distribution satisfying approximate coverage (cf. \cref{eq:apx-cov}) to the target distribution $\nu \propto \overline{\mu} (\cdot) w(\cdot)$, even with a modest $\delta_{\text{cov}} = \frac{1}{4}$, either requires $\Cmax = \poly(L)$ or requires the sampling algorithm to draw $\poly(L)$ samples form $\mu$, even when considering the ideal scenario where $p_{\mu,w} = 1$.

\begin{lemma}[Worst-case bound when $\calK^c = \emptyset$] \label{lemma:worst-case-sampling}
Let $\bX = \{ 0,1,\cdots,L+1 \}$. There exists a class of distributions $\bm{\mu} = \{ \mu_{p,z} : (p,z) \in [0,1] \times [L] \}$, parameterized by $p \in (0,1)$ supported on $\bX^L$, as well as a fixed weight function $w : \bX \to [0,1]$ such that:
\begin{enumerate}
    \item $p_{\mu,w} = 1$ for all $\mu \in \bm{\mu}$.
    \item For a sufficiently small absolute constant $c > 0$, consider any sampling algorithm $\Alg$ which draws at most $N = c \sqrt{L}$ samples drawn from the distribution $\mu \in \bm{\mu}$ to generate a sample from some distribution $\nuhat \in \Delta_{\{0,1,\cdots,L+1\}}$, and such that $\nuhat$ is well defined for every base distribution $\mu \in \bm{\mu}$. Then the distribution $\nuhat$ realized by $\Alg$ must incur,
    \begin{equation*}
        \max_{\mu \in \bm{\mu}} \TV{\nuhat}{\nu} \ge \frac{1}{2}
    \end{equation*}
    where $\nu$ is the distribution $\propto \mubar (\cdot) w(\cdot)$.
    \item Furthermore, if $\Alg$ draws at most $N = c' L^{1/3}$ samples from $\mu \in \bm{\mu}$ for a sufficiently small constant $c'>0$, then the distribution $\nuhat$ realized by $\Alg$ must incur,
    \begin{equation*}
        \max_{\mu \in \bm{\mu}} \Pr_{X \sim \nu} \left( \frac{\nu(X)}{\nuhat(X)} \ge c' L^{1/3} \right) \ge \frac{1}{2}
    \end{equation*}
\end{enumerate}
\end{lemma}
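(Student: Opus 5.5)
Our plan is to generalize \Cref{ex:samplinghardness} into a family of instances that hides a ``rare'' symbol, and then apply a two-point (Le Cam style) indistinguishability argument. We use $\bX=\{0,1,\dots,L+1\}$ with $w(z)=\bbI(z\ge 1)$, so symbol $0$ is filler and every nonzero symbol gets full weight. For a location $z\in[L]$ and a mixing parameter $p$, let $\mu_{p,z}$ behave as follows. With probability $p$ it outputs a ``rare'' string in which every coordinate carries a symbol that identifies $z$; for instance, all coordinates equal $z$, with $z\in\{1,\dots,L\}$ playing the role of the hidden label. With probability $1-p$ it outputs a ``common'' string: a uniformly random position gets the symbol $L+1$ and all other positions get $0$. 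Every string has some nonzero coordinate, so $p_{\mu,w}=1$ for every member of the family, which gives item 1.

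Next we compute $\nu$. The mixture of marginals is $\mubar(z)=p$ and $\mubar(L+1)=(1-p)/L$. Therefore $\nu$ puts mass $pL/(pL+1-p)$ on the symbol $z$ and the remaining mass on $L+1$. Taking $p=1/L$ makes $\nu$ roughly uniform on $\{z,L+1\}$. Crucially, this mass on $z$ is reached only through strings that occur with probability $p=1/L$.

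For item 2, we compare the instance $\mu_{1/L,z}$ against a null instance. We run $\Alg$ with $N=c\sqrt L$ samples; by a union bound it sees no rare string except with probability $N/L\ll 1$. On the complementary event, its input has the same law as under the null instance $\mu_{0,\cdot}$, whose samples contain no information about $z$. So $\nuhat$ under $\mu_{1/L,z}$ is within TV distance $N/L$ of a fixed distribution $\nuhat_0$, and $\nuhat_0$ does not depend on $z$. Since $\nuhat_0$ is a single distribution on $L+2$ points, it places mass at most $2/L$ on most values of $z$. Pick such a $z$. The target $\nu$ puts about $1/2$ on $z$, and the null-instance target is the point mass on $L+1$. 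A triangle-inequality argument over these two choices then forces $\max_\mu \TV{\nuhat}{\nu}\ge 1/2$.

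The looser $\sqrt L$ budget, rather than $L$, leaves slack for algorithms that randomize. If a rigorous constant is needed, we would instead choose $p\asymp L^{-1/2}$, so the rare event stays unseen with $N=c\sqrt L$ draws while $\nu$ still has constant mass on $z$.

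For item 3, we use the same reduction with $N=c'L^{1/3}$, so with high probability $\nuhat\approx\nuhat_0$. We choose $z$ so that $\nuhat_0(z)\lesssim 1/L$, using that $\nuhat_0$ can put mass $\ge L^{-2/3}$ on at most $L^{2/3}$ points. The rare-branch contribution adds at most $N\cdot p\lesssim L^{1/3}/L$ to $\nuhat(z)$, so $\nuhat(z)\lesssim L^{-2/3}$, while $\nu(z)\ge 1/2$. The density ratio at $z$ then exceeds $c'L^{1/3}$ on a set of $\nu$-mass at least $1/2$.

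The main obstacle is tuning $p$ and $N$ so that two requirements hold at once: the rare branch must carry constant $\nu$-mass, and it must stay unobserved within $N$ draws. We also have to handle $\Alg$'s output leaking mass onto $z$ once it does observe a rare string. We would resolve the exponents $1/2$ and $1/3$ by choosing $p=L^{-a}$ and balancing $Np$ against the mass-spreading bound on $\nuhat_0$.
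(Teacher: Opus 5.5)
Your proposal is correct. Your instance is exactly the paper's: $z\mathbf{1}$ with probability $p$, a uniformly placed $(L+1)e_i$ otherwise, and $w=\mathbb{I}(x\ge 1)$. The lower-bound mechanism, however, is genuinely different from the paper's, and sharper.

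\emph{The paper's route.} It Rao--Blackwellizes to the vector of counts and writes $\mathbb{E}[\nuhat(z)]$ as a $[0,1]$-bounded polynomial of degree $N$ in $p$. It then applies Markov brothers' inequality.
\begin{itemize}
\item For item~2 this is a slope argument. Such a polynomial has derivative $O(N^2)$, while $\theta_L(p)=Lp/((L-1)p+1)=\nu_{p,z}(z)$ has slope $\Theta(L)$ near $p=1/L$. Hence $N\gtrsim\sqrt{L}$ is needed.
\item For item~3 the paper uses your skeleton. Since $\mu_{0,z}$ does not depend on $z$, pigeonhole gives a $z$ with small output mass at $p=0$. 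But it bounds the drift from $p=0$ to $p=1/L$ by $O(N^2/L)$ via Markov's inequality, and this is what restricts it to $N\lesssim L^{1/3}$.
\end{itemize}

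\emph{Your route.} You couple with the null instance. Conditioned on seeing no rare string, the data under $\mu_{p,z}$ has exactly the null law. So the data law moves by at most $1-(1-p)^N\le Np$ in total variation, and data processing transfers this to $\nuhat$ for any randomized $\Alg$.

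\emph{What each buys.} Your drift is linear rather than quadratic in $N$, so the argument is both more elementary and quantitatively stronger.
\begin{itemize}
\item Taking $p=1/(8N)$ gives $\TV{\nuhat}{\nu}\ge 1/(1+8N/L)-1/8-1/L$, so the TV bound survives up to $N=cL$. This already beats the $\Omega(L/\omega(L))$ that the paper's remark anticipates obtaining via Ditzian--Totik moduli.
\item The density-ratio bound in item~3 survives up to $N\asymp L^{2/3}$.
\item You deliver the stated constant $1/2$ in item~2, whereas the paper's slope argument as written only certifies some small absolute constant.
\end{itemize}
The polynomial method's advantage is generality: it does not need a branch that is exactly invisible with probability $(1-p)^N$. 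Here, though, such a branch is available.

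\emph{Two corrections to your write-up.}
\begin{itemize}
\item With $p=1/L$, as in your main item-2 paragraph, you only get $\nu(z)-\nuhat(z)\ge L/(2L-1)-2/L-N/L$, which is below $1/2$. Pairing with the null instance does not rescue this. Consider the algorithm that outputs the label of any rare string it sees and otherwise outputs $L+1$: it is exact on the null and off by about $1/2-N/L$ at $p=1/L$. So your fallback $p\asymp L^{-1/2}$ (or $p\asymp 1/N$), under which $\nu(z)\to 1$, is genuinely needed. You also need the trivial case $c\sqrt{L}<1$, where $N=0$, $\nuhat$ is fixed, and $p$ near $0$ or $1$ settles it.
\item The remark that the $\sqrt{L}$ budget ``leaves slack for algorithms that randomize'' is unnecessary. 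Randomization is absorbed by data processing, and the argument works for budgets far beyond $\sqrt{L}$.
\end{itemize}
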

\begin{proof}
The proof of this result is discussed in \Cref{subsec:sampling-lb}.
\end{proof}

\subsection{$\calK_j$ is a High Probability Event: Proof of \Cref{lemma:notUk}} \label{subsec:notUk-proof}

We first introduce an auxiliary lemma which will be helpful to prove \Cref{lemma:notUk}. The following result establishes that the $\calU_j$ events occur simultaneously with moderately high probability.

\begin{lemma} \label{lemma:UjHj} Recall that Across all iterations $0\le j \le k-1$, we have $\Pr (\calU_j \mid \calK_{j-1}, \calH_{j-1}) \ge 1-\frac{1}{32k^2}$.
\end{lemma}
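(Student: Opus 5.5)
The plan is a case split on whether iteration $j$ aborts, using only the definition of $\calU_j$ in \cref{eq:Uj-def} and \Cref{lemma:abort=>lowreg}. Note first that the quantity $\ptilde_j$ (cf. \cref{eq:pjtilde}) is a function of $\calK_{j-1}$ and $\calH_{j-1}$ alone. The event $\calK_{j-1}$ is built from $\calU_{j'}$ and $\calH_{j'-1}$ for $j'\le j-1$, all of which are $\calH_{j-1}$-measurable. Hence, conditionally on $\calH_{j-1}$, the abort indicator $\abort[j]=\{\ptilde_j\le p_j^\star\}$ is deterministic. The only randomness left in $\calU_j$ is then through $R_j$, a measurable function of $\underline{X}$ given the models $\fhat^0,\dots,\fhat^{j-1}$, which are themselves in $\calH_{j-1}$.

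First case: $\ptilde_j>p_j^\star$. Here the conjunction defining $\calU_j^c$ fails identically, so $\calU_j$ is the whole space $\bX^L$. We get $\Pr(\calU_j\mid\calK_{j-1},\calH_{j-1})=1$, provided the conditioning is well defined. That holds because $\Pr(\calK_{j-1}\mid\calH_{j-1})>0$; if one wants this quantitatively, it follows inductively from the earlier steps of the argument for \Cref{lemma:notUk}.

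Second case: $\ptilde_j\le p_j^\star$. Here $\calU_j^c=\{R_j>\rjmax\}$, so $\Pr(\calU_j^c\mid\calK_{j-1},\calH_{j-1})=\Pr(R_j>\rjmax\mid\calK_{j-1},\calH_{j-1})$. \Cref{lemma:abort=>lowreg} bounds this by $\frac{1}{32k^2}$. This case is the main obstacle, and the plan to get it is as follows. Write $a_i=\alpha^{j,k}_{\rank_j(X_i)}/\alpha^{j,k}_{\max}\in[0,1]$. Then $R_j=\alpha^{j,k}_{\max}\sum_i a_i$, and $1-\prod_i(1-a_i)\ge 1-e^{-\sum_i a_i}\ge \min(\sum_i a_i,1)/2$. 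On the event $R_j>\rjmax$ we have $\sum_i a_i>1/(16\sqrt k)$, so $1-\prod_i(1-a_i)\gtrsim 1/\sqrt k$ there. Applying Markov's inequality to the nonnegative variable $1-\prod_i(1-a_i)$, whose conditional mean is $\ptilde_j\le p_j^\star=\frac{1}{1024k^{5/2}}$, gives $\Pr(R_j>\rjmax\mid\calK_{j-1},\calH_{j-1})\le 32\sqrt k\,p_j^\star\le\frac{1}{32k^2}$. The constants in $p_j^\star$ and $\rjmax$ must be checked to line up, and that is the only real computation.

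Combining the two cases gives $\Pr(\calU_j\mid\calK_{j-1},\calH_{j-1})\ge1-\frac{1}{32k^2}$ for every $0\le j\le k-1$. The bound is uniform in $j$ and holds almost surely in $\calH_{j-1}$.
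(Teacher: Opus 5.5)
Your proof is correct and follows the paper's route. The paper writes $\Pr(\calU_j^c \mid \calK_{j-1},\calH_{j-1}) = \bbI(\ptilde_j \le p_j^\star)\cdot \Pr(R_j > \rjmax \mid \calK_{j-1},\calH_{j-1})$, using that $\ptilde_j$ is $\calH_{j-1}$-measurable, and then invokes \Cref{lemma:abort=>lowreg}. The Markov-inequality argument you sketch is exactly that lemma's proof, and the constants line up as you say: $32\sqrt{k}\,p_j^\star = \frac{1}{32k^2}$.
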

\begin{proof}
The proof of this result is in \Cref{sec:UjHj-proof}.
\end{proof}

% \nived{remove}\begin{lemma}[Bounding probability mass outside $\calU$] \label{lemma:Ubound}
% With probability at least $1-\frac{\delta}{4}$, $\Pr (\underline{X} \not\in \calU \mid \calU ) \le \frac{1}{32}$.
% \end{lemma}
\noindent Having established this result, we next show how to prove \Cref{lemma:notUk}. As a consequence of the recursive definition of $\calK_j$ which is constructed from $\calU_j$ and $\calK_{j-1}$, and by \cref{eq:12311111},
\begin{align*}
    \Pr ( \calK_j \mid \calH_{j-1} ) &\ge (1-c) \cdot \Pr ( \calU_j \cap \calK_{j-1} \mid \calH_{j-1} ) \\
    &= (1-c) \cdot \Pr ( \calU_j \mid \calK_{j-1} , \calH_{j-1} ) \cdot \Pr (\calK_{j-1} \mid \calH_{j-1}) \\
    % &\ge (1-c) \left[ \Pr ( \{ \abort[j] \text{ and } R_j > \rjmax \}^c \mid \calH_{j-1}) - \Pr ( (\underline{X},\bm{\eta}) \not\in \calK_{j-1} \mid \calH_{j-1}) \right] \\
    &\overset{(a)}{\ge} (1-c) \cdot \left( 1 - \frac{1}{32k^2} \right) \cdot \Pr \big( \calK_{j-1} \mid \calH_{j-2} \big).
\end{align*}
where in $(a)$ we use the fact that $\calK_{j-1}$ is a measurable function of $(\underline{X},\bm{\eta})$ and $\calH_{j-2}$, and \Cref{lemma:UjHj}. Plugging in the value of $c=\frac{1}{32k}$, and solving the recursion over $k$ steps gives us: for all $0 \le j \le k-1$,
\begin{equation*}
    \Pr ( \calK_j \mid \calH_{j-1} ) \ge \frac{15}{16}.
\end{equation*}

\subsubsection{$\calU_{\le j}$ is a High Probability Event: Proof of \Cref{lemma:UjHj}} \label{sec:UjHj-proof}

By \Cref{lemma:abort=>lowreg},
\begin{align}
    \ptilde_j \le p_j^\star \implies &\Pr \big( R_j > \rjmax \mid \calK_{j-1}, \calH_{j-1} \big) \le \frac{1}{32k^2} \label{eq:abrheq}
\end{align}
Simplifying further, this implies for all $j$,
\begin{align}
    \Pr ( \calU_j^c \mid \calK_{j-1} , \calH_{j-1}) &= \Pr \big( \ptilde_j \le p_j^\star \text{ and } R_j > \rjmax \mid \calK_{j-1}, \calH_{j-1} \big) \nonumber\\
    &= \bbI ( \ptilde_j \le p_j^\star ) \cdot \Pr \big( R_j > \rjmax \mid \calK_{j-1}, \calH_{j-1} \big) \le \frac{1}{32k^2} \label{eq:901023010}
\end{align}
% This implies that
% \begin{align*}
%     \Pr (\calU_{\le k-1} \mid \calH_{k-2}) &= \Pr (\calU_{k-1} \mid \calU_{\le k-2}, \calH_{k-2}) \cdot \Pr ( \calU_{\le k-2} \mid \calH_{k-2}) \\
%     &= \Pr (\calU_{k-1} \mid \calH_{k-2}, \calU_{\le k-2}) \cdot \Pr ( \calU_{\le k-2} \mid \calH_{k-3})
% \end{align*}
% where in the last equation we use the fact that $\calU_{\le k-2}$ is a measurable function of $\calH_{k-3}$. Using \cref{eq:901023010} and multiplying out over $j=k-1,\cdots,0$ gives: for all $0 \le j \le k-1$,
% \begin{equation*}
%     \Pr (\calU_{\le j} \mid \calH_{j-1}) \ge \Big( 1-\frac{1}{32k^2} \Big)^k \ge 1 - \frac{1}{32k}
% \end{equation*}
This completes the proof of the lemma.
% We prove an upper bound on $\Pr ( (\underline{X},\bm{\eta}) \not\in\calK)$ by first upper bounding $\Pr ( \underline{X} \not\in \calU) \le \frac{1}{32}$, and connecting it to the former by using the relationship between $\calK$ and $\calU$ in \cref{eq:12311111}. Since we chose $c = \frac{1}{32k}$ in defining $\calK$ from $\calU$, this implies that $\Pr ( (\underline{X},\bm{\eta}) \not\in\calK) \le 2\Pr ( \underline{X} \not\in \calU)$.
% First by \Cref{lemma:truthful-abort-semi-1} and a union bound, we know that,
% \begin{align*}
%     \Pr \left( \calE \right) \le \frac{\delta}{4} \text{, where, } \calE = \left\{ \exists 0 \le j \le k-1 : \abort[j] \text{ and } p_j \ge \frac{1}{1024k^{3/2}} \right\}
% \end{align*}
% Throughout the remainder of the proof, we will assume that $\calE^c$ occurs.
% We upper bound $\Pr \left( \underline{X} \not\in \calU \right)$ by noticing that in any iteration $j$ where $\abort[j]$ occurs, under $\calE^c$, we know that $p_j \le \frac{1}{1024 k^{3/2}}$. In any such iteration, by \Cref{lemma:abort=>lowreg}, $\Pr ( R_j \ge \mathfrak{r}_j ) \le \frac{1}{32k}$. By taking a union bound over all $j$ corresponding to aborted iterations, we have that, $\Pr ( \forall j : \abort[j], \ R_j \ge \mathfrak{r}_j ) \le \frac{1}{32}$. Combining with the earlier discussion, we get that with probability at least $1-\frac{\delta}{4}$, $\Pr ( (\underline{X},\bm{\eta}) \not\in\calK \mid \calK) \le \frac{1}{16}$.

\subsection{Truthful Aborts: Proof of \Cref{lemma:truthful-abort-semi-1}} \label{subsec:truthful-abort-semi-proof}

By definition of $p_j$, and by the structure of the split-and-inverted-sampling pipeline (\Cref{alg:split,alg:semi-subsample}), the size of the dataset \smash{$|\Dout{j}|$} which the model \smash{$\fhat^j$} is trained on can be expressed as the sum of \smash{$n' = |\Dprompt{j}| = |\Dprompt{}|/k$} i.i.d. Bernoulli random variables, each with mean $p_j$. By an application of the multiplicative Chernoff bound,
\begin{equation*}
    \Pr \left( |\Dout{j}| < \frac{n' p_j}{2} \ \middle| \ \calH_{j-1} \right) \le \exp \left( -\frac{n' p_j}{8} \right)
\end{equation*}
In a non-aborted iteration, $p_j \ge \frac{15}{16} \cdot \ptilde_j > \frac{15}{16} \cdot p_j^\star$ by \cref{eq:pj-lb}.
By the sufficiently large choice of $|\Dprompt{}| = n' k$, we have that, $(a)$ $n' p_j \ge 8 \log(4k/\delta)$, and $(b)$ $n' p_j \gtrsim k^{7/2} \log(L) \cdot ( d\log(|S|) + \log(\delta/k))$. By its definition in \cref{eq:abort}, this implies,
\begin{equation*}
    \Pr \left( |\Dout{j}| < C_2 k^{7/2} \log(L) \cdot ( d\log(|S|) + \log(k/\delta)) \ \middle| \ \calH_{j-1} \right) \le \frac{\delta}{2k}
\end{equation*}
Plugging in $n'$ and $p_j^\star$ completes the proof.

\subsection[Training Weak Models: Proof of Lemma~\ref{lemma:weak-learner}]{Training Weak Models: Proof of \Cref{lemma:weak-learner}} \label{subsec:weak-learner-proof}

By \Cref{lemma:density-ratio-1}, in order to prove an upper bound on the prediction error under $\lambda_j$, it suffices to establish learning guarantees under $\lambdahat_j$ and use a change-of-measure argument. Noting that $\WLdepth{}.\warmup$ instantiates the base learner as $\TranscriptNTP$ (\Cref{alg:WLdepth0}), we next provide a learner guarantee for this next-token prediction based rule. As a corollary of \cite[Theorem B.5]{joshi2025theory}, we have the following result.

\begin{proposition} \label{theorem:lambdahatj-learn}
Given a dataset $D = \{ \bx^i \}_{i=1}^n$ of length $\tau$ instances drawn i.i.d. from $\lambdahat_j$, $\TranscriptNTP (\cdot)$ queries the $\etoe(\cdot)$ oracle $\tau n$ times in total to return a model $\fhat^{\NTP}$ such that, with probability $1-\frac{\delta}{2k}$ for any constant $C_0 > 0$,
\begin{equation*}
    \Pr_{\Xbar \sim \lambdahat_j} \big[ \fhat^{\NTP}_\tau (\Xbar) \ne \fstar_\tau (\Xbar) \big] \le \frac{1}{C_0 k^{7/2} \log(L)},
\end{equation*}
as long as for a sufficiently large absolute constant $C_1 > 0$ depending on $C_0$,
\begin{align*}
    n \ge C_1 k^{7/2} \log (L) \cdot (d \log(|S|) + \log (k/\delta)).
\end{align*}
\end{proposition}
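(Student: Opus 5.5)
The plan is to reduce this to a standard realizable PAC guarantee for next-token prediction applied to the flattened per-step dataset. This is exactly the route of \cite[Theorem B.5]{joshi2025theory}, followed by a union bound over the $\tau$ steps of the chain.

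\emph{Step 1 (per-step mixture).} Define the distribution $\sigma_j$ over $S \times \Sigma$ obtained by drawing $\Xbar = (s_0, \bw_{1:\tau}) \sim \lambdahat_j$, drawing $t \sim \Unif(\{0,\ldots,\tau-1\})$, and outputting the pair $(s_t, w_{t+1})$, where $s_t = \fstar_t(\Xbar)$. The dataset $\Dstep$ built by $\TranscriptNTP$ makes $\tau n$ queries, one per intermediate state. It consists of $n$ independent blocks, each contributing $\tau$ correlated pairs whose average law is $\sigma_j$. Because $\NTP$ returns $\fhat^{\NTP} \in \calF$ consistent with every pair, it is an ERM in the realizable setting.

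\emph{Step 2 (composition to per-step error).} Observe deterministically that if $\fhat^{\NTP}_\tau(\Xbar) \neq \fstar_\tau(\Xbar)$, then there is a first step $t$ at which $\fhat^{\NTP}(s_t, w_{t+1}) \neq \fstar(s_t, w_{t+1})$ on the true trajectory. Taking a union bound over $t$ gives
\[
\Pr_{\Xbar \sim \lambdahat_j}\big[\fhat^{\NTP}_\tau(\Xbar) \neq \fstar_\tau(\Xbar)\big] \le \tau \cdot \Pr_{(s,w) \sim \sigma_j}\big[\fhat^{\NTP}(s,w) \neq \fstar(s,w)\big].
\]
This bound loses a factor $\tau$, so I would not use it in this crude form. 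Instead I would work with the trajectory-level loss $\ell(\pi; \Xbar) = \bbI\big(\exists t : \pi(s_t, w_{t+1}) \neq \fstar(s_t, w_{t+1})\big)$. This loss upper bounds the terminal error, and $\fhat^{\NTP}$ has zero empirical trajectory loss on the $n$ i.i.d. samples.

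\emph{Step 3 (uniform convergence for the trajectory class).} The main obstacle is bounding the complexity of $\{\ell(\pi; \cdot) : \pi \in \calF\}$ without paying for $\tau$. I plan to follow Joshi et al. and count behaviors via Natarajan's lemma. On $n$ trajectories there are at most $n\tau$ distinct pairs $(s,w)$, so $\calF$ restricted to them has at most $(n\tau)^d |S|^{2d}$ patterns. This gives a growth function of order $d\log(n\tau|S|)$. The standard realizable zero-error bound then yields error at most $\calO\big((d\log(n\tau|S|) + \log(k/\delta))/n\big)$ with probability $1 - \delta/2k$. If the $\log\tau$ factor is not acceptable, I would try to absorb it as \cite{joshi2025theory} does, by bounding the count of distinct labelings directly by $|S|^{\calO(d)}$.

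\emph{Step 4 (setting $n$).} Requiring this error to be at most $1/(C_0 k^{7/2}\log L)$ and solving for $n$ gives $n \ge C_1 k^{7/2}\log(L)\,(d\log|S| + \log(k/\delta))$, up to the logarithmic slack from Step 3, with $C_1$ depending on $C_0$. The query count of $\tau n$ is immediate from the double loop in \Cref{alg:WLdepth0}.
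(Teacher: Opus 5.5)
Your argument up to the bound in Step 3 is sound. It is the natural self-contained version of what the paper does here, which is only to cite \cite[Theorem B.5]{joshi2025theory}. Three facts carry it.
The trajectory-level loss $\mathbb{I}\big(\exists t : \pi(s_t,w_{t+1}) \ne \fstar(s_t,w_{t+1})\big)$, evaluated on the \emph{true} trajectory, dominates the terminal error.
The consistent $\NTP$ output has zero empirical loss.
Because the pairs are fixed by $\fstar$, the loss pattern on $2n$ instances is determined by the restriction of $\calF$ to at most $2n\tau$ fixed points.
Together these give error $\calO\big((d\log(n\tau|S|)+\log(k/\delta))/n\big)$.

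What fails is your fallback for deleting the $\log(n\tau)$. The number of restrictions of a class of Natarajan dimension $d$ to $N$ points cannot be bounded by $|S|^{\calO(d)}$ independently of $N$. For example, a class whose members each deviate from $\fstar$ at a single point has $d=1$ and $|S|=2$, yet realizes $N+1$ patterns on $N$ points.

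In fact no argument can remove this factor, because the statement as written fails for an arbitrary minimizer returned by $\NTP$.
Take $S=\{0,1\}$, $\Sigma=[M]$, $\fstar(s,w)=s$, and $\calF=\{\fstar\}\cup\{\pi_a : a\in[M]\}$, where $\pi_a$ agrees with $\fstar$ except that $\pi_a(0,a)=1$; this class has $d=1$.
Let $\lambdahat_j$ put $s_0=0$ and draw $\bw$ uniformly among strings of $\tau$ distinct letters, with $M=\tau/(2\varepsilon)$ and $\varepsilon=(C_0k^{7/2}\log L)^{-1}$.
Then $\pi_a$ is consistent with $\Dstep$ exactly when letter $a$ was never observed, and its terminal error is $\Pr[a\in\bw]=2\varepsilon$.
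Let $U$ be the number of unobserved letters. Its indicators are pairwise negatively correlated, so $\Pr(U=0)\le 1/\mathbb{E}U$ with $\mathbb{E}U=M(1-2\varepsilon)^n$.
Hence an adversarial $\NTP$ fails with probability at least $1/2$ unless $n\gtrsim\varepsilon^{-1}\log(\tau/\varepsilon)$.
The stated threshold here is only $\asymp\varepsilon^{-1}\log(k/\delta)$. That is smaller once $\log\tau\gg\log(k/\delta)$, as in the warm-up, where $\tau=\sqrt{T}$, $k=\Theta(\log T)$, and $\delta$ is constant.
The $d\log n$ part of your bound is the usual $d\log(1/\varepsilon)$ overhead of a worst-case ERM. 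It likewise cannot be absorbed into an absolute $C_1$ when $d$ is large and $|S|$ is small.

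So the correct form of the proposition is the one your Step 3 proves, with $d\log|S|$ replaced by $d\log(n\tau|S|)$. This costs nothing where the proposition is used: in \Cref{theorem:depth1-final}, $\tau\le T$ and the bound already carries a $\log^9(T)$ factor.
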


\subsubsection{Proof of \Cref{lemma:weak-learner}}

Consider any non-aborted iteration $j$. Note that the model $\fhat^j$ is learned via $\TranscriptNTP$ on the dataset $\Dout{j}$. By the bound on the density ratio in \cref{eq:lambdalambdahat-dbound}, and a change-of-measure argument, with probability at least $1-\frac{\delta}{k}$,
\begin{equation*}
    \Pr_{\Xbar \sim \lambda_j} \big[ \fhat^j (\Xbar) \ne \fstar_\tau (\Xbar) \big] \le \left\| \frac{\lambda_j}{\lambdahat_j} \right\|_\infty \cdot \Pr_{\Xbar \sim \lambdahat_j} \big[ \fhat^j_\tau (\Xbar) \ne \fstar_\tau (\Xbar) \big] \le \frac{1}{4}.
\end{equation*}
The last inequality uses \Cref{theorem:lambdahatj-learn} with sufficiently large $C_0$, and \Cref{lemma:truthful-abort-semi-1}.

\subsection[Aborted Iterations Have Low Regression: Proof of Lemma~\ref{lemma:abort=>lowreg}]{Aborted Iterations Have Low Regression: Proof of \Cref{lemma:abort=>lowreg}} \label{lemma:abort=>lowreg-proof}

%By \Cref{lemma:truthful-abort-semi-1}, we know that if $p_j \ge \frac{1}{1024 k^{3/2}}$ in iteration $j$, then $\Pr (\abort[j] \mid \calH_{j-1} ) \le \frac{\delta}{2k} < \frac{1}{32k}$. Therefore, in order to prove the statement of \Cref{lemma:abort=>lowreg} it suffices to argue that when $p_j \le \frac{1}{1024 k^{3/2}}$, then $\Pr \Big( R_j \ge \frac{\alpha_{\max}^j}{16 \sqrt{k}} \Big) \le \frac{1}{32k}$.
Prior to proving \Cref{lemma:abort=>lowreg}, 
Next, we move onto establishing the moderate probability upper bound on $R_j$ via Markov's inequality. From its definition in \cref{eq:pjtilde},
\begin{align*}
    \ptilde_j &= \bbE \Bigg[ 1 - \prod_{i=0}^{L-1} \Bigg( 1 - \frac{\alpha^{j,k}_{\rank_j (X_i)}}{\alpha^{j,k}_{\max}} \Bigg) \ \Bigg| \ \calH_{j-1} , \calK_{j-1} \Bigg] \\
    &\ge \bbE \left[ 1 - \exp \left( - \frac{R_j}{\alpha^{j,k}_{\max}} \right) \ \middle| \ \calH_{j-1}, \calK_{j-1} \right] \\
    &\overset{(a)}{\ge} (1 - e^{-\theta}) \cdot \Pr \left( R_j \ge \theta \alpha^{j,k}_{\max} \ \middle| \ \calH_{j-1}, \calK_{j-1} \right) \\
    &\ge \frac{\theta}{2} \cdot \Pr \left( R_j \ge \theta \alpha^{j,k}_{\max} \ \middle| \ \calH_{j-1}, \calK_{j-1} \right)
\end{align*}
where $(a)$ is by an application of Markov's inequality for any $\theta > 0$. The proof concludes by choosing $\theta = \frac{1}{16\sqrt{k}}$ and using the upper bound on $\ptilde_j$ assumed in the statement of the lemma.

\subsubsection{Explicit formula for $p_j$}
Below we prove the explicit formula for $p_j$ we described in \cref{eq:pj-formula}. Let $\calI_j$ denote the (random) set of indices in $[L]$ accepted by $\subsample$, so that $\calI_j \ne \emptyset \iff \subsample (D_\bx \ \| \ \calF_j, k ) \ne \perp$. This implies that,
\begin{align}
    p_j &= \Pr \big( \subsample (D_\bx \ \| \ \calF_j, k ) \ne \perp \ \big| \ \calH_{j-1} \big) \label{eq:010291900}\\
    &= \Pr ( |\calI_j| \ge 1 \mid \calH_{j-1} ) \nonumber\\
    &= \bbE \left[ 1 - \Pr_{ ( \eta_i )_{i=0}^{L-1} \sim \unif ([0,1])^{\otimes L}} \left( \forall i \in \{0,\cdots,L-1\},\ \eta_i \ge \frac{w_j (X_i)}{\| w_j \|_\infty} \right) \ \middle| \ \calH_{j-1} \right] \nonumber\\
    &= \bbE \left[ 1 - \prod_{i=0}^{L-1} \left( 1 - \frac{w_j (X_i)}{\| w_j \|_\infty} \right) \ \middle| \ \calH_{j-1} \right]. \label{eq:010291901}
\end{align}

% \subsection{Non-aborted Iterations Have High Throughput: Proof of \Cref{lemma:noabort=>highthrough}} \label{sec:noabort=>highthrough-proof}

% If $p_j \ge \frac{1}{4096k^{3/2}}$, then, $\Pr \big[ R_j \le \mathfrak{r}_j \mid \calH_{k-1} \big] \le \frac{1}{32k}$.

% By definition of $p_j$,
% \begin{align*}
%     p_j &= \bbE \left[ \Pr_{ ( \eta_i )_{i=0}^{L-1} \sim \Unif ([0,1])^{\otimes L}} \left( \exists i \in \{0,\cdots,L-1\},\ \eta_i \le \frac{w_j (X_i)}{\| w_j \|_\infty} \right) \ \middle| \ \calH_{j-1} \right] \\
%     &= \bbE \left[ p_j (\underline{X}) \middle| \ \calH_{j-1} \right] \\
%     &\le \theta \Pr \left( \sum_{i=0}^{L-1} \frac{w_j (X_i)}{\| w_j \|_\infty} \le \theta \right) + \Pr \left( \sum_{i=0}^{L-1} \frac{w_j (X_i)}{\| w_j \|_\infty} > \theta \right) \\
%     &= \theta \Pr \left( R_j \le \theta \cdot \| w_j \|_\infty \right) + \left(1 - \Pr \left( R_j \le \theta \cdot \| w_j \|_\infty \right) \right)
% \end{align*}
% This implies that,
% \begin{align*}
%     \Pr \left( R_j \le \theta \cdot \| w_j \|_\infty \right) \le \frac{1-p_j}{1-\theta}
% \end{align*}
% Note that 
\newpage

\section{Proofs of Auxiliary Results}
\label{sec:proof-auxiliary}

\subsection{Lower Bounds Against Full-CoT Learners: Proof of \Cref{prop:lower-bound-CoT}} \label{proof:lower-bound-CoT}

Consider any family of models $\calG : S \to S$ with Natarajan dimension $d$. Construct the semiautomaton class $\calF$ on the state space $S$ and $\Sigma = \{ -1,+1 \}$. Each $\pi \in \calF$ corresponds to a $g \in \calG$ which we will indicate through the mapping $\pi_g$ and the transition of $\pi_g$ is defined as,
\begin{align*}
    \forall s \in S,\ &\pi_g ( (s,-1) ) = g(s) \\
    \forall s \in S,\ &\pi_g ( (s,+1) ) = s
\end{align*}
In other words, $\pi$ transitions according to $g(\cdot)$ when the input letter is $-1$, and behaves like the identity mapping when the input letter is $+1$. Suppose the distribution over instances, $\rho$, is constructed by taking the direct product of any distribution over states $\sigma^\star_0$, and the (deterministic) distribution over symbols $\delta_{-1} \times (\delta_{+1})^{T-1}$. Under this structure, note that almost surely for $s_0 \sim \sigma^\star_0$, $\pi_{g,T} (s_0) = g (s_0)$. Consider any PAC learning algorithm $\Alg (\cdot)$ which consumes a dataset of CoTs, $D = \{ ( (s_0,\bw) \mapsto \pi_{g^\star,1:T} ( (s_0,\bw)) ) \}$ for some unknown $\pi_{g^\star} \in \calF$ and $(s_0,\bw) \sim \rho$, to return a model $\fhat$ satisfying with probability at least $\frac{1}{2}$,
\begin{equation*}
    \Pr_{ (s_0, \bw) \sim \rho} \big( \fhat ( (s_0,\bw)) \ne \pi_{g^\star,T} ((s_0,\bw)) \big) \le \frac{1}{4}.
\end{equation*}
This can be rewritten as,
\begin{equation*}
    \Pr_{ s_0 \sim \sigma^\star_0} \big( \fhat ( (s_0,\{ -1,+1\cdots,+1\}) ) \ne g^\star (s_0) \big) \le \frac{1}{4}.
\end{equation*}
This implies that the induced model $\fhat \big( (\cdot, \{-1,+1,\cdots,+1\}) \big)$ is a PAC learner for $\calG$ under any distribution over states $\sigma^\star_0$. Furthermore, note that any fully labeled state sequence $(s_0,\bw) \mapsto \pi_{g^\star,1:T} ( (s_0,\bw) )$ can be constructed from the singular labeling $s_0 \mapsto g^\star (s_0)$ and vice versa.

\medskip
\noindent Now suppose $\sigma_0^\star$ is supported on a set of $\Ndim (\calG)$ points which are Natarajan-shattered by $\calG$. This implies that $\Alg$ must query at least $c \Ndim (\calG)$ instances to be able to achieve error $\frac{1}{4}$ with probability at least $\frac{1}{2}$ for some absolute constant $c > 0$. Finally, noting that $\Ndim (\calG) = \calO( \Ndim (\calF))$ (this follows immediately from the definition of Natarajan dimension), and the fact that $\nquery \ge T \cdot \nsample$ for any full CoT learner, we get the desired result.

\subsection[Bounding Prediction Error by Potential $\Phi_k$: Proof of Lemma~\ref{lemma:loss-decomp-Phi}]{Bounding Prediction Error by Potential $\Phi_k$: Proof of \Cref{lemma:loss-decomp-Phi}} \label{sec:error-pot-relation-proof}

Recall that the output of $\WLdepth{}.\warmup$ is the model $\fhat^{\circ L}$ where $\fhat = \maj \big( \big\{ \fhat^j_\tau : j \in \{ 0,\cdots,k-1\} \big\} \big)$. By definition of $\beta^{k,k}_r$, we have that,
\begin{align*}
    &\Pr_{\bx \sim \rho} \big[ \fhat^{\circ L} ( \bx ) \ne \fstar_T ( \bx ) \mid \calH_{k-1} \big] \\
    &\overset{(a)}{=} \Pr \big( \fhat^{\circ L} ( \texttt{concat} (\underline{X}) ) \ne \fstar_T ( \texttt{concat} (\underline{X})) \mid \calH_{k-1} \big)\\
    &\le \Pr \big( \exists i \in \{ 0,\cdots,L-1 \} : \fhat (X_i) \ne \fstar_\tau (X_i) \text{ and } (\underline{X},\bm{\eta}) \in \calK_{k-1} \mid \calH_{k-1} \big) + \Pr ((\underline{X},\bm{\eta}) \not\in \calK_{k-1} \mid \calH_{k-1} ) \\
    &\le \sum_{i=0}^{L-1} \Pr \big( \fhat (X_i) \ne \fstar_\tau (X_i) \text{ and } (\underline{X},\bm{\eta}) \in \calK_{k-1} \mid \calH_{k-1} \big) + \Pr ((\underline{X},\bm{\eta}) \not\in \calK_{k-1} \mid \calH_{k-1} ) \\
    &\overset{(b)}{\le} \sum_{i=0}^{L-1} \sum_{r=0}^{\lceil k/2 \rceil} \Pr \big( \rank_k (X_i) = r \text{ and } (\underline{X},\bm{\eta}) \in \calK_{k-1} \mid \calH_{k-1} \big) + \Pr ((\underline{X},\bm{\eta}) \not\in \calK_{k-1} \mid \calH_{k-1} ) \\
    &= \sum_{i=0}^{L-1} \bbE \big[ \beta^{k,k}_{\rank_k (X_i)} \cdot \bbI ( (\underline{X},\bm{\eta}) \in \calK_{k-1}) \mid \calH_{k-1} \big] + \Pr ((\underline{X},\bm{\eta}) \not\in \calK_{k-1} \mid \calH_{k-1} ) \\
    &= \Phi_k + \Pr \big( (\underline{X},\bm{\eta}) \not\in \calK_{k-1} \mid \calH_{k-1} \big)\\
\end{align*}
where in $(a)$, $\texttt{concat} (X)$ is defined as taking $X = (X_0,\cdots,X_{L-1}) \in (S \times \Sigma^\tau)^L$ where $X_i = (s^i, \bw^i)$ and mapping it to $(s^0,\bw^0,\cdots,\bw^{L-1})$, which is in $S \times \Sigma^T$, and $(b)$ uses the fact that if $\rank_k (X_i) > \lceil k/2 \rceil$, then the plurality vote satisfies $\fhat (X_i) = \fstar_\tau (X_i)$.

\subsection[Recursion for $\Phi_j$: Proof of Lemma~\ref{lemma:recursion}]{Recursion for $\Phi_j$: Proof of \Cref{lemma:recursion}} \label{sec:pot-recursion-proof}

Recall by definition,
\begin{align*}
    \Phi_{j+1} &= \sum_{i=0}^{L-1} \sum_{r=0}^{j+1} \beta^{j+1,k}_r \cdot \Pr \big( \rank_{j+1} (X_i) = r \text{ and } (\underline{X},\bm{\eta}) \in \calK_j \mid \calH_{k-1} \big)
\end{align*}
Note that $\rank_{j+1} (X_i) = r$ is only possible if $\rank_j (X_i) = r$ or $\rank_j (X_i) = r-1$. Likewise, under the condition $\rank_j (X_i) = r$, then $\rank_{j+1} (X_i) \ne r \implies \rank_{j+1} (X_i) = r+1$. With this, we decompose the above expression as,
\begin{align}
    \Phi_{j+1} &= \sum_{i=0}^{L-1} \sum_{r=0}^{j} \beta^{j+1,k}_r \cdot \Pr \big( (\underline{X},\bm{\eta}) \in \calK_j \text{ and } \rank_j (X_i) = r \mid \calH_{k-1} \big) \nonumber\\
    &\qquad - \sum_{i=0}^{L-1} \sum_{r=0}^{j} \beta^{j+1,k}_r \cdot \Pr \big( (\underline{X},\bm{\eta}) \in \calK_j \text{ and } \rank_{j+1} (X_i) = r+1 \text{ and } \rank_j (X_i) = r \mid \calH_{k-1} \big) \nonumber\\
    &\qquad + \sum_{i=0}^{L-1} \sum_{r=1}^{j+1} \beta^{j+1,k}_r \cdot \Pr\big( (\underline{X},\bm{\eta}) \in \calK_j \text{ and } \rank_{j+1} (X_i) = r \text{ and } \rank_j (X_i) = r-1 \mid \calH_{k-1} \big) \nonumber\\
    &= \sum_{i=0}^{L-1} \sum_{r=0}^{j} \beta^{j+1,k}_r \cdot \Pr \big( (\underline{X},\bm{\eta}) \in \calK_j \text{ and } \rank_j (X_i) = r \mid \calH_{k-1} \big) \nonumber\\
    &\qquad - \sum_{i=0}^{L-1} \sum_{r=0}^{j} \big( \underbrace{\beta^{j+1,k}_r - \beta^{j+1,k}_{r+1}}_{\alpha^{j,k}_r} \big) \cdot \Pr \big( (\underline{X},\bm{\eta}) \in \calK_j \text{ and } \underbrace{\rank_{j+1} (X_i) = r+1 \text{ and } \rank_j (X_i) = r}_{\equiv \big\{ \rank_j (X_i) \,=\, r \text{ and } \fhat^j_\tau (X_i) \,=\, \fstar_\tau (X_i) \big\}} \mid \calH_{k-1} \big)  \label{eq:1001010101}
\end{align}
Let $\sigma_{\calK_j}$ denote the distribution $\Pr (\underline{X} = \cdot\, |\, (\underline{X},\bm{\eta}) \in \calK_j, \calH_{j-1})$. Then, the last expression in \cref{eq:1001010101} can be further decomposed as,
\begin{align*}
    &\sum_{i=0}^{L-1} \sum_{r=0}^j \alpha^{j,k}_r \cdot \Pr \big( (\underline{X},\bm{\eta}) \in \calK_j \text{ and } \rank_j (X_i) = r \text{ and } \fhat^j_\tau (X_i) = \fstar_\tau (X_i) \mid \calH_{k-1} \big) \\
    &= \sum_{i=0}^{L-1} \bbE_{\underline{X} \sim \sigma_{\calK_j}} \big[ \alpha^{j,k}_{\rank_j (X_i)} \cdot \bbI \big( \fhat^j_\tau (X_i) = \fstar_\tau (X_i) \big) \mid \calH_{k-1} \big] \cdot \Pr \big( (\underline{X},\bm{\eta}) \in \calK_j \mid \calH_{k-1} \big)  \\
    &\overset{(a)}{=} L \cdot \bbE_{\Xbar \sim \overline{\sigma_{\calK_j}}} \big[ w_j (\Xbar) \cdot \bbI \big( \fhat^j_\tau (\Xbar) = \fstar_\tau (\Xbar) \big) \mid \calH_{j-1} \big] \cdot \Pr \big( (\underline{X},\bm{\eta}) \in \calK_j \mid \calH_{k-1} \big) \\
    &\overset{(b)}{=} L \cdot \bbE_{\Xbar \sim \overline{\sigma_{\calK_j}} } \big[ w_j (\Xbar) \big] \cdot \Pr_{\Xbar \sim \lambda_j} \big( \fhat^j_\tau (\Xbar) = \fstar_\tau (\Xbar) \big) \cdot \Pr \big( (\underline{X},\bm{\eta}) \in \calK_j \mid \calH_{k-1} \big) \\
    &\overset{(c)}{=} L \cdot \bbE_{\Xbar \sim \overline{\sigma_{\calK_j}}} \big[ \alpha^{j,k}_{\rank_j (\Xbar)} \big] \cdot (1-\err_j) \cdot \Pr \big( (\underline{X},\bm{\eta}) \in \calK_j \mid \calH_{k-1} \big) \\
    &= (1-\err_j) \cdot \sum_{i=0}^{L-1} \sum_{r=0}^j \bbE \big[ \alpha^{j,k}_r \cdot \bbI ( (\underline{X},\bm{\eta}) \in \calK_j \text{ and } \rank_j (X_i) = r) \mid \calH_{k-1} \big]
\end{align*}
where in $(a)$, recall that $w_j (\Xbar) = \alpha^{j,k}_{\rank_j(\Xbar)}$ which is measurable with respect to $\calH_{j-1}$. In $(b)$, $\lambda_j (\cdot)$ is the distribution proportional to $\overline{\sigma_{\calK_j}} (\cdot) w_j (\cdot)$. In equation $(c)$, we use the definition of $\err_j$ from \Cref{lemma:weak-learner}. We may combine this equation back with \cref{eq:1001010101}, noting that $\alpha^{j,k}_r = \beta^{j+1,k}_r - \beta^{j+1,k}_{r+1}$, $\err_\star = \frac{1}{4}$ and $\err_\star \cdot \beta^{j+1,k}_r + \left( 1 - \err_\star \right) \cdot \beta^{j+1,k}_{r+1} = \beta^{j,k}_r$,
\begin{align*}
    \Phi_{j+1} &= \sum_{i=0}^{L-1} \sum_{r=0}^{j} \Big( \err_\star \cdot \beta^{j+1,k}_r + \left( 1 - \err_\star \right) \cdot \beta^{j+1,k}_{r+1} \Big) \cdot \Pr \big( (\underline{X},\bm{\eta}) \in \calK_j \text{ and } \rank_j (X_i) = r \mid \calH_{k-1} \big) \\
    &\qquad + (\err_j - \err_\star) \sum_{i=0}^{L-1} \sum_{r=0}^j \alpha^{j,k}_r \cdot \Pr \big( (\underline{X},\bm{\eta}) \in \calK_j \text{ and } \rank_j (X_i) = r \mid \calH_{k-1} \big)\\
    &= \sum_{i=0}^{L-1} \sum_{r=0}^{j} \beta^{j,k}_r \cdot \Pr \big( (\underline{X},\bm{\eta}) \in \calK_j \text{ and } \rank_j (X_i) = r \mid \calH_{k-1} \big) + (\err_j - \err_\star) \sum_{i=0}^{L-1} \bbE \big[ \alpha^{j,k}_{\rank_j(X_i)} \cdot \bbI((\underline{X},\bm{\eta}) \in \calK_j) \mid \calH_{k-1} \big] \\
    &= \sum_{i=0}^{L-1} \sum_{r=0}^{j} \beta^{j,k}_r \cdot \Pr \big( (\underline{X},\bm{\eta}) \in \calK_j \text{ and } \rank_j (X_i) = r \mid \calH_{k-1} \big) + (\err_j - \err_\star) \sum_{i=0}^{L-1} \bbE \big[ \alpha^{j,k}_{\rank_j(X_i)} \cdot \bbI((\underline{X},\bm{\eta}) \in \calK_j ) \mid \calH_{k-1} \big] \\
    &\overset{(a)}{\le} \sum_{i=0}^{L-1} \sum_{r=0}^{j} \beta^{j,k}_r \cdot \Pr \big( (\underline{X},\bm{\eta}) \in \calK_{j-1} \text{ and } \rank_j (X_i) = r \mid \calH_{k-1} \big) + (\err_j - \err_\star) \sum_{i=0}^{L-1} \bbE \big[ \alpha^{j,k}_{\rank_j(X_i)} \cdot \bbI((\underline{X},\bm{\eta}) \in \calK_j ) \mid \calH_{k-1} \big] \\
    &= \Phi_j + (\err_j - \err_\star) \cdot \bbE \big[ R_j \cdot \bbI ((\underline{X},\bm{\eta}) \in \calK_j) \mid \calH_{k-1} \big].
\end{align*}
where $(a)$ uses the fact that $\calK_j \subseteq \calK_{j-1} \subseteq \cdots \subseteq \calK_0$.

\subsection[Density Ratio Bound between $\nu_{\calK}$ and $\nuhat$: Proof of Lemma~\ref{lemma:apxsample}]{Density Ratio Bound between $\nu_{\calK}$ and $\nuhat$: Proof of \Cref{lemma:apxsample}} \label{sec:apxsample-proof}

\noindent Prior to proving the main bounds on the densities of $\nuhat$ and $\nu_{\calK}$, we prove a lemma which will help understand the structure of the sampling from the mixture of marginal problem (\Cref{def:srmm}) for a single weight function $w$. Recall the notation: $\underline{X} = (X_i)_{i=0}^{L-1} \sim \mu$ is a distribution over $\bX^L$, and $w$ is a $[0,1]$-bounded weight function. We also let $m \triangleq \lceil \log_2(L+1) \rceil$.

\begin{lemma}[Sampling from $\nu$] \label{lemma:nu}
One sample from $\nu \propto \overline{\mu}(\cdot) w (\cdot)$ can be generated by the following procedure.

\begin{procedure}[label=proc:nu]{}
Keep drawing samples $\underline{X} \sim \mu$ until one is accepted by the following filter: Draw $L$ random variables $( \eta_i )_{i=0}^{L-1} \overset{\text{i.i.d.}}{\sim} \unif ([0,1])$ and define $\calI = \left\{ \eta_i \le \frac{w (X_i)}{\| w \|_\infty} : i \in \{ 0,\ldots,L-1 \} \right\}$; $\underline{X}$ is accepted with probability,
\begin{equation*}
    p_{\underline{X}} = \frac{\left| \calI \right|}{L}.
\end{equation*}
For the accepted $\underline{X}$, return $X_I \in \underline{X}$ for $I \sim \unif (\calI)$.
\end{procedure}
\end{lemma}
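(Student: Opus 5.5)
We need to show that the output of \Cref{proc:nu} has law exactly $\nu(\cdot)\propto\overline{\mu}(\cdot)w(\cdot)$. Since the attempts are i.i.d. and independent of each other, the output law equals the law of a single attempt's returned value conditioned on acceptance. So we compute, for a single attempt, the sub-probability $\Pr(\text{accept and return } x)$ for each $x\in\bX$ (sums or integrals over a general space are handled identically). We then show it is proportional to $\overline{\mu}(x)w(x)$.

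Fix $\underline{X}$ and the coins $\bm{\eta}$. Acceptance happens with probability $|\calI|/L$, and conditional on acceptance the index is uniform on $\calI$. The probability of accepting and returning index $i$ is therefore
\begin{equation*}
\frac{|\calI|}{L}\cdot\frac{\bbI(i\in\calI)}{|\calI|}=\frac{\bbI(i\in\calI)}{L}.
\end{equation*}
This is well defined and equals $0$ when $\calI=\emptyset$, since then acceptance has probability $0$. The key cancellation is that the factor $|\calI|$ in the acceptance probability exactly undoes the $1/|\calI|$ from the uniform choice. This is precisely why the acceptance rule $|\calI|/L$ is chosen.

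Next we take expectations. For each $i$, the coin $\eta_i$ is independent of $\underline{X}$, so
\begin{equation*}
\Pr(i\in\calI\mid\underline{X})=\frac{w(X_i)}{\|w\|_\infty}.
\end{equation*}
Summing over $i$, the probability of accepting and returning a value equal to $x$ is
\begin{equation*}
\frac{1}{L}\sum_{i=0}^{L-1}\bbE\Big[\bbI(X_i=x)\frac{w(X_i)}{\|w\|_\infty}\Big]=\frac{w(x)}{\|w\|_\infty}\cdot\frac1L\sum_{i=0}^{L-1}\mu(X_i=x)=\frac{\overline{\mu}(x)\,w(x)}{\|w\|_\infty}.
\end{equation*}

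Normalizing over $x$, the per-attempt acceptance probability is $\bbE_{\overline{\mu}}[w]/\|w\|_\infty$. Provided this is positive (i.e.\ $\nu$ is well defined), the procedure terminates almost surely, with a geometric number of attempts. The returned sample then has law $\overline{\mu}(\cdot)w(\cdot)/\bbE_{\overline{\mu}}[w]=\nu$, as claimed.

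There is no real obstacle here. The only points needing care are the conditional-independence step, which uses independence of the $\eta_i$ from $\underline{X}$ and from each other, and treating the $\calI=\emptyset$ case consistently.
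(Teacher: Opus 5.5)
Your proof is correct and matches the paper's argument essentially step for step. In both, the factor $|\calI|$ in the acceptance probability cancels the $1/|\calI|$ from the uniform choice of index, giving $\Pr(\text{accept and } I=i\mid \underline{X}) = w(X_i)/(L\|w\|_\infty)$; averaging over $i$ and $\underline{X}$ and then normalizing yields $\nu$. The only difference is cosmetic: the paper works with measurable sets $S\subseteq\bX$ rather than point masses, which is the form your remark about general spaces would take.
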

\begin{proof}
The probability that an $\underline{X} \sim \mu$ in \Cref{proc:nu} is accepted and $I=i$ is returned,
\begin{align*}
\Pr(\text{accept } \underline{X} \text{ and } I = i \mid \underline{X})
= \bbE \left[ \frac{|\calI|}{L} \cdot \frac{\mathbb{I} (i \in \calI)}{|\calI|} \right]
= \frac{1}{L} \, \Pr \left( \eta_i \le \frac{w(X_i)}{\| w \|_\infty} \right) = \frac{1}{L} \cdot \frac{w(X_i)}{\| w \|_\infty}.
\end{align*}
Therefore, for any measurable set $S \subseteq \bX$,
\begin{align*}
    \Pr(\text{output} \in S \text{ and accept } \underline{X} \mid \underline{X}) = \sum_{i=0}^{L-1} \frac{1}{L} \cdot \frac{w(X_i)}{\| w \|_\infty} \, \mathbb{I}( X_i \in S ).
\end{align*}
Taking expectation over $\underline{X} \sim \mu$ and using the definition of $\overline{\mu}$,
\begin{align*}
\Pr(\text{output} \in S \text{ and accept } \underline{X})
&= \sum_{i=0}^{L-1} \frac{1}{L} \cdot \bbE \left[ \frac{w(X_i)}{\| w \|_\infty} \, \mathbb{I}( X_i \in S ) \right] \\
&= \bbE \left[ \frac{w(\Xbar)}{\| w \|_\infty} \, \mathbb{I}( \Xbar \in S ) \right]
\end{align*}
With $S = \bX$, we see that the total acceptance probability is $\Pr(\text{accept}) = \bbE \left[ \frac{w(\Xbar)}{\| w \|_\infty} \right]$. Therefore,
\begin{align*}
\Pr(\text{output} \in S \mid \text{accept } \underline{X}) = \frac{\Pr(\text{output} \in S \text{ and accept } \underline{X})}{\Pr(\text{accept } \underline{X})} = \frac{\bbE \left[ w(\Xbar) \, \mathbb{I}( \Xbar \in S ) \right]}{\bbE \left[ w(\Xbar) \right]} = \nu(S).
\end{align*}
This completes the proof.
\end{proof}

\noindent In proving \Cref{lemma:apxsample}, we will first introduce some relevant notation and auxiliary results. For $\ell \in [m]$, define $\nu_\ell$ as the distribution over $\bX$ induced by the following procedure.

\smallskip
\begin{procedure}[label=proc:nuell]{}
\medskip
Keep drawing samples $\underline{X} \sim \mu$ until one is accepted by the following filter: Draw $L$ random variables $(\eta_i)_{i=0}^{L-1} \overset{\text{i.i.d.}}{\sim} \unif ([0,1])$ and define $\calI = \left\{ \eta_i \le \frac{w (X_i)}{\| w \|_\infty} : i \in \{ 0,\cdots,L-1\} \right\}$; accept $\underline{X}$ with probability,
\begin{equation*}
    p_{\underline{X}} = \frac{|\calI|}{L} \cdot \bbI (|\calI| \in [2^{\ell-1},2^\ell) \text{ and } \underline{X} \in \calU).
\end{equation*}
For the accepted $\underline{X}$, return $X_I$ for $I \sim \unif (\calI)$ if $\ell \ge 1$ and $I \sim \unif (\{ 0,\cdots,L-1\})$ otherwise.
\end{procedure}

\smallskip
\noindent Following the proof of \Cref{lemma:nu}, we arrive at the following lemma.

\begin{lemma}[Explicit formula for $\nu_\ell$] \label{lemma:nu_ell}
The distribution $\nu_\ell$ can be written as: for any $S \subseteq \bX$,
\begin{align*}
    \nu_\ell (S) = \frac{\sum_{i=0}^{L-1} \bbE \big[ \Pr \left(i \in \calI \text{ and } |\calI| \in [2^{\ell-1}, 2^\ell) \text{ and } \underline{X} \in \calU \ \middle| \ \underline{X} \right) \bbI ( X_i \in S) \big]}{\bbE \left[ |\calI| \cdot \bbI ( |\calI| \in [2^{\ell-1}, 2^\ell) \text{ and } \underline{X} \in \calU) \right]}.
\end{align*}
\end{lemma}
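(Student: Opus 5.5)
The plan is to mirror the proof of \Cref{lemma:nu} exactly, with one change: we track the extra indicator that gates acceptance in \Cref{proc:nuell}. First, condition on a single draw $\underline{X} \sim \mu$ and compute the joint probability that this draw is accepted and that the returned index equals $i$. There are two cases.

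For $\ell \ge 1$, acceptance has probability $\frac{|\calI|}{L}\,\bbI(|\calI| \in [2^{\ell-1},2^\ell) \text{ and } \underline{X} \in \calU)$. The index is then uniform on $\calI$, so the factor $|\calI|$ cancels:
\begin{equation*}
\Pr(\text{accept and } I=i \mid \underline{X}) = \frac{1}{L}\,\Pr\big(i \in \calI \text{ and } |\calI| \in [2^{\ell-1},2^\ell) \text{ and } \underline{X}\in\calU \,\big|\, \underline{X}\big).
\end{equation*}
Here the probability is over $\bm{\eta}$ only, and $\underline{X} \in \calU$ is $\underline{X}$-measurable. Note that, unlike in \Cref{lemma:nu}, the event $\{i \in \calI\}$ is no longer independent of the gating indicator. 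So we keep the joint conditional probability as it is rather than factoring it into $w(X_i)/\|w\|_\infty$.

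Second, sum over $i$ with the weight $\bbI(X_i \in S)$ and take expectation over $\underline{X}$. This gives the numerator $\frac1L\sum_i \bbE[\Pr(\cdot\mid\underline{X})\,\bbI(X_i\in S)]$. Setting $S=\bX$ and using $\sum_i \bbI(i\in\calI)=|\calI|$ gives the total acceptance probability $\frac1L\,\bbE[|\calI|\,\bbI(|\calI|\in[2^{\ell-1},2^\ell),\ \underline{X}\in\calU)]$. The repeat-until-accept loop returns the output law conditional on acceptance, which is the ratio of these two quantities. The $1/L$ factors cancel, giving the stated formula.

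The only subtle point is the degenerate index $\ell=0$. There the range $[2^{-1},1)$ contains no integer, so the acceptance probability is identically zero and $\nu_0$ is not really defined; the uniform-index convention in the procedure is irrelevant. I would remark that the formula is asserted for $\ell\in[m]$, i.e.\ $\ell\ge1$, where the case above applies verbatim. One should also note the implicit requirement that the denominator be positive, which is needed for $\nu_\ell$ to be well defined. I do not expect any step to be a genuine obstacle; the main care needed is to avoid wrongly factoring $\Pr(i\in\calI)$ out of the gated event.
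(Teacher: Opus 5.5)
Your proposal is correct and matches the paper's proof essentially step for step. You compute $\Pr(\text{accept and } I=i\mid\underline{X})$ with the gating indicator kept inside the joint conditional probability over $\bm{\eta}$, sum against $\bbI(X_i\in S)$, take expectations, and normalize by the $S=\bX$ case via $\sum_i \bbI(i\in\calI)=|\calI|$; your remarks on the vacuous $\ell=0$ case and on requiring a positive denominator are accurate, and the paper leaves both implicit.
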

\begin{proof}
Following the proof of \Cref{lemma:nu},
\begin{align*}
    \Pr (\text{accept } \underline{X} \text{ and } I=i \mid \underline{X} ) &= \bbE \left[ \frac{|\calI|}{L} \cdot \frac{\mathbb{I} (i \in \calI)}{|\calI|} \cdot \mathbb{I} \big( |\calI| \in [2^{\ell-1}, 2^\ell) \text{ and } \underline{X} \in \calU \big) \ \middle| \ \underline{X} \right] \\
    &= \frac{1}{L} \Pr \left( i \in \calI \text{ and } |\calI| \in [2^{\ell-1}, 2^\ell) \text{ and } \underline{X} \in \calU \ \middle| \ \underline{X} \right)
\end{align*}
Therefore, for any measurable set $S \subseteq \bX$, taking expectation over $\underline{X} \sim \mu$,
\begin{align}
\Pr(\text{output} \in S \text{ and accept } \underline{X})
&= \sum_{i=0}^{L-1} \bbE \big[ \Pr (\text{accept } \underline{X} \text{ and } I=i \mid \underline{X} ) \cdot \bbI ( X_i \in S) \big]  \nonumber \\
&= \frac{1}{L} \sum_{i=0}^{L-1} \bbE \big[ \Pr \left(i \in \calI \text{ and } |\calI| \in [2^{\ell-1}, 2^\ell) \text{ and } \underline{X} \in \calU \ \middle| \ \underline{X} \right) \cdot \bbI ( X_i \in S) \big] \label{eq:9121012}
\end{align}
With $S = \bX$, we get, $\Pr(\text{accept } \underline{X}) = \frac{1}{L} \sum_{i=0}^{L-1} \Pr \left(i \in \calI \text{ and } |\calI| \in [2^{\ell-1}, 2^\ell) \right)$. Therefore,
\begin{align*}
\Pr(\text{output} \in S \mid \text{accept } \underline{X}) &= \frac{\Pr(\text{output} \in S \text{ and accept } \underline{X})}{\Pr(\text{accept } \underline{X})} \\
&= \frac{\sum_{i=0}^{L-1} \bbE \big[ \Pr \left(i \in \calI \text{ and } |\calI| \in [2^{\ell-1}, 2^\ell) \text{ and } \underline{X} \in \calU \ \middle| \ \underline{X} \right) \bbI ( X_i \in S) \big]}{\sum_{i=0}^{L-1} \Pr \left(i \in \calI \text{ and } |\calI| \in [2^{\ell-1}, 2^\ell) \text{ and } \underline{X} \in \calU \right)} \\
&= \frac{\sum_{i=0}^{L-1} \bbE \big[ \Pr \left(i \in \calI \text{ and } |\calI| \in [2^{\ell-1}, 2^\ell) \text{ and } \underline{X} \in \calU \ \middle| \ \underline{X} \right) \bbI ( X_i \in S) \big]}{\bbE \left[ |\calI| \cdot \bbI ( |\calI| \in [2^{\ell-1}, 2^\ell) \text{ and } \underline{X} \in \calU ) \right]}
\end{align*}
This completes the proof.
\end{proof}

\noindent Next, we will introduce some notation to help define the event $\calK$ in the statement of \Cref{lemma:apxsample}. Firstly, define,
\begin{equation} \label{eq:Lheavy}
    L_{\text{heavy}} = \left\{ \ell \in [m] : \Pr \big( |\calI| \in [2^{\ell-1},2^\ell) \ \big| \ |\calI| \ge 1,\ \underline{X} \in \calU \big) \ge \frac{c}{m} \right\}
\end{equation}
Note that $L_{\text{heavy}}$ is non-empty by the pigeonhole principle, and the fact that $c<1$. $L_{\text{heavy}}$ captures the set of dyadic intervals which witness a non-trivial mass of the random variable $|\calI|$, conditioned on $\underline{X} \in \calU$. Indeed, defining the event,
\begin{equation} \label{eq:calK}
    \calK = \big\{ |\calI| \in \cup_{\ell \in L_{\text{heavy}}} [2^{\ell-1},2^\ell) \text{ or } |\calI| = 0 \big\} \cap \big\{ \underline{X} \in \calU \big\}
\end{equation}
Note that $\{ (\underline{X},\bm{\eta}) \in \calK \} \implies \{ \underline{X} \in \calU \}$, and by a union bound, we have the inequality,
\begin{equation*}
    \Pr \big( (\underline{X},\bm{\eta}) \not\in \calK \mid \underline{X} \in \calU \big) \le c
\end{equation*}
With this definition of $\calK$, we will prove a pointwise upper bound on the density of the induced $\nu_\calK$.

\begin{lemma}[Sampling from $\nu_\calK$] \label{lemma:nu_calK}
The distribution $\nu_\calK$ satisfies: for any $S \subseteq \bX$,
\begin{align*}
    \nu_\calK (S) \le \calO \left( \frac{c^{-1} \log (L)}{\Pr (\underline{X} \in \calU \mid |\calI| \ge 1)} \right) \cdot \sum_{\ell=1}^{m} \Pr \left( |\calI| \in [2^{\ell-1}, 2^\ell) \text{ and } \underline{X} \in \calU \ \middle| \ |\calI| \ge 1 \right) \cdot \nu_\ell (S).
\end{align*}
\end{lemma}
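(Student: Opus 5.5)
We write the numerator of $\nu_\calK(S)$ explicitly and decompose it over dyadic scales of $|\calI|$. By definition, $\nu_\calK(S) \propto \frac{1}{L}\sum_{i} \bbE[w(X_i)\bbI(X_i\in S)\,|\,\calK]$. Using $w(X_i)/\|w\|_\infty = \Pr(i\in\calI\mid \underline{X})$ for the fresh coin $\eta_i$, we rewrite the unnormalized mass as
\begin{equation*}
\frac{\|w\|_\infty}{L}\sum_{i=0}^{L-1}\bbE\big[\Pr(i\in\calI\mid\underline{X})\,\bbI(X_i\in S)\,\bbI((\underline{X},\bm{\eta})\in\calK)\big]/\Pr(\calK).
\end{equation*}
Here we must handle the mismatch that $\calK$ depends on $\bm{\eta}$ while the weight uses the marginal coin probability. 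We resolve this by observing that $\calK$ is a function of $(|\calI|,\underline{X})$. On $\calK$, either $|\calI|=0$, in which case $i\notin\calI$ and the contribution coupled with $i\in\calI$ vanishes, or $|\calI|\in[2^{\ell-1},2^\ell)$ with $\ell\in L_{\text{heavy}}$. We therefore bound $\bbE[w(X_i)\bbI(X_i\in S)\bbI(\calK)]$ by $\|w\|_\infty$ times the sum over $\ell\in L_{\text{heavy}}$ of $\Pr(i\in\calI,\ |\calI|\in[2^{\ell-1},2^\ell),\ \underline{X}\in\calU,\ X_i\in S)$. The one subtle point is that conditioning on $\calK$ changes the law of $\eta_i$. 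To be safe, we plan to work with the joint event $\{i\in\calI\}\cap\calK$ directly, as in \Cref{lemma:nu_ell}, treating $\nu_\calK$ as the law of $X_I$ under the coupling where $I$ is drawn uniformly on $\calI$. Matching the definition of $\nu_\calK$ to this form is the step that needs care.

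Next we invoke \Cref{lemma:nu_ell}. The numerator for scale $\ell$ equals $\nu_\ell(S)\cdot \bbE[|\calI|\,\bbI(|\calI|\in[2^{\ell-1},2^\ell),\underline{X}\in\calU)]$, and this expectation is at most $2^\ell\Pr(|\calI|\in[2^{\ell-1},2^\ell),\underline{X}\in\calU)$. For the denominator (the normalizer of $\nu_\calK$), $\bbE[w(\Xbar)\mid\calK]\cdot\Pr(\calK)$ equals $\frac{\|w\|_\infty}{L}\bbE[|\calI|\bbI(\calK)]$. Restricted to heavy scales, this is at least
\begin{equation*}
\frac{\|w\|_\infty}{L}\sum_{\ell\in L_{\text{heavy}}}2^{\ell-1}\Pr(|\calI|\in[2^{\ell-1},2^\ell),\underline{X}\in\calU).
\end{equation*}
Taking the ratio, each heavy scale's factor of $2^\ell$ cancels against its own $2^{\ell-1}$ only up to the other scales. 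So we bound the ratio by $\max_\ell$ of a per-scale quantity, via $\sum_\ell a_\ell x_\ell/\sum_\ell b_\ell\le\sum_\ell (a_\ell/b_\ell)\cdot(\text{weights})$.

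The main obstacle is converting this into the stated form, where each $\nu_\ell(S)$ is weighted by $\Pr(|\calI|\in[2^{\ell-1},2^\ell),\underline{X}\in\calU\mid|\calI|\ge1)$ and the prefactor is $c^{-1}\log L/\Pr(\underline{X}\in\calU\mid|\calI|\ge1)$. We plan to lower bound the denominator by a single heavy scale: for each $\ell\in L_{\text{heavy}}$, the denominator is at least $2^{\ell-1}\Pr(|\calI|\in[2^{\ell-1},2^\ell),\underline{X}\in\calU)$. By heaviness, this is at least $2^{\ell-1}\frac{c}{m}\Pr(|\calI|\ge1,\underline{X}\in\calU)$. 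Dividing scale $\ell$'s numerator by this bound gives $\nu_\ell(S)$ times $2\,\Pr(\text{scale }\ell,\calU)\cdot\frac{m}{c}\big/\Pr(|\calI|\ge1,\calU)$. Rewriting both probabilities as conditionals on $|\calI|\ge1$ produces exactly the weight $\Pr(\text{scale }\ell,\calU\mid|\calI|\ge1)$ and the prefactor $\frac{m}{c}\big/\Pr(\calU\mid|\calI|\ge1)$. Since $m=\calO(\log L)$, summing over $\ell$ (non-heavy scales contribute zero, and extending the sum only increases it) yields the claim.
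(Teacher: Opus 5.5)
Your argument follows the paper's proof. You restrict the acceptance mass to $\calK$, split it over the dyadic scales of $|\calI|$, and write each scale's contribution as $\nu_\ell(S)\,\bbE[|\calI|\,\bbI(|\calI|\in[2^{\ell-1},2^\ell),\,\underline{X}\in\calU)]$ via \Cref{lemma:nu_ell}. You then sandwich that expectation between $2^{\ell-1}$ and $2^\ell$ times the scale probability, and use heaviness together with $m=\calO(\log L)$. The only variation is in the normalizer. Write $N_\ell$ and $D_\ell$ for the scale-$\ell$ parts of the numerator and normalizer, and $D$ for the full normalizer. The paper replaces every numerator factor $2^\ell$ by $2^{\ell_{\max}}$ and keeps only the $\ell_{\max}$ term of the denominator, whereas you bound each $N_\ell/D$ by $N_\ell/D_\ell$. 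Both are valid and give the same $2m/c$ prefactor. Yours is slightly cleaner: with the exact per-scale normalizer it already gives $\nu_\calK(S)\le\sum_{\ell\in L_{\text{heavy}}}\nu_\ell(S)$, and heaviness then turns each $1$ into $\frac{m}{c}\Pr(|\calI|\in[2^{\ell-1},2^\ell)\mid|\calI|\ge1,\,\underline{X}\in\calU)$.

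What needs fixing is your first paragraph. Write $\bbI(\calK)$ for $\bbI((\underline{X},\bm{\eta})\in\calK)$. The inequality $\bbE[w(X_i)\bbI(X_i\in S)\bbI(\calK)]\le\|w\|_\infty\sum_{\ell\in L_{\text{heavy}}}\Pr(i\in\calI,\,|\calI|\in[2^{\ell-1},2^\ell),\,\underline{X}\in\calU,\,X_i\in S)$ is false in general.
\begin{itemize}
\item The left side is $\|w\|_\infty\bbE[\Pr(i\in\calI\mid\underline{X})\Pr(\calK\mid\underline{X})\bbI(X_i\in S)]$, while the right side is $\|w\|_\infty\bbE[\Pr(\{i\in\calI\}\cap\calK\mid\underline{X})\bbI(X_i\in S)]$. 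Since $\calK$ is defined through $|\calI|$, the two differ.
\item Concrete example: take $\underline{X}$ deterministic, $L\ge2$, $\calU$ the whole space, and every acceptance probability equal to a small $p$. Only $\ell=1$ is heavy, so $\calK=\{|\calI|\le1\}$, which is negatively correlated with $\{i\in\calI\}$.
\item In that example the left side exceeds the right side by the factor $1+(L-1)p$. The $\{|\calI|=0\}$ part of $\calK$ carries $w$-mass on the left but nothing on the right.
\end{itemize}
Likewise, under the definition $\nu_\calK\propto\overline{\mu_\calK}\,w$, the normalizer $\frac1L\sum_i\bbE[w(X_i)\bbI(\calK)]$ is not equal to $\frac{\|w\|_\infty}{L}\bbE[|\calI|\,\bbI(\calK)]$.

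Both of your formulas become exact if $\nu_\calK$ is read as the output law of the $\calK$-restricted procedure: accept $\underline{X}$ with probability $\frac{|\calI|}{L}\bbI(\calK)$ and return $X_I$ with $I\sim\Unif(\calI)$. This is exactly what the paper's proof does when it says ``following the proof of \Cref{lemma:nu}'' and sets $\nu_\calK(S)=\Pr(\text{output}\in S\mid\text{accept})$. So drop the attempted inequality and state this reading explicitly as your working definition of $\nu_\calK$; the rest of your argument is then complete. Be aware that this is a convention rather than a consequence of the tilted-marginal formula, and the two laws differ in general. The lemma, as proved by you or by the paper, is a statement about the procedure law.
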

\begin{proof}
Following the proof of \Cref{lemma:nu},
\begin{align*}
    \Pr (\text{accept } \underline{X} \text{ and } I=i \mid \underline{X} ) &= \bbE \left[ \frac{|\calI|}{L} \cdot \frac{\mathbb{I} (i \in \calI)}{|\calI|} \cdot \mathbb{I} \big( |\calI| = 0 \text{ or } \exists \ell \in L_{\text{heavy}} : |\calI| \in [2^{\ell-1}, 2^\ell) \big) \cdot \bbI \big( \underline{X} \in \calU \big) \ \middle| \ \underline{X} \right] \\
    &= \sum_{\ell \in L_{\text{heavy}}} \frac{1}{L} \Pr \left( i \in \calI \text{ and } |\calI| \in [2^{\ell-1}, 2^\ell) \text{ and } \underline{X} \in \calU \big) \ \middle| \ \underline{X} \right)
\end{align*}
Note that this is identical to the bound in \cref{eq:9121012}, with an additional summation over $\ell \in L_{\text{heavy}}$. Following through the same steps as in the proof of \Cref{lemma:nu_ell}, we arrive at the equation,
\begin{align*}
\nu_\calK (S) &= \Pr(\text{output} \in S \mid \text{accept } \underline{X}) \\
&= \frac{\sum_{\ell \in L_{\text{heavy}}} \sum_{i=0}^{L-1} \bbE \big[ \Pr \left(i \in \calI \text{ and } |\calI| \in [2^{\ell-1}, 2^\ell) \text{ and } \underline{X} \in \calU \ \middle| \ \underline{X} \right) \bbI ( X_i \in S) \big]}{\sum_{\ell \in L_{\text{heavy}}}\sum_{i=0}^{L-1} \Pr \left(i \in \calI \text{ and } |\calI| \in [2^{\ell-1}, 2^\ell) \text{ and } \underline{X} \in \calU \right)} \\
&\overset{(a)}{=} \frac{\sum_{\ell \in L_{\text{heavy}}} \bbE \left[ |\calI| \cdot \bbI ( |\calI| \in [2^{\ell-1}, 2^\ell) \text{ and } \underline{X} \in \calU ) \right] \cdot \nu_\ell (S)}{\sum_{\ell \in L_{\text{heavy}}} \bbE \left[ |\calI| \cdot \bbI ( |\calI| \in [2^{\ell-1}, 2^\ell) \text{ and } \underline{X} \in \calU ) \right]} \\
&\le \frac{\sum_{\ell \in L_{\text{heavy}}} 2^\ell \Pr \left( |\calI| \in [2^{\ell-1}, 2^\ell) \text{ and } \underline{X} \in \calU \ \middle| \ |\calI| \ge 1\right) \cdot \nu_\ell (S)}{\sum_{\ell \in L_{\text{heavy}}} 2^{\ell-1} \Pr \left( |\calI| \in [2^{\ell-1}, 2^\ell) \text{ and } \underline{X} \in \calU \ \middle| \ |\calI| \ge 1 \right)}.
\end{align*}
where $(a)$ relies on \Cref{lemma:nu_ell}. Defining $\ell_{\max} = \max \{ \ell \in L_{\text{heavy}} \}$, this can be bounded as,
\begin{align*}
    \nu_\calK (S) &\le \frac{\sum_{\ell \in L_{\text{heavy}}} 2^{\ell_{\max}} \Pr \left( |\calI| \in [2^{\ell-1}, 2^\ell) \text{ and } \underline{X} \in \calU \ \middle| \ |\calI| \ge 1 \right) \cdot \nu_\ell (S)}{2^{\ell_{\max}-1} \Pr \left( |\calI| \in [2^{\ell_{\max}-1}, 2^{\ell_{\max}}) \text{ and } \underline{X} \in \calU \ \middle| \ |\calI| \ge 1 \right)} \\
    &= 2 \frac{\sum_{\ell \in L_{\text{heavy}}} \Pr \left( |\calI| \in [2^{\ell-1}, 2^\ell) \mid \underline{X} \in \calU \text{ and } |\calI| \ge 1 \right) \cdot \nu_\ell (S)}{\Pr \left( |\calI| \in [2^{\ell_{\max}-1}, 2^{\ell_{\max}}) \mid \underline{X} \in \calU \text{ and } |\calI| \ge 1 \right)} \\
    &\overset{(a)}{\le} \frac{2 m}{c} \sum_{\ell \in L_{\text{heavy}}} \Pr \left( |\calI| \in [2^{\ell-1}, 2^\ell) \ \middle| \ \underline{X} \in \calU \text{ and } |\calI| \ge 1 \right) \cdot \nu_\ell (S) \\
    &= \frac{2 c^{-1} m}{\Pr ( \underline{X} \in \calU \mid |\calI| \ge 1)} \sum_{\ell \in L_{\text{heavy}}} \Pr \left( |\calI| \in [2^{\ell-1}, 2^\ell) \text{ and } \underline{X} \in \calU \mid |\calI| \ge 1 \right) \cdot \nu_\ell (S)
\end{align*}
where $(a)$ uses $\ell_{\max} \in L_{\text{heavy}}$ and the definition of this set. Since $L_{\text{heavy}} \subseteq [ m ]$, the proof concludes.
\end{proof}

\noindent Finally, we will prove a pointwise lower bound on the density $\nuhat$.

\begin{lemma}[Pointwise lower bound on $\nuhat$] \label{lemma:nuhat-lb}
The distribution $\nuhat$ satisfies: for any $S \subseteq \bX$,
\begin{align*}
    \nuhat (S) \ge \frac{1}{2} \sum_{\ell=1}^{m} \Pr \left( |\calI| \in [2^{\ell-1}, 2^\ell) \text{ and } \underline{X} \in \calU \ \middle| \ |\calI| \ge 1 \right) \cdot \nu_\ell (S)
\end{align*}    
\end{lemma}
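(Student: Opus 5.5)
The plan is to write $\nuhat$ explicitly as a ratio of expectations, split the numerator by the dyadic value of $|\calI|$, and recognize each piece as the numerator in the formula of \Cref{lemma:nu_ell}.

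\textbf{Step 1: an exact formula for $\nuhat$.} In \Cref{proc:nuhat}, $\underline{X}$ is accepted exactly when $|\calI| \ge 1$, and then $X_I$ is returned with $I \sim \unif(\calI)$. Mirroring the calculation in the proof of \Cref{lemma:nu}, for any measurable $S \subseteq \bX$,
\begin{equation*}
    \nuhat(S) = \frac{\bbE\Big[ \frac{1}{|\calI|} \sum_{i=0}^{L-1} \bbI(i \in \calI)\, \bbI(X_i \in S)\, \bbI(|\calI| \ge 1) \Big]}{\Pr(|\calI| \ge 1)}.
\end{equation*}

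\textbf{Step 2: dyadic decomposition of the numerator.} Since $m = \lceil \log_2(L+1) \rceil$, the intervals $[2^{\ell-1},2^\ell)$ for $\ell \in [m]$ partition $\{1,\dots,L\}$. So the indicator $\bbI(|\calI| \ge 1)$ splits into a sum over these $\ell$. On the event $|\calI| \in [2^{\ell-1},2^\ell)$ we use $1/|\calI| > 2^{-\ell}$. Every summand is nonnegative, so we may also insert the factor $\bbI(\underline{X} \in \calU)$, which only decreases the numerator. Conditioning on $\underline{X}$ inside the expectation, this gives
\begin{equation*}
    \text{numerator} \ \ge\ \sum_{\ell=1}^{m} 2^{-\ell} \sum_{i=0}^{L-1} \bbE\Big[ \Pr\big(i \in \calI,\ |\calI| \in [2^{\ell-1},2^\ell),\ \underline{X} \in \calU \ \big|\ \underline{X}\big)\, \bbI(X_i \in S) \Big].
\end{equation*}

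\textbf{Step 3: identify $\nu_\ell$.} By \Cref{lemma:nu_ell}, the inner double sum for level $\ell$ equals
\begin{equation*}
    \nu_\ell(S) \cdot \bbE\big[ |\calI|\, \bbI\big(|\calI| \in [2^{\ell-1},2^\ell),\ \underline{X} \in \calU\big) \big].
\end{equation*}
On the event inside this expectation, $|\calI| \ge 2^{\ell-1}$, so the expectation is at least $2^{\ell-1} \Pr\big(|\calI| \in [2^{\ell-1},2^\ell),\ \underline{X} \in \calU\big)$. Multiplying by the prefactor $2^{-\ell}$ therefore leaves at least
\begin{equation*}
    \tfrac{1}{2}\, \nu_\ell(S)\, \Pr\big(|\calI| \in [2^{\ell-1},2^\ell),\ \underline{X} \in \calU\big)
\end{equation*}
for each $\ell$.

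\textbf{Step 4: normalize.} Divide the resulting bound by $\Pr(|\calI| \ge 1)$. Each bin event is contained in $\{|\calI| \ge 1\}$, so
\begin{equation*}
    \frac{\Pr\big(|\calI| \in [2^{\ell-1},2^\ell),\ \underline{X} \in \calU\big)}{\Pr(|\calI| \ge 1)} = \Pr\big(|\calI| \in [2^{\ell-1},2^\ell),\ \underline{X} \in \calU \ \big|\ |\calI| \ge 1\big).
\end{equation*}
This yields the claimed inequality. Levels $\ell$ whose bin has zero probability, so that $\nu_\ell$ is undefined, contribute zero to both sides and can be dropped.

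There is no real obstacle here. The only points needing care are Step 3, where the numerator of \Cref{lemma:nu_ell} must be matched term by term including the $\calU$ indicator, and checking that the dyadic bins indexed by $\ell \in [m]$ really do cover every value $1 \le |\calI| \le L$.
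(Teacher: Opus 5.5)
Your proof is correct and follows the paper's route. Like the paper, you write $\nuhat(S)$ as a ratio with denominator $\Pr(|\calI| \ge 1)$, split the numerator over the dyadic bins using $1/|\calI| > 2^{-\ell}$, insert the $\calU$ indicator, match each bin's term to the numerator of $\nu_\ell$ from \Cref{lemma:nu_ell}, and use $|\calI| \ge 2^{\ell-1}$ to obtain the factor $\tfrac{1}{2}$; your checks that the bins cover $\{1,\dots,L\}$ and that zero-probability bins can be dropped are points the paper leaves implicit.
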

\begin{proof}
Observe that for the distribution $\nuhat$, 
\begin{align*}
    \Pr (\text{accept } \underline{X} \text{ and } I=i \mid \underline{X} ) &= \bbE \left[ \frac{\mathbb{I} (i \in \calI)}{|\calI|} \cdot \mathbb{I} \big( |\calI| \ge 1 \big) \ \middle| \ \underline{X} \right]
\end{align*}
Following through the steps in the proof of \Cref{lemma:nu_ell}, we arrive at the equation,
\begin{align}
\nuhat (S) &= \Pr(\text{output} \in S \mid \text{accept } \underline{X}) \nonumber\\
&= \frac{\sum_{i=0}^{L-1} \bbE \left[ \bbE \left[ \frac{\bbI(i \in \calI \text{ and } |\calI| \ge 1)}{|\calI|} \ \middle| \ \underline{X} \right] \bbI ( X_i \in S) \right]}{\sum_{i=0}^{L-1} \bbE \left[ \frac{\bbI(i \in \calI \text{ and } |\calI| \ge 1)}{|\calI|} \right]} \label{eq:nuhatjformula}\\
&= \frac{\sum_{i=0}^{L-1} \sum_{\ell=1}^{m} \bbE \left[ \bbE \left[ \frac{\bbI (i \in \calI)}{|\calI|} \cdot \bbI(|\calI| \in [2^{\ell-1}, 2^\ell)) \ \middle| \ \underline{X} \right] \bbI ( X_i \in S) \right]}{\Pr (|\calI| \ge 1)} \nonumber\\
&\ge \frac{\sum_{\ell=1}^{m} \sum_{i=0}^{L-1} 2^{-\ell} \cdot \bbE \left[ \Pr \left( i \in \calI \text{ and } |\calI| \in [2^{\ell-1}, 2^\ell) \ \middle| \ \underline{X} \right) \bbI ( X_i \in S) \right]}{\Pr (|\calI| \ge 1)} \nonumber\\
&\ge \frac{\sum_{\ell=1}^{m} \sum_{i=0}^{L-1} 2^{-\ell} \cdot \bbE \left[ \Pr \left( i \in \calI \text{ and } |\calI| \in [2^{\ell-1}, 2^\ell) \text{ and } \underline{X} \in \calU \ \middle| \ \underline{X} \right) \bbI ( X_i \in S) \right]}{\Pr (|\calI| \ge 1)} \nonumber\\
&= \frac{\sum_{\ell=1}^{m} 2^{-\ell} \bbE \left[ |\calI| \cdot \bbI(|\calI| \in [2^{\ell-1}, 2^\ell) \text{ and } \underline{X} \in \calU) \right] \cdot \nu_\ell (S)}{\Pr (|\calI| \ge 1)} \nonumber\\
&\ge \frac{1}{2} \frac{\sum_{\ell=1}^{m} \Pr \left( |\calI| \in [2^{\ell-1}, 2^\ell) \text{ and } \underline{X} \in \calU \right) \cdot \nu_\ell (S)}{\Pr (|\calI| \ge 1)} \nonumber
\end{align}
\end{proof}

\subsection{Density Ratio Bound between $\nu_{\calK_j}$ and $\nuhat_j$: Proof of \Cref{lemma:density-ratio-1}} \label{sec:apxsample-k-proof}

Prior to discussing the proof of \Cref{lemma:density-ratio-1}, recall that the sequence of random events $(\calU_j)_{j=0}^{k-1}$ are such that each $\calU_j \subseteq \bX^L$ and is measurable with respect to $\calH_{j-1}$. We view $\bm{\eta} = (\bm{\eta}_j)_{j=0}^{k-1} \sim \Unif([0,1])^{\otimes kL}$ as a tuple of tuples of i.i.d. random variables; the $L$ coins in $\bm{\eta}_j$ are used to determine the accepted indices $\calI_j$ in \Cref{proc:nuhat} for $\nuhat_j$. We will define the heavy sets and the events $\calK_j$ in an iterative fashion. In particular,
\begin{equation} \label{eq:Lheavy-new}
    L_{\text{heavy}} [j] = \left\{ \ell \in [m] : \Pr \big( |\calI_j| \in [2^{\ell-1},2^\ell) \ \big| \ |\calI_j| \ge 1,\ (\underline{X},\bm{\eta}_{0:j-1}) \in \calU_j \cap \calK_{j-1}, \calH_{j-1} \big) \ge \frac{c}{m} \right\}
\end{equation}
where $\calK_{-1}^c = \emptyset$. And using this, we define the event: for $j \ge 0$,
\begin{equation} \label{eq:calK-new}
    \calK_j = \big\{ |\calI_j| \in \cup_{\ell \in L_{\text{heavy}}[j]} [2^{\ell-1},2^\ell) \text{ or } |\calI_j| = 0 \big\} \cap \big\{ (\underline{X},\bm{\eta}_{0:j-1}) \in \calU_j \cap \calK_{j-1} \big\}
\end{equation}
This statement precisely realizes the function $\calK_j \gets \construct (\calU_j \cap \calK_{j-1} \mid c, \sigma, \calH_{j-1})$ in the statement of \Cref{lemma:density-ratio-1}. Finally, similar to how the event $\calK$ was defined with respect to $\calU$ earlier in \cref{eq:calK}, here the event $\calK_j$ is defined with respect to the event $\calU_j \cap \calK_{j-1}$. Finally, note that the following formula can be derived for the distribution $\nuhat_j$ from \cref{eq:nuhatjformula},
\begin{equation*}
    \nuhat_j (S) = \frac{\sum_{i=0}^{L-1} \bbE \left[ \frac{\bbI(i \in \calI_j \text{ and } |\calI_j| \ge 1)}{|\calI_j|} \cdot \bbI (X_i \in S) \, \middle| \, \calH_{j-1} \right]}{\sum_{i=0}^{L-1} \bbE \left[ \frac{\bbI(i \in \calI_j \text{ and } |\calI_j| \ge 1)}{|\calI_j|} \, \middle| \, \calH_{j-1} \right]}
\end{equation*}
This definition can be used to derive a lower bound on $\nuhat_j$ in a similar manner as we did for $\nuhat$ earlier in \Cref{lemma:nuhat-lb}. We invoke this lemma with the event $\calU \gets \calU_j \cap \calK_{j-1}$ in an iterative fashion.

\begin{lemma} \label{lemma:nuhat-lb-new}
Consider any $S \subseteq \bX$. Then,
\begin{equation*}
    \nuhat_j (S) \ge \frac{1}{2} \sum_{\ell=1}^m \Pr \big( |\calI_j| \in [2^{\ell-1},2^\ell) \text{ and } (\underline{X},\bm{\eta}) \in \calU_j \cap \calK_{j-1} \mid |\calI_j| \ge 1, \calH_{j-1} \big) \cdot \nu_{j,\ell} (S)
\end{equation*}
Where,
\begin{equation*}
    \nu_{j,\ell} (S) = \frac{\sum_{i=0}^{L-1} \Pr \left(i \in \calI_j \text{ and } |\calI_j| \in [2^{\ell-1}, 2^\ell) \text{ and } (\underline{X},\bm{\eta}) \in \calU_j \cap \calK_{j-1} \text{ and } X_i \in S) \mid \calH_{j-1} \right)}{\bbE \left[ |\calI_j| \cdot \bbI ( |\calI_j| \in [2^{\ell-1}, 2^\ell) \text{ and } (\underline{X},\bm{\eta}) \in \calU_j \cap \calK_{j-1}) \mid \calH_{j-1} \right]}
\end{equation*}
\end{lemma}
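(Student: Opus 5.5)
The plan is to repeat the argument of \Cref{lemma:nuhat-lb} verbatim, conditioning throughout on $\calH_{j-1}$ and replacing the fixed event $\calU$ by the (now $(\underline{X},\bm{\eta}_{0:j-1})$-measurable) event $\calU_j \cap \calK_{j-1}$. Conditionally on $\calH_{j-1}$, the models $\calF_j$, the weight $w_j$, the events $\calU_j$ and $\calK_{j-1}$, and hence the set $L_{\text{heavy}}[j]$, are all fixed. The only fresh randomness is $\underline{X} \sim \sigma$ together with the coins $\bm{\eta}_{0:j}$. Moreover, $\calI_j$ depends only on $\underline{X}$ and $\bm{\eta}_j$, while membership in $\calU_j \cap \calK_{j-1}$ depends only on $\underline{X}$ and $\bm{\eta}_{0:j-1}$.

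The first step is to take the displayed formula for $\nuhat_j(S)$ and note that its denominator equals $\Pr(|\calI_j| \ge 1 \mid \calH_{j-1})$. This holds because $\sum_i \bbI(i\in\calI_j)/|\calI_j| = \bbI(|\calI_j|\ge1)$. Next, split the numerator over the dyadic shells $|\calI_j| \in [2^{\ell-1},2^\ell)$ for $\ell \in [m]$; these shells exhaust $\{|\calI_j|\ge 1\}$ because $m=\lceil\log_2(L+1)\rceil$. On the $\ell$-th shell, use $1/|\calI_j| \ge 2^{-\ell}$. Then lower bound each indicator by inserting $\bbI((\underline{X},\bm{\eta}_{0:j-1}) \in \calU_j\cap\calK_{j-1})$, which only drops nonnegative terms. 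This gives
\begin{equation*}
\nuhat_j(S) \ge \frac{\sum_{\ell=1}^m 2^{-\ell}\sum_{i=0}^{L-1}\Pr\big(i\in\calI_j,\ |\calI_j|\in[2^{\ell-1},2^\ell),\ (\underline{X},\bm{\eta})\in\calU_j\cap\calK_{j-1},\ X_i\in S \mid \calH_{j-1}\big)}{\Pr(|\calI_j|\ge1\mid\calH_{j-1})}.
\end{equation*}

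Next, by the definition of $\nu_{j,\ell}$, the inner sum over $i$ equals $\bbE[|\calI_j|\,\bbI(|\calI_j|\in[2^{\ell-1},2^\ell),\ \calU_j\cap\calK_{j-1})\mid\calH_{j-1}]\cdot\nu_{j,\ell}(S)$. This identity is just the exchange $\sum_i \bbI(i\in\calI_j)=|\calI_j|$ inside the expectation, as in \Cref{lemma:nu_ell}. On the $\ell$-th shell $|\calI_j|\ge 2^{\ell-1}$, so $2^{-\ell}\,\bbE[|\calI_j|\,\bbI(\cdot)] \ge \tfrac12\Pr(|\calI_j|\in[2^{\ell-1},2^\ell),\ \calU_j\cap\calK_{j-1}\mid\calH_{j-1})$. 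Finally, divide by $\Pr(|\calI_j|\ge1\mid\calH_{j-1})$. Since the shell event is contained in $\{|\calI_j|\ge1\}$, this ratio equals the conditional probability given $|\calI_j|\ge 1$ that appears in the statement, which yields the claim.

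I expect the main obstacle to be bookkeeping rather than any new estimate. One must check that conditioning on $\calH_{j-1}$ legitimately freezes $w_j$, $\calU_j$, $\calK_{j-1}$ (via the conventions \cref{eq:intersection,eq:containment}), so that the formula for $\nuhat_j$ derived from \cref{eq:nuhatjformula} applies pointwise in $\calH_{j-1}$. One must also check that $\nu_{j,\ell}$ is well defined, i.e., has a nonzero denominator. Shells with zero denominator contribute zero to the right-hand side, so they can simply be dropped; everything else is the same chain of inequalities as in \Cref{lemma:nuhat-lb}.
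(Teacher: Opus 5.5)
Your proposal is correct and follows the paper's route exactly. The paper proves this lemma by rerunning the chain from \Cref{lemma:nuhat-lb} with everything conditioned on $\calH_{j-1}$: the denominator equals $\Pr(|\calI_j|\ge 1)$, the numerator is split over dyadic shells using $1/|\calI_j|\ge 2^{-\ell}$, the indicator of $\calU_j\cap\calK_{j-1}$ is inserted, the inner sum is rewritten via $\nu_{j,\ell}$, and $|\calI_j|\ge 2^{\ell-1}$ finishes it. Your handling of zero-denominator shells is a small piece of care the paper leaves implicit; it does not change the argument.
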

\begin{proof}
The lemma is a direct consequence of the approach we use to prove \Cref{lemma:nuhat-lb}, only additionally conditioning on $\calH_{j-1}$ within the definitions of $\nuhat_j (S)$ and $\nu_{j,\ell} (S)$, which determines the (potentially random) weight function $w_j$, events $\calU_j$ and $\calK_{j-1}$.
\end{proof}

\noindent On the other hand, we can derive an upper bound on $\nu_{\calK_j}$ similar to how we did earlier for $\nu_\calK$ in \Cref{lemma:nu_calK}.

\begin{lemma} \label{lemma:nu_calK-new} Consider any $S \subseteq \bX$. Then,
\begin{equation*}
    \nu_{\calK_j} (S) \le \calO \left( \frac{c^{-1} \log (L)}{q_j} \right) \cdot \sum_{\ell=1}^{m} \Pr \left( |\calI_j| \in [2^{\ell-1}, 2^\ell) \text{ and } (\underline{X},\bm{\eta}) \in \calU_j \cap \calK_{j-1} \ \middle| \ |\calI_j| \ge 1, \calH_{j-1} \right) \cdot \nu_{j,\ell} (S).
\end{equation*}
where $q_j = \Pr \big( (\underline{X},\bm{\eta}) \in \calU_j \cap \calK_{j-1} \mid |\calI_j| \ge 1, \calH_{j-1} \big)$.
\end{lemma}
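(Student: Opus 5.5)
We mirror the proof of \Cref{lemma:nu_calK}, now with every probability conditioned on $\calH_{j-1}$ and with the fixed event $\calU$ replaced by $\calU_j \cap \calK_{j-1}$. Conditional on $\calH_{j-1}$, the weight function $w_j$, the set $\calU_j$, and the event $\calK_{j-1}$ are all fixed. The coins $\bm{\eta}_j$ are fresh and independent of $(\underline{X},\bm{\eta}_{0:j-1})$. So the single-weight analysis goes through with $\mu \gets \sigma$, $w \gets w_j$ and $\calU \gets \calU_j \cap \calK_{j-1}$. The only new feature is that this event lives on $\bX^L \times [0,1]^{jL}$ rather than on $\bX^L$. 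This causes no trouble, because $\calI_j$ depends only on $(\underline{X},\bm{\eta}_j)$.

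The first step is to write $\nu_{\calK_j}$ as the output law of a filtering procedure. Draw $(\underline{X},\bm{\eta}_{0:j})$, accept with probability $\frac{|\calI_j|}{L}\cdot\bbI((\underline{X},\bm{\eta}_{0:j})\in\calK_j)$, and output $X_I$ with $I\sim\unif(\calI_j)$. The computation in \Cref{lemma:nu} shows that this output law is proportional to $\frac1L\sum_i \Pr(X_i=\cdot,\ \calK_j\mid\calH_{j-1})\, w_j(\cdot)/\|w_j\|_\infty$, which is exactly $\nu_{\calK_j}=\lambda_j$. Here we use that $\bm{\eta}_j$ is independent of $(\underline{X},\bm{\eta}_{0:j-1})$, so $\Pr(i\in\calI_j\mid \underline{X},\bm{\eta}_{0:j-1})=w_j(X_i)/\|w_j\|_\infty$.

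\emph{Caveat.} This step is not immediate. The indicator of $\calK_j$ itself depends on $|\calI_j|$, so $\Pr(i\in\calI_j\mid\underline{X},\bm{\eta}_{0:j-1},\calK_j)$ is no longer $w_j(X_i)/\|w_j\|_\infty$. In \Cref{lemma:nu_calK} this issue is absorbed by defining $\nu_\calK$ through the procedure itself. I will follow the same convention, interpreting $\nu_{\calK_j}$ as the procedure's output law, consistent with how the lemma is used. I regard this identification as the main subtlety of the argument.

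The second step is the dyadic decomposition. By \cref{eq:calK-new}, $\calK_j$ restricts $|\calI_j|$ to $\{0\}\cup\bigcup_{\ell\in L_{\text{heavy}}[j]}[2^{\ell-1},2^\ell)$ and forces $\calU_j\cap\calK_{j-1}$. The $|\calI_j|=0$ part is never accepted. Splitting the numerator and denominator by $\ell$ and applying \Cref{lemma:nu_ell} (conditioned on $\calH_{j-1}$) gives
\[
\nu_{\calK_j}(S)=\frac{\sum_{\ell\in L_{\text{heavy}}[j]} E_\ell\,\nu_{j,\ell}(S)}{\sum_{\ell\in L_{\text{heavy}}[j]}E_\ell},
\qquad E_\ell=\bbE\big[|\calI_j|\,\bbI(|\calI_j|\in[2^{\ell-1},2^\ell),\ \calU_j\cap\calK_{j-1})\mid\calH_{j-1}\big].
\]
Next we sandwich each $E_\ell$. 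Write $P_\ell=\Pr(|\calI_j|\in[2^{\ell-1},2^\ell),\ \calU_j\cap\calK_{j-1}\mid |\calI_j|\ge1,\calH_{j-1})$. Then $2^{\ell-1}P_\ell \le E_\ell/\Pr(|\calI_j|\ge1\mid\calH_{j-1})\le 2^\ell P_\ell$. We bound the numerator term by term using $2^\ell\le 2^{\ell_{\max}}$. We lower bound the denominator by its single term at $\ell_{\max}=\max L_{\text{heavy}}[j]$.

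The third step uses heaviness. By \cref{eq:Lheavy-new}, $P_{\ell_{\max}}/q_j\ge c/m$. Hence the ratio is at most $\frac{2}{P_{\ell_{\max}}}\sum_\ell P_\ell\,\nu_{j,\ell}(S)\le \frac{2m}{c\,q_j}\sum_\ell P_\ell\,\nu_{j,\ell}(S)$. Extending the sum from $L_{\text{heavy}}[j]$ to all of $[m]$ and substituting $m=\lceil\log_2(L+1)\rceil=\calO(\log L)$ gives the claimed bound.

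Apart from the caveat in the first step, the only other point to check is that $L_{\text{heavy}}[j]$ is nonempty when $q_j>0$. This follows from pigeonhole since $c<1$. If $q_j=0$, the bound is vacuous.
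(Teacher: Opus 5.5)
Your proposal is the paper's proof. You mirror Lemma~\ref{lemma:nu_calK} with $\calU \gets \calU_j \cap \calK_{j-1}$ and everything conditioned on $\calH_{j-1}$, write $\nu_{\calK_j}$ as an $E_\ell$-weighted mixture of the $\nu_{j,\ell}$ over $L_{\text{heavy}}[j]$, bound the numerator using $2^{\ell} \le 2^{\ell_{\max}}$ and the denominator by its $\ell_{\max}$ term, and finish with $P_{\ell_{\max}} \ge (c/m)\, q_j$.

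The identification you flag in your first step is also made, silently, by the paper. That is, you read $\nu_{\calK_j}$ as the output law of the filter whose acceptance probability contains $\bbI(\calK_j)$ with the same coins $\bm{\eta}_j$, rather than as the normalized $\overline{\sigma_{\calK_j}}\, w_j$. The paper does the same in the line $\nu_\calK(S) = \Pr(\text{output} \in S \mid \text{accept } \underline{X})$ in the proof of Lemma~\ref{lemma:nu_calK}, so nothing is missing relative to the paper.

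Your remark that this convention is consistent with how the lemma is used is too optimistic, though. Lemma~\ref{lemma:recursion} relies on the conditional-law form $\lambda_j \propto \overline{\sigma_{\calK_j}}\, w_j$. The two laws genuinely differ because $\calK_j$ depends on $|\calI_j|$.
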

\begin{proof}
The lemma is a direct consequence of the approach we use to show \Cref{lemma:nuhat-lb}, with the additional conditioning on $\calH_{j-1}$ within the definitions of $\nuhat_j (S)$ and $\nu_{j,\ell} (S)$, as well as the definition of $q_j$, which depends on the event $\calU_j \cap \calK_{j-1}$, which is what we choose $\calU$ as in the $j^{\text{th}}$ iteration.
\end{proof}

\noindent Combining the bounds in \Cref{lemma:nuhat-lb-new,lemma:nu_calK-new}, we arrive at a bound on the density ratio between $\nu_{\calK_j}$ and $\nuhat_j$. In particular, we get the upper bound,
\begin{align}
    \left\| \frac{\nu_{\calK_j}}{\nuhat_j} \right\|_\infty \le \calO \left( \frac{c^{-1} \log (L)}{\Pr \big( (\underline{X},\bm{\eta}) \in \calU_j \cap \calK_{j-1} \mid |\calI_j| \ge 1, \calH_{j-1} \big)} \right). \label{eq:aa111}
\end{align}

\subsection[Sampling Lower Bound: Proof of Lemma~\ref{lemma:worst-case-sampling}]{Sampling Lower Bound: Proof of \Cref{lemma:worst-case-sampling}} \label{subsec:sampling-lb}

Let $\bm{1}$ denote the all $1$'s vector and $e_i$'s denote the standard basis vectors in $\bbR^L$. For parameters $p \in [0,1]$ and $z \in [L]$, and consider the base distribution,
\begin{align*}
    \mu_{p,z} ( z \bm{1} ) = p \text{ and } \forall i \in \{ 0,\cdots,L-1 \},\ \mu_{p,z} ( (L+1)e_i ) = \frac{1-p}{L}
\end{align*}
All other sequences have probability $0$ under $\mu_{p,z}$. Define $w(x) = \bbI ( x \in [L+1] )$. Note firstly that $p_{\mu_{p,z},w} = 1$, since almost surely, any sequence drawn from this distribution contains some $j \in [L+1]$. Next observe that, by Bayes rule, $\nu_{p,z} \propto \mubar_{p,z} (\cdot) w (\cdot)$ can be calculated to be the distribution,
\begin{align*}
    \nu_{p,z} (z) = 1 - \nu_p(L+1) = \frac{p}{p + \frac{1-p}{L}} = \frac{Lp}{(L-1)p + 1} \triangleq \theta_L (p)
\end{align*}
Thus, returning a sample from a distribution close to $\nu$ in TV distance, is equivalent to finding an unbiased estimator for the functional $\theta_L (p)$. Indeed, note that, consider a sampling algorithm $\Alg(\cdot)$ which draws a dataset $D$ of samples from $\mu$, and uses this dataset to generate a single sample distributed according to $\nuhat$. Then, overload notation to let $\nuhat(\cdot|D)$ denote the distribution over samples generated by $\Alg$ conditioned on $D$, we have that,
\begin{equation} \label{eq:TV}
    \TV{ \nuhat}{\nu} = \TV{ \bbE [ \nuhat (\cdot|D) ] }{\nu} \ge \left| \bbE[\nuhat(\cdot|D)] (z) - \theta_L (p) \right| 
\end{equation}
Next, observe that $\bN = (N_j)_{j \in [L]}$, where $N_j$ is frequency of sequences in $D$ containing a $j$, is a sufficient statistic for the dataset $D$. Indeed, given $\bN$, we can reconstruct a dataset $D'$ which is statistically indistinguishable from $D$, as follows. Pick a uniformly random partition of $[N]$ with $2L$ (potentially empty) parts, under the constraint that the number of elements in the $j^{\text{th}}$ part equals $N_j$ for each $j \in [L]$, and the number of elements in the remaining parts are uniform subject to containing $N-\sum_{j\in[L}N_j$ in total. With such a partition, define the dataset $D'$ as allocating the indices in $[N]$ belonging to the $j^{\text{th}}$ part as copies of the datapoint $j\bm{1}$ for $j \in [L]$, and of the remaining $L$ parts, allocating the $j^{\text{th}}$ one as copies of the datapoint $(L+1)e_j$. 

\medskip
\noindent With this observation, we argue that for each $j \in [L]$, $\bbE[ \nuhat (\cdot|D)] (j)$ for any estimator can be written down as a polynomial of degree $N$ in $(p_j)_{j\in[L]}$, for any $\Alg$ which processes a dataset of size $N$ drawn from $\mu$. This follows by Rao-blackwellization: for any sampling algorithm $\Alg$, we have that $\bbE [ \nuhat (\cdot|D) ] (j) = \bbE [ \bbE [ \nuhat (\cdot|D') | \bN ] ] (j) = \bbE [ \thetahat_j (\bN) ]$, where we let $\thetahat_j (\bN)$ denote $\bbE [ \nuhat (\cdot|D') | \bN ] (j)$. Finally, observe $\thetahat_j (\bN)$ can be written down as a polynomial:
\begin{equation} \label{eq:thetahat}
    \bbE [ \thetahat_j (\bN) ] = \sum_{\bN'} \thetahat_j ( \bN' ) \Pr (\bN = \bN')
\end{equation}
since for any fixed $\bN'$, $\Pr (\bN = \bN')$ itself is a polynomial of degree $N$ in $p$. Let $P_{N ,j} (p,z) = \bbE [ \thetahat_j (\bN) ]$ denote the degree-$N$ polynomial corresponding to the $j^{\text{th}}$ symbol, when the underlying base distribution is $\mu_{p,z}$. What remains to show is that degree $N$ polynomials cannot uniformly approximate the functional $\theta_L (p)$. In the subsequent \Cref{lemma:noapprox}, we show that uniform error of a constant is not achievable unless $N > c \sqrt{L}$ for some absolute constant $c > 0$.

\begin{lemma} \label{lemma:noapprox}
Fix $L \ge 2$ and define for $p \in [0,1]$, $\theta_L (p) = \frac{Lp}{(L-1)p + 1}$. Then there exist absolute constants $c,c' > 0$ such that for all polynomials $P_N$ of degree at most $N$ with $N \le c \sqrt{L}$, satisfying $\| P_N \|_\infty \le 1$,
\begin{equation*}
    \| P_{N,z} (z,\cdot) - \theta_L (\cdot) \|_\infty \ge c'.
\end{equation*}
\end{lemma}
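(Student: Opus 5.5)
This is a lower bound on uniform polynomial approximation of the rational function $\theta_L(p) = \frac{Lp}{(L-1)p+1}$ on $[0,1]$. I would prove it with a Markov brothers' inequality argument.

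\textbf{Behaviour of $\theta_L$ near the origin.} First I record the relevant properties of $\theta_L$:
\begin{itemize}
\item $\theta_L(0)=0$ and $\theta_L(1)=1$;
\item $\theta_L$ is increasing and concave;
\item $\theta_L(1/L) = \frac{1}{(L-1)/L+1} \ge \frac12$.
\end{itemize}
So $\theta_L$ climbs by at least $1/2$ over the tiny interval $[0,1/L]$. Suppose $\|P_N-\theta_L\|_\infty \le c'$ with $c' < 1/8$. Then $P_N(0)\le c'$ and $P_N(1/L) \ge 1/2 - c'$. By the mean value theorem there is a point in $[0,1/L]$ where
\[
P_N' \ge L\,(1/2 - 2c') \ge L/4.
\]

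\textbf{Upper bound on the derivative.} Next I bound $P_N'$ using boundedness. We have $\|P_N\|_\infty \le 1$ on $[0,1]$; even without this assumption, $|P_N| \le 1 + c'$ would follow from approximating $\theta_L \in [0,1]$. Markov's inequality for polynomials on $[0,1]$ (that is, on $[-1,1]$ after rescaling, which costs a factor $2$) gives
\[
\|P_N'\|_{\infty,[0,1]} \le 2N^2 (1+c').
\]
Combining the two estimates yields $L/4 \le 2N^2(1+c')$, so $N \ge \sqrt{L}/\big(\sqrt{8(1+c')}\big)$. Choosing $c$ small, e.g. $c = 1/4$ with $c' = 1/10$, makes $N \le c\sqrt L$ contradictory, which proves the lemma.

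\textbf{Subtleties.} The only care needed is in matching the lemma's notation. Here $P_{N,z}(z,\cdot)$ is a polynomial in $p$ of degree at most $N$, since each $\Pr(\bN=\bN')$ is a polynomial in $p$ of degree $N$ by \cref{eq:thetahat}. It is also bounded by $1$, because it is a probability, which is what justifies the assumption $\|P_N\|_\infty\le 1$. The main obstacle is really only this check that the estimator's mean is a genuine univariate polynomial in $p$ for fixed $z$. The sufficient-statistic argument already provides it: the multinomial probabilities of counts are polynomial in $p$ and in $(1-p)/L$. Once that is granted, the Markov inequality step is immediate, and it recovers the $\sqrt{L}$ threshold used in part 2 of \Cref{lemma:worst-case-sampling} via \cref{eq:TV}.
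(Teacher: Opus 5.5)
Your proof is correct and follows essentially the paper's route. Markov's inequality limits the increment of $P_N$ over an interval of length $\Theta(1/L)$ to $O(N^2/L)$, while $\theta_L$ rises by $\Omega(1)$ over that same interval.

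The paper implements this with a step $\delta = 1/(10L)$ near $p_0 = 1/L$ and a lower bound $\theta_L' \ge c_1 L$ on a neighborhood of $p_0$. You instead compare the endpoints $0$ and $1/L$ directly, using $\theta_L(0)=0$ and $\theta_L(1/L)\ge 1/2$. You also observe that $\|P_N\|_\infty \le 1+c'$ already follows from the approximation itself. This is slightly cleaner and sidesteps the paper's loose claim that $x$ and $x+\delta$ lie in the smaller interval for every $x$ in the larger one.
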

\begin{proof}
This lemma is proved in \Cref{sec:noapprox-proof}.
\end{proof}
 
\begin{remark}
Note that the proofs of \Cref{lemma:noapprox} uses an elementary approach in polynomial approximation to argue an $\Omega (\sqrt{L})$ lower bound on the number of sampling oracle queries necessary to achieve constant TV distance. Using more involved techniques, via bounding the Ditzian-Totik moduli of smoothness \citep{ditzian2012moduli,kopotun2014new} of certain functionals involved in the analysis, we expect that to achieve constant TV error, this lower bound on the number of sampling oracle queries can be lifted to $\Omega (L/\omega(L))$ for any arbitrarily slow growing function $\omega (\cdot)$. Therefore in the worst case, in order to generate a sample from $\nu$, it is essentially impossible to beat the trivial sampling strategy of drawing $X \sim \mu$, sampling $I \sim \unif (\{ 0,\cdots,L-1 \})$ and returning $X_I$ if $I \in \calI$ (repeating until termination).
\end{remark}

\noindent The proof of the lower bound on the TV error in \Cref{lemma:worst-case-sampling} follows by combining the guarantee in \Cref{lemma:noapprox} with the bound on the TV error in \cref{eq:TV} and the polynomial structure of $\nuhat$ from \cref{eq:thetahat}. The lower bound on the density ratio in \Cref{lemma:worst-case-sampling} is shown next. Below, we show that there always exists an index $j^\star \in [L]$ such that $\nu (j^\star)$ must be polynomially small in $L$ as long as $N$ is not too large, when $\nu$ is induced by the base distribution $\mu_{p_0,j^\star}$ for $p_0 = \frac{1}{L}$.

\begin{lemma} \label{lemma:noapprox-strong}
There exists $c>0$ such that if $N \le c L^{1/3}$, then there exists a $j^\star \in [L]$ such that $P_{N,j^\star} (p_0,j^\star) \le L^{-1/3}$.
\end{lemma}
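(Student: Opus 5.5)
The plan is a coupling argument that anchors everything at $p=0$. There the base distribution $\mu_{0,z}$ puts all its mass on $\{(L+1)e_i\}_{i}$ with uniform weights, so it does not depend on $z$. Hence the law of the dataset $D$ of $N$ i.i.d.\ draws is the same for every $z\in[L]$. The output distribution $\bbE[\nuhat(\cdot\mid D)]$ of $\Alg$ is therefore a single fixed probability distribution on $\{0,1,\ldots,L+1\}$, independent of $z$. Write $P_{N,j}(0)$ for the mass it places on symbol $j$. Since these masses sum to at most $1$ over the $L$ symbols $j\in[L]$, pigeonhole gives an index $j^\star\in[L]$ with $P_{N,j^\star}(0)\le 1/L$. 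This choice of $j^\star$ is made once, using only the $z$-independent behaviour at $p=0$, which is what lets us then fix $z=j^\star$.

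Next, I compare the base distribution $\mu_{p_0,j^\star}$ with $p_0=\frac1L$ against $\mu_{0,j^\star}$. I couple a single draw from each as follows. With probability $1-p_0$, both draws equal the same sample $(L+1)e_I$ with $I\sim\Unif(\{0,\ldots,L-1\})$. Otherwise, the first is $j^\star\bm{1}$ and the second is an independent $(L+1)e_I$. This is a valid coupling: the mixture weights reproduce $\mu_{p_0,j^\star}$ and $\mu_{0,j^\star}$ exactly. Extending it to $N$ i.i.d.\ draws gives $\TV{\mathrm{Law}_{p_0}(D)}{\mathrm{Law}_0(D)}\le Np_0=N/L$. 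The output of $\Alg$ is a Markov kernel applied to $D$, and TV contracts under Markov kernels. Therefore
\begin{equation*}
P_{N,j^\star}(p_0,j^\star)\le P_{N,j^\star}(0)+\frac{N}{L}\le \frac1L+\frac{cL^{1/3}}{L}.
\end{equation*}

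Finally, for $c\le \frac12$ and $L\ge 2$, the right-hand side is at most $L^{-1/3}$: since $L^{-1}\le \frac12 L^{-1/3}$ and $cL^{-2/3}\le \frac12 L^{-1/3}$, the two terms sum to at most $L^{-1/3}$. The delicate step is conceptual rather than computational. One must note that $P_{N,j}(p,z)$ is the $j$-th coordinate of the algorithm's averaged output distribution, so the coupling bound applies to it directly; there is no need to go through the polynomial representation of \cref{eq:thetahat}. One must also make sure the pigeonhole index is selected before $z$ is set to $j^\star$, which is legitimate precisely because of the $z$-independence at $p=0$.

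This yields the density-ratio part of \Cref{lemma:worst-case-sampling}. By the formula for $\theta_L$, the target satisfies $\nu_{p_0,j^\star}(j^\star)=\theta_L(1/L)=\frac{L\cdot(1/L)}{(L-1)/L+1}=\frac{L}{2L-1}\ge\frac12$. Meanwhile $\nuhat(j^\star)\le L^{-1/3}$, so on an event of $\nu$-mass at least $\frac12$ the ratio $\nu/\nuhat$ exceeds $\frac12 L^{1/3}$.
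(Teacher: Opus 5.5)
Your proof is correct, and its first step is exactly the paper's. In both, you pigeonhole over $j\in[L]$ at $p=0$, where $\mu_{0,z}$, and hence the algorithm's averaged output, does not depend on $z$, and only afterwards set $z=j^\star$.

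The second step differs. The paper views $p\mapsto P_{N,j^\star}(p,j^\star)$ as a polynomial of degree at most $N$ taking values in $[0,1]$ on $[0,1]$, via the sufficient-statistic representation set up for \Cref{lemma:noapprox}. It then invokes Markov brothers' inequality to get $|P_{N,j^\star}(p_0,j^\star)-P_{N,j^\star}(0,j^\star)|\lesssim N^2p_0=N^2/L$. You instead bound the same difference by $\TV{\mu_{p_0,j^\star}^{\otimes N}}{\mu_{0,j^\star}^{\otimes N}}\le Np_0=N/L$, using your coupling and data processing.

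Your route needs no polynomial structure at all. Because it is linear rather than quadratic in $N$, it is also strictly sharper. The same conclusion holds for $N\le cL^{2/3}$. More generally, you get a density-ratio gap of order $L^{\alpha}$ whenever $N\lesssim L^{1-\alpha}$, versus $N\lesssim L^{(1-\alpha)/2}$ from Markov's inequality.

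The same coupling, applied between $p=0$ and $p=1/L$, would also upgrade the $\Omega(\sqrt{L})$ bound of \Cref{lemma:noapprox} to $\Omega(L)$. Over that interval $\theta_L$ rises from $0$ to above $\frac12$, while the output probability moves by at most $N/L$. What the paper's approach buys is essentially uniformity with the polynomial-approximation framework it uses for the TV part of \Cref{lemma:worst-case-sampling}.

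One small arithmetic point: the inequality $L^{-1}\le\frac12L^{-1/3}$ requires $L\ge 2^{3/2}$, so it fails at $L=2$. There, however, $N\le cL^{1/3}<1$ forces $N=0$, and the bound $P_{0,j^\star}(p_0,j^\star)\le 1/L\le L^{-1/3}$ is immediate. The paper's final combination step has the same implicit requirement.
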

\begin{proof}
This lemma is proved in \Cref{sec:noapprox-strong-proof}.
\end{proof}

\noindent With this, we prove the lower bound on the density ratio in \Cref{lemma:worst-case-sampling}. For the index $j^\star \in [L]$ from \Cref{lemma:noapprox-strong}, let $\nu^\star = \nu_{p_0,j^\star}$ for $p_0 = \frac{1}{L}$. Then, note that $\nu^\star (j^\star) \ge \frac{1}{2}$ and furthermore, $\frac{\nu^\star (j^\star)}{\nuhat (j^\star)} = \frac{\nu^\star (j^\star)}{P_{N,j^\star} (p_0,j^\star)} \ge \frac{L^{1/4}}{2}$. This implies
\begin{align*}
    \max_{\mu \in \bm{\mu}} \Pr_{X \sim \nu} \left( \frac{\nu (X)}{\nuhat (X)} \ge \frac{L^{1/4}}{4} \right) =  \Pr_{X \sim \nu^\star} \left( \frac{\nu^\star (X)}{P_{N,X} (p_0,j^\star)} \ge \frac{L^{1/4}}{4} \right) \ge \nu^\star (j^\star) \ge \frac{1}{2}.
\end{align*}

\subsubsection{Proof of \Cref{lemma:noapprox}} \label{sec:noapprox-proof}
Let $P_N$ be any real polynomial of degree at most $N$. Our objective is to lower bound the uniform approximation error, $\inf_{P_N \in \calP_N} \| P_N - \theta_L \|_\infty$, where $\calP_N$ is the class of algebraic polynomials of degree at most $N$ which are bounded in the range $[0,1]$. Applying Markov brothers' inequality for the first derivative: there exists an absolute constant $C_{\mathrm{Markov}} > 0$ such that for every real polynomial $Q$ of degree at most $N$,
\begin{equation*}
    \| Q'(x) \|_\infty \le C_{\mathrm{Markov}} N^2 \| Q(x) \|_\infty.
\end{equation*}
Applying this to $P_N$ and using $\| P_N \|_\infty \le 1$ gives,
\begin{equation}
\label{eq:Markov-P}
\| P_N' \|_\infty \le C_{\mathrm{Markov}} N^2
\end{equation}
Next, we bound the curvature of the target function $\theta_L (p)$ around $p_0 = 1/L$. Let $p_0 \triangleq 1/L$ and define intervals $I_1 \triangleq \big[ p_0 - \frac{1}{10L},\ p_0 + \frac{1}{10L} \big]$ and $I_2 \triangleq \big[ p_0 - \frac{1}{20L},\ p_0 + \frac{1}{20L} \big]$. By explicit calculation, observe that, $\theta_L'(p) = \frac{L}{(1 + (L-1)p)^2}$. This implies that for all $p \in I_2$, there exists an absolute constant $c_1>0$ such that $|\theta_L'(p)| \ge c_1 L$. 

\medskip
\noindent Finally we integrate the magnitude of $\theta_L'' (\cdot)$ to argue that $\theta_L$ and $P_N$ cannot uniformly be close to each other. By the mean value theorem, for each fixed $x$ and step $\delta > 0$, there exists a $\xi \in [x,x+\delta]$ such that,
\begin{align*}
\Delta_\delta g(x) = g(x+\delta) - g(x) = g' (\xi) \delta
\end{align*}
for any function $g$ such that $g'$ is differentiable and continuous. With the choice of $\delta = \frac{1}{10L}$, observe that for any $x \in I_1$, the points $x$ and $x+\delta$ belong to $I_2$. Thus, applying the above presentation to $g = \theta_L$, we obtain for each $x \in I_1$,
\begin{align} \label{eq:theta-diff-lb}
    |\Delta_\delta \theta_L (x)| = |\theta_L' (\xi (x)) \delta| \ge c_1 L \delta = c_1 L \cdot \frac{1}{10L} \triangleq c_2 > 0,
\end{align}
Similarly, applying the mean value theorem to $g=P_N$ and using \cref{eq:Markov-P}, for each $x \in I_1$,
\begin{equation}
\label{eq:P-diff-ub}
|\Delta_\delta P_N (x)| \le \|P'_N \|_\infty \cdot \delta \le C_{\mathrm{Markov}} N^2 \cdot \frac{1}{10L} = \frac{C_{\mathrm{Markov}}}{10} \cdot \frac{N^2}{L} \le \frac{c_2}{2}.
\end{equation}
where the last inequality assumes that $N \le c \sqrt{L}$ for sufficiently small $c > 0$. \cref{eq:theta-diff-lb,eq:P-diff-ub} in conjunction establish a gap in the first-order differences between $P_N$ and $\theta_L$. Together these can be used to bound the uniform approximation error between $P_N$ and $\theta_L$. Indeed, observe that,
\begin{align*}
\frac{c_2}{2} &\le |\Delta_\delta P_N (x)- \Delta_\delta \theta_L (x)| \\
&= \bigl| (P_N (x+\delta)-\theta_L (x+\delta))
 - (P_N (x)-\theta_L (x)) \bigr| \\
&\le |P_N (x+\delta)-\theta_L(x+\delta)| + |P_N(x)-\theta_L (x)| \\
&\le 2 \| P_N - \theta_L \|_\infty,
\end{align*}
This implies that we can lower bound the uniform approximation error by $c' = \frac{c_2}{4}$.

\subsection{Proof of \Cref{lemma:noapprox-strong}} \label{sec:noapprox-strong-proof}

Recall that $P_{N,j} (p,z) = \bbE[\thetahat_j (\bN)]$ is the polynomial in $p$ capturing the probability that $\nuhat$ places on $j$ under the base distribution $\mu_{p,z}$. First, observe that when $p=0$, $\mu_{p,z}$ is the same across all values of $z \in [L]$, which implies that $P_{N,j} (p,z)$, which is a measurable function of a dataset drawn from $\mu_{p,z}$ must also satisfy the invariance of $P_{N,j} (0,z)$ across $z \in [L]$. This implies,
\begin{equation} \label{eq:min}
    \min_{j \in [L]} \max_{z \in [L]} p_{N,j} (0,z) \le \frac{1}{L} 
\end{equation}
Let $j^\star \in [L]$ denote the index of any minimizer of $\max_{z \in [L]} p_{N,j} (0,z)$. Next, we show that when $N \le c L^{1/4}$, with $p_0=\frac{1}{L}$, $\max \big\{ P_N(z,x) : x \in [0,p_0] \big\} \le \frac{c_3}{L}$ for some constant  $c_3>0$. Formally, by an application of Markov brothers' inequality,
\begin{equation*}
    \max_{z \in [L]} |P_{N,j^\star} (p_0,z)- P_{N,j^\star} (0,z)| \le p_0 \cdot \| P_{N,j^\star}' (z,\cdot) \|_\infty \le \frac{C_{\mathrm{Markov}}}{10} \cdot \frac{N^2}{L} \le \frac{1}{2L^{1/3}}
\end{equation*}
if $N \le c_3 L^{1/3}$ for some constant $c_3>0$. Combining with \cref{eq:min} gives the bound, $\max_{z \in [L]} P_{N,j^\star} (p_0,z) \le L^{-1/3}$. Replacing the maximum over $z \in [L]$ by $z \gets j^\star$ completes the proof.

\subsection{Improved $T$ Dependency when Online Learning is Possible: Proof of \Cref{theorem:online}} \label{sec:online}

For each instance $\bx = (s_0,\bw_{1:T})$ in the dataset, $\AlgOL$ predicts states $s_1,\cdots,s_T$. The binary search procedure described in \cref{alg:semiauto_online:start} to \ref{alg:semiauto_online:end} of \Cref{alg:semiauto_online} satisfies the property that if the while loop is broken out of, the learner must have found a time $t$ such that $s_t \ne s_t^\star$, but $s_{t-1} = s^\star_{t-1}$. If the while loop is never broken out of, then by \Cref{lemma:nobreak}, the prediction of $\AlgOL$ (which is a function of the prior instances and label observations) satisfies $s_T = s_T^\star$. On the other hand, each instance where the while loop is broken out of gives an $(s,w)$ tuple such that the prediction $s_+$ of $\AlgOL$ satisfies $\fstar (s,w) \ne s_+$ but where $s$ itself was predicted correctly. This is a ``true'' mistake, where the prediction was not incorrect because the prior state itself was predicted incorrectly. Since $\AlgOL$ makes at most $M^\star (\AlgOL)$ (true) mistakes, we have that,
\begin{align*}
    \sum_{m=1}^n \bbI ( y_T^m \ne \fstar_T (\bx^m) ) \le M^\star (\AlgOL).
\end{align*}
where $y_T^m \sim \fhat^m (\bx^m)$. Taking an expectation on both sides and noting the definition of \smash{$\fhat$} returned by \Cref{alg:semiauto_online}, by an online-to-batch argument we get,
\begin{equation*}
    \bbE_{\bx \sim \rho, y_T \sim \fhat(\cdot|\bx)} \big[ \bbI \big( y_T \ne \fstar_T (\bx) \big) \big] = \frac{1}{n} \sum_{m=1}^n \bbE \big[ \bbI( \fhat^m ( \bx^m ) \ne \pi^\star_T (\bx^m) \big] \le \frac{M^\star (\AlgOL)}{n}.
\end{equation*}
This shows that $n = \calO (M^\star (\AlgOL)/\varepsilon)$ samples suffices to get expected error $\varepsilon$. For each of the $n$ instances in the input dataset, $\etoe(\cdot)$ is queried at at most $2 \log_2 (T)$ timepoints. This implies the relation between the query complexity and sample complexity.

\begin{lemma} \label{lemma:nobreak}
If no $t \in [T]$ exists such that $s_t \ne s_t^\star$, but $s_{t-1} = s^\star_{t-1}$, then $s_T = s_T^\star$.
\end{lemma}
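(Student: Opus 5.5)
The statement to prove is \Cref{lemma:nobreak}: if there is no time $t\in[T]$ with $s_t \ne s_t^\star$ and $s_{t-1} = s_{t-1}^\star$, then $s_T = s_T^\star$. The plan is a short induction on $t$ along the predicted trajectory. The one structural fact needed is $s_0 = s_0^\star$, which holds because the initial state is part of the instance $\bx=(s_0,\bw_{1:T})$. The learner's predicted trajectory and the true trajectory $s^\star_t = \fstar_t(\bx)$ therefore start from the same state.

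The main step is to prove by induction on $t=0,1,\ldots,T$ that $s_t = s_t^\star$.
\begin{itemize}
\item Base case: $t=0$ holds by the remark above.
\item Inductive step: suppose $s_{t-1} = s_{t-1}^\star$ for some $t\in[T]$, and suppose toward a contradiction that $s_t \ne s_t^\star$. Then $t$ satisfies $s_t\ne s_t^\star$ and $s_{t-1}=s_{t-1}^\star$, which is exactly a time the hypothesis rules out. Hence $s_t = s_t^\star$.
\end{itemize}
Taking $t=T$ gives the claim.

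As a sanity check, I would phrase the contrapositive: if $s_T \ne s_T^\star$, let $t$ be the smallest index with $s_t\ne s_t^\star$. Since $s_0=s_0^\star$, we have $t\ge 1$, and minimality gives $s_{t-1}=s_{t-1}^\star$, so $t$ is a witness. The same argument explains why the binary search in \Cref{alg:semiauto_online} terminates with a ``true'' mistake. It maintains an interval whose left endpoint agrees with the truth and whose right endpoint disagrees, and such an interval must contain an adjacent pair of this type.

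No step here is a real obstacle. The only point needing care is that the claim is purely about the fixed predicted sequence $s_{0:T}$ and the true sequence $s^\star_{0:T}$, with no Markov property of the predictor required. Both the predicted and the true sequence must be indexed from the shared $s_0$.
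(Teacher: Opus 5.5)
Your proof is correct and follows the same approach as the paper: start from $s_0 = s_0^\star$ and induct forward, using the hypothesis at each step to rule out $s_t \ne s_t^\star$ whenever $s_{t-1} = s_{t-1}^\star$. Your minimal-index argument is the contrapositive form of that same induction.
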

\begin{proof}
Start from $t=1$. Since $s_0 = s_0^\star$ by definition, the condition in the statement of the lemma implies that $s_1 = s_1^\star$. Repeating this argument inductively, this implies that for all $t \le T$, $s_t = s_t^\star$, which proves the claim.
\end{proof}

\begin{lemma} \label{lemma:Ldim-ub}
For a class of next-state/token predictors $\calF$ from $S \times \Sigma \to S$, $\Ldim (\calF) \le \Ndim (\calF) \log (|S| |\Sigma|)$.
\end{lemma}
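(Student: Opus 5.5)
The plan is to prove the bound by a counting argument on a single Littlestone (mistake) tree, sharpened so that domain size and label size together contribute only one factor of $|S||\Sigma|$ per Natarajan dimension.

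Fix a multiclass mistake tree of depth $D = \Ldim(\calF)$ shattered by $\calF$. Each internal node $v$ carries:
\begin{itemize}
\item an instance $x_v = (s_v, w_v) \in S \times \Sigma$, and
\item two distinct labels $y_v^{+} \ne y_v^{-}$ in $S$.
\end{itemize}
Every root-to-leaf path is realized by some $\pi \in \calF$. Without loss of generality the instances along any path are distinct, since a repeated instance on a path can be pruned without lowering the depth. I then pick one realizing model $\pi_\ell$ for each of the $2^D$ leaves $\ell$. The goal is to show
\[
2^D \le (|S||\Sigma|)^{d}, \qquad d = \Ndim(\calF),
\]
which gives $D \le d \log_2(|S||\Sigma|)$, as stated.

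The counting will go through a Sauer--Shelah type shifting (compression) argument, adapted to the tree rather than to a fixed sample. Let $P = \{ \pi_\ell \}$, so that any two leaf models disagree at the node where their paths split, and they disagree on the pair $\{y_v^+, y_v^-\}$ prescribed there. I would associate to each leaf the set $A_\ell$ of nodes on its path at which it takes the ``$+$'' branch. I would then show that the family $\{A_\ell\}$, together with the pairs $(y_v^+, y_v^-)$, witnesses Natarajan shattering of every set of nodes that the compression produces as a ``shattered set''. Every such set has at most $d$ nodes with pairwise distinct instances.

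The standard Pajor-type inequality says that a family of $N$ distinct patterns strongly shatters at least $N$ sets. Here it would give
\[
2^D \le \#\{\text{Natarajan-shattered node sets of size} \le d\}.
\]
Each such set is determined by its at most $d$ instances in $S\times\Sigma$, which is where I hope to land at $(|S||\Sigma|)^d$.

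The hard part is this last identification. A shattered set is a set of nodes, not of instances, and the same instance $x$ may appear at many nodes with different label pairs. A naive count therefore reintroduces a factor $|S|^2$ per coordinate (the choice of the pair $y^+, y^-$), and that would only recover the weaker $\calO(\cdot)$ bound. What I would try first is to canonicalize. Along each path, the branch taken at $x_v$ is determined by the leaf model's value $\pi_\ell(x_v)$, so I would index the shattered sets by instances alone and charge the label choice to the model's own value at that instance. This should make the label pair a function of the instance set and the position in the tree, not free data. If that canonicalization goes through, the count of shattered sets is at most $\sum_{i \le d} \binom{|S||\Sigma|}{i} \le (|S||\Sigma|)^d$, and the lemma follows with constant $1$.
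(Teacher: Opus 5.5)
\paragraph{The target inequality is false.}
The step you flag as hard is where the argument breaks, and it cannot be repaired. The inequality you are aiming for, $2^{\Ldim(\calF)}\le(|S||\Sigma|)^{\Ndim(\calF)}$, is false, so no canonicalization of the label pairs can succeed.

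Take $S=\{1,2,3\}$, $\Sigma=\{\sigma\}$, and $x_s=(s,\sigma)$. Let $\calF$ consist of the four maps with $(\pi(x_1),\pi(x_2),\pi(x_3))\in\{(1,1,1),(1,2,1),(2,1,1),(2,3,1)\}$.
\begin{itemize}
\item Consider the tree with root $x_1$ (edges labeled $1,2$) whose two children are both $x_2$. Below the $1$-edge the edge labels are $1,2$; below the $2$-edge they are $1,3$. This tree is shattered, so $\Ldim(\calF)\ge 2$.
\item On the other hand, $\Ndim(\calF)=1$. The instance $x_3$ takes a single value. On $\{x_1,x_2\}$ the only admissible pair at $x_1$ is $(1,2)$. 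The $x_2$-values co-occurring with $\pi(x_1)=1$ are $\{1,2\}$, and those co-occurring with $\pi(x_1)=2$ are $\{1,3\}$; these share a single label, so $\{x_1,x_2\}$ is not shattered.
\end{itemize}
Hence $\Ldim(\calF)=2>\log_2 3=\Ndim(\calF)\log_2(|S||\Sigma|)$, and your target reads $4\le 3$. The same failure occurs with natural logarithms.

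This is exactly the obstruction you anticipated: one instance sits at two nodes with different label pairs. The example shows that the label data is genuinely not a function of the instance set.

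Two further problems:
\begin{itemize}
\item On the ground set of tree nodes, Pajor's inequality is vacuous. Since $v\in A_\ell$ forces the branch at every ancestor of $v$, the family $\{A_\ell\}$ shatters only $\emptyset$ and singletons, and you obtain only $2^D\le 2^D$. All the content would have to come from the projection to instances, which is precisely where the argument fails.
\item The final estimate $\sum_{i\le d}\binom{n}{i}\le n^d$ already fails at $d=1$.
\end{itemize}

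\paragraph{What is true, and what the paper proves.}
The statement holds only up to an absolute constant. The paper uses two facts:
\begin{itemize}
\item $\Ldim(\calF)\le\log_2|\calF|$, since a shattered tree of depth $m$ needs $2^m$ distinct hypotheses;
\item Natarajan's Sauer--Shelah lemma, $|\calF|\le\sum_{i\le d}\binom{|S||\Sigma|}{i}|S|^{2i}\le\left(e|S|^3|\Sigma|/d\right)^d$.
\end{itemize}
Together these give $\Ldim(\calF)\le d\log_2\!\left(e|S|^3|\Sigma|/d\right)\le (3+\log_2 e)\, d\log_2(|S||\Sigma|)$ whenever $|S|\ge 2$. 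The paper's ``simplifying'' silently absorbs this constant, and the example above shows it cannot be taken to be $1$.

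This weaker bound is all that is needed, since \Cref{theorem:online} uses it only inside $\Theta(\cdot)$. So the ``naive'' count you set aside as only recovering the $\calO(\cdot)$ bound is the right endpoint. Drop the tree-shifting and canonicalization, keep the $|S|^2$ per-coordinate factor, and state the lemma with a constant.
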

\begin{proof}
By a counting argument $\Ldim(\calF) \le \log_2 |\calF|$, since a complete mistake tree of depth $m$ requires $2^m$ distinct hypotheses, one for each root-to-leaf path. On the other hand, viewing $\calF$ as a multiclass prediction class from $X = S \times \Sigma$ to $Y = S$, the multiclass Sauer--Shelah lemma for Natarajan dimension implies~\citep{natarajan1989learning,daniely2015multiclass}
\begin{equation*}
    |\calF| \le \sum_{i=0}^{d} \binom{|X|}{i} |Y|^{2i} \le \left( \frac{e |X| |Y|^2}{d} \right)^d.
\end{equation*}
Simplifying completes the proof.
\end{proof}

\begin{algorithm}[t]
\caption{$\WLdepthOL (\Dprompt{} \| T)$}\label{alg:semiauto_online}
\begin{algorithmic}[1]
\Statex {\color{gray} \# Learning semiautomata via curriculum: reduction to online learning}
\State \textbf{Input:} Class of semiautomaton transitions $\calF$ over state space $S$ and alphabet $\Sigma$,
\Statex \hspace{3em} Number of steps of semiautomaton simulation, $T$,
\Statex \hspace{3em} Dataset of instances $\Dprompt{} = \{ \bx^i = (s^i_0,\bw^i) \}_{i=1}^{n}$.
\Statex \hspace{3em} Online learning algorithm $\AlgOL$ for $\calF$ (\Cref{def:mistake-bound})
\State \textbf{Instantiate:} Let $\AlgOL$'s initial model be $\fhat^1 : S \times \Sigma \to S$ and let $m \gets 1$.
\For{instance $\bx^m = (s_0,\bw_{1:T}) \in \Dprompt{}$}
\For{$t = 1,\cdots,T$}
\State Let $s_{t} \gets \fhat^m (s_{t-1},w_t)$.
\EndFor
\State Instantiate $\Tmin \gets 1$, $\Tmax \gets T$.
\While{$\Tmax \ge \Tmin$} \LineLabel{alg:semiauto_online:start}
\State $t \gets \Tmin + \big\lfloor \frac{\Tmax - \Tmin}{2} \big\rfloor$
\State Query $s_t^\star \gets \fstar_t (\bx)$ and $s_{t-1}^\star \gets \fstar_{t-1} (\bx)$.
\If{$s_t \ne s_t^\star$ but $s_{t-1} = s_{t-1}^\star$}
\State Submit input $(s_{t-1},w_{t})$ to $\AlgOL$, observe its prediction ($s_t$) and  
\Statex \hspace{4.25em} reveal the correct label $s_t^\star$ \LineComment{{\color{blue} $\AlgOL$ updates itself on the mistake}}
\State \textbf{break}
\ElsIf{$s_t = s_t^\star$ and $s_{t-1} = s_{t-1}^\star$}
\State $\Tmin = t+1$
\ElsIf{$s_{t-1} \ne s_{t-1}^\star$}
\State $\Tmax = t-1$ \LineLabel{alg:semiauto_online:end}
\EndIf
\EndWhile
\State Denote the (potentially updated) model of $\AlgOL$ as $\fhat^{m+1} : S \times \Sigma \to S$
\State Update $m \gets m + 1$.
\EndFor
\State \textbf{Return:} $\Unif \big( \{ \fhat^m \}_{m = 1}^n \big)$.
\end{algorithmic}
\end{algorithm}

\subsection{Composition for Stochastic Models: Proof of \Cref{prop:chain-rule}}
Let $S^\star_t \sim \fstar_t (\cdot|\bx)$ for $\bx \sim \rho$. For $i = 0,\cdots,L-1$, let $\widehat{S}_{\tau (i+1)} \sim \fhat (\cdot|\bx)$ for $\bx = (S_{\tau i}^\star, \bw_{\tau i+1:\tau (i+1)}) \sim \rho_{\tau i +1: \tau (i+1)}$. Let $S^\star_0 = \widehat{S}_0 = S_0$ (distributed according to the initial state distribution within $\rho$). Then,
\begin{align*}
    &\bbE_{\bx \sim \rhobar} \big[ \KL{\fstar_\tau (\cdot|\bx)}{\fhat (\cdot|\bx)} \big] \\
    &= \frac{1}{L} \sum_{i=0}^{L-1} \bbE_{\bx \sim \rho_{\tau i +1: \tau (i+1)}} \big[ \KL{S^\star_{\tau (i+1)} \,\big|\, \bx}{\widehat{S}_{\tau (i+1)} \,\big|\, \bx} \big] \\
    &= \frac{1}{L} \sum_{i=0}^{L-1} \bbE_{\bx \sim \rho} \big[ \KL{S^\star_{\tau (i+1)} \,\big|\, \bx, S^\star_{0}, \cdots, S^\star_{\tau (i-1)},S^\star_{\tau i}}{\widehat{S}_{\tau (i+1)} \,\big|\, \bx, S^\star_0,\cdots,S^\star_{\tau (i-1)},S^\star_{\tau i}}  \big] \\
    &= \frac{1}{L} \bbE_{\bx \sim \rho} \big[ \KL{S^\star_0,S^\star_\tau,\cdots,S^\star_{T} \,\big|\, \bx}{\widehat{S}_0,\widehat{S}_{\tau},\cdots,\widehat{S}_T \,\big|\, \bx}  \big] \\
    &\ge \frac{1}{L} \bbE_{\bx \sim \rho} \big[ \KL{S^\star_{T} \,\big|\, \bx}{\widehat{S}_T \,\big|\, \bx}  \big]
\end{align*}
The proof concludes by noting that $\widehat{S}_T | \bx$ is distributed according to $\fhat^{\circ L} (\cdot|\bx)$.

\end{document}